\newcommand{\cmark}{\textcolor{Green}{\ding{51}}}
\newcommand{\xmark}{\textcolor{BrickRed}{\ding{55}}}
\def\BibTeX{{\rm B\kern-.05em{\sc i\kern-.025em b}\kern-.08em
    T\kern-.1667em\lower.7ex\hbox{E}\kern-.125emX}}
\def\thanks#1{\protected@xdef\@thanks{\@thanks
        \protect\footnotetext{#1}}}
\def\equalr#1{\protected@xdef\@thanks{\@thanks
        \protect\footnotetext{$^\dagger$#1}}}
\newcommand\numberthis{\addtocounter{equation}{1}\tag{\theequation}}
\newcommand{\E}{\mathbb{E}}
\newcommand{\R}{\mathbb{R}}
\newcommand{\Rq}{\mathbb{R}^q}
\newcommand{\intr}{\mathrm{int}}
\newcommand{\rng}{\mathrm{range}}
\newcommand{\info}[1]{%
 \let\marginpar\marginnote
 \reversemarginpar
 \renewcommand{\baselinestretch}{0.8}
\todo[linecolor=OliveGreen!25!White,backgroundcolor=OliveGreen!25,bordercolor=OliveGreen,]{\textcolor{OliveGreen}{Info:} #1}}
\newcommand{\emphblockoption}{drop shadow,
  colframe=black!60,
  colback=black!10,
  coltitle=white!, 
    left=.0pt,
    right=.0pt,
    boxrule=0pt,
    arc=0pt}
 \declaretheorem[name=Proposition]{proposition}
\newtheorem{example}{Example}
\newtheorem{theorem}{Theorem}
\newtheorem{lemma}{Lemma}
\newtheorem{remark}[lemma]{Remark}
\newtheorem{corollary}[lemma]{Corollary}
\newtheorem{definition}{Definition}
\newtheorem{assumption}{Assumption}
\newtheorem*{assumption*}{Assumption}
\numberwithin{equation}{section}
\newcommand{\FullTitle}{
FERERO: A Flexible Framework for\\ Preference-Guided Multi-Objective Learning
}
\title{
\FullTitle
}
\author{Lisha Chen$^{1}$, \quad AFM Saif$^{1}$, \quad Yanning Shen$^{2}$, \quad Tianyi Chen$^{1}$\\
$^{1}$Rensselaer Polytechnic Institute, \quad
$^{2}$University of California, Irvine}
\begin{document}

\maketitle
\doparttoc %
\faketableofcontents %

\vspace{-0.2cm}
\begin{abstract}
Finding specific preference-guided Pareto solutions that represent different trade-offs among multiple objectives is critical yet challenging in multi-objective problems. 
Existing methods are restrictive in preference definitions and/or their theoretical guarantees.
In this work, we introduce a Flexible framEwork for pREfeRence-guided multi-Objective learning (\textbf{FERERO}) by casting it as a constrained vector optimization problem.
Specifically, two types of preferences are incorporated into this formulation -- the \emph{relative preference} defined by the partial ordering induced by a  polyhedral cone, and the \emph{absolute preference} defined by constraints that are linear functions of the objectives. 
To solve this problem, convergent algorithms are developed with both single-loop and stochastic variants. 
Notably, this is the \emph{first single-loop primal algorithm} for constrained vector optimization to our knowledge. 
The proposed algorithms adaptively adjust to both constraint and objective values, eliminating the need to solve different subproblems at different stages of constraint satisfaction. 
Experiments on multiple benchmarks demonstrate the proposed method is very competitive in finding preference-guided optimal solutions.
Code is available at \url{https://github.com/lisha-chen/FERERO/}.
\end{abstract}

\section{Introduction}
\label{sec:intro}

Many machine learning tasks inherently involve {multiple objectives}, which can be different performance metrics such as accuracy, fairness, and privacy; or, the same metrics defined on different data~\citep{sener2018multi,martinez2020minimax_pareto_fairness}. 
To tackle such multi-objective problems, it is common to learn a shared model that simultaneously performs well on all the objectives. 
Compared to learning one model for each objective, learning a shared model has the benefit of
reducing both the model size and the inference time.
This can be achieved through multi-objective optimization~\citep{sener2018multi,yu2020gradient,liu2021conflict,chen2023three}, which is to learn a model that minimizes the vector-valued objective. 
In practical applications, it is of interest to learn solutions with controlled trade-offs or preferences.
To further illustrate, we give two examples below.

In fairness-aware machine learning, a trade-off exists between the fairness $f_{\rm fair}(\theta)$ and accuracy $f_{\rm acc}(\theta)$~\citep{martinez2020minimax_pareto_fairness,liu2022accuracy_fairness}, see also Figure~\ref{sfig:eps_ptt}. With $\theta$ denoting the model parameter, and $C$ denoting the partial order cone, to find the optimal models that consider different trade-offs, one can solve the following problem with different thresholds $\epsilon$~\citep{curtis2023fair}
\begin{equation}\label{eq.app1}
\text{maximize}_C~~(f_{\rm acc}(\theta), f_{\rm fair}(\theta))^\top 
~~\mathrm{s.t.}~~ 
f_{\rm fair}(\theta) \geq \epsilon.
\end{equation}

Another example is in drug or molecule design, where the goal is to design drugs or molecules with multiple desired properties $f_1(\theta), f_2(\theta), \ldots, f_M(\theta)$. Aiming to align the values of the properties $F(\theta)$ with a predefined preference vector $v$ as in Figure~\ref{sfig:CD_ptt}, one can solve the following problem~\citep{Luukkonen2023_moo_drug,angelo2023_moo_drug,zhu2023sample_moo_molecular}
\begin{equation}\label{eq.app2}
\text{maximize}_C~~F(\theta) \coloneqq \big(f_{1}(\theta), \dots, f_{M}(\theta) \big)^\top
~~\mathrm{s.t.}~~ 
B F(\theta) = B v, ~Bv = 0
\end{equation}
where $B\in \R^{(M-1)\times M}$ is full row rank.

\begin{wrapfigure}{R}{0.52\textwidth}
\vspace{-5mm}
\begin{minipage}{0.52\textwidth} 
\centering
\begin{subfigure}[b]{0.49\textwidth}
\centering
\includegraphics[width=.99\linewidth]{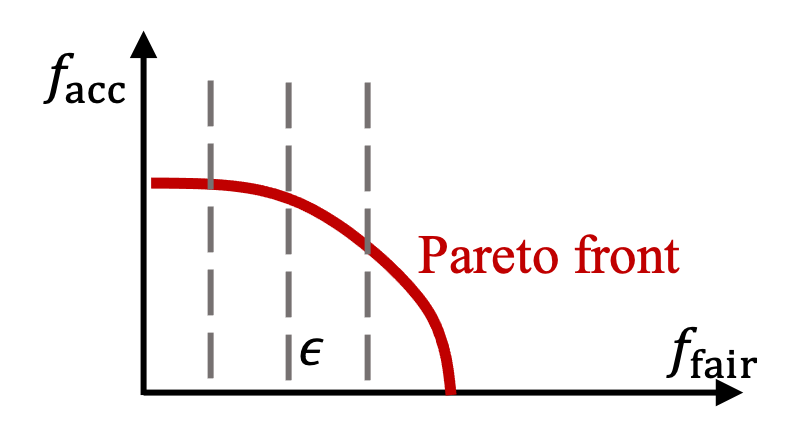}
\caption{}
\label{sfig:eps_ptt}
\end{subfigure}
\begin{subfigure}[b]{0.49\textwidth}
\centering
\includegraphics[width=.99\linewidth]{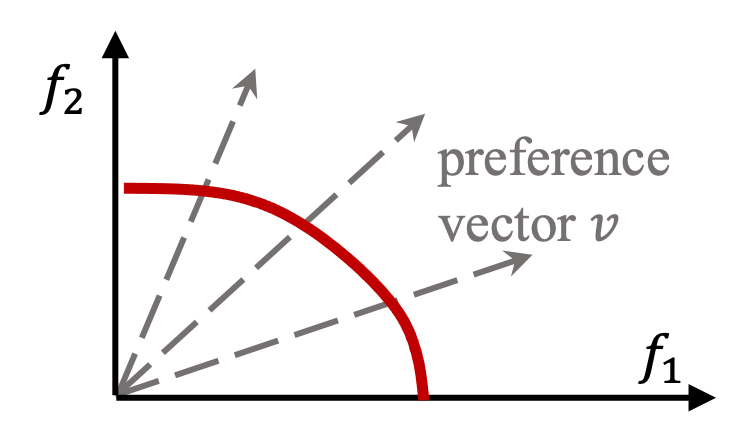}
\caption{}
\label{sfig:CD_ptt}  
\end{subfigure}
\vspace{-5mm}
\caption{Illustration of preferences in different examples. The solid red curves represent the Pareto front, dashed lines represent preference constraints.
}
\label{fig:different_ptt}  
\vspace{-0.5cm}
\end{minipage}
\vspace{-0.3cm}
\end{wrapfigure}

Then a natural question arises:
\begin{center}
\emph{Can we develop a principled framework to capture flexible preferences and admit provably convergent deterministic and stochastic algorithms?}
\end{center}

Our answer to this question is affirmative.
Recognizing that all the aforementioned applications can be addressed within a unified framework, we formulate preference-guided multi-objective learning (PMOL) as a constrained vector optimization problem. 
Specifically,
given a model $\theta \in \mathbb{R}^q$, and the objectives $f_m: \mathbb{R}^q \to \mathbb{R},\,m = 1, \ldots, M$, we define the constrained vector optimization problem as 
\begin{equation}\label{eq:pmoo-general-GH}
  \min_{\theta \in \mathbb{R}^q} F(\theta) \coloneqq
  \big(f_{1}(\theta), \dots, f_{M}(\theta) \big)^\top,
 ~ ~~\mathrm{s.t.}~~~G(\theta)\leq 0,\, H(\theta) = 0
  \tag*{(PMOL)}
\end{equation}
where $G(\theta)$ and $H(\theta)$ are the vector-valued preference constraints such as the examples in \eqref{eq.app1} and \eqref{eq.app2}.
Here ``$\leq$'' and ``$=$'' are element-wise relations on the vectors, with each row representing one constraint.
In these examples, the preferences are directly defined in the objective space, as intersections of half-spaces defined by the hyperplanes; see  Figure~\ref{fig:different_ptt}. Thus, $G(\cdot)$ and $H(\cdot)$ in~\ref{eq:pmoo-general-GH} can be expressed as linear functions of $F(\theta)$, given by
\begin{equation}\label{eq:GH_linear}
  G(\theta) = B_g F(\theta) + b_g, 
  ~~H(\theta) = B_h F(\theta) + b_h 
\end{equation}
where $B_g\in \R^{M_g\times M}, B_h\in \R^{M_h\times M}$,
and $b_g \in \R^{M_g}, b_h \in \R^{M_h}$. 
Different $B_g,B_h, b_g, b_h$ correspond to different preferences, and thus different trade-offs among the objectives. 
\begin{table*}[t]
\vspace{-0.3cm}
\centering 
\fontsize{7.5}{8}\selectfont 
\caption{Comparison to existing methods. ``Flexibility'' represents preference modeling, such as by using weights, preference vectors (rays), or constraints. ``Exactness'' represents the ability to align with a preference vector exactly. ``Deter.'', ``Stoch.'' represent deterministic and stochastic, respectively. ``\xmark'' means not provided in the corresponding work, and ``-'' means not relevant.
}
\label{tab:compare_exist_pmol}
\vspace{-0.1cm}
\begin{tabular}{c|ccccccc}
\toprule
\multirow{2}{*}{Method} 
& \multicolumn{2}{c}{Preference}  
& \multirow{2}{*}{\makecell{Controlled\\ascent}} 
& \multirow{2}{*}{\makecell{Single\\loop}}
& \multicolumn{2}{c}{Convergence}  
\\
& Flexibility & Exactness
& & & Deter. & Stoch. 
\\
\midrule
Linear Scalarization
& weight & - 
& \xmark & \cmark 
& $T^{-1}$ & $T^{-\frac{1}{2}}$  
\\
(Smooth) Tchebycheff~\citep{lin2024smooth}
& weight & - 
& \xmark & \cmark 
& non-asymptotic & \xmark 
\\
PMTL~\citep{lin2019pareto}
& inequalities (absolute) & \xmark 
& \xmark & \xmark 
& asymptotic & \xmark 
\\
EPO~\citep{mahapatra2020multi}
& $r^{-1}$ ray (ratio, absolute) & \cmark 
& \cmark & \xmark 
& asymptotic & \xmark 
\\
(X)WC-MGDA~\citep{momma2022multi} 
& shifted ray (absolute) & \cmark 
& \xmark & \xmark 
& \xmark & \xmark 
\\
FERERO (ours)
& relative \& absolute & \cmark 
& \cmark & \cmark 
& $T^{-1}$ & $T^{-\frac{1}{2}}$ 
\\
\bottomrule
\end{tabular}
\label{tab:risk_compare_refined}
\vspace{-0.3cm}
\end{table*}

A comparison of our methods to existing methods is summarized in Table~\ref{tab:compare_exist_pmol}.
Specifically, our contributions are listed as follows: 
\vspace{-0.2cm}
\begin{itemize}
\item [\bf C1)] We cast the PMOL problem as a constrained vector optimization problem,  and develop the FERERO framework to capture flexible preferences.
\item [\bf C2)] Under the FERERO framework, we develop a meta primal algorithm  with a unified subprogram adaptive to both objectives and constraints to meet flexible preferences, eliminating the need for multiple subprograms under different active constraints. 
\item [\bf C3)] Under the FERERO framework, we develop a practical single-loop algorithm with non-asymptotic convergence guarantees. To our best knowledge, this is the \emph{first single-loop primal algorithm} in constrained vector optimization with convergence guarantees.
\item [\bf C4)] We apply the proposed algorithms to various synthetic and real-world image and speech datasets to demonstrate its ability to find flexible preference-guided optimal models.
\end{itemize}

In our theoretical analysis, we address the following technical challenges. 
\vspace{-0.2cm}
\begin{itemize}
\item[\bf T1)] The commonly used constraint qualification assumptions do not generally hold for the PMOL problem. We overcome this challenge by leveraging the specific structure that the constraints are linear functions of $F$ to prove the calmness condition holds for PMOL. See more details in Lemma~\ref{lemma:calmness_satisfy_hoffman}.
\item[\bf T2)] The convergence of the single-loop algorithm is slower with the commonly-used merit functions. We provide a sharper analysis by introducing a different merit/Lyapunov function and exploiting the algorithm properties under additional assumptions. See Theorem~\ref{thm:finite_time_convergence_approx}.
\item[\bf T3)] The convergence analysis often relies on assumptions on bounded functions or bounded constraints. We remove such assumptions by applying similar techniques in \citep{chen2023three} with proper choice of Lyapunov functions, and exploiting algorithm properties. See Theorem~\ref{thm:two_time_convergence_approx}, Lemma~\ref{lemma:ht_bounded_by_contradiction}, and Theorem~\ref{thm:finite_time_convergence_approx}. 
\end{itemize}

\section{Problem Setup and A Meta Algorithm} 
\label{sec:setup_alg}

To characterize the optimality conditions of PMOL, we introduce the generalized notion of dominance and the related concept of optimality. We then present a meta-algorithm to solve PMOL.

\subsection{Problem setup and preliminaries} 
\label{sec:prelim}
\begin{wrapfigure}{R}{0.52\textwidth}
\vspace{-1.4cm}
\begin{minipage}{0.52\textwidth} 
\centering
\begin{subfigure}[b]{0.49\textwidth}
\centering
\includegraphics[width=.99\linewidth]{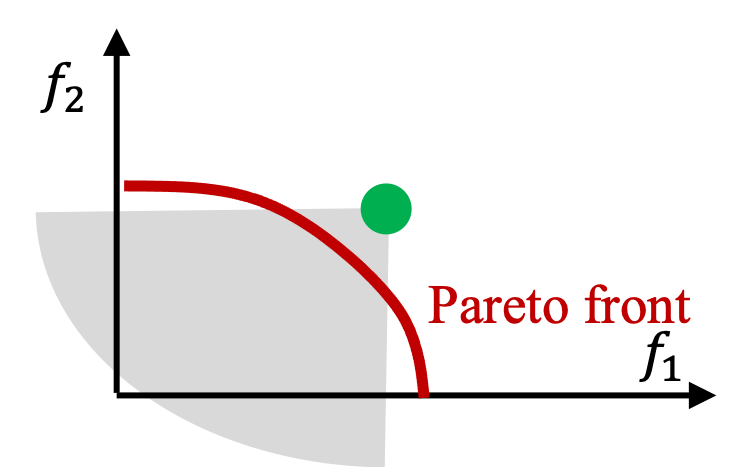}
\caption{}
\label{sfig:cone_pareto}
\end{subfigure}
\begin{subfigure}[b]{0.49\textwidth}
\centering
\includegraphics[width=.99\linewidth]{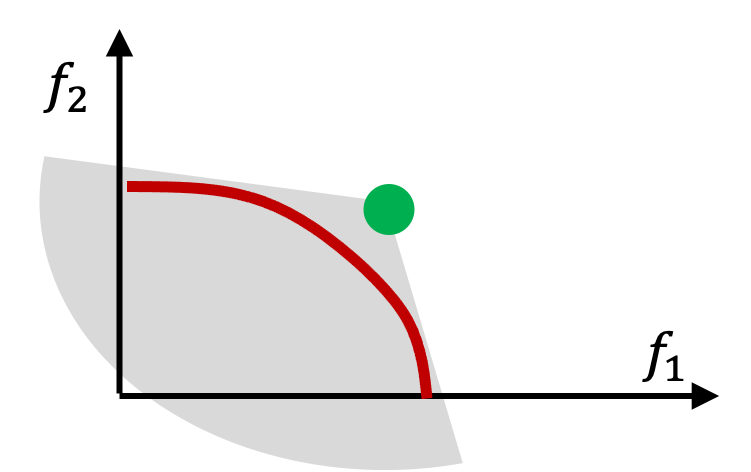}
\caption{}
\label{sfig:cone_ascent}  
\end{subfigure}
\vspace{-5mm}
\caption{Illustration of $C_A$-dominance. The solid red curves are the Pareto fronts, green dots are the reference points, gray shaded regions are the set of objectives dominating the reference points, under different $C_A$ in (a) and (b).
}
\label{fig:different_cone}  
\vspace{-0.5cm}
\end{minipage}
\vspace{-0.3cm}
\end{wrapfigure}
We first introduce optimality definitions for PMOL that go beyond the standard definitions of Pareto optimality~\citep{fliege2000steepest,Desideri2012mgda,liu2021stochastic}.
Given two vectors $v$ and $w$, we use $v<w$ and $v\leq w$ to denote $v_i< w_i$ for all $i$,  and $v_i\leq w_i$ for all $i$, respectively. We use  $v\lneq w$ to denote $v\leq w$ and $v\neq w$, and define $>, \geq$, $\gneq$ analogously.
\begin{definition}[$C_A$-dominance~{\citep{ehrgott_multicriteria_2005,jahn_vector_2011}}]
\label{def:generalized_dominance}
Given $v,w \in \R^M$, $A \in \R^{M\times M}$, and $C_{A} \coloneqq \{y \in \R^M \mid A y \geq 0 \} \neq \emptyset$, we say
$v$ strictly dominates $w$ based on $C_A$ if and only if  $A(v - w)< 0$.
\end{definition}
\vspace{-2mm}
The generalized dominance defines a partial order on $\R^M$, i.e., the relation between two vectors.
Illustrations of different partial orders are given in Figure~\ref{fig:different_cone}. Figure~\ref{sfig:cone_pareto} shows the dominance relation under the widely used non-negative orthant cone with $C_A = \R_+^M$, corresponding to Pareto optimality. However, as illustrated by the figure, given the initial green reference point, a descent method such as MGDA~\citep{fliege2000steepest} cannot find points on the Pareto front but outside of the gray shaded region. This poses a critical challenge for applications where specific preference-guided solutions on the Pareto front are needed. Nevertheless, this issue can be addressed by substituting $\R_+^M$ with a more general definition of $C_A$ as displayed in Figure~\ref{sfig:cone_ascent}. Under this partial order, a general descent method is able to reach any points on the Pareto front starting from the green reference point.

Based on the partial order, one can then find the minimum or optimal elements in the vector-valued objective space, whose formal definition is provided below.

\begin{definition}[$C_A$-optimal]
\label{def:Pareto_O}
A point $\theta\in \R^q$ is $C_A$-optimal if
 there is no $\theta' \neq \theta$ such that, $AF(\theta') \lneq AF(\theta)$. 
A point $\theta$ is weakly $C_A$-optimal if
 there is no $\theta' \neq \theta$ such that, $AF(\theta') < AF(\theta)$.
\end{definition}
Note that, $C_A$ is a polyhedral cone, or the intersection of half-spaces defined by the rows of the inequality $Ay \geq 0$. When $A = I_M$, an $M\times M$ identity matrix, $C_A = \R^M_+ \coloneqq \{y\in\R^M\mid y_m\geq 0~~\forall m\in [M]\}$, then Definition~\ref{def:generalized_dominance} reduces to the commonly used notion of dominance associated with Pareto optimality.
The cone $C_A$ can be interpreted as a \emph{relative preference} that defines the objectives' improvement directions, which generalizes the relative preference defined by $\R_+^M$.
In contrast, the preference defined by constraints in~\eqref{eq:GH_linear} can be interpreted as an \emph{absolute preference} that defines the feasible or preferred set of objective function values.
In practice, $C_A$ can be chosen based on the requirements of specific applications. For example, when the controlled ascent of objectives is needed~\citep{mahapatra2020multi}, we can choose $C_A$ such that the controlled ascent direction belongs to $-C_A$.
We defer the detailed implementation to Section~\ref{sub:choice_cone}.
The $C_A$-optimal set, denoted as ${\cal P}_A$, contains all the $C_A$-optimal models. When $A = I_M$, ${\cal P}_A$ is the Pareto optimal set  $\cal P$.
The Pareto front is the set of function values evaluated at Pareto optimal models, i.e., ${\cal F} = \{F(\theta) \mid \theta \in {\cal P} \}$.

\vspace{-1mm}
We make the following standard assumptions throughout the paper~\citep{fliege2000steepest,grana_drummond_steepest_2005,chen2023three}. 
\begin{assumption}
\label{asmp:general}
1. (Non-negative objectives) $A F(\theta) \geq 0$, and $\mathbf{1}^\top AF(\theta) \geq c_{AF} > 0$ for all $\theta\in \R^q$.\\
2. (Differentiable objectives) $F$ is twice continuously differentiable.\\
3. (Ordering cone with non-empty interior) $C_A$ has a non-empty interior.
\end{assumption}

\vspace{-4mm}
\subsection{Find the preference-guided direction} 
\label{sec:method}
\vspace{-2mm}
In this section, we proceed to discuss an adaptive method to solve~\ref{eq:pmoo-general-GH}.
At iteration $t$, the algorithm finds an update direction $d_t$ and performs the iterative update $\theta_{t+1} = \theta_t + \alpha_t d_t$ with a step size $\alpha_t$.
Ideally, the update direction $d_t$ is chosen to improve the objective $F(\theta)$ and to satisfy the preference constraints. 
It is desirable that
when the constraints are not satisfied, $d_t$ decreases the violation of constraints and improves the objectives in the general partial ordering sense;
when the constraints are satisfied, $d_t$ improves the objectives and ensures the constraints are satisfied.
To achieve this, we find a direction $d^*(\theta)$ that solves following subprogram 
\begin{align}
\psi(\theta) \coloneqq
\!\!\!\!
 \min_{(d,c) \in \R^{q}\times \R} c 
 + \frac{1}{2}\|d\|^2 
\label{eq:subp_original}
~~~\mathrm{s.t.}~~
&  A \nabla F(\theta)^\top d 
\leq 
\frac{c}{\mathbf{1}^\top A F(\theta)} A F(\theta) 
\\
& \nabla G(\theta)^\top d + c_g G(\theta) \leq 0, 
~ \nabla H(\theta)^\top d + c_h H(\theta) = 0
\nonumber
\end{align}
where $\|\cdot\|$ denotes the $\ell_2$-norm, $ c_g$ and $c_h$ are pre-defined positive constants. Larger $c_g$ and $c_h$ put more emphasis on constraint satisfaction than objective improvement.
We call this subprogram {\em adaptive} since it deals with constraints in an adaptive way, which does not require the initial model to be feasible, nor $\theta_t$ to be feasible at each iteration.
But rather, it finds an update direction that decreases the constraint violation.
Because of this, it neither requires solving different subprograms at different stages nor requires different treatment of the active set of inequalities as in existing works~\citep{lin2019pareto,mahapatra2020multi,momma2022multi}.

\begin{wrapfigure}{r}{0.55\textwidth}
\vspace{-0.8cm}
\begin{minipage}{0.55\textwidth} 
\begin{algorithm}[H]
\caption{A meta FERERO algorithm}
\label{alg:generic_C_descent} 
\begin{algorithmic}[1]
\State Initialize $t=0$, $\theta_0$, step size $\{\alpha_t\}$; define $A$.
\While {$\psi(\theta_t)\neq 0$ }
\State Compute gradient $ \nabla F(\theta_t) $;
{\State {Compute $\lambda_t$ by (approximately) solving~\eqref{eq:subp_lam}};}
\State {Compute the update direction\\ 
~~~~~~$d_t = -\nabla F(\theta_t)A_{ag}^\top \lambda_t$;}
\State Update 
 $\theta_t $ by $\theta_{t+1} = \theta_t +  {\alpha_t} d_{t}$;
\State Set $t = t+1$;
\EndWhile
\end{algorithmic} 
\end{algorithm}
\vspace{-0.2cm}
\end{minipage} 
\vspace{-0.2cm}
\end{wrapfigure}

We then show in Lemma~\ref{lemma:property_subp} that the desired properties can be satisfied.
\begin{lemma}
\label{lemma:property_subp}
For the subprogram~\eqref{eq:subp_original}, the following holds:\\
If $\theta$ is a local optimal solution with $A F(\theta) > 0$, then $d^*(\theta) = 0$, $\psi (\theta) = 0$. Otherwise, if $\theta$ is not a local optimal solution, then $d^*(\theta) \neq 0$, $\psi (\theta) < 0$, and when $\theta$ is feasible,
\begin{align}
 2 \psi(\theta) \leq  
- \|d^*(\theta)\|^2 < 0. 
\end{align}
Let $\theta$ be a weak $C_A$-optimal solution, with $(A F(\theta))_m = 0$ for some $m\in [M]$. If there exists feasible and non-strictly improving directions at $\theta$ with $A\nabla F(\theta)^\top d \lneq 0$, then $d^*(\theta)\neq 0$, $\psi (\theta) < 0$. Otherwise, 
$d^*(\theta)= 0$, $\psi (\theta) = 0$. 
\end{lemma}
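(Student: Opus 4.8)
The plan is to treat the subprogram~\eqref{eq:subp_original} as a strictly convex quadratic program in $(d,c)$ and to read off the claims from its optimality structure, reserving the heavier machinery for the passage between first-order stationarity and genuine (weak) $C_A$-optimality. First I would record two structural facts. The feasible set of~\eqref{eq:subp_original} is a convex polyhedron in $(d,c)$ (the first constraint is affine in $(d,c)$ because $AF(\theta)/(\mathbf{1}^\top AF(\theta))$ is a fixed vector, and the $G,H$ constraints are affine in $d$), while the objective $c+\tfrac12\|d\|^2$ is strictly convex in $d$; hence the minimizer $(d^*,c^*)$ is unique, and in particular $d^*$ is determined. Moreover $(d,c)=(0,0)$ is feasible exactly when $\theta$ is feasible (it reduces the remaining constraints to $c_g G(\theta)\le 0$ and $c_h H(\theta)=0$), so for feasible $\theta$ we immediately get $\psi(\theta)\le 0$, and by uniqueness $\psi(\theta)=0$ forces $(d^*,c^*)=(0,0)$.

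Next I would establish the descent bound by a scaling argument. For feasible $\theta$ I claim $(\beta d^*,\beta c^*)$ stays feasible for every $\beta\in[0,1]$: the first constraint scales linearly and $\beta\ge 0$; for the $G$-rows, $\beta\nabla G(\theta)^\top d^*+c_g G(\theta)\le(1-\beta)c_g G(\theta)\le 0$ since $G(\theta)\le 0$; and for the $H$-rows, $\beta\nabla H(\theta)^\top d^*+c_h H(\theta)=(1-\beta)c_h H(\theta)=0$ because $H(\theta)=0$. Optimality of $\beta=1$ for the parabola $h(\beta)=\beta c^*+\tfrac{\beta^2}{2}\|d^*\|^2$ on $[0,1]$ then forces $c^*\le-\|d^*\|^2$, i.e. $2\psi(\theta)=2c^*+\|d^*\|^2\le-\|d^*\|^2$. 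Separately, if $d^*=0$ the first constraint becomes $0\le c^*\,AF(\theta)/(\mathbf{1}^\top AF(\theta))$, which (as $AF(\theta)\ge 0$ with $\mathbf{1}^\top AF(\theta)\ge c_{AF}>0$, so some component is positive) forces $c^*\ge 0$ and hence $\psi(\theta)\ge 0$; contrapositively $\psi(\theta)<0\Rightarrow d^*\ne 0$, and then the bound gives $2\psi(\theta)\le-\|d^*\|^2<0$.

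The crux is linking $\psi(\theta)$ to optimality. Suppose $\psi(\theta)<0$ with $AF(\theta)>0$; then $c^*<0$, and since every component of $AF(\theta)$ is positive the first constraint yields $A\nabla F(\theta)^\top d^*<0$ componentwise, while the remaining rows give $\nabla G(\theta)^\top d^*\le-c_g G(\theta)$ and $\nabla H(\theta)^\top d^*=-c_h H(\theta)=0$. A first-order expansion along $\theta+\alpha d^*$ then makes $AF$ strictly decrease in every coordinate. The subtlety --- and the step I expect to be the main obstacle --- is that $G,H$ are nonlinear in $\theta$ (affine only in $F$), so the ray $\theta+\alpha d^*$ is feasible only up to an $o(\alpha)$ curvature term and need not lie in the feasible set exactly. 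I would close this gap by invoking the calmness/Hoffman property of the PMOL feasible set established in Lemma~\ref{lemma:calmness_satisfy_hoffman}: it produces a genuinely feasible $\theta''$ within $o(\alpha)$ of $\theta+\alpha d^*$, whose objective still satisfies $AF(\theta'')=AF(\theta)+\alpha A\nabla F(\theta)^\top d^*+o(\alpha)<AF(\theta)$ for small $\alpha$, contradicting weak $C_A$-optimality. Hence a local optimum must have $\psi(\theta)=0$, and uniqueness gives $d^*=0$. Conversely, if $\theta$ is feasible but not optimal, the same calmness argument yields a feasible improving direction, which I convert into a feasible $(d,c)$ with $c<0$, so $\psi(\theta)<0$ and $d^*\ne 0$; the bound from the previous step then supplies $2\psi(\theta)\le-\|d^*\|^2<0$.

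Finally I would treat the degenerate boundary case where $\theta$ is weakly $C_A$-optimal with $(AF(\theta))_m=0$ for some $m$. Here the $m$-th row of the first constraint collapses to $(A\nabla F(\theta)^\top d)_m\le 0$, independently of $c$, so $c<0$ no longer forces strict decrease in coordinate $m$. The dichotomy in the statement is then exactly the dichotomy of whether a feasible $d$ with $A\nabla F(\theta)^\top d\lneq 0$ exists: if it does, I pair it with a sufficiently negative $c$ to obtain a point of~\eqref{eq:subp_original} with negative objective, giving $\psi(\theta)<0$ and $d^*\ne 0$ (such a direction dominates $\theta$ in the $\lneq$ sense, contradicting strong but not weak optimality, consistent with the hypothesis); if no such direction exists, then any feasible $(d,c)$ must have $c\ge 0$ --- otherwise the positive-valued rows of $AF(\theta)$ would again force $A\nabla F(\theta)^\top d\lneq 0$ --- so the optimum satisfies $c^*\ge 0$, whence $\psi(\theta)=0$ and, together with $\psi(\theta)\le 0$ from feasibility, $d^*=0$.
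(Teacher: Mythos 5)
Your proposal is correct in substance and follows the same skeleton as the paper's proof (given as Properties 2 and 3 of Lemma~\ref{lemma:add_property_subp}): uniqueness of $d^*$ from strict convexity in $d$, feasibility of $(d,c)=(0,0)$ exactly when $\theta$ is feasible, a scaling argument to detect $\psi(\theta)<0$ from an improving direction, and the split over the index set $I=\{m:(AF(\theta))_m=0\}$ in the degenerate case. Two points differ. First, you obtain $2\psi(\theta)\le -\|d^*(\theta)\|^2$ by noting that $(\beta d^*,\beta c^*)$ remains feasible for $\beta\in[0,1]$ and that the parabola $\beta\mapsto \beta c^*+\tfrac{\beta^2}{2}\|d^*\|^2$ must be minimized at $\beta=1$, forcing $c^*\le-\|d^*\|^2$; the paper instead derives $\|d^*\|^2\le -c^*$ from the Lagrangian representation $d^*=-\nabla F(\theta)A_{ag}^\top\lambda^*$ together with complementary slackness. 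Your radial argument is more elementary and buys the same inequality without touching the dual. Second, you treat the passage from ``$\theta$ not locally optimal'' to ``a feasible first-order improving direction exists'' (and its converse) as the crux and propose to close it with calmness. The paper never attempts this: it reads ``local optimal'' directly through the geometric first-order necessary condition of Proposition~\ref{prop:C_FJ_equivalent} (no $d\in\Omega_d(\theta)$ with $A\nabla F(\theta)^\top d<0$) and stays entirely at the level of directions, so the lemma is really a statement about stationarity. Your calmness step, as described, does not quite go through: Lemma~\ref{lemma:calmness_satisfy_hoffman} establishes calmness only at a \emph{global} solution and under extra structural hypotheses ($\Sigma(p,q)$ a line, $A$, $B_g$, $B_h$ full rank), and the Hoffman bound it rests on lives in the objective space $\R^M$ rather than in parameter space, so it does not hand you a feasible $\theta''$ within $o(\alpha)$ of $\theta+\alpha d^*$. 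Since the paper's version of the lemma does not require this bridge, the rest of your argument survives if you adopt the first-order reading of optimality.

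One shared caveat: in the degenerate branch, ``pair $d$ with a sufficiently negative $c$'' fails if the improving direction satisfies $(A\nabla F(\theta)^\top d)_m=0$ for every $m\notin I$, because the rows with $(AF(\theta))_m>0$ then force $c\ge 0$. The paper's own proof has the same blind spot (its scaling factor $\sigma$ vanishes there), so this is a limitation of the statement rather than of your argument specifically, but both proofs implicitly assume strict first-order decrease in at least one coordinate outside $I$.
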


By Lemma~\ref{lemma:property_subp}, $\|d^*(\theta)\|=0$ is a stationary condition for PMOL. Recall the feasibility condition requires $[G(\theta)]_+ = 0$ and $|H(\theta)|_{\rm ab}=0$, where $[\cdot]_+$ and $|\cdot|_{\rm ab}$ are entry-wise ReLU and absolute functions, respectively. And the complementary slackness condition requires ${\lambda_g^*}^\top[-G(\theta)]_+ = 0$. Thus $\|d^*(\theta)\|^2 + {\lambda_g^*}^\top[-G(\theta)]_+ + \| [G(\theta)]_+ \|_1  +  \|H(\theta)\|_1$  achieves zero if and only if the model $\theta$ satisfies the first-order KKT condition.
Besides the properties in Lemma~\ref{lemma:property_subp}, it has an additional scale-invariant property that is deferred to Lemma~\ref{lemma:add_property_subp} due to space limit.

By the Lagrangian of~\eqref{eq:subp_original}, 
the optimal update direction can be expressed in a simple form as a weighted combination of the gradients, i.e. $d^*(\theta) = - \nabla F(\theta) 
A_{ag}^\top \lambda^*$, with 
$A_{ag} \coloneqq [A; B_g; B_h]$, and 
\begin{align}
\label{eq:subp_lam}
\lambda^*
\in \mathop{\arg\min}_{\lambda \in \Omega_{\lambda}(\theta)}
~\varphi(\lambda; \theta) \coloneqq
\frac{1}{2}\|\nabla F(\theta) A_{ag}^\top \lambda \|^2
- c_g \lambda_g^\top G(\theta)
- c_h \lambda_h^\top H(\theta)
\end{align}
where $\lambda = [\lambda_f; \lambda_g; \lambda_h]$, $\Omega_{\lambda}(\theta)$ is the domain of the Lagrangian multipliers, given by
\footnote{{Note that, our formulation and analysis cover the constrained MOO problem with a simplified subprogram, where $\Omega_{\lambda_f} = \Delta^M$, which is detailed in Remark~\ref{remark:simple_subp} in Appendix~\ref{sub_app:lagrangian_subp}.}}
\vspace{-1mm}
\begin{equation}
\Omega_{\lambda}(\theta) \coloneqq \Omega_{\lambda_f}(\theta) \times \R_{+ }^{M_g}\times \R^{M_h},
~~\text{with}~~
\Omega_{\lambda_f}(\theta) \coloneqq \{\lambda\in \R_+^M\mid 
{\lambda}^\top A F(\theta) = \mathbf{1}_M^\top A F(\theta) \} .
\end{equation}
%
%
Our goal is to design an algorithm that converges to a KKT solution based on \eqref{eq:subp_original}.
However, the KKT condition is not necessary unless certain constraint qualifications (CQs) hold. 
Prior works~\citep{gebken2019descent,lin2019pareto} assume certain CQs hold, e.g., the Linear Independence Constraint Qualification (LICQ). 
However, the LICQ assumption (c.f.,~\citep[Section~3.1, (A2)]{gebken2019descent}) does not generally hold at a local optimal solution for problem \ref{eq:pmoo-general-GH}, c.f., Example~\ref{exmp:no_LICQ} in Appendix~\ref{ssub_app:proof_calm_moo}.
Though some commonly used CQs do not hold generally,
in our case, leveraging the specific structure that the constraints are linear functions of $F$, we can justify the calmness CQ in Definition~\ref{def:MOO_calm} tailored for our problem in Lemma~\ref{lemma:calmness_satisfy_hoffman}, thus the KKT condition is a necessary optimality condition.
The proof is deferred to Appendix~\ref{ssub_app:proof_calm_moo}.

\begin{lemma}
\label{lemma:calmness_satisfy_hoffman}
Let $\bar{\theta}\in \R^q$ be a global solution to~\ref{eq:pmoo-general-GH}.
Define $\Sigma(p, q) \coloneqq \{y\in \R^M\mid B_g y + b_g \leq p, B_h y + b_h = q\} $.
If $\Sigma(p, q)$ is a line, the PMOL calmness condition in Definition~\ref{def:MOO_calm} is satisfied for~\ref{eq:pmoo-general-GH} at $\bar{\theta}$ if $A\in \R^{M\times M}$ is full rank,
$H(\theta),G(\theta)$ defined by~\eqref{eq:GH_linear} satisfy $[B_h^\top, B_g^\top]\neq 0$,
and $B_h, B_g$ are full row rank. 
Consequently, the KKT condition is a necessary optimality condition.
\end{lemma}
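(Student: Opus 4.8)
The plan is to establish calmness by reducing the full constraint system to a polyhedral system in the objective space and then invoking Hoffman's error bound. The structural fact I would exploit is that, by~\eqref{eq:GH_linear}, both $G$ and $H$ depend on $\theta$ only through $y = F(\theta)$: the constraint violation of any $\theta$ is completely determined by the pair $\big([B_g F(\theta)+b_g]_+,\, B_h F(\theta)+b_h\big)$. Hence the feasible set, read in objective space, is exactly the polyhedron $\Sigma(0,0) = \{y : B_g y + b_g \le 0,\ B_h y + b_h = 0\}$, which under the hypotheses is a line and is nonempty since $F(\bar\theta)\in\Sigma(0,0)$.

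First I would record, from Definition~\ref{def:MOO_calm}, precisely what must be verified at the global solution $\bar\theta$: roughly, that no feasible perturbation of the constraints can improve $AF$ in the $C_A$-dominance sense faster than linearly in the perturbation size $\|(p,q)\|$. Because $A\in\R^{M\times M}$ is full rank, the map $y\mapsto Ay$ is a bijection of $\R^M$, so this condition can be phrased interchangeably in $F$-space and in $AF$-space; this is where the full-rank assumption on $A$ enters, guaranteeing that the $C_A$-ordering used in the calmness/optimality definition and the Euclidean geometry of $\Sigma$ are comparable up to constants.

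Next comes the core estimate. Since $\Sigma(0,0)$ is a nonempty polyhedron and $B_g,B_h$ are full row rank with $[B_h^\top,B_g^\top]\neq 0$ (so the system is consistent and non-redundant), Hoffman's lemma yields a constant $\kappa>0$ with
\[
\mathrm{dist}\big(y,\Sigma(0,0)\big) \;\le\; \kappa\,\Big\|\big([B_g y + b_g]_+,\; B_h y + b_h\big)\Big\| \quad\text{for all } y\in\R^M .
\]
Evaluating at $y=F(\theta)$ turns the right-hand side into $\kappa\,\|([G(\theta)]_+,H(\theta))\|$, i.e.\ the constraint violation bounds the distance in objective space from $F(\theta)$ to the feasible line. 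The hypothesis that $\Sigma(p,q)$ is a line is what keeps this distance tied to a single scalar degree of freedom: the nearest feasible value is the orthogonal projection of $F(\theta)$ onto a line, hence an affine (Lipschitz) function of $F(\theta)$, which I would use to compare $AF(\theta)$ with the value attainable at a genuinely feasible point along the line.

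The main obstacle is the final lift: transferring the objective-space error bound into the calmness inequality demanded at $\bar\theta$. The difficulty is that $F$ is nonlinear, so an error bound in $y=F(\theta)$ space does not automatically transfer to $\theta$ space. I would circumvent this by checking that the tailored condition of Definition~\ref{def:MOO_calm} is itself posed through $F$-values (the perturbations $(p,q)$ enter only via $\Sigma(p,q)$), so that the Hoffman bound together with the line structure and the full rank of $A$ suffice, without needing metric subregularity of $\theta\mapsto F(\theta)$. Concretely, for any sequence with $G(\theta_k)\le p_k$, $H(\theta_k)=q_k$, and $(p_k,q_k)\to 0$, the line structure lets me project $F(\theta_k)$ back onto $\Sigma(0,0)$ with a displacement of order $\|(p_k,q_k)\|$, and global $C_A$-optimality of $\bar\theta$ with $A$ full rank then forbids any super-linear improvement of $AF$, which is exactly calmness. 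Once calmness holds, the necessity of the KKT system follows from the standard exact-penalty implication that a calm problem satisfies the KKT conditions at its minimizers, yielding the stated consequence.
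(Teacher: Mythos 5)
Your overall strategy---reducing to the objective space, applying a Hoffman error bound to the polyhedron $\Sigma(0,0)$, and contradicting global optimality---is the same as the paper's, but there is a genuine gap at the decisive step. The plain Hoffman bound controls $\mathrm{dist}(F(\hat{\theta}), \Sigma(0,0))$, and your plan is to take the orthogonal projection of $F(\hat{\theta})$ onto the feasible line and compare objective values there. That projected point, however, is merely an element of $\R^M$: nothing guarantees it lies in the image of $F$, i.e.\ that it equals $F(\tilde{\theta})$ for some $\tilde{\theta}\in\R^q$. Since $C_A$-optimality of $\bar{\theta}$ (Definition~\ref{def:Pareto_O}) quantifies only over attainable points $\theta'\in\R^q$, exhibiting an unattainable point on the line that dominates $AF(\bar{\theta})$ yields no contradiction. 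You gesture at ``a genuinely feasible point along the line,'' but supply no mechanism for producing one at distance $O(\|(p,q)\|)$ from $F(\hat{\theta})$. The paper closes exactly this hole with the \emph{relative} Hoffman bound of Lemma~\ref{lemma:hoffman_EB}: it first fixes an attainable feasible value $F(\tilde{\theta})\in\Sigma(0,0)$ and takes the reference polyhedron $\Omega_R=\mathrm{conv}(\{F(\tilde{\theta}),F(\hat{\theta})\})$; because $\Sigma(0,0)$ is a line, $\Sigma(0,0)\cap\Omega_R=\{F(\tilde{\theta})\}$, so the error bound directly controls $\|F(\hat{\theta})-F(\tilde{\theta})\|$ by $\overline{c_{\rm hof}}\,\|(p,q)\|$ with the distance measured to an \emph{attainable} point, and the contradiction $AF(\tilde{\theta})<AF(\bar{\theta})$ goes through. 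This is where the ``line'' hypothesis is really used---not to make the projection affine, but to pin the intersection of the feasible set with that segment to the single attainable point.

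A second, smaller omission: Definition~\ref{def:MOO_calm} requires you to exhibit a concrete Lipschitz function $\phi$ with $\phi(0,0)=0$ before the non-existence statement is even well posed; ``forbids any super-linear improvement'' is a paraphrase, not a verification. The paper takes $\phi(p,q)=\overline{c_{\rm hof}}\,\|[p^\top,q^\top]^\top\|\,A^{-1}\mathbf{1}_M$, and this construction is the only place the invertibility of $A$ is actually invoked---your use of full rank to ``compare the $C_A$-ordering with Euclidean geometry'' is not how it enters. Your final step (calmness implies necessity of KKT) matches the paper's route and is fine.
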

\vspace{-2mm}
%
Lemma~\ref{lemma:calmness_satisfy_hoffman} provides a sufficient condition for the KKT condition to be a necessary optimality condition without relying on unjustified assumptions. The requirement that the constraint set is a line in the objective space is common for applications such as alignment to a preference vector.

We then discuss a generic preference-guided multi-objective algorithm based on the subprogram.

\vspace{-3mm} 
\subsection{A meta algorithm for preference-guided multi-objective learning}
\vspace{-2mm}
Given the model $\theta_t$ at iteration $t$, one can then solve~\eqref{eq:subp_lam} to obtain $\lambda_t$.
The direction $d_t = - \nabla F(\theta_t) 
A_{ag}^\top \lambda_t $ is used to update the model $\theta_t$ by $\theta_{t+1} = \theta_t + \alpha_t d_t$ iteratively until convergence. The full procedure of this meta algorithm is summarized in Algorithm~\ref{alg:generic_C_descent}, where Step~4 is a generic step and can be customized in Section~\ref{sec:practical_implement}.
 
\vspace{-1mm}
To establish the non-asymptotic convergence rate, 
we use the following standard smoothness assumption that has been commonly used in prior works for multi-objective learning~\citep{chen2023three,liu2021stochastic}.
\vspace{-1mm}
\begin{assumption}[Smooth objectives]
\label{assmp:smooth_obj}
For all $m \in [M]$,
$\nabla f_{m}(\theta)$ is $\ell_{f, 1}$-Lipschitz continuous.
\end{assumption}
\vspace{-2mm}
We then state the convergence result for Algorithm~\ref{alg:generic_C_descent} in Theorem~\ref{thm:finite_time_convergence}.
\begin{tcolorbox}[emphblock]
\begin{theorem}[Convergence of the generic FERERO algorithm]
\label{thm:finite_time_convergence}
Suppose Assumptions~\ref{asmp:general},~\ref{assmp:smooth_obj} hold.
Let $\{\theta_t\} $ be the sequences produced by Algorithm~\ref{alg:generic_C_descent}, with $d_t$ being an $\epsilon$-optimal solution to the subprogram~\eqref{eq:subp_original}.
If $\|\lambda^*(\theta_t)\|_1\leq c_\lambda$, $\alpha_t \leq \min\{\frac{1}{c_\lambda \ell_{f,1} \|A_{ag}^\top \|_{\infty,1}}, c_g^{-1}, c_h^{-1} \}$, and $\alpha_t = \Theta(1)$, then 
{\small\begin{equation}
\frac{1}{T}\sum_{t=0}^{T-1}  
\underbrace{\|\nabla F(\theta_t) A_{ag}^\top \lambda^*(\theta_t)\|^2}_{\text{stationarity}}
+ \underbrace{{\lambda_g^*(\theta_t)}^\top [- G(\theta_t)]_+}_{\text{complementary slackness}}
+\underbrace{\| [G(\theta_t)]_+ \|_1 
+ \|H(\theta_t)\|_1}_{\text{feasibility}} 
= \mathcal{O} \big( T^{-1} + \epsilon \big).
\end{equation}}
\end{theorem}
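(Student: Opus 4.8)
The plan is to build a Lyapunov (merit) function that combines the scalarized objective with the constraint residuals and to show it decreases, at each step, by (a constant multiple of) the very quantity to be bounded. Concretely, I would set $V_t := \mathbf{1}^\top A F(\theta_t) + \kappa_g \|[G(\theta_t)]_+\|_1 + \kappa_h \|H(\theta_t)\|_1$ with penalty weights $\kappa_g,\kappa_h>0$ fixed later. The key structural point, which handles the absence of any upper bound on $F$ (challenge T3), is that under Assumption~\ref{asmp:general} this $V_t$ is bounded below uniformly by $c_{AF}>0$, since the two penalty terms are nonnegative; hence a summable per-step decrease telescopes to a $T$-independent constant.

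First I would record three one-step inequalities obtained from smoothness (Assumption~\ref{assmp:smooth_obj}) together with the constraints that the update direction approximately satisfies in \eqref{eq:subp_original}. Writing $\theta_{t+1}=\theta_t+\alpha_t d_t$ and noting that $\mathbf{1}^\top A F$, each row $G_i$, and each row $H_j$ are smooth with gradient-Lipschitz constants proportional to $\ell_{f,1}$ and the relevant row norms of $A,B_g,B_h$, the descent lemma gives: (i) $\mathbf{1}^\top A F(\theta_{t+1}) \le \mathbf{1}^\top A F(\theta_t) + \alpha_t\, \mathbf{1}^\top A\nabla F(\theta_t)^\top d_t + O(\alpha_t^2\|d_t\|^2)$, where left-multiplying the first block of \eqref{eq:subp_original} by $\mathbf{1}^\top\ge 0$ yields $\mathbf{1}^\top A\nabla F^\top d_t \le c_t$; (ii) from $\nabla G^\top d_t \le -c_g G$ and $\alpha_t\le c_g^{-1}$, a contraction $\|[G(\theta_{t+1})]_+\|_1 \le (1-\alpha_t c_g)\|[G(\theta_t)]_+\|_1 + O(\alpha_t^2\|d_t\|^2)$; (iii) analogously from $\nabla H^\top d_t = -c_h H$ and $\alpha_t\le c_h^{-1}$, $\|H(\theta_{t+1})\|_1 \le (1-\alpha_t c_h)\|H(\theta_t)\|_1 + O(\alpha_t^2\|d_t\|^2)$.

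Next I would convert the slack $c_t$ into the stationarity measure. Using the Lagrangian dual of \eqref{eq:subp_original} that underlies \eqref{eq:subp_lam}, strong duality of this convex quadratic program gives the identity $\psi(\theta) = -\tfrac12\|d^*(\theta)\|^2 + c_g\,\lambda_g^{*\top}G(\theta) + c_h\,\lambda_h^{*\top}H(\theta)$, with $d^*(\theta)=-\nabla F(\theta)A_{ag}^\top\lambda^*(\theta)$. Combined with $\epsilon$-optimality, $c_t+\tfrac12\|d_t\|^2\le \psi(\theta_t)+\epsilon$, this expresses $c_t$ through the stationarity $\|d_t^*\|^2$ plus the cross terms $c_g\lambda_{g,t}^{*\top}G_t$ and $c_h\lambda_{h,t}^{*\top}H_t$. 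The decisive manipulation is the splitting $\lambda_g^{*\top}G = \lambda_g^{*\top}[G]_+ - \lambda_g^{*\top}[-G]_+$: the second piece furnishes the complementary-slackness term with the correct negative sign, while the first is controlled by $c_\lambda\|[G]_+\|_1$ via $\|\lambda^*\|_1\le c_\lambda$, and likewise $|\lambda_h^{*\top}H|\le c_\lambda\|H\|_1$.

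Assembling (i)--(iii) into $V_{t+1}-V_t$, the offending positive multiples $c_g c_\lambda\|[G_t]_+\|_1$ and $c_h c_\lambda\|H_t\|_1$ are dominated by the contraction gains $\kappa_g\alpha_t c_g\|[G_t]_+\|_1$ and $\kappa_h\alpha_t c_h\|H_t\|_1$ once $\kappa_g\ge c_\lambda+1/c_g$ and $\kappa_h\ge c_\lambda+1/c_h$, leaving a net $-\alpha_t(\|[G_t]_+\|_1+\|H_t\|_1)$; choosing $\alpha_t$ below the stated threshold (so that the aggregate coefficient of $\|d_t\|^2$, whose magnitude relates to the stationarity through $\|d_t\|\le \|d_t^*\|+O(\sqrt\epsilon)$, becomes nonpositive) lets those second-order terms be absorbed. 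This yields the clean descent $V_{t+1}-V_t\le -c\,\alpha_t\,\Phi_t + \alpha_t\,O(\epsilon)$, where $\Phi_t$ is exactly the sum of stationarity, complementary-slackness, and feasibility terms in the statement and $c>0$ is an absolute constant; telescoping over $t=0,\dots,T-1$, using $V_T\ge c_{AF}$, and dividing by $\alpha T$ gives the $\mathcal{O}(T^{-1}+\epsilon)$ bound. The main obstacle is the bookkeeping at infeasible iterates, where $\psi$ need not be nonpositive, so the clean inequality $2\psi\le -\|d^*\|^2$ of Lemma~\ref{lemma:property_subp} is unavailable and one must instead use the exact strong-duality identity for $\psi$ together with the penalty-weight choice that absorbs the positive constraint cross-terms; a secondary point is propagating $\epsilon$-optimality through both the objective-value slack and the constraint residuals so that it enters additively as $O(\epsilon)$.
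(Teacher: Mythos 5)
Your proposal is correct and follows essentially the same route as the paper: the same Lyapunov function $\mathbf{1}^\top A F(\theta_t)$ plus weighted $\ell_1$ penalties on $[G(\theta_t)]_+$ and $|H(\theta_t)|_{\rm ab}$, the same contraction of the constraint residuals under $\alpha_t\le\min\{c_g^{-1},c_h^{-1}\}$, the same splitting $\lambda_g^{*\top}G=\lambda_g^{*\top}[G]_+-\lambda_g^{*\top}[-G]_+$ to extract the complementary-slackness term, and the same telescoping with the lower bound from Assumption~\ref{asmp:general}. The only cosmetic difference is that you reach $-\|d^*\|^2+c_g\lambda_g^{*\top}G+c_h\lambda_h^{*\top}H$ via the strong-duality value identity for $\psi$ and the constraint $\mathbf{1}^\top A\nabla F(\theta)^\top d\le c$, whereas the paper uses the equivalent first-order variational inequality for $\lambda^*$ (Lemma~\ref{lemma:convex_subp_lam}).
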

\end{tcolorbox}
Theorem~\ref{thm:finite_time_convergence} guarantees the non-asymptotic convergence for the generic FERERO algorithm. In Algorithm~\ref{alg:generic_C_descent}, $\lambda_t$ can be solved through projected gradient descent or Frank Wolfe algorithm iteratively within an inner loop. In practice, we usually do not need to solve the subprogram exactly. Next, we discuss the efficient single-loop approximate algorithm based on Algorithm~\ref{alg:generic_C_descent}.

\begin{wrapfigure}{r}{0.5\textwidth}
\vspace{-1.2cm}
\begin{minipage}{0.5\textwidth} 
\begin{algorithm}[H]
\caption{FERERO-SA}
\label{alg:approximate_alg} 
\begin{algorithmic}[1]
\State Initialize $t=0$, $\theta_0$, $\lambda_0$, step sizes $\{\alpha_t, \gamma_t\}$; define $A$, number of iterations $T$.
\For {$t = 0, \ldots, T-1$ }
\State Compute gradient $ \nabla F(\theta_t) $;
\State Compute direction $d_{t} = -\nabla F(\theta_t) A_{ag}^\top \lambda_{t}$;
\State Update 
$\theta_t $ by $\theta_{t+1} = \theta_t +  {\alpha_t} d_{t}$;
\State Update $\lambda_{t}$ by \eqref{eq:update_lam_tk_FGH};
\EndFor
\end{algorithmic} 
\end{algorithm}
\vspace{-0.6cm}
\end{minipage}
\vspace{-0.8cm}
\end{wrapfigure}

\section{Efficient Single-loop Algorithms}
\label{sec:practical_implement}
%
In this section, 
we first discuss algorithm development with the approximate single-loop update and practical choice of preferences. We focus on~\ref{eq:pmoo-general-GH} with \emph{equality constraints only}, i.e., $M_g = 0$.
Building upon this, we then discuss the stochastic variants of the algorithms that can be applied to large-scale learning problems.

\subsection{Single-loop approximate algorithm} 
In practice, if one only requires the converging solutions generated by the algorithm to be feasible, but not all the iterates, then further approximations can be made to the subprogram~\eqref{eq:subp_lam}.
At iteration $t$, to obtain an approximate direction $d_t$, we adopt the following update
\begin{equation}\label{eq:update_lam_tk_FGH}
\lambda_{t+1} = \Pi_{\Omega_\lambda} \big(\lambda_{t} 
- \gamma_{t} \nabla_\lambda \varphi(\lambda_{t}; \theta_t) \big) .
\end{equation}
The single-loop algorithm with the approximate solution is summarized in Algorithm~\ref{alg:approximate_alg}.
We name it FERERO with Single-loop Approximate update (FERERO-SA) algorithm.

We make the following additional assumption of Lipschitz objectives to prove the convergence of Algorithm~\ref{alg:approximate_alg}, which is standard in optimization literature. 
\begin{assumption}[Lipschitz objectives]
\label{assmp:lip_obj}
For all $m \in [M]$,
$ f_{m}(\theta)$ is $\ell_{f}$-Lipschitz continuous.
\end{assumption}

To prove the convergence of Algorithm~\ref{alg:approximate_alg},
we can use the same merit function with $\ell_1$-norm of $H(\theta_t)$, which leads to a slow convergence rate of $\mathcal{O} \big( T^{-\frac{1}{6}}\big)$. 
See Theorem~\ref{thm:two_time_convergence_approx} below and its proof in Appendix~\ref{sub_app:convergence_same_merit},
where the proof follows similar ideas of the proofs of Theorem~3 and Theorem~5 in~\citep{chen2023three}.  
\begin{tcolorbox}[emphblock]
\begin{theorem}[Convergence of the FERERO-SA algorithm]
\label{thm:two_time_convergence_approx}
Suppose Assumptions~\ref{asmp:general},~\ref{assmp:smooth_obj},~\ref{assmp:lip_obj} hold, and $M_g=0$.
Let $\{\theta_t\}, \{\lambda_t\}$ be the sequences produced by Algorithm~\ref{alg:approximate_alg} with $A=I$ and $\Omega_{\lambda_f}(\theta) = \Delta^M$ (c.f. Remark~\ref{remark:simple_subp}).
Assume $\lambda_t, \lambda^*(\theta_t)$, and $\lambda_\rho^*(\theta_t)\coloneqq \mathop{\arg\min}_{\lambda\in \Omega_\lambda}\varphi(\lambda;\theta_t)+\frac{\rho}{2}\|\lambda\|^2 $ are bounded. 
With properly chosen step sizes $\alpha = \Theta(T^{-\frac{5}{6}})$, $\gamma = \Theta(T^{-\frac{1}{6}})$, and hyperparameters,
it holds that 
{\begin{equation}
\frac{1}{T}\sum_{t=0}^{T-1}  \| \nabla F(\theta_t) A_{ag}^\top \lambda_t \|^2
+ \| H(\theta_t) \|_1 
= \mathcal{O} \Big( T^{-\frac{1}{6}}\Big) .
\end{equation}}
\end{theorem}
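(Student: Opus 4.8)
The plan is to prove the $\mathcal{O}(T^{-1/6})$ rate for FERERO-SA by a two-timescale argument in which $\theta_t$ is the slow variable (step size $\alpha = \Theta(T^{-5/6})$) and $\lambda_t$ is the fast variable (step size $\gamma = \Theta(T^{-1/6})$), closely following the template of Theorems~3 and~5 in~\citep{chen2023three}. First I would fix the setting: since $M_g=0$, $A=I$, and $\Omega_{\lambda_f}=\Delta^M$, the only constraints are the equalities $H(\theta)=B_h F(\theta)+b_h$, and the inner objective reduces to $\varphi(\lambda;\theta)=\tfrac12\|\nabla F(\theta)A_{ag}^\top\lambda\|^2 - c_h\lambda_h^\top H(\theta)$ over $\lambda\in\Delta^M\times\R^{M_h}$. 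I would introduce the merit/Lyapunov function built from the $\ell_1$ violation $\|H(\theta_t)\|_1$ together with a tracking-error term measuring how far $\lambda_t$ is from the (regularized) optimizer $\lambda_\rho^*(\theta_t)$, and a potential capturing stationarity. The overall strategy is to show this Lyapunov function decreases in expectation up to controllable error terms, then telescope and average.

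The key steps, in order, are as follows. Step one is a \emph{descent-type estimate for the constraint violation}: using Assumption~\ref{assmp:smooth_obj} (so $\nabla F$ is Lipschitz) and Assumption~\ref{assmp:lip_obj} (so $F$, hence $H$, is Lipschitz), I would expand $\|H(\theta_{t+1})\|_1$ along the update $\theta_{t+1}=\theta_t+\alpha d_t$ and show that the drift in $H$ is controlled by $\alpha$ times a term involving $\langle \nabla H(\theta_t)^\top, d_t\rangle$ plus an $\mathcal{O}(\alpha^2)$ smoothness remainder. The sign of the leading term is governed by how well $\lambda_t$ approximates the true multiplier, which is where the constraint-satisfaction mechanism of the subprogram (via the $-c_h\lambda_h^\top H$ penalty) enters. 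Step two is the \emph{inner tracking lemma}: using strong convexity of $\varphi(\cdot;\theta)+\tfrac{\rho}{2}\|\cdot\|^2$ in $\lambda$ and the Lipschitz continuity of $\lambda_\rho^*(\theta)$ in $\theta$, I would bound the tracking error $\|\lambda_{t+1}-\lambda_\rho^*(\theta_{t+1})\|^2$ by a contraction of $\|\lambda_t-\lambda_\rho^*(\theta_t)\|^2$ (gaining a factor from the $\gamma$-step projected gradient descent) plus a drift term of order $\|\lambda_\rho^*(\theta_{t+1})-\lambda_\rho^*(\theta_t)\|^2=\mathcal{O}(\alpha^2/\gamma)$ coming from the slow movement of $\theta$. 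Step three combines these into a single Lyapunov inequality, where the boundedness assumptions on $\lambda_t$, $\lambda^*(\theta_t)$, $\lambda_\rho^*(\theta_t)$ are used to keep all gradient and tracking quantities uniformly bounded, and the regularization bias $\|\lambda_\rho^*-\lambda^*\|$ is controlled by choosing $\rho$ appropriately small.

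Step four is the \emph{telescoping and rate optimization}. Summing the Lyapunov inequality over $t=0,\dots,T-1$ and dividing by $T$, the stationarity term $\|\nabla F(\theta_t)A_{ag}^\top\lambda_t\|^2$ and the feasibility term $\|H(\theta_t)\|_1$ appear on the left, while the right-hand side collects terms of the form $\mathcal{O}(1/(\alpha T))$ (from telescoping the violation), $\mathcal{O}(\alpha/\gamma)$ or $\mathcal{O}(\alpha^2/\gamma^2)$ (from the tracking drift), $\mathcal{O}(\gamma)$ (from the inner optimization error), and $\mathcal{O}(\rho)$ (from regularization bias). Balancing these by the stated choices $\alpha=\Theta(T^{-5/6})$ and $\gamma=\Theta(T^{-1/6})$ makes every term $\mathcal{O}(T^{-1/6})$, yielding the claimed bound.

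The hard part will be \emph{the interplay in the constraint-violation descent step}, i.e., showing that the equality-violation $\|H(\theta_t)\|_1$ genuinely decreases rather than merely staying bounded, despite using a nonsmooth $\ell_1$ merit and an only-approximately-solved inner problem. The $\ell_1$ norm forces me to argue via subgradients and a case analysis on the sign pattern of $H(\theta_t)$, and the mismatch between $\lambda_t$ and the exact multiplier injects an error that must be absorbed by the tracking term without destroying the telescoping; it is precisely this nonsmoothness and the coupling between the two timescales that degrade the rate to $\mathcal{O}(T^{-1/6})$ (and motivate the sharper Lyapunov function used later in Theorem~\ref{thm:finite_time_convergence_approx} to recover a faster rate). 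Carefully matching the step-size exponents so that the tracking drift $\mathcal{O}(\alpha^2/\gamma^2)$ and the optimization error $\mathcal{O}(\gamma)$ balance against the telescoped $\mathcal{O}(1/(\alpha T))$ term is the delicate bookkeeping that determines the final exponent.
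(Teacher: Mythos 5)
Your high-level plan (two-timescale step sizes, an $\ell_1$ merit on $H$, telescoping, and balancing exponents) is in the right spirit and matches the paper's template from \citep{chen2023three}, but the central mechanism in your Step two does not go through as stated. Algorithm~\ref{alg:approximate_alg} performs a projected gradient step on the \emph{unregularized} $\varphi(\cdot;\theta_t)$, which is only convex (and possibly degenerate), so there is no strong-convexity contraction of $\|\lambda_{t+1}-\lambda_\rho^*(\theta_{t+1})\|^2$ available: treating the missing $\rho\lambda_t$ term as a bias against the $\rho$-strong convexity of $\varphi_\rho$ leaves an $O(1)$ floor on the tracking error, and your claimed drift $\|\lambda_\rho^*(\theta_{t+1})-\lambda_\rho^*(\theta_t)\|^2=\mathcal{O}(\alpha^2/\gamma)$ is also inconsistent with the actual Lipschitz constant of $\lambda_\rho^*$ in $\theta$, which scales as $\rho^{-1}$ (Lemma~\ref{lemma:continuity_lam_rho}). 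The paper avoids contraction entirely: it telescopes the convexity inequality $2\gamma(\varphi(\lambda_t;\theta_t)-\varphi(\lambda_\rho^*(\theta_t);\theta_t))\le\|\lambda_t-\lambda_\rho^*(\theta_t)\|^2-\|\lambda_{t+1}-\lambda_\rho^*(\theta_t)\|^2+\gamma^2\|\nabla_\lambda\varphi\|^2$ against the \emph{moving comparator} $\lambda_\rho^*(\theta_t)$, pays for the comparator's motion linearly (at cost $\rho^{-1}\alpha T$) using the boundedness assumptions, converts the resulting average function-value gap into a gradient bound via smoothness plus convexity (Lemma~\ref{lemma:func_bound_grad_smooth_varphi} and Corollary~\ref{crlr:varphi_lamh_grad_bound_by_value_gap}), and only then optimizes $\rho=\Theta((\alpha/\gamma)^{1/2})$.

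This matters for your Step four: your list of error terms ($1/(\alpha T)$, $\alpha/\gamma$, $\alpha^2/\gamma^2$, $\gamma$, $\rho$) would balance to a rate strictly better than $T^{-1/6}$, which signals the accounting has not been carried out. The term that actually pins the exponent is $(\alpha/\gamma)^{1/4}$: it arises because the $\ell_1$ merit forces you to control $\frac{1}{T}\sum_t\|\nabla_{\lambda_h}\varphi(\lambda_t;\theta_t)\|_1$, which by Jensen and $\ell_1$-vs-$\ell_2$ is bounded by the square root of the average squared gradient, itself of order $\rho^{-1}\alpha/\gamma+\rho=\Theta((\alpha/\gamma)^{1/2})$ after optimizing $\rho$; taking the square root gives $(\alpha/\gamma)^{1/4}$, and balancing this against $1/(\alpha T)$ and $\gamma$ is what yields $\alpha=\Theta(T^{-5/6})$, $\gamma=\Theta(T^{-1/6})$, and the $T^{-1/6}$ rate. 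Without identifying this term, your proof would not recover the stated step sizes or the stated rate.
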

\end{tcolorbox}
To obtain a sharper convergence rate,
we consider a different merit function with $\ell_2$-norm of the constraint $H(\theta_t)$, 
and under additional assumptions listed below.

\begin{definition}[Proximal PL inequality]
\label{def:prox_pl_ineq}
Define
$D_{\varphi,\gamma} (\lambda ; \theta) \coloneqq -\frac{2}{\gamma} \min _{\lambda^{\prime} \in \Omega_\lambda }\big\{ \langle\nabla_\lambda \varphi (\lambda; \theta), \lambda^{\prime}-\lambda \rangle + \frac{1}{2 \gamma} \|\lambda^{\prime}-\lambda \|^2  \big\} .$ 
We say $\varphi (\lambda; \theta)$ satisfies the ${\mu}_\varphi$-proximal PL inequality on the point $(\lambda, \theta)$, 
if there exists some constant ${\mu}_\varphi >0$ such that
$  D_{\varphi,\gamma} (\lambda ; \theta) \geq 
{\mu}_\varphi \big( \varphi (\lambda; \theta) - \varphi (\lambda^*(\theta); \theta) \big) .$
\end{definition}
\begin{assumption}\label{assmp:sharper}
For $\theta\in \{\theta_t\}, \lambda\in \{\lambda_t\}$ on the  trajectory of Algorithm~\ref{alg:approximate_alg}, the following hold:\\
1. $\varphi(\cdot; \theta)$ is $\mu_\varphi$-proximal PL in Definition~\ref{def:prox_pl_ineq}; \\
2. For all $m\in [M]$, $\nabla^2 f_m(\theta)$ is $\ell_{f,2}$-Lipschitz continuous.
\end{assumption}
Assumption~\ref{assmp:sharper}-1 essentially requires some regularity conditions of $\varphi(\cdot; \theta)$ on the  trajectory of Algorithm~\ref{alg:approximate_alg}. Leveraging the fact that $\varphi(\cdot; \theta)$ is convex, it has been discussed in e.g., \citep[Appendix~B]{karimi2016_linear_converge_pl} that if the smallest non-zero singular value of the Hessian is bounded away from zero, then Assumption~\ref{assmp:sharper}-1 holds. This could be satisfied when the gradients $\nabla F(\theta_t) A_{ag}^{\top}$ have lower-bounded non-zero singular values on the trajectory.
A more detailed analysis of the sufficient conditions for Assumption~\ref{assmp:sharper}-1 to hold is left for furture work.

We then provide a sharper convergence analysis in Theorem~\ref{thm:finite_time_convergence_approx}. The detailed proof and choices of step sizes and hyperparameters are deferred to Appendix~\ref{sub_app:convergence_different_merit}.
\begin{tcolorbox}[emphblock]
\begin{theorem}[Sharper convergence of the FERERO-SA algorithm]
\label{thm:finite_time_convergence_approx}
Suppose Assumptions~\ref{asmp:general},~\ref{assmp:smooth_obj},~\ref{assmp:lip_obj},~\ref{assmp:sharper} hold, and $M_g=0$.
Let $\{\theta_t\}, \{\lambda_t\}$ be the sequences produced by Algorithm~\ref{alg:approximate_alg} with $A=I$ and $\Omega_{\lambda_f}(\theta) = \Delta^M$ (c.f. Remark~\ref{remark:simple_subp}).
With properly chosen step sizes $\alpha_t = \Theta(1)$, $\gamma_t = \Theta(1)$, and hyperparameters,
it holds that 
{\small\begin{equation}
\frac{1}{T}\sum_{t=0}^{T-1}  \| \nabla F(\theta_t) A_{ag}^\top \lambda_t \|^2
+ \| H(\theta_t) \|^2 
= \mathcal{O} \Big( T^{-1}\Big) .
\end{equation}}
\end{theorem}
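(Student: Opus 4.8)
The plan is to prove the $\mathcal{O}(T^{-1})$ rate by designing a single Lyapunov/merit function that couples the squared feasibility $\|H(\theta_t)\|^2$ with the dual suboptimality $\delta_t \coloneqq \varphi(\lambda_t;\theta_t) - \varphi(\lambda^*(\theta_t);\theta_t) \ge 0$, and showing it satisfies a clean one-step descent $\mathbb{V}_{t+1} \le \mathbb{V}_t - c\big(\|\nabla F(\theta_t) A_{ag}^\top \lambda_t\|^2 + \|H(\theta_t)\|^2\big)$ for some constant $c>0$, once the constant step sizes $\alpha,\gamma=\Theta(1)$ are taken small enough. Telescoping over $t=0,\dots,T-1$ and dividing by $T$ then gives the claim, using that $\mathbb{V}_t$ is bounded below because each summand is nonnegative. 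The conceptual reason the slow $\mathcal{O}(T^{-1/6})$ of Theorem~\ref{thm:two_time_convergence_approx} improves to $\mathcal{O}(T^{-1})$ is that the squared (rather than $\ell_1$) feasibility term combines with the proximal-PL inequality of Assumption~\ref{assmp:sharper}, which lets both step sizes stay constant instead of being forced onto a vanishing two-timescale schedule.

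First I would establish the feasibility (primal) contraction. Since $H=B_hF+b_h$ is linear in $F$, Assumption~\ref{assmp:smooth_obj} makes $H$ smooth, so expanding $\|H(\theta_{t+1})\|^2$ along $\theta_{t+1}=\theta_t+\alpha_t d_t$ gives $\|H(\theta_{t+1})\|^2 \le \|H(\theta_t)\|^2 + 2\alpha_t H(\theta_t)^\top \nabla H(\theta_t)^\top d_t + \mathcal{O}(\alpha_t^2\|d_t\|^2)$. The key identity, read off from~\eqref{eq:subp_lam}, is $\nabla_{\lambda_h}\varphi(\lambda;\theta) = B_h\nabla F(\theta)^\top\nabla F(\theta)A_{ag}^\top\lambda - c_h H(\theta)$; combined with $\nabla H(\theta_t)^\top d_t = -B_h\nabla F(\theta_t)^\top \nabla F(\theta_t)A_{ag}^\top\lambda_t$ it turns the cross term into $-2\alpha_t c_h\|H(\theta_t)\|^2 - 2\alpha_t H(\theta_t)^\top\nabla_{\lambda_h}\varphi(\lambda_t;\theta_t)$. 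This yields the self-stabilizing factor $(1-\Theta(\alpha_t)c_h)$ on $\|H(\theta_t)\|^2$, while the residual $\nabla_{\lambda_h}\varphi(\lambda_t;\theta_t)$ — which vanishes at $\lambda^*(\theta_t)$ because $\lambda_h$ is unconstrained — is controlled by the dual gap through $\|\nabla_{\lambda_h}\varphi(\lambda_t;\theta_t)\|^2\lesssim\|\lambda_t-\lambda^*(\theta_t)\|^2\lesssim\delta_t$, using smoothness of $\varphi(\cdot;\theta)$ and the quadratic growth implied by the proximal-PL inequality. This contraction is precisely what lets me discharge the boundedness-of-$\|H\|$ hypothesis that Theorem~\ref{thm:two_time_convergence_approx} needs, via the contradiction argument of Lemma~\ref{lemma:ht_bounded_by_contradiction} (challenge T3).

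Next I would handle the dual (tracking) analysis for $\delta_t$. Treating $\delta_t=\varphi(\lambda_t;\theta_t)-\Phi(\theta_t)$ with $\Phi(\theta)\coloneqq\varphi(\lambda^*(\theta);\theta)$ as a single unit, I split $\delta_{t+1}-\delta_t$ into (i) the decrease of one projected-gradient step~\eqref{eq:update_lam_tk_FGH} on the fixed objective $\varphi(\cdot;\theta_t)$, for which the proximal-PL inequality (Definition~\ref{def:prox_pl_ineq}) gives a geometric decrease $-\Theta(\gamma_t)\mu_\varphi\delta_t$; and (ii) the drift from $\theta_t\mapsto\theta_{t+1}$ of both $\varphi(\lambda_{t+1};\cdot)$ and the value function $\Phi(\cdot)$. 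For (ii), Assumption~\ref{assmp:sharper}-2 (Lipschitz $\nabla^2 f_m$) is essential: it makes $\theta\mapsto\nabla F(\theta)A_{ag}^\top$, hence $\varphi(\lambda;\cdot)$ and — together with the quadratic growth of $\varphi(\cdot;\theta)$ — the map $\theta\mapsto\lambda^*(\theta)$ and $\Phi(\cdot)$ smooth, so the drift is first order in $\alpha_t$ with a Danskin-type cancellation (since $\nabla_\theta\varphi(\lambda^*;\theta)=\nabla\Phi(\theta)$) leaving only $\mathcal{O}(\alpha_t\delta_t+\alpha_t\|d_t\|^2+\alpha_t^2\|d_t\|^2)$. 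This delivers $\delta_{t+1}\le(1-\Theta(\gamma_t)\mu_\varphi+\mathcal{O}(\alpha_t))\delta_t + \mathcal{O}(\alpha_t\|d_t\|^2)$, a genuine contraction once $\gamma\mu_\varphi$ dominates $\alpha$.

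Finally I would couple and telescope. Taking $\mathbb{V}_t=\|H(\theta_t)\|^2+\beta\,\delta_t$ with $\beta$ chosen so the geometric dual decrease absorbs the $+\Theta(\alpha)\delta_t$ feasibility cross term, and then choosing the constant $\alpha$ small relative to $\gamma\mu_\varphi$ and $c_h$ so that the $\mathcal{O}(\alpha^2\|d_t\|^2)$ smoothness remainders and the $\mathcal{O}(\alpha\|d_t\|^2)$ drift are absorbed, collapses the two recursions into the clean descent above; summing gives the stated bound, with the stationarity $\|\nabla F(\theta_t)A_{ag}^\top\lambda_t\|^2$ extracted from $\delta_t$ and $\Phi(\theta_t)$ via $\|\nabla F(\theta_t)A_{ag}^\top\lambda_t\|^2=2\delta_t+2\Phi(\theta_t)+2c_h\lambda_{h,t}^\top H(\theta_t)$. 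I expect the main obstacle to be exactly this coupling: establishing along the trajectory that $\theta\mapsto\lambda^*(\theta)$ is Lipschitz and $\Phi$ is smooth purely from the proximal-PL/quadratic-growth property plus the Lipschitz-Hessian assumption, and then verifying the drift and the implicitly-needed boundedness of $\lambda_{h,t}$ are of high enough order to be absorbed by the geometric dual contraction without diminishing step sizes — so that the constant-step-size descent, and hence the $\mathcal{O}(T^{-1})$ rate, survives.
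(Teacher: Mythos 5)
Your feasibility and dual analyses track the paper's actual argument closely: the identity $\nabla H(\theta_t)^\top d_t = -c_h H(\theta_t) - \nabla_{\lambda_h}\varphi(\lambda_t;\theta_t)$ giving the $(1-\Theta(\alpha_t)c_h)$ contraction of $\|H(\theta_t)\|^2$ (Lemma~\ref{lemma:ht_square_diff} and~\eqref{eq:bounD_ht_square_diff_new}), the bound $\|\nabla_{\lambda_h}\varphi(\lambda_t;\theta_t)\|^2\lesssim \delta_t$, the prox-PL geometric decrease plus Danskin-type drift control for $\delta_t$ (Lemma~\ref{lemma:error_subp}, Corollary~\ref{crlr:descent_subp}, Lemma~\ref{lemma:danskin_prox_pl}), and the contradiction argument discharging boundedness of $\|H(\theta_t)\|$ (Lemma~\ref{lemma:ht_bounded_by_contradiction}) are all exactly the ingredients the paper uses.

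However, there is a genuine gap in your Lyapunov design. With $\mathbb{V}_t=\|H(\theta_t)\|^2+\beta\,\delta_t$, neither recursion ever produces a $-c\,\alpha_t\|d_t\|^2$ term: the feasibility step contributes $+\mathcal{O}(\alpha_t^2\|d_t\|^2)$ and the dual step contributes $+\mathcal{O}(\alpha_t\|d_t\|^2)$, so your claimed one-step descent $\mathbb{V}_{t+1}\le\mathbb{V}_t-c(\|d_t\|^2+\|H(\theta_t)\|^2)$ cannot be established from these two pieces alone. Your fallback — recovering $\|d_t\|^2$ at the end via $\|\nabla F(\theta_t)A_{ag}^\top\lambda_t\|^2=2\delta_t+2\Phi(\theta_t)+2c_h\lambda_{h,t}^\top H(\theta_t)$ — does not close the gap either, because the telescoping controls $\frac{1}{T}\sum_t\delta_t$ and $\frac{1}{T}\sum_t\|H(\theta_t)\|^2$ but gives no handle on $\frac{1}{T}\sum_t\Phi(\theta_t)$ (which need not vanish at rate $T^{-1}$, and is the dominant contribution away from stationarity), and only yields $\frac{1}{T}\sum_t\|H(\theta_t)\|=\mathcal{O}(T^{-1/2})$ for the cross term. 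The paper's fix, which you need, is to augment the Lyapunov function with $\lambda_f^\top A F(\theta_t)+\frac{\alpha_0}{2\gamma_0}\|\lambda_{f,t}-\lambda_f\|^2+\lambda_{h,t}^\top H(\theta_t)$ (see~\eqref{eq:Vt_sharper}): the descent of the nonnegative weighted objective $\lambda_f^\top AF(\theta_t)$ along $d_t$, combined with the projection inequality for the $\lambda$-update (Lemma~\ref{lemma:update_lam_property}) and the identity $\langle\lambda_t,\nabla_\lambda\varphi(\lambda_t;\theta_t)\rangle=\|d_t\|^2-c_h\lambda_{h,t}^\top H(\theta_t)$, is precisely what generates the $-\alpha_t\|d_t\|^2$ term in~\eqref{eq:Vt_diff_t_bound_new}; the $\mathcal{O}(\alpha_t\|d_t\|^2)$ leakage from the dual drift is then absorbed by choosing $c_{\varphi,d}$ large. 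Without these extra terms the stationarity half of the claimed bound is unproven.
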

\end{tcolorbox}
Theorem~\ref{thm:finite_time_convergence_approx} states that $\{\theta_t\}$ produced by Algorithm~\ref{alg:approximate_alg} converges to a KKT solution of the PMOL problem in the general nonconvex case. Moreover, both $\|d_t\|^2$ and $\|H(\theta_t)\|^2$ converge to zero at a rate of $\mathcal{O}(T^{-1})$, implying the convergence of both the objective values and the preference constraints. Note that, the convergence in terms of $\|H(\theta_t)\|^2$ at a rate of $\mathcal{O}(T^{-1})$ is weaker compared to the one with $\|H(\theta_t)\|_1$ at the same rate for Algorithm~\ref{alg:generic_C_descent}. This is reasonable since Algorithm~\ref{alg:approximate_alg} only uses a one-step approximate update of $\lambda_t$ instead of exactly solving the subprogram.

\noindent\textbf{The stochastic variant.}
We employ a stochastic variant of Algorithm~\ref{alg:approximate_alg} based on the double sampling techniques developed in the recent work~\citep{chen2023three}. The update is given by
{\small \begin{subequations}\label{eq:stochastic_update_lam_whole}
\begin{align}
\theta_{t+1} =& \theta_t + \nabla F_{\xi_{t,1}}(\theta_t) A_{ag}^\top \lambda_t \\
\lambda_{t+1} =& \Pi_{\Omega_\lambda} \big(\lambda_{t} 
- \gamma_{t} \tilde{\nabla}_\lambda \varphi(\lambda_{t}; \theta_t) \big) \label{eq:stochastic_update_lam} \\
\tilde{\nabla}_\lambda \varphi(\lambda_{t}; \theta_t)
=& A_{ag} \nabla F_{\xi_{t,2}}(\theta_t)^\top \nabla F_{\xi_{t,1}}(\theta_t) A_{ag}^\top \lambda_{t} - [0^\top,  c_h H_{\xi_{t,1}}(\theta_t)^\top]^\top 
\end{align}
\end{subequations}}
where $\tilde{\nabla}$ is the unbiased stochastic estimate of the gradient, 
and $\xi_{t,1}$ and $\xi_{t,2}$ are two independent stochastic samples obtained at iteration $t$. 

The full description of the stochastic algorithm and its convergence guarantee are deferred to Appendix~\ref{sec_app:sto_algorithms}.
We provide a converegnce rate guarantee that matches the rate of SGD under additional assumptions on the bounded variance of the stochastic gradients.

\subsection{Choice of relative preferences}
\label{sub:choice_cone}

As briefly discussed in Section~\ref{sec:prelim}, the ordering cone and the corresponding matrix $A$ can be specified according to practical needs.
We first discuss how to obtain matrix $A$ for the relative preference given the set of improvement directions.
Then we discuss how to choose the relative preference to allow controlled ascent update, which is useful for touring the Pareto front~\citep{mahapatra2020multi}.

\noindent\textbf{Ordering cone generation.}
In practice, to obtain the polyhedral cone that defines the partial order, one can usually first define the extreme rays of the polyhedral cone. We then show how to convert the extreme ray description of the cone to the half-space description given by matrix $A$, i.e., $C_{A} = \{y \in \R^M \mid A y \geq 0 \} $, by showing how to compute $A$ from the extreme rays.

Let $Y = [y_1 \cdots y_M] \in \R^{M\times M}$ be a matrix that contains all the extreme rays of $C_A$ as its column vectors, then $C_A = \{Y \lambda\mid \lambda \geq 0\}$. Let $a_m^\top \in \R^{1\times M}$ denote the row vectors of $A$ for all $m\in [M]$.
Then all $a_m$ can be found by $a$ that solves the following linear feasibility program 
{\small
\begin{equation}\label{eq:solve_C_A}
  \mathop{\text{find}}_{a\neq 0, \lambda\geq 0}~~ 
  ~~\mathrm{s.t.}~~  Y \lambda  = c, ~~c^\top a = 0,~~
  Y^\top a \geq 0.
\end{equation}}
\noindent\textbf{Choice of $C_A$ for controlled ascent.}
If $C_A$ is not pre-specified, and the decision maker wants to choose $C_A$ to allow controlled ascent, it can be achieved with the following procedure. 
Let $F_0 = F(\theta_0)$ be the objective of the initial iterate of the algorithm, and $F_{go}$ be the target function value along the controlled ascent direction. 
To ensure $F_{go} - F_0 \in - C_A$ for controlled ascent, we include $(F_0 - F_{go} )/\|F_0 - F_{go}\| $ in the set of extreme rays, then take the extreme rays of the convex hull of the new set to form the columns of $Y$. Finally, we obtain $C_A$ by solving~\eqref{eq:solve_C_A}.

\section{Related Works}
\vspace{-2mm}
To put our work in context, we review the most relevant literature in (preference-guided) multi-objective optimization, constrained optimization,
with a focus on gradient-based approaches.

\textbf{Multi-objective optimization (MOO).}
A straightforward approach of MOO is to use scalarization to transform MOO into a single-objective optimization problem~\citep{miettinen_nonlinear_1998}.
Another popular approach focuses on finding update directions which avoid conflicts with the gradients of the objectives \citep{sener2018multi, yu2020gradient, liu2021conflict}. A foundational algorithm in this domain is the Multiple Gradient Descent Algorithm (MGDA) \citep{fliege2000steepest,fliege2019complexity,Desideri2012mgda,liu2021stochastic}, which dynamically weights gradients to find a steepest common descent direction for all objectives. Later on, variants of MGDA are developed, which are discussed in detail in Appendix~\ref{sub_app:related_work} and~\citep{chen2023three}.
However, solutions based on MGDA usually cannot capture pre-defined user preferences that represent various trade-offs on the Pareto front.
This motivates the development of 
\emph{preference-guided multi-objective optimization} methods.

Preferences can be modeled through weights or thresholds assigned to different objectives~\citep{miettinen_nonlinear_1998}.
For example, 
{scalarization-based} methods use the $\ell_p$-norm of the weighted vector-valued objective to convert the vector-valued objective into a scalar-valued objective, e.g., Linear scalarization (LS), Tchebycheff scalarization; see e.g.,~\citep{lin2024smooth}. Then the problem can be solved by single-objective optimization on the scalar objective.
The $\epsilon$-constraint methods enforce threshold constraints on different objectives, then solve the problem by constrained optimization;
see e.g.,~\citep{curtis2023fair}.
More recently, preferences have been modeled by preference vectors defined in the objective space.
Then the problem can be formulated as finding Pareto optimal solutions satisfying the constraints defined by the preference vectors~\citep{lin2019pareto}, or optimizing the distance to the preference vectors~\citep{mahapatra2020multi,momma2022multi}. The key difference between FERERO and these works is that FERERO can capture more flexible preferences based on a general partial order, and general inequality/equality constraints. Moreover, we provide convergence rate guarantees for the proposed algorithms.
A detailed comparison is summarized in Table~\ref{tab:compare_exist_pmol} in Section~\ref{sec:intro} and Table~\ref{tab:compare_exist_pmol_app} in Appendix~\ref{sub_app:comparison_related_work}.

\textbf{Constrained optimization.}
Constrained optimization methods include primal methods, penalty and barrier methods, and primal-dual methods~\citep{bertsekas_constrained_opt,luenberger_nonlinear_opt_2008}. Our proposed method is related to the primal method that finds an update direction to ensure the models are feasible and improving along the optimization trajectory. To address the limitation that it usually requires a stage-one procedure to ensure the initialization is feasible, we use an adaptive approach to ensure the constraint violation is decreasing and converging to zero. This idea can also be found in sequential quadratic programming (SQP). 
SQP has been widely applied to solve constrained single-objective optimization~\citep{gill2005snopt_sqp,byrd2008inexact_sqp}.  
Later on, it has also been applied to constrained MOO~\citep{fliege_constrained_moo_sqp}.
Compared to SQP, we use an identity matrix to approximate the Hessian of each objective, and we propose an adaptive variant that automatically adjust the descent amount of objectives.
Furthermore, existing SQP algorithms typically require an inner loop to solve the optimal Lagrangian multiplier, resulting in double-loop algorithms. In contrast, we develop a single-loop algorithm which can be more efficient.

\textbf{Vector optimization.}
Vector optimization~\citep{ehrgott_multicriteria_2005,jahn_vector_2011} generalizes multi-objective optimization by substituting the commonly used component-wise partial order with a more general partial order, such as a general convex-cone induced partial order used in this paper.
In the unconstrained setting, the MGDA method is extended to a steepest cone descent method in the vector optimization setting in~\citep{grana_drummond_steepest_2005}.
In the constrained setting, the first-order optimality conditions are studied in~\citep{grana_drummond_first_2003,ye2003multiobjective}.
Algorithms based on projected gradient~\citep{drummond_projected_2004,fukuda2011convergence,fukuda2013inexact} or conditional gradient~\citep{chen_conditional_2023} are developed to solve vector optimization with parameters in a constraint set, to name a few.
Besides gradient-based vector optimization, another line of works focus on black-box vector optimization with discrete design space; see e.g.~\citep{auer2016pareto,ararat2023vector}. 
To our best knowledge, we are the first to design gradient-based single-loop (stochastic) primal algorithms for constrained vector optimization with convergence rate guarantees.

\begin{figure*}[t]
\centering
\begin{subfigure}[b]{0.16\textwidth}
  \centering
  \includegraphics[width=.98\linewidth]{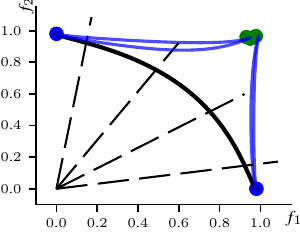}
  \caption{LS}
  \label{sfig:ls_toy}
\end{subfigure}
\begin{subfigure}[b]{0.16\textwidth}
  \centering
  \includegraphics[width=.98\linewidth]{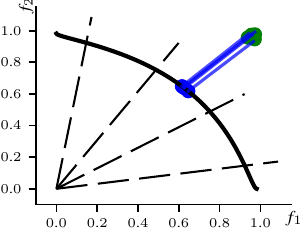}
  \caption{MGDA}
  \label{sfig:MGDA_toy}  
\end{subfigure}
\begin{subfigure}[b]{0.16\textwidth}
  \centering
  \includegraphics[width=.98\linewidth]{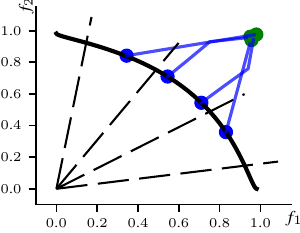}
  \caption{PMTL}
  \label{sfig:PMTL_toy}  
\end{subfigure}
\begin{subfigure}[b]{0.16\textwidth}
  \centering
  \includegraphics[width=.98\linewidth]{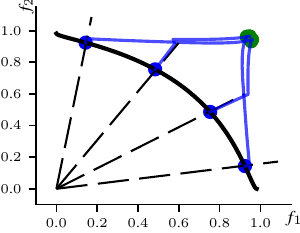}
  \caption{EPO}
  \label{sfig:EPO_toy}  
\end{subfigure}
\begin{subfigure}[b]{0.16\textwidth}
  \centering
  \includegraphics[width=.98\linewidth]{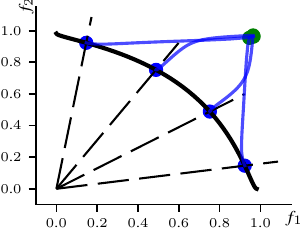}
  \caption{FERERO}
  \label{sfig:PMOL_toy}  
\end{subfigure}
\begin{subfigure}[b]{0.16\textwidth}
  \centering
  \includegraphics[width=.98\linewidth]{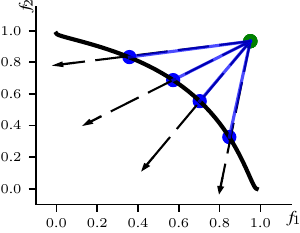}
  \caption{FERERO}
  \label{sfig:PMOL_toy_ref}  
\end{subfigure}
\caption{Converging solutions (blue dots) and optimization trajectories (blue lines) on the objective space of different methods on synthetic objectives given in~\eqref{eq:obj_synthetic1}.
Dashed arrows represent pre-specified preference vectors. The green dots represent initial objective values.
}
\label{fig:synthetic_concave_exponential}  
\vspace{-3mm}
\end{figure*}

\vspace{-5mm}
\section{Experiments}
\label{sec:experiment}
\vspace{-3mm}

In this section, we conduct experiments to verify our theory and show the applicability of the algorithms to 
preference-guided multi-task learning, and multi-objective finetuning of large multi-lingual speech recognition models. 
We use Linear scalarization (LS),
MGDA~\citep{sener2018multi}, PMTL~\citep{lin2019pareto}, EPO~\citep{mahapatra2020multi}, XWC-MGDA~\citep{momma2022multi} as baselines for comparison.

\begin{figure*}[t]
\centering
\begin{subfigure}[b]{0.16\textwidth}
  \centering
  \includegraphics[width=.98\linewidth]{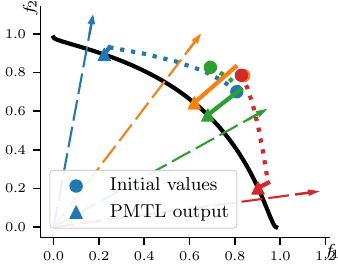}
  \caption{PMTL}
  \label{sfig:syn_init_close_pmtl2}  
\end{subfigure}
\begin{subfigure}[b]{0.16\textwidth}
  \centering
  \includegraphics[width=.98\linewidth]{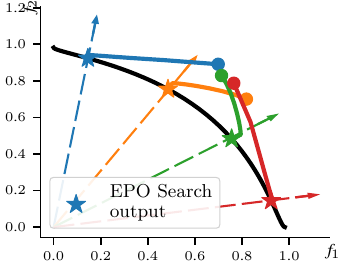}
  \caption{EPO}
  \label{sfig:syn_init_close_epo2}  
\end{subfigure}
\begin{subfigure}[b]{0.16\textwidth}
  \centering
  \includegraphics[width=.98\linewidth]{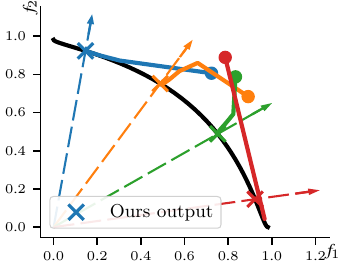}
  \caption{FERERO}
  \label{sfig:syn_init_close_ours2}  
\end{subfigure}
\begin{subfigure}[b]{0.16\textwidth}
  \centering
  \includegraphics[width=.98\linewidth]{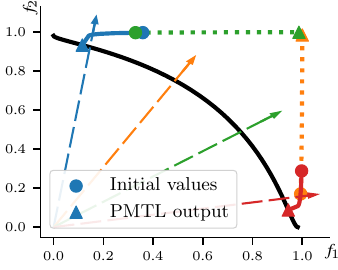}
  \caption{PMTL}
  \label{sfig:syn_init_close_pmtl}  
\end{subfigure}
\begin{subfigure}[b]{0.16\textwidth}
  \centering
  \includegraphics[width=.98\linewidth]{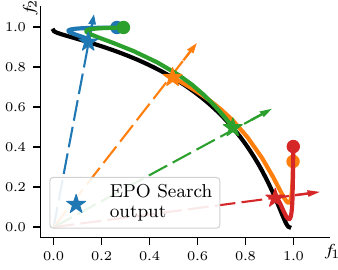}
  \caption{EPO}
  \label{sfig:syn_init_close_epo}  
\end{subfigure}
\begin{subfigure}[b]{0.16\textwidth}
  \centering
  \includegraphics[width=.98\linewidth]{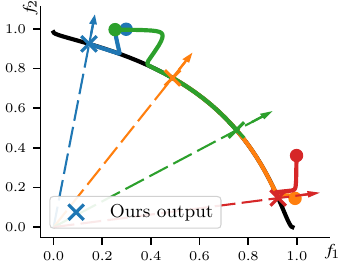}
  \caption{FERERO}
  \label{sfig:syn_init_close_ours}  
\end{subfigure}
\caption{Outputs (colored markers) and optimization trajectories (colored lines) of different methods when initial objectives are near the Pareto front. Different colors represent different preferences.}
\vspace{-6mm}
\label{fig:syn_init_close}  
\end{figure*}

\textbf{Metrics.}
\emph{Objective loss and accuracy.}
We report the objective losses and accuracies in classification.\\
\emph{Relative loss profile.} We use the element-wise product of the preference vector and the objective values as a measure of the relative loss profile. 
\emph{Hypervolume.} Let $F' \in \R^M$ denote a reference point, and $\mathcal{S}$ denote a set of objective function values of the obtained models.
Hypervolume measures the size of the dominated space of $\mathcal{S}$ relative to $F'$, which can be computed by $H(\mathcal{S})=\Lambda ( \{q \in \R^M \mid \exists F \in \mathcal{S}: F \leq q  \leq F' \} )$, where $\Lambda(\cdot)$ denotes the Lebesgue measure.
For a fair comparison, we use the Nadir point, i.e., the worst performance on single-task baselines, as the reference point $F'$.

\textbf{Additional details.} The implementation and additional experiments can be found in Appendix~\ref{sec_app:experiment}. 
\vspace{-2mm}

\vspace{-2mm}
\subsection{Synthetic data} 
\label{sub:synthetic_data}
\vspace{-2mm}
Following~\citep{lin2019pareto,mahapatra2020multi,momma2022multi}, 
the first objective we consider is
{\small
\begin{equation}\label{eq:obj_synthetic1}
F(\theta) = \big( 1-e^{-\|\theta -\frac{1}{\sqrt{q}} \mathbf{1}\|_2^2}, 
~~1-e^{- \|\theta +\frac{1}{\sqrt{q}} \mathbf{1} \|_2^2}\big) .
\end{equation}}
The objective has a nonconvex Pareto front (PF). See the results of different methods in Figure~\ref{fig:synthetic_concave_exponential}. With uniformly generated weights from a simplex,
LS only finds extreme points on the PF with one objective minimized.
MGDA can only find points close to the center of the PF.
PMTL can find points in the subregions but not aligned well with the exact preference vectors.
Similar to EPO, in Figure~\ref{sfig:PMOL_toy}, our method finds points that align well with the exact preferences; and in Figure~\ref{sfig:PMOL_toy_ref}, 
our method can handle different definitions of preferences.

We conduct another experiment in a more difficult setting where the initial objectives are close to the PF.
In Figures~\ref{sfig:syn_init_close_pmtl2}-\ref{sfig:syn_init_close_ours2}, we consider a relatively easier case where the initial model is not too close to the Pareto optimal.
For our method, by solving~\eqref{eq:solve_C_A}, $a_1 = [\frac{1}{\sqrt{5}}; \frac{2}{\sqrt{5}}], a_2 = [\frac{2}{\sqrt{5}}; \frac{1}{\sqrt{5}}]$.
The corresponding matrix $A$ is given by $A = [a_1, a_2]^\top$.
In this setting,
all methods converge to the PF, and our method takes the least number of iterations (PMTL takes 100, EPO Search takes 60, and our method takes only 10 iterations). PMTL does not align exactly with the preference vectors, while EPO and our method do.
In Figures~\ref{sfig:syn_init_close_pmtl}-\ref{sfig:syn_init_close_ours}, 
PMTL and our method take $200$ iterations, EPO Search takes $80$ iterations.
Results show that for the green and yellow preferences, PMTL moves further away from the PF in the first stage, and does not perform any update in the second stage.
It converges to the PF only in 2 out of 4 cases.
In contrast, with controlled ascent updates, EPO and our method can converge to the PF and trace the PF until the objectives align exactly with the preferences.

\subsection{Real data} 
\label{sub:real_data}
\vspace{-0.2cm}

\begin{figure}[t]
\centering
\begin{subfigure}[b]{0.32\textwidth}
  \centering
  \includegraphics[width=.98\linewidth]{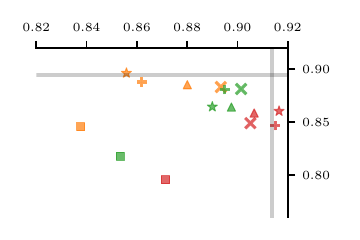}
  \caption{Multi-MNIST accuracy}
  \label{sfig:mnist_acc}
\end{subfigure}
\begin{subfigure}[b]{0.32\textwidth}
  \centering
  \includegraphics[width=.98\linewidth]{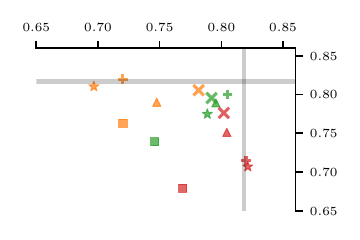}
  \caption{Multi-Fashion accuracy}
  \label{sfig:fashion_acc}  
\end{subfigure}
\begin{subfigure}[b]{0.32\textwidth}
  \centering
  \includegraphics[width=.98\linewidth]{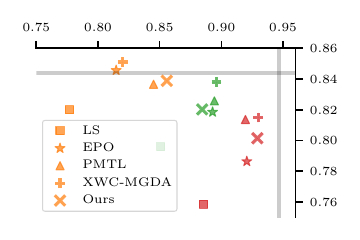}
  \caption{Multi-F+M accuracy}
  \label{sfig:fashion_and_mnist_acc}  
\end{subfigure}
\begin{subfigure}[b]{0.32\textwidth}
  \centering
  \includegraphics[width=.98\linewidth]{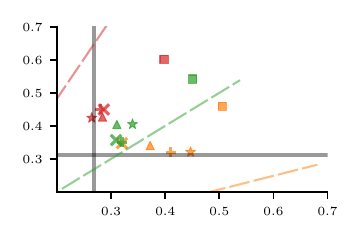}
  \caption{Multi-MNIST loss}
  \label{sfig:mnist_loss}  
\end{subfigure}
\begin{subfigure}[b]{0.32\textwidth}
  \centering
  \includegraphics[width=.98\linewidth]{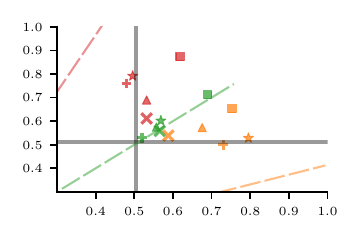}
  \caption{Multi-Fashion loss}
  \label{sfig:fashion_loss}  
\end{subfigure}
\begin{subfigure}[b]{0.32\textwidth}
  \centering
  \includegraphics[width=.98\linewidth]{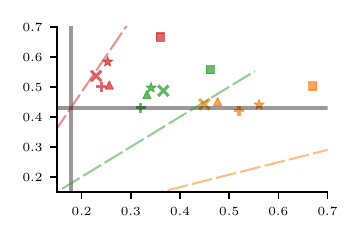}
  \caption{Multi-F+M loss}
  \label{sfig:fashion_and_mnist_loss}  
\end{subfigure}
\caption{
Training losses and accuracies of various methods with different preferences across three image datasets. The horizontal and vertical axes represent results for objective 1 and objective 2, respectively. Different colored dashed arrows indicate various preference vectors. Different markers denote the solutions obtained by different methods, with marker colors matching the preferences.
}
\vspace{-8mm}
\label{fig:img_class_MF}  
\end{figure}

\noindent\textbf{Multi-patch image classification.} 
Following~\citep{lin2019pareto,mahapatra2020multi,momma2022multi}, we consider three datasets for image classification, including Multi-MNIST, Multi-Fashion, and Multi-Fashion+MNIST.
The two tasks or objectives in all three datasets are to classify the top-left and the bottom-right images, respectively.
For a fair comparison, we use LeNet as the backbone neural network.
The training losses and accuracies of different methods given different preference vectors are plotted in Figure~\ref{fig:img_class_MF}.
Experiments for our method are repeated 5 times. Hypervolumes with means and standard deviations are reported in Table~\ref{tab:hyperv}.
The results for other methods in Table~\ref{tab:hyperv} are referenced from~\citep{momma2022multi}.
\begin{table}[ht]
\vspace{-0.4cm}
\caption{Hypervolumes of different methods 
($\times 10^{-2}$)}
\label{tab:hyperv}
\small
\centering
\begin{tabular}{l|ccccc}
\toprule 
{Datasets} & {LS} & {PMTL~\citep{lin2019pareto}} & {EPO~\citep{mahapatra2020multi}} 
& {XWC-MGDA~\citep{momma2022multi}} & {FERERO} \\
\midrule
Multi-MNIST loss   
&1.68 &1.41 &1.35 &1.42 
&\textbf{1.97{\tiny$\pm$0.21}} \\
Multi-Fashion loss 
&6.75 &5.90 &6.02 &6.77 
&\textbf{7.76{\tiny$\pm$0.18}} \\
Multi-F+M  loss 
&3.63 &3.03 &3.76 &\textbf{3.89} 
&3.82{\tiny$\pm$0.21} \\
Multi-MNIST accuracy 
&0.19 &0.15 &0.15 &0.16 
&\textbf{0.24{\tiny$\pm$0.04}} \\
Multi-Fashion accuracy 
&0.99 &0.87 &0.87 &0.99 
&\textbf{1.17{\tiny$\pm$0.07}} \\
Multi-F+M accuracy 
&0.48 &0.40 &0.50 &0.52 
&\textbf{0.53{\tiny$\pm$0.04}} \\
Emotion loss
&0.0258 &0.0230 &\textbf{0.0366} &0.0348 &0.0357{\tiny$\pm$0.0006} \\
\bottomrule
\end{tabular}
\vspace{-3mm}
\end{table}

One limitation of EPO is that the preference is defined as a ray from the origin in the objective space, whose corresponding objectives can be unattainable, e.g., the yellow preferences in Figure~\ref{fig:img_class_MF}.
As a result, the losses of all methods are far away from the preference vectors.
In this case, a more flexible choice of preferences is helpful to ensure preference satisfaction. 
To demonstrate this, we conduct experiments with more flexible preferences; see
the results in Figure~\ref{fig:img_class_MF_pref_close}, where the obtained solutions align better with the preference lines compared to those in Figure~\ref{fig:img_class_MF}.
Moreover, it can perform controlled ascent updates during optimization, which cannot be achieved by PMTL or XWC-MGDA.
%
\begin{figure}[t]
\centering
\begin{subfigure}[b]{0.3\textwidth}
  \centering
  \includegraphics[width=.99\linewidth]{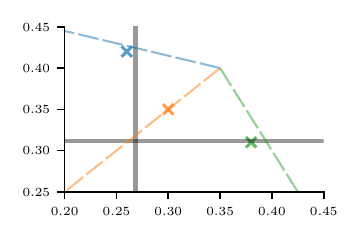}
  \caption{Multi-MNIST loss}
  \label{sfig:mnist_loss_acs}  
\end{subfigure}
\begin{subfigure}[b]{0.3\textwidth}
  \centering
  \includegraphics[width=.99\linewidth]{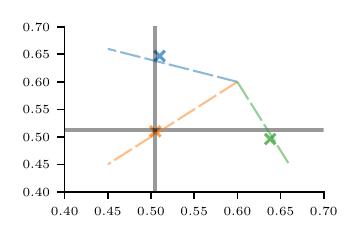}
  \caption{Multi-Fashion loss}
  \label{sfig:fashion_loss_acs}  
\end{subfigure}
\begin{subfigure}[b]{0.3\textwidth}
\centering
\includegraphics[width=.99\linewidth]{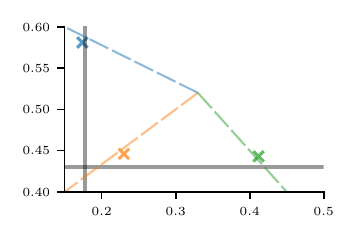}
\caption{Multi-F+M loss}
\label{sfig:fashion_and_mnist_loss_acs}  
\end{subfigure}
\caption{Losses and preferences of FERERO when the initial objective is close to the Pareto front.}
\label{fig:img_class_MF_pref_close}  
\vspace{-6mm}
\end{figure}

\vspace{-5mm}
\paragraph{Emotion recognition.}
We apply our method to predict 6 types of emotions from 593 songs on the Emotions and Music dataset~\citep{trohidis2011multi_emotion}. 
We follow the experiment settings in~\citep{mahapatra2020multi}, with more details summarized in Appendix~\ref{sub_app:additional experiments}. The hypervolumes are reported in Table~\ref{tab:hyperv}.

\begin{wraptable}{R}{0.55\textwidth}
\vspace{-0.5cm}
\begin{minipage}{0.55\textwidth} 
\caption{WERs (\%) on Librispeech and AISHELL v1. 
}
\vspace{-.1 cm}
\label{tab:lib_ais} 
\footnotesize
\centering
\begin{tabular}{@{}l|cc c@{}}
\toprule
Method & English & Chinese & Average\\
\midrule
Komatsu et al.~\citep{ctclibrispeech} &7.11 &- &-\\
w/o CPC~\citep{saif24_interspeech} 
& 11.8 &10.2 &11.0 \\
Init. (M2ASR)~\citep{saif24_interspeech} 
&7.3 &6.2 &6.7 \\
{LS-FT} &6.8 &5.9 &6.4 \\
FERERO-FT &\textbf{5.4} & \textbf{4.9} & \textbf{5.1}\\
\bottomrule
\end{tabular}
\vspace{-2mm}
\end{minipage}
\end{wraptable}
\noindent\textbf{Multi-lingual speech recognition.}
We further apply the proposed method to the multi-objective finetuning of pre-trained multi-lingual speech models.
We use the Librispeech (100 hours)~\citep{panayotov2015librispeech}, and AISHELL v1~\citep{bu2017aISHELL}  datasets for multi-lingual speech recognition.
A conformer with 8 blocks is used as the model architecture. The total number of parameters is around 64.5M with 58.4M encoder layer parameters and the rest being the classification layer parameters.
We consider the objectives associated with the speech recognition Connectionist Temporal Classification (CTC) losses in Chinese and English, denoted as $f_{t}^{\rm ch}$ and $f_{t}^{\rm en}$, respectively. We also use the self-supervised Contrastive Predictive Coding (CPC) loss $f_{p}$ for representation learning; that is
{\small
\begin{equation} 
\min_{\theta}~~~ F(\theta): = \big(f_{p}(\theta), 
f_{t}^{\rm ch}(\theta), f_{t}^{\rm en}(\theta) \big)^\top 
~~~\mathrm{s.t.}~~ ~
f_{p}(\theta) \leq \epsilon_1,
~f_{t}^{\rm ch}(\theta) - f_{t}^{\rm en}(\theta) = \epsilon_2 
\end{equation}}
where the first constraint ensures to learn a good representation with $\epsilon_1 = 1.2$, and the second constraint avoids one language loss dominates the other with $\epsilon_2 = 0.5$; see more details in Appendix~\ref{sub_app:implement_details}.

Results on the word error rate (WER) are reported in Table~\ref{tab:lib_ais}. The baselines  include the state-of-the-art result from Komatsu et al.~\citep{ctclibrispeech} without an additional large language model,
our own implementation of training using only the sum of supervised CTC losses (w/o CPC), 
the initial pre-trained M2ASR model~\citep{saif24_interspeech} (init.),
linear scalarization of all three objectives for finetuning a pre-trained model with the CPC loss (LS-FT).
Results show that considering CPC loss besides the supervised CTC loss improves the average WER by 4.2\%, and this can be further improved by 0.3\% by finetuning with linear scalarization. However, the LS-FT model has a much better performance in Chinese compared to English. With our proposed approach, the performance gap between different languages is reduced, and the average WER is further improved by 1.3\%.

\vspace{-0.2cm}
\section{Conclusions}
\vspace{-0.2cm}
In this work, we frame preference-guided multi-objective learning as a constrained vector optimization problem. 
Specifically, we introduce constraints and partial order to capture the absolute and relative preferences.
Under this framework, we develop algorithms to solve the constrained vector optimization problem. 
Our proposed algorithms use a unified formulation without solving different subprograms at different stages. And they enjoy the benefit of allowing controlled ascent and escaping weak optimal solutions.
Theoretical guarantees on the non-asymptotic convergence of the deterministic algorithms and their stochastic variants are provided.
Experiments on benchmark datasets demonstrate the broad applicability of the proposed algorithms.

\section*{Broader Impacts and Limitations}
This paper casts the preference-guided multi-objective learning as a constrained vector optimization problem and proposes an algorithm with single-loop and stochastic variants to solve the problem, which have non-asymptotic convergence guarantees.
The proposed method is applied to image classification, speech recognition, and emotion classification.
The positive impact is that it is a principled method with efficient implementations that has broad applications across various domains. There is no negative social impact.

The proposed algorithm is able to model flexible preferences but at a cost of higher per-iteration complexity compared to scalarization methods.
The theoretical guarantees make standard assumptions that the objectives are lower bounded, Lipschitz continuous and smooth. These are common assumptions in the optimization literature, and can be satisfied for neural networks with smooth activation functions.

\section*{Acknowledgements}

The work of L. Chen, AFM Saif,  and T. Chen was  supported by the National
Science Foundation (NSF) projects 2401297, 2412486, the RPI-IBM Artificial Intelligence Research Collaboration (AIRC),
the Cisco Research Award, and the IEEE Signal Processing Society scholarship.
The work of Y. Shen was supported by NSF ECCS-2412484.
We also thank Quan Xiao, Prof. Luis Nunes Vicente, 
Prof. Rongjie Lai  for inspiring and helpful discussions, and the anonymous reviewers for their constructive feedback to improve our paper.



{
\bibliographystyle{plain}
\bibliography{myabrv,MOO_stability,MOO_opt,MTL,MTL2,speech}

\begin{thebibliography}{10}

\bibitem{angelo2023_moo_drug}
Jaqueline~S Angelo, Isabella~A Guedes, Helio~JC Barbosa, and Laurent~E
  Dardenne.
\newblock Multi-and many-objective optimization: present and future in de novo
  drug design.
\newblock {\em Frontiers in Chemistry}, 11, 2023.

\bibitem{ararat2023vector}
Cagin Ararat and Cem Tekin.
\newblock Vector optimization with stochastic bandit feedback.
\newblock In {\em Proc. International Conference on Artificial Intelligence and
  Statistics}, pages 2165--2190, Valencia, Spain, 2023.

\bibitem{auer2016pareto}
Peter Auer, Chao-Kai Chiang, Ronald Ortner, and Madalina Drugan.
\newblock Pareto front identification from stochastic bandit feedback.
\newblock In {\em Proc. International Conference on Artificial Intelligence and
  Statistics}, pages 939--947, Cadiz, Spain, 2016.

\bibitem{bertsekas_constrained_opt}
Dimitri Bertsekas.
\newblock {\em Constrained Optimization and Lagrange Multiplier Methods
  (Optimization and Neural Computation Series)}.
\newblock Athena Scientific, 1996.

\bibitem{bu2017aISHELL}
Hui Bu, Jiayu Du, Xingyu Na, Bengu Wu, and Hao Zheng.
\newblock Aishell-1: An open-source mandarin speech corpus and a speech
  recognition baseline.
\newblock In {\em Conference of the oriental chapter of the international
  coordinating committee on speech databases and speech I/O systems and
  assessment}, pages 1--5, 2017.

\bibitem{byrd2008inexact_sqp}
Richard~H Byrd, Frank~E Curtis, and Jorge Nocedal.
\newblock An inexact sqp method for equality constrained optimization.
\newblock {\em SIAM Journal on Optimization}, 19(1):351--369, 2008.

\bibitem{chen2023three}
Lisha Chen, Heshan Fernando, Yiming Ying, and Tianyi Chen.
\newblock Three-way trade-off in multi-objective learning: Optimization,
  generalization and conflict-avoidance.
\newblock {\em Journal of Machine Learning Research}, 2024.

\bibitem{chen_conditional_2023}
Wang Chen, Xinmin Yang, and Yong Zhao.
\newblock Conditional gradient method for vector optimization.
\newblock {\em Computational Optimization and Applications}, 85(3):857--896,
  July 2023.

\bibitem{curtis2023fair}
Frank~E Curtis, Suyun Liu, and Daniel~P Robinson.
\newblock Fair machine learning through constrained stochastic optimization and
  an epsilon-constraint method.
\newblock {\em Optimization Letters}, pages 1--17, 2023.

\bibitem{drusvyatskiy2018_eb_qg}
Dmitriy Drusvyatskiy and Adrian~S Lewis.
\newblock Error bounds, quadratic growth, and linear convergence of proximal
  methods.
\newblock {\em Mathematics of Operations Research}, 43(3):919--948, 2018.

\bibitem{Desideri2012mgda}
Jean-Antoine Désidéri.
\newblock {Multiple-gradient Descent Algorithm (MGDA) for Multi-objective
  Optimization}.
\newblock {\em Comptes Rendus Mathematique}, 350(5-6), 2012.

\bibitem{ehrgott_multicriteria_2005}
Matthias Ehrgott.
\newblock {\em Multicriteria optimization}.
\newblock Springer, Berlin; New York, 2nd ed edition, 2005.

\bibitem{fernando2024variance}
Heshan Fernando, Lisha Chen, Songtao Lu, Pin-Yu Chen, Miao Liu, Subhajit
  Chaudhury, Keerthiram Murugesan, Gaowen Liu, Meng Wang, and Tianyi Chen.
\newblock Variance reduction can improve trade-off in multi-objective learning.
\newblock In {\em Proc. IEEE International Conference on Acoustics, Speech, and
  Signal Processing}, pages 6975--6979, 2024.

\bibitem{fernando2022mitigating}
Heshan Fernando, Han Shen, Miao Liu, Subhajit Chaudhury, Keerthiram Murugesan,
  and Tianyi Chen.
\newblock Mitigating gradient bias in multi-objective learning: A provably
  convergent stochastic approach.
\newblock In {\em Proc. International Conference on Learning Representations},
  Kigali, Rwanda, May 2023.

\bibitem{fliege2000steepest}
J{\"o}rg Fliege and Benar~Fux Svaiter.
\newblock Steepest descent methods for multicriteria optimization.
\newblock {\em Mathematical methods of operations research}, 51:479--494, 2000.

\bibitem{fliege_constrained_moo_sqp}
J\"{o}rg Fliege and A.~Ismael~F. Vaz.
\newblock A method for constrained multiobjective optimization based on sqp
  techniques.
\newblock {\em SIAM Journal on Optimization}, 26(4):2091--2119, 2016.

\bibitem{fliege2019complexity}
J{\"o}rg Fliege, A~Ismael~F Vaz, and Lu{\'\i}s~Nunes Vicente.
\newblock {Complexity of Gradient Descent for Multi-objective Optimization}.
\newblock {\em Optimization Methods and Software}, 34(5):949--959, 2019.

\bibitem{fukuda2011convergence}
Ellen~H. Fukuda and L.~M. Graña~Drummond.
\newblock On the convergence of the projected gradient method for vector
  optimization.
\newblock {\em Optimization}, 60(8-9):1009--1021, 2011.

\bibitem{fukuda2013inexact}
Ellen~H. Fukuda and L.~M. Graña~Drummond.
\newblock Inexact projected gradient method for vector optimization.
\newblock {\em Computational Optimization and Applications}, 54:473--493, 2013.

\bibitem{gebken2019descent}
Bennet Gebken, Sebastian Peitz, and Michael Dellnitz.
\newblock A descent method for equality and inequality constrained
  multiobjective optimization problems.
\newblock In {\em Numerical and Evolutionary Optimization}, pages 29--61.
  Springer, 2019.

\bibitem{gill2005snopt_sqp}
Philip~E Gill, Walter Murray, and Michael~A Saunders.
\newblock Snopt: An sqp algorithm for large-scale constrained optimization.
\newblock {\em SIAM review}, 47(1):99--131, 2005.

\bibitem{gong2021_dynamic_barrier}
Chengyue Gong, Xingchao Liu, and Qiang Liu.
\newblock Automatic and harmless regularization with constrained and
  lexicographic optimization: A dynamic barrier approach.
\newblock In {\em Proc. Advances in Neural Information Processing Systems},
  volume~34, pages 29630--29642, virtual, 2021.

\bibitem{grana_drummond_first_2003}
L.~M. Graña~Drummond, A.~N. Iusem, and B.~F. Svaiter.
\newblock On first order optimality conditions for vector optimization.
\newblock {\em Acta Mathematicae Applicatae Sinica, English Series}, 19(3),
  September 2003.

\bibitem{drummond_projected_2004}
L.~M. Graña~Drummond and A.N. Iusem.
\newblock A projected gradient method for vector optimization problems.
\newblock {\em Computational Optimization and Applications}, 28:5--29, April
  2004.

\bibitem{grana_drummond_steepest_2005}
L.~M. Graña~Drummond and B.F. Svaiter.
\newblock A steepest descent method for vector optimization.
\newblock {\em Journal of Computational and Applied Mathematics},
  175(2):395--414, March 2005.

\bibitem{gulati2020conformer}
Anmol Gulati, James Qin, Chung-Cheng Chiu, Niki Parmar, Yu~Zhang, Jiahui Yu,
  Wei Han, Shibo Wang, Zhengdong Zhang, Yonghui Wu, et~al.
\newblock Conformer: Convolution-augmented transformer for speech recognition.
\newblock {\em arXiv preprint arXiv:2005.08100}, 2020.

\bibitem{jahn_vector_2011}
Johannes Jahn.
\newblock {\em Vector {Optimization}: {Theory}, {Applications}, and
  {Extensions}}.
\newblock Springer Berlin Heidelberg, Berlin, Heidelberg, 2011.

\bibitem{karimi2016_linear_converge_pl}
Hamed Karimi, Julie Nutini, and Mark Schmidt.
\newblock Linear convergence of gradient and proximal-gradient methods under
  the polyak-lojasiewicz condition.
\newblock {\em arXiv preprint arXiv:1608.04636}, 2016.

\bibitem{ctclibrispeech}
Tatsuya Komatsu, Yusuke Fujita, Jaesong Lee, Lukas Lee, Shinji Watanabe, and
  Yusuke Kida.
\newblock Better intermediates improve {CTC} inference.
\newblock {\em arXiv preprint arXiv:2204.00176}, 2022.

\bibitem{kyriakis2021pareto}
Panagiotis Kyriakis, Jyotirmoy Deshmukh, and Paul Bogdan.
\newblock Pareto policy adaptation.
\newblock In {\em Proc. International Conference on Learning Representations},
  virtual, 2021.

\bibitem{lin2022pareto}
Xi~Lin, Zhiyuan Yang, Xiaoyuan Zhang, and Qingfu Zhang.
\newblock Pareto set learning for expensive multi-objective optimization.
\newblock In {\em Proc. Advances in Neural Information Processing Systems},
  volume~35, New Orleans, LA, December 2022.

\bibitem{lin2024smooth}
Xi~Lin, Xiaoyuan Zhang, Zhiyuan Yang, Fei Liu, Zhenkun Wang, and Qingfu Zhang.
\newblock Smooth tchebycheff scalarization for multi-objective optimization.
\newblock {\em arXiv preprint arXiv:2402.19078}, 2024.

\bibitem{lin2019pareto}
Xi~Lin, Hui-Ling Zhen, Zhenhua Li, Qing-Fu Zhang, and Sam Kwong.
\newblock Pareto multi-task learning.
\newblock In {\em Proc. Advances in Neural Information Processing Systems},
  Vancouver, Canada, December 2019.

\bibitem{liu2024famo}
Bo~Liu, Yihao Feng, Peter Stone, and Qiang Liu.
\newblock Famo: Fast adaptive multitask optimization.
\newblock In {\em Proc. Advances in Neural Information Processing Systems},
  volume~36, New Orleans, LA, 2023.

\bibitem{liu2021conflict}
Bo~Liu, Xingchao Liu, Xiaojie Jin, Peter Stone, and Qiang Liu.
\newblock {Conflict-Averse Gradient Descent for Multi-task Learning}.
\newblock In {\em Proc. Advances in Neural Information Processing Systems},
  virtual, December 2021.

\bibitem{liu2021stochastic}
Suyun Liu and Luis~Nunes Vicente.
\newblock {The Stochastic Multi-gradient Algorithm for Multi-objective
  Optimization and its Application to Supervised Machine Learning}.
\newblock {\em Annals of Operations Research}, pages 1--30, 2021.

\bibitem{liu2022accuracy_fairness}
Suyun Liu and Luis~Nunes Vicente.
\newblock Accuracy and fairness trade-offs in machine learning: A stochastic
  multi-objective approach.
\newblock {\em Computational Management Science}, 19(3):513--537, 2022.

\bibitem{liu2021profiling}
Xingchao Liu, Xin Tong, and Qiang Liu.
\newblock {Profiling Pareto Front With Multi-Objective Stein Variational
  Gradient Descent}.
\newblock In {\em Proc. Advances in Neural Information Processing Systems},
  virtual, December 2021.

\bibitem{luenberger_nonlinear_opt_2008}
David~G. Luenberger and Yinyu Ye.
\newblock {\em Linear and {Nonlinear} {Programming}}, volume 116 of {\em
  International {Series} in {Operations} {Research} \& {Management} {Science}}.
\newblock Springer US, New York, NY, 2008.

\bibitem{Luukkonen2023_moo_drug}
Sohvi Luukkonen, Helle~W. {van den Maagdenberg}, Michael~T.M. Emmerich, and
  Gerard~J.P. {van Westen}.
\newblock Artificial intelligence in multi-objective drug design.
\newblock {\em Current Opinion in Structural Biology}, 79:102537, 2023.

\bibitem{mahapatra2020multi}
Debabrata Mahapatra and Vaibhav Rajan.
\newblock Multi-task learning with user preferences: Gradient descent with
  controlled ascent in pareto optimization.
\newblock In {\em Proc. International Conference on Machine Learning}, virtual,
  2020.

\bibitem{martinez2020minimax_pareto_fairness}
Natalia Martinez, Martin Bertran, and Guillermo Sapiro.
\newblock Minimax pareto fairness: A multi objective perspective.
\newblock In {\em Proc. International Conference on Machine Learning}, pages
  6755--6764, virtual, 2020.

\bibitem{miettinen_nonlinear_1998}
Kaisa Miettinen.
\newblock {\em Nonlinear {Multiobjective} {Optimization}}, volume~12.
\newblock Springer US, Boston, MA, 1998.

\bibitem{momma2022multi}
Michinari Momma, Chaosheng Dong, and Jia Liu.
\newblock A multi-objective/multi-task learning framework induced by pareto
  stationarity.
\newblock In {\em Proc. International Conference on Machine Learning},
  Baltimore, MD, 2022.

\bibitem{navon2020learning}
Aviv Navon, Aviv Shamsian, Ethan Fetaya, and Gal Chechik.
\newblock Learning the pareto front with hypernetworks.
\newblock In {\em Proc. International Conference on Learning Representations},
  virtual, April 2020.

\bibitem{oord2018representation}
Aaron van~den Oord, Yazhe Li, and Oriol Vinyals.
\newblock Representation learning with contrastive predictive coding.
\newblock {\em arXiv preprint arXiv:1807.03748}, 2018.

\bibitem{panayotov2015librispeech}
Vassil Panayotov, Guoguo Chen, Daniel Povey, and Sanjeev Khudanpur.
\newblock Librispeech: an {ASR} corpus based on public domain audio books.
\newblock In {\em Proc. International Conference on Acoustics, Speech and
  Signal Processing}, pages 5206--5210, 2015.

\bibitem{pena_new_2021}
Javier Peña, Juan~C. Vera, and Luis~F. Zuluaga.
\newblock New characterizations of hoffman constants for systems of linear
  constraints.
\newblock {\em Mathematical Programming}, 187(1):79--109, 2021.

\bibitem{phan2022stochastic}
Hoang Phan, Ngoc Tran, Trung Le, Toan Tran, Nhat Ho, and Dinh Phung.
\newblock Stochastic multiple target sampling gradient descent.
\newblock In {\em Proc. Advances in Neural Information Processing Systems}, New
  Orleans, LA, December 2022.

\bibitem{j2016_proximal_pl_fast}
Sashank~J Reddi, Suvrit Sra, Barnab{\'a}s P{\'o}cz{\'o}s, and Alex Smola.
\newblock Proximal stochastic methods for nonsmooth nonconvex finite-sum
  optimization.
\newblock In {\em Proc. Advances in Neural Information Processing Systems},
  volume~29, 2016.

\bibitem{saif24_interspeech}
A~F~M Saif, Lisha Chen, Xiaodong Cui, Songtao Lu, Brian Kingsbury, and Tianyi
  Chen.
\newblock {M2ASR}: Multilingual multi-task automatic speech recognition via
  multi-objective optimization.
\newblock In {\em Interspeech 2024}, pages 1240--1244, 2024.

\bibitem{sener2018multi}
Ozan Sener and Vladlen Koltun.
\newblock {Multi-task learning as multi-objective optimization}.
\newblock In {\em Proc. Advances in Neural Information Processing Systems},
  Montreal, Canada, December 2018.

\bibitem{shen2023penalty}
Han Shen, Quan Xiao, and Tianyi Chen.
\newblock On penalty-based bilevel gradient descent method.
\newblock {\em arXiv preprint arXiv:2302.05185}, 2023.

\bibitem{tanabe2018proximal}
Hiroki Tanabe, Ellen~H. Fukuda, and Nobuo Yamashita.
\newblock Proximal gradient methods for multiobjective optimization and their
  applications.
\newblock {\em Computational Optimization and Applications}, 72(2):339--361,
  2019.

\bibitem{trohidis2011multi_emotion}
Konstantinos Trohidis, Grigorios Tsoumakas, George Kalliris, and Ioannis
  Vlahavas.
\newblock Multi-label classification of music by emotion.
\newblock {\em EURASIP Journal on Audio, Speech, and Music Processing},
  2011:1--9, 2011.

\bibitem{xiao2023direction}
Peiyao Xiao, Hao Ban, and Kaiyi Ji.
\newblock Direction-oriented multi-objective learning: Simple and provable
  stochastic algorithms.
\newblock In {\em Proc. Advances in Neural Information Processing Systems}, New
  Orleans, LA, 2023.

\bibitem{yang2021pareto}
Yijun Yang, Jing Jiang, Tianyi Zhou, Jie Ma, and Yuhui Shi.
\newblock Pareto policy pool for model-based offline reinforcement learning.
\newblock In {\em Proc. International Conference on Learning Representations},
  virtual, 2021.

\bibitem{ye2003multiobjective}
Jane~J Ye and Qiji~J Zhu.
\newblock Multiobjective optimization problem with variational inequality
  constraints.
\newblock {\em Mathematical Programming}, 96(1):139--160, 2003.

\bibitem{ying2017}
Yiming Ying and Ding-Xuan Zhou.
\newblock Unregularized online learning algorithms with general loss functions.
\newblock {\em Applied and Computational Harmonic Analysis}, 42(2):224--244,
  2017.

\bibitem{yu2020gradient}
Tianhe Yu, Saurabh Kumar, Abhishek Gupta, Sergey Levine, Karol Hausman, and
  Chelsea Finn.
\newblock {Gradient surgery for multi-task learning}.
\newblock In {\em Proc. Advances in Neural Information Processing Systems},
  virtual, December 2020.

\bibitem{zhou2022_SMOO}
Shiji Zhou, Wenpeng Zhang, Jiyan Jiang, Wenliang Zhong, Jinjie Gu, and Wenwu
  Zhu.
\newblock On the convergence of stochastic multi-objective gradient
  manipulation and beyond.
\newblock In {\em Proc. Advances in Neural Information Processing Systems},
  volume~35, pages 38103--38115, New Orleans, LA, December 2022.

\bibitem{zhu2023sample_moo_molecular}
Yiheng Zhu, Jialu Wu, Chaowen Hu, Jiahuan Yan, Tingjun Hou, Jian Wu, et~al.
\newblock Sample-efficient multi-objective molecular optimization with
  gflownets.
\newblock In {\em Proc. Advances in Neural Information Processing Systems},
  volume~36, New Orleans, LA, 2023.

\end{thebibliography}
}

\clearpage
\appendix

\begin{center}
\setstretch{1.5}
{\Large \bf Appendix for
``\FullTitle"}
\end{center}

\vspace{-1cm}
\addcontentsline{toc}{section}{} 
\part{} 
\parttoc 

\allowdisplaybreaks

\section{Notations}

A summary of notations used in this work is listed in Table~\ref{tab:notations} for ease of reference.
\begin{table}[ht]
\caption{Notations and their descriptions.}
\label{tab:notations}
\small
\centering
\begin{tabular}{l|l l }
\toprule
Notations & Descriptions   \\
\midrule
$\theta \in \mathbb{R}^{q}$ &Model parameter, or decision variable &   \\
$\xi$ &Stochastic samples during training &   \\
\makecell[l]{$f_{\xi,m}(\theta)$, $f_{m}(\theta)$ \\
} &
\makecell[l]{A scalar-valued objective function evaluated on data point $\xi$,\\ with $f_{\xi,m}: \Rq \to \mathbb{R}$, 
or on dataset $D$, $f_{m}$, 
with $f_{m} \coloneqq \frac{1}{|D|} \sum_{\xi\in D} f_{\xi,m}(\theta) $} \\
$\nabla f_{m}(\theta)$ &Gradient of $f_{m}(\theta)$, with 
$\nabla f_{m}: \Rq \to \Rq$ \\
\makecell[l]{$F_{\xi}(\theta)$, $F (\theta)$ \\
} &
\makecell[l]{A vector-valued objective function evaluated on data point $\xi$,\\ with $F_{\xi}: \Rq \to \mathbb{R}^M$,
or on dataset $D$, with
$F \coloneqq \frac{1}{|D|} \sum_{\xi\in D} F_{\xi}(\theta) $ 
} \\
$\nabla F(\theta)$ &Gradient of $F(\theta)$, with 
$\nabla F: \Rq \to \mathbb{R}^{q \times M}$ \\
\hline 
$\alpha$ &Step size to update model parameter $\theta$\\
$\gamma$ &Step size to update multiplier $\lambda$\\
\bottomrule
\end{tabular}
\vspace{0.2cm}
\end{table}

Recall that given vectors $v,w$, we use $v<w$ and $v\leq w$ to denote $v_i< w_i$ for all $i$,  and $v_i\leq w_i$ for all $i$, respectively. We use  $v\lneq w$ to denote $v\leq w$ and $v\neq w$, and define $>, \geq$, $\gneq$ analogously.
In the proof, we use $\|\cdot\|$ to denote the $\ell_2$-norm, and $\|\cdot\|_1$ to denote the $\ell_1$-norm. 
We use $|\cdot |_{\rm ab}$ to denote the operator that takes element-wise absolute value of a matrix.
We use $\mathbf{1}$ and ${0}$ to denote the all-one and all-zero vectors, respectively.
Their dimensions are specified only when they are not clear in the context.
We use $[v, w]$ to represent column concatenation of matrices or vectors, and use $[v; w]$ to represent row concatenation of matrices or vectors.


\section{Related Works and Comparison} 
\label{sec_app:related_works}

In this section, we provide a detailed review and comparison of additional related works in multi-task/objective learning, vector optimization, and Pareto front approximation.

\subsection{Extended discussion of related works}
\label{sub_app:related_work}
In this section, we provide an extended discussion of the works that are closely related to ours.

\paragraph{Variants and analysis of MGDA.} MGDA~\citep{fliege2000steepest,Desideri2012mgda} finds non-conflicting or the steepest common descent direction at each iteration, which we term as conflict-avoidant (CA) direction.
Our work is related to MGDA in the unconstrained setting since when $C_A = \R_+^M$, $M_g=M_h=0$, i.e., there are no constraints, and $\Omega_{\lambda_f}(\theta) = \Delta^M$ for the subprogram, our Algorithm~\ref{alg:generic_C_descent} reduces to MGDA.
Non-asymptotic convergence analysis for the deterministic MGDA was first provided in~\citep{fliege2019complexity}. 
Convergence of the proximal algorithm was discussed in~\citep{tanabe2018proximal}. 
Later on, \emph{stochastic} variants of MGDA were developed with convergence analysis~\citep{liu2021stochastic,zhou2022_SMOO,fernando2022mitigating,chen2023three,xiao2023direction,fernando2024variance}.
A critical challenge in developing convergent stochastic MGDA is that the CA directions can be biased even if they are calculated from unbiased stochastic gradients of the objectives. 
This issue can be mitigated using variance reduction techniques on the stochastic gradients. For example, one can use increased batch size~\citep{liu2021stochastic}, or momentum-based methods~\citep{zhou2022_SMOO,fernando2022mitigating,fernando2024variance}. Alternatively, one can also use double (independent) sampling~\citep{chen2023three,xiao2023direction}.
Among these MGDA variants, \citep{zhou2022_SMOO,fernando2022mitigating,fernando2024variance,chen2023three} also use \emph{single-loop} updates, where, instead of exactly solving the weight to combine the objective gradients, the weight is approximately updated only once at each iteration.
One benefit of such gradient-based single-loop update is that the approximation approach proposed in~\citep[Section 3.2]{liu2024famo} can be applied to largely improve the per-iteration complexity by eliminating the need to compute multiple gradients. 

Convergence rate to Pareto stationarity of the above MGDA variants is discussed in existing literature. Specifically, the analysis in~\citep{liu2021stochastic} focuses on the convex case, while the rest~\citep{zhou2022_SMOO,fernando2022mitigating,chen2023three,xiao2023direction,fernando2024variance} focus on the nonconvex case.
However, with merely convergence to the Pareto stationarity, the theoretical benefit of MGDA variants over linear scalarization is unclear.
To address this, convergence of the stochastic approximate CA direction to the deterministic optimal CA direction besides convergence to the Pareto stationarity is first analyzed in \citep{fernando2022mitigating}, and later improved in~\citep{chen2023three} with relaxed assumptions and/or faster convergence rate. Some of the improved analysis techniques in~\citep{chen2023three} has been applied in~\citep{fernando2024variance} to further improve the convergence rate with a momentum-based algorithm, and in~\citep{xiao2023direction} with a double-loop algorithm.
Moreover, it is discussed in~\citep{chen2023three} that the analysis technique is widely applicable to other algorithms, such as the SMG algorithm~\citep{liu2021stochastic} in the nonconvex case for both convergence to Pareto stationarity and to the CA direction.  
In our proof of Theorem~\ref{thm:two_time_convergence_approx}, the convergence of the single-loop algorithm, we use similar techniques as in~\citep{chen2023three}, which are detailed in Appendix~\ref{sub_app:convergence_same_merit}.

\paragraph{Pareto front approximation.}
Pareto front approximation aims to find multiple different solutions whose objective values approximate the Pareto front.
{Scalarization-based} methods can be used to approximate the Pareto front by enumerating different weights of the objectives.
However, they cannot find solutions on the nonconvex part of the Pareto front~\citep{miettinen_nonlinear_1998}.
{Decomposition-based} methods partition the objective space into different subsets with constraints that represent different trade-off preferences, and solve the resulting constrained multi-objective optimization problems with gradient-based or evolutionary algorithms~\citep{lin2019pareto,gong2021_dynamic_barrier}.
{Probabilistic inference} methods update a set of models following a distribution that converges to Pareto stationary~\citep{liu2021profiling,phan2022stochastic}. 
The expected update direction of the models typically follows the steepest common descent direction for all objectives.
{Pareto set learning} methods use a neural network to learn a mapping from user preferences to corresponding models. The learned neural network is able to generate different models with different input user preferences~\citep{navon2020learning,yang2021pareto, kyriakis2021pareto,lin2022pareto}.
Although we do not focus on Pareto front approximation in this work, our algorithm can be applied to generate different models based on different diverse preferences to approximate the Pareto front, as in~\citep{lin2019pareto}.

\subsection{A detailed comparison with existing works}
\label{sub_app:comparison_related_work}

\paragraph{Preferences as linear constraints of objectives.}
Different constraints $S$ partition the objectives into sub-regions, as shown in Figure~\ref{fig:different_ptt}.
Many preferences can be modeled by linear equality or inequality constraints~\citep{lin2019pareto,mahapatra2020multi,momma2022multi}.
For example,
below we list different choices of $C$ for different methods in Figure~\ref{fig:different_ptt}.
\begin{enumerate}
  \item[(a)] $B_g = [0, I_{2:M}]^\top\in \R^{M\times M}, b = -[0, \epsilon_2,\ldots, \epsilon_M]^\top $;
  \item[(b)] $B_h \in \R^{(M-1)\times M}, b = 0 $;
\end{enumerate}

In Figure~\ref{sfig:eps_ptt}, the  preferences are based on the function values of $f_1$ controlled by different thresholds, corresponding to the inequality constraints defined by (a).
In Figure~\ref{sfig:CD_ptt}, the constraints are that the objectives $F(\theta)$ should lie on one of the preference vectors $v$, therefore should satisfy the equality constraint $B_h F(\theta) = 0$.

\paragraph{Detailed comparison with the most relevant works.}
Below we provide a fine-grained comparison with some existing works in Table~\ref{tab:compare_exist_pmol_app}, as an extension of Table~\ref{tab:compare_exist_pmol}.

In terms of preference modeling,
the scalarization-based methods such as Linear Scalarization and Smooth Tchebycheff scalarization use weight of different objectives to model preferences.
They are not flexible enough to capture preferences illustrated in Figure~\ref{fig:different_ptt}.
PMTL uses a constrained multi-objective optimization formulation, with preferences modeled by inequalities.
EPO models the preference by an $r^{-1}$ ray, same as the example given in Figure~\ref{sfig:CD_ptt}.
(X)WC-MGDA uses a shifted ray not necessarily from the origin to model the preferences.
In all of these works, they only model the absolute preferences that define the preferred objective values.
In contrast, we also consider the relative preference that define the relative improvement directions of objectives.

In addition to the comparison in Table~\ref{tab:compare_exist_pmol}, our framework enjoys additional benefits including the ability to escape weak optimal solutions and to maintain scale-invariance.
These abilities are attributed to the subprogram that is adaptive to the objective values,  
as detailed in Lemma~\ref{lemma:add_property_subp}.

\begin{table}[ht]
\centering
\tiny
\caption{Comparison to existing PMOL methods, extension of Table~\ref{tab:compare_exist_pmol}.}
\label{tab:compare_exist_pmol_app}
\begin{tabular}{c|ccccccc}
\toprule 
\multirow{2}{*}{Method}   
& \multirow{2}{*}{\makecell{Handle\\nonconvex PF}}
& \multirow{2}{*}{\makecell{General\\partial order}}
& \multirow{2}{*}{\makecell{Single subprogram\\ w/o computing active index}}
& \multirow{2}{*}{\makecell{Scale\\invariance}}
& \multirow{2}{*}{\makecell{Escape weak\\optimal}}
& \multirow{2}{*}{\makecell{Provable\\CQ}}
\\
\\
\midrule
Linear Scalarization
& \xmark & \xmark & \cmark & \xmark & \xmark & - 
\\
(Smooth) Tchebycheff~\citep{lin2024smooth}
& \cmark & \xmark & \cmark & \xmark & \xmark & - 
\\
PMTL~\citep{lin2019pareto}
& \cmark & \xmark & \xmark & \xmark & \xmark & assume LICQ 
\\ 
EPO~\citep{mahapatra2020multi}
& \cmark & \xmark & \xmark & \xmark & \cmark & \xmark 
\\
(X)WC-MGDA~\citep{momma2022multi} 
& \cmark & \xmark & \xmark & \xmark & \cmark & \xmark 
\\
FERERO (ours)
& \cmark & \cmark & \cmark & \cmark & \cmark & prove calmness 
\\
\bottomrule
\end{tabular}
\label{tab:risk_compare_refined_app}
\vspace{-0.2cm}
\end{table}

Below, we further summarize the reasons behind the benefits of our proposed method. We use ``$\to$'' to indicate the reasons on the left and the corresponding benefits on the right.  
\begin{tcolorbox}[emphblock]
{\small
\begin{align*}\label{eqfig:relation_benefit}
&\text{\makecell{Flexible\\preference}}
\begin{cases}
\text{\makecell{relative preference\\(by general\\partial order)}} 
& \to \text{allow controlled ascent}\\
\text{\makecell{absolute preference\\(by constraints)}}  
& \to 
\begin{cases}
  \text{handle nonconvex Pareto Front}\\  
  \text{equality constraints}
  \to\text{align exactly to preference vector}\\
  \text{constraints are linear functions of objectives}
  \to \text{provable CQ}  
\end{cases}
\end{cases}  \\
&\text{\makecell{Adaptive\\subprogram}}
\begin{cases}
\text{adaptive to objectives} 
& \to
\begin{cases}
  \text{scale invariance}\\
  \text{ability to escape weak optimality}  
\end{cases} 
\\
\text{adaptive to constraints}  
& \to \text{\makecell{single subprogram w/o\\computing active indices}}
\to \text{non-asymptotic convergence}
\end{cases} 
\end{align*}}
\end{tcolorbox}

\section{Preliminaries} 
\label{sec_app:preliminaries}

In this section we introduce preliminaries on the general cone-induced partial ordering and the corresponding optimality conditions for completeness since we use these concepts in our proofs.
Then we discuss the relation between the Pareto optimality and the optimality induced by a general polyhedral cone.

\subsection{General cone-induced partial ordering} 
\label{sub:partial_ordering}

In this section, we introduce basic definitions, lemmas, propositions, and theorems in vector optimization, including the cone-induced partial ordering, the minimum and weakly minimum associated with the partial ordering in real linear space, and necessary conditions for minimum.
These concepts are defined in~\citep{jahn_vector_2011}. We restate them following our notations for completeness.
We denote $Z$ as a real linear space, $C, S$ as  subsets in $Z$, and $w,x,y,z$ as points or elements in $Z$, $0_Z$ as the zero vector in the space $Z$.

\begin{definition}[Cone]
\label{def:cone}
Let $C$ be a nonempty subset of a real linear space $Z$.\\
The set $C$ is called a \emph{cone}, if
$y \in C, \lambda \geq 0 \Longrightarrow \lambda y \in C$.
\end{definition}

\begin{lemma}[Convex cone]
\label{lemma:convex_cone}
A cone $C$ in a real linear space is convex if and only if
$C+C \subset C .$
\end{lemma}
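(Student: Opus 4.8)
The plan is to prove the two implications of the biconditional separately, in each case reducing the desired conclusion to a single application of the other structural hypothesis (the cone property) combined with the given one.

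For the forward direction, I would assume $C$ is a convex cone and aim to show $C + C \subseteq C$. Given arbitrary $x, y \in C$, convexity applied with parameter $t = \tfrac{1}{2}$ yields the midpoint $\tfrac{1}{2}x + \tfrac{1}{2}y \in C$. The key move is then to invoke the cone property (Definition~\ref{def:cone}) with scalar $\lambda = 2 \geq 0$, which gives $2\bigl(\tfrac{1}{2}x + \tfrac{1}{2}y\bigr) = x + y \in C$. Since $x, y$ were arbitrary elements of $C$, this establishes $C + C \subseteq C$.

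For the backward direction, I would assume $C$ is a cone satisfying $C + C \subseteq C$ and aim to verify convexity directly from the definition. Taking arbitrary $x, y \in C$ and $t \in [0,1]$, I first note that $t \geq 0$ and $1 - t \geq 0$, so the cone property gives $tx \in C$ and $(1-t)y \in C$ separately. The inclusion $C + C \subseteq C$ then immediately yields the convex combination $tx + (1-t)y \in C$, which is exactly the convexity condition.

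I do not anticipate a genuine obstacle here, as the statement is a standard foundational fact; the only point requiring care is the bridging between the two notions of scaling. Convexity is phrased in terms of convex combinations (coefficients summing to one), whereas the cone property concerns arbitrary nonnegative scalings, and the addition hypothesis concerns unrestricted sums. The proof succeeds precisely because each direction uses one hypothesis to produce elements that the other hypothesis can then combine: in the forward direction the cone scaling ``undoes'' the $\tfrac{1}{2}$ normalization forced by convexity, while in the backward direction the cone scaling first produces the two scaled vectors that the closure under addition can sum. Making sure the relevant scalars genuinely lie in the admissible ranges ($\lambda = 2 \geq 0$, and $t, 1-t \geq 0$ for $t \in [0,1]$) is the entirety of the bookkeeping.
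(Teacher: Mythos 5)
Your proof is correct: the forward direction (halve by convexity, then rescale by $\lambda = 2$ using the cone property) and the backward direction (scale $x$ by $t$ and $y$ by $1-t$ separately, then add using $C + C \subseteq C$) are both airtight, and the scalar bookkeeping is handled properly. The paper itself gives no proof of this lemma — it is restated from \citep{jahn_vector_2011} as a standard fact — and your argument is precisely the canonical one that source would supply.
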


\begin{definition}[Partially ordered linear space]
  A real linear space equipped with a partial ordering is a partially ordered linear space.
\end{definition}

\begin{proposition}
(a) If $\leq$ is a partial ordering on $Z$, then the set
$C:=\{z \in Z \mid 0_Z \leq z \}$
is a convex cone. If, in addition, $\leq$ is antisymmetric, then $C$ is pointed.

(b) If $C$ is a convex cone in $Z$, then the binary relation
$\leq_C:=\{(x, y) \in Z \times Z \mid y-x \in C\}$
is a partial ordering on $Z$. If, in addition, $C$ is pointed, then $\leq_C$ is antisymmetric.
\end{proposition}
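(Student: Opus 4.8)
The plan is to verify each claimed property directly from the axioms defining a partial ordering compatible with the linear structure of $Z$ (reflexivity, transitivity, and compatibility with vector addition and with multiplication by nonnegative scalars), using Definition~\ref{def:cone} and Lemma~\ref{lemma:convex_cone} as the bridge between the order-theoretic and the cone-theoretic descriptions. Both directions are elementary, so the work is organizational rather than conceptual: the only care needed is to keep track of which axiom is invoked at each step and to treat antisymmetry (in part (a)) and pointedness (in part (b)) as the separate, optional hypotheses they are.

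For part (a), I would first show that $C := \{z \in Z \mid 0_Z \leq z\}$ is a cone: given $z \in C$ and $\lambda \geq 0$, compatibility of $\leq$ with scalar multiplication applied to $0_Z \leq z$ yields $\lambda\, 0_Z \leq \lambda z$, i.e. $0_Z \leq \lambda z$, so $\lambda z \in C$. To obtain convexity I would invoke Lemma~\ref{lemma:convex_cone} and prove $C + C \subset C$: for $y, z \in C$, adding $z$ to $0_Z \leq y$ via compatibility with addition gives $z \leq y + z$, and combining this with $0_Z \leq z$ by transitivity yields $0_Z \leq y + z$, hence $y + z \in C$. If $\leq$ is in addition antisymmetric, pointedness $C \cap (-C) = \{0_Z\}$ follows by taking $z \in C \cap (-C)$: then $0_Z \leq z$ and $0_Z \leq -z$, and adding $z$ to the latter gives $z \leq 0_Z$, so antisymmetry forces $z = 0_Z$; reflexivity ensures $0_Z \in C$, completing the characterization.

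For part (b), with $C$ a convex cone I would check the four structural axioms of $\leq_C$ in turn. Reflexivity uses $0_Z \in C$, which holds because a nonempty cone contains $\lambda y = 0_Z$ for any $y \in C$ and $\lambda = 0$. Transitivity is exactly $C + C \subset C$ from Lemma~\ref{lemma:convex_cone}: if $y - x \in C$ and $z - y \in C$, their sum $z - x \in C$, so $x \leq_C z$. Compatibility with addition is immediate since $(y+w) - (x+w) = y - x \in C$, and compatibility with nonnegative scalar multiplication is the cone property applied to $\alpha(y - x)$ for $\alpha \geq 0$. Finally, if $C$ is pointed, antisymmetry of $\leq_C$ follows because $y - x \in C$ and $x - y \in C$ place $y - x \in C \cap (-C) = \{0_Z\}$, whence $x = y$.

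The main obstacle, such as it is, will be purely bookkeeping: ensuring that compatibility with scalar multiplication is invoked only for nonnegative scalars (so that the cone, rather than subspace, structure is respected), and that the optional hypotheses — antisymmetry in (a), pointedness in (b) — are used only where needed, so that the two implications mirror one another cleanly and the correspondence $\leq \leftrightarrow C$ is seen to be an exact dictionary.
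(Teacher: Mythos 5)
Your proof is correct and complete: both directions are verified exactly as one would expect, the optional hypotheses (antisymmetry in (a), pointedness in (b)) are invoked only where needed, and the use of Lemma~\ref{lemma:convex_cone} to reduce convexity to $C+C\subset C$ matches the paper's setup. Note that the paper itself gives no proof of this proposition --- it is restated from Jahn's \emph{Vector Optimization} purely for completeness --- so there is no in-paper argument to compare against; your argument is the standard one and fills that gap correctly.
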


\begin{definition}[Ordering cone]
\label{def:ordering_cone}
A convex cone characterizing a partial ordering
in a real linear space is an ordering cone.
\end{definition}

\begin{definition}[Cone-induced partial ordering]
\label{def:cone_induced_partial_order}
  Let $C$ be a closed pointed convex cone of $\R^M$, with nonempty interior.
The partial order in $\R^M$ induced by $C, \leq_C$ is defined by
\begin{align}
  u \leq_C v, ~~\text{if}~~ v - u \in C .
\end{align}
The relation induced by $\mathrm{int}(C)$ in $\R^M$, $<_C$ is defined by
\begin{align}
  u <_C v, ~~\text{if}~~ v - u \in \mathrm{int}(C) .
\end{align}
\end{definition}

\begin{definition}[$C$-minimum and  $C$-weakly minimum]
\label{def:C_minimal_element}
Let $S$ be a nonempty subset of a partially ordered linear space with an ordering cone $C$, then\\
(a) an element ${z} \in S$ is called a $C$-minimum of the set $S$, if
$(\{{z}\}-C) \cap S \subset\{{z}\}+C$, in other words, there exists no other $z' \in S$ with $z' \leq_C z$ and $z'\neq z$; \\
(b) an element ${z} \in S$ is called a $C$-weakly minimum of the set $S$, if $(\{{z}\}-\operatorname{int}(C)) \cap S=\emptyset$, where $\mathrm{int}(C) \neq \emptyset$ is the algebraic interior of $C$, in other words, there exists no other $z' \in S$ with $z' <_C z$ and $z'\neq z$.
\end{definition}

\begin{definition}[$C$-stationary]
\label{def:Pareto_S}
A point $\theta \in \R^q$ is $C$-stationary if
there is no first-order common descent direction $d \in \R^q$  that $ \nabla F(\theta )^\top d \in -\intr(C)$, i.e.,
$\operatorname{range} ( \nabla F(\theta)^\top ) \cap
(-\intr(C) )=\emptyset$.
\end{definition}

\subsection{Necessary and sufficient conditions for $C$-optimality}
\label{sub:preliminaries_linear}

Note that, when $C = \R^M_+ \coloneqq \{z \in \R^M \mid z_m \geq 0 ~\text{for all}~ m\in [M]\}$, $C$-minimum and $C$-weakly minimum in Definition~\ref{def:C_minimal_element} are Pareto minimum and weakly Pareto minimum, respectively.
Recall that $F: \R^q \to \R^M$ is a continuously differentiable function.
The problem we consider is to find the unconstrained  $C$-minimizers of $F$, denoted as $\min_C F(\theta)$ with $\theta \in \R^q$.
We then proceed to introduce the relation between $C$-stationarity and Pareto stationarity in this section.

%

\begin{proposition}
  \label{prop:C_subset_Pareto}
  Let $C$ be a closed convex pointed cone. \\
  1) Suppose $C \subseteq \R^M_+$.
  If $\theta$ is  Pareto stationary, $\theta$ is $C$-stationary.
  In other words, $C$-stationarity is a necessary condition for Pareto stationarity.\\
  2) Suppose $  \R^M_+ \subseteq C$.
  if $\theta$ is  $C$-stationary, $\theta$ is  Pareto stationary.
  In other words, $C$-stationarity is a sufficient condition for Pareto stationarity.
\end{proposition}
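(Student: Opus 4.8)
The plan is to reduce both implications to a single elementary fact: the topological interior is monotone under set inclusion. First I would fix the shorthand $R \coloneqq \rng(\nabla F(\theta)^\top) \subseteq \R^M$ and read off, directly from Definition~\ref{def:Pareto_S}, that $C$-stationarity of $\theta$ is \emph{exactly} the assertion $R \cap (-\intr(C)) = \emptyset$. Pareto stationarity is then the special case $C = \R^M_+$, i.e. $R \cap (-\intr(\R^M_+)) = \emptyset$; note $-\intr(\R^M_+)$ is the open negative orthant, so this recovers the usual condition that no direction $d$ yields $\nabla F(\theta)^\top d < 0$ componentwise. Phrasing both notions as emptiness of an intersection of the fixed set $R$ with a cone's (negated) interior is the move that makes the two parts symmetric.

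The one lemma I would invoke is: for subsets $A \subseteq B$ of $\R^M$, $\intr(A) \subseteq \intr(B)$, since $\intr(A)$ is an open set contained in $B$ and hence contained in the largest such set, $\intr(B)$. Negation preserves inclusions, so $A \subseteq B$ also gives $-\intr(A) \subseteq -\intr(B)$. With this in hand, part~1 follows immediately: from $C \subseteq \R^M_+$ the lemma gives $-\intr(C) \subseteq -\intr(\R^M_+)$, so if $\theta$ is Pareto stationary then $R \cap (-\intr(C)) \subseteq R \cap (-\intr(\R^M_+)) = \emptyset$, which is $C$-stationarity. Part~2 is the mirror image: $\R^M_+ \subseteq C$ yields $-\intr(\R^M_+) \subseteq -\intr(C)$, so $C$-stationarity $R \cap (-\intr(C)) = \emptyset$ forces $R \cap (-\intr(\R^M_+)) = \emptyset$, i.e. Pareto stationarity.

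There is no serious obstacle; the content is bookkeeping of inclusions, and I would present each part as a two-line chain. The only points deserving a line of care are (i) tracking the inclusion directions correctly, since the two parts interchange which cone is the subset, and (ii) noting that the stationarity conditions are non-vacuous precisely because both $C$ (by Assumption~\ref{asmp:general}.3 / Definition~\ref{def:cone_induced_partial_order}) and $\R^M_+$ have nonempty interior; were $\intr(C)$ empty the intersection would be trivially empty and every point would count as $C$-stationary. I would therefore state the interior-monotonicity lemma explicitly and then close each implication in a single short display of set inclusions.
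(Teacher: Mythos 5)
Your proposal is correct and follows essentially the same argument as the paper: both reduce each part to the monotonicity of the interior under set inclusion ($C \subseteq \R^M_+$ implies $-\intr(C) \subseteq -\intr(\R^M_+)$, and conversely) and then conclude via emptiness of the intersection with $\rng(\nabla F(\theta)^\top)$. Your version is merely slightly more explicit in isolating the interior-monotonicity lemma and in remarking on non-vacuousness.
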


\begin{proof}[Proof of Proposition~\ref{prop:C_subset_Pareto}]\label{proof:C_subset_Pareto}
  1) By definition, if $\theta$ is Pareto stationary, 
  then $\operatorname{range} ( \nabla F(\theta)^\top ) \cap
  (-\mathrm{int}(\mathbb{R}_{+}^M) )=\emptyset$.
  Since $C \subseteq \R^M_+$, then $-\mathrm{int}(C) \subseteq -\mathrm{int}(\mathbb{R}_{+}^M)$, and we have 
  \begin{align}
    \operatorname{range} ( \nabla F(\theta)^\top ) \cap
    (-\mathrm{int}( C) ) \subseteq \operatorname{range} ( \nabla F(\theta)^\top ) \cap
    (-\mathrm{int}(\mathbb{R}_{+}^M) ) =\emptyset.
  \end{align}
  Therefore, $\theta$ is $C$-stationary.

  Following similar arguments, 2) can also be proved.
\end{proof}

\section{Proof of Auxiliary Lemmas}
\label{sec_app:proof_main_theory}

In this section, we provide proof of the main theoretical results in this paper.
\subsection{Lagrangian of the subprogram}
\label{sub_app:lagrangian_subp}

\begin{proof}[Proof of subprogram reformulation]
Define the Lagrangian function 
\begin{align}
  L(c, d, \lambda_f, \lambda_g, \lambda_h) 
  \coloneqq & 
  c + \frac{1}{2}\|d\|^2
  + \lambda_f^\top \big(A \nabla F(\theta)^\top d - c (\mathbf{1}^\top A F(\theta))^{-1} A F(\theta)  \big)
  \nonumber \\ &
  + \lambda_g^\top \big(B_g \nabla F(\theta)^\top d + c_g G(\theta)\big)
  + \lambda_h^\top \big(B_h \nabla F(\theta)^\top d + c_h H(\theta)\big)
\end{align}
where $\lambda_f \in \R^M_+$, $\lambda_g\in \R^{M_g}_+$, 
$\lambda_h\in \R^{M_h}$. 
By the first-order optimality condition w.r.t. $d$ and $c$, we can obtain that
\begin{align}
  \label{eq:lagrangian_d_condition}
  d^* + \nabla F(\theta) (A^\top \lambda_f^*
  + B_g^\top \lambda_g^*
  + B_h^\top \lambda_h^*) = 0 ;\\
  \label{eq:lagrangian_c_condition}
  \mathbf{1}^\top A F(\theta) - {\lambda_f^*}^\top A F(\theta)  = 0.
\end{align}
Combining the last equation with $\lambda_f \in \R^M_+$, we obtain $\lambda_f^* \in \Omega_{\lambda_f}(\theta)$.
Plugging the above results into the Lagrangian function gives
\begin{align*}
  [\lambda_f^*;\lambda_g^*;\lambda_h^*]
  \in \mathop{\arg\min}_{[\lambda_f;\lambda_g;\lambda_h]
  \in \Omega_{\lambda}(\theta) }
  &\frac{1}{2}\|\nabla F(\theta) (A^\top \lambda_f
  + B_g^\top \lambda_g
  + B_h^\top \lambda_h) \|^2 \\
  & 
  - c_g \lambda_g^\top G(\theta)
  - c_h \lambda_h^\top H(\theta)
  \numberthis\label{eq:lam_mu_nu_subp}
\end{align*}
which leads to the dual form in~\eqref{eq:subp_lam}.
Since~\eqref{eq:subp_original} is a constrained convex optimization problem where the Slater's condition holds, therefore, the duality gap is zero.
\end{proof}

\begin{remark}\label{remark:simple_subp}
Note that we can also have a simplified subprogram with $A=I$, and without adaptation to the objective values, as defined below
\begin{align}
\psi(\theta) \coloneqq
\!\!\!\!
  \min_{(d,c) \in \R^{q}\times \R} c 
  + \frac{1}{2}\|d\|^2 
\label{eq:subp_original_simple}
~~~\mathrm{s.t.}~~
&  \nabla F(\theta)^\top d 
\leq {c} \mathbf{1} 
\\
& \nabla G(\theta)^\top d + c_g G(\theta) \leq 0, 
~ \nabla H(\theta)^\top d + c_h H(\theta) = 0 .
\nonumber
\end{align}
This formulation corresponds to the SQP method applied to the constrained MOO problem~\citep{fliege_constrained_moo_sqp}.
Then the corresponding Lagrangian function becomes
\begin{align}
L(c, d, \lambda_f, \lambda_g, \lambda_h) 
\coloneqq & 
c + \frac{1}{2}\|d\|^2
+ \lambda_f^\top \big( \nabla F(\theta)^\top d - c \mathbf{1}  \big)
\nonumber \\ &
+ \lambda_g^\top \big(B_g \nabla F(\theta)^\top d + c_g G(\theta)\big)
+ \lambda_h^\top \big(B_h \nabla F(\theta)^\top d + c_h H(\theta)\big) .
\end{align}
By the first-order optimality condition w.r.t. $c$, \eqref{eq:lagrangian_c_condition} can be replaced by
\begin{align}
 1 - {\lambda_f^*}^\top \mathbf{1} = 0.   
\end{align}
And the rest results remain the same, i.e., \eqref{eq:lagrangian_d_condition} and \eqref{eq:lam_mu_nu_subp} still hold,
while $\Omega_{\lambda_f} = \Delta^M$.
\end{remark}

\subsection{First-order necessary optimality conditions}
We then discuss the first-order necessary optimality conditions for problem~\eqref{eq:pmoo-general-constrained}.
We begin the discussion with the geometric notions of improving and feasible directions.

\paragraph{Improving directions.}
The improvement directions are defined as generalized common descent directions so that
the iterates strictly improve or dominate the previous iterates based on $C_A$, i.e., $F(\theta_t) - F(\theta_{t+1}) \in \intr(C_A)$. 
Denote $d_t \in \mathbb{R}^{q}$ as an update direction at iteration $t$, and $\alpha_t >0 $ as the step size at the $t$-th iteration.
The general update equation given update direction $d_t$ is
$\theta_{t+1} = \theta_t +  {\alpha_t} d_t$.
Based on first-order Taylor expansion, 
the amount of improvement at iteration $t$ can be approximately expressed as
$F(\theta_t) - F(\theta_{t+1}) \approx 
-\alpha_t \nabla F(\theta_{t})^\top d_t \in \intr(C_A) $.
We term such directions the general $C_A$-improving directions.
The cone of $C_A$-improving directions at $x$ is 
\begin{align}
  D_{C_A} = \{d \in \R^q\mid \nabla F(\theta)^\top d \in -\intr(C_A) \}.
\end{align}
When $A = I_M$, they are common descent directions.

\paragraph{Feasible directions.}
Similar to the concept in constrained single objective optimization, the feasible directions are those that ensure $F(\theta_t + \alpha_t d_t) \in S$.
We rewrite problem~\ref{eq:pmoo-general-GH} with explicit $C_A$-induced partial ordering as
\begin{align}\label{eq:pmoo-general-constrained}
{{\min}_{~C_A}}~~ F(\theta)~~\mathrm{s.t.}
~~G(\theta) \leq 0,
~H(\theta) = 0 .
\tag*{PMOL}
\end{align}    
where $G: \R^q\to \R^{M_g}, H: \R^q\to \R^{M_h}$ are linear functions of $F$, and are differentiable.
Let $I = \{i\mid G_i(\theta) = 0\}$ be the index set of the active inequality constraints in $G(\theta)$, and $G_I(\theta) = [\cdots, G_i(\theta), \cdots]^\top$ for $i\in I$.
A subset of the feasible directions described by the gradients of the equality and active inequality constraints at $\theta$ is given by
\begin{align}
  D_g = \{d\in \R^q \mid \nabla G_I(\theta)^\top d < 0\},
  \quad D_{H} = \{d\in \R^q \mid \nabla H(\theta)^\top d = 0 \}.
\end{align}
A necessary optimality condition is that there exists no feasible and improving directions at $\theta$, i.e., $D_{C_A}\cap D_g \cap D_h = \emptyset$.
An algebraic description of the necessary optimality conditions for \ref{eq:pmoo-general-GH} is summarized below.
\begin{proposition}[First-order necessary optimality conditions for~\ref{eq:pmoo-general-GH}]
\label{prop:C_FJ_equivalent}
Let $C_{A} \coloneqq \{y \in \R^M \mid A y \geq 0 \} $ that satisfies $\intr(C_A)\neq \emptyset$.
If $\bar{\theta}$ solves~\ref{eq:pmoo-general-GH} locally, then there exists $\lambda_f \in \R_+^M$, $\lambda_g\in \R_+^{M_g}$, $[\lambda_f; \lambda_g] \neq 0$, and $\lambda_h \in \R^{M_h}$ that
\begin{align}\label{eq:FJ}
\nabla F(\bar{\theta}) A^\top\lambda_f + \nabla G(\bar{\theta})\lambda_g + \nabla H(\bar{\theta}) \lambda_h
  = 0, ~~\text{and}~~ \lambda_g^\top [-G(\bar{\theta})]_+ =0
\end{align}
\end{proposition}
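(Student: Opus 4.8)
The plan is to establish \eqref{eq:FJ} in two stages: a \emph{geometric} stage showing that local optimality rules out any direction that is simultaneously $C_A$-improving and feasible to first order, and an \emph{algebraic} stage converting this emptiness into the multiplier identity via a theorem of the alternative. Let $I=\{i\mid G_i(\bar\theta)=0\}$ be the active index set. Since $\intr(C_A)\neq\emptyset$ by Assumption~\ref{asmp:general}, I would use the characterization $-\intr(C_A)=\{y\in\R^M\mid Ay<0\}$, so that the condition $D_{C_A}\cap D_g\cap D_h=\emptyset$ becomes the non-existence of $d\in\R^q$ with
\[
A\nabla F(\bar\theta)^\top d<0,\quad \nabla G_I(\bar\theta)^\top d<0,\quad \nabla H(\bar\theta)^\top d=0.
\]
For the inequality part this is immediate from a first-order Taylor expansion along $\theta(t)=\bar\theta+td$: the strict inequalities give $AF(\theta(t))<AF(\bar\theta)$, the active constraints $G_i$ become strictly negative, and inactive ones stay negative by continuity, so for all small $t>0$ the point $\theta(t)$ would be feasible and strictly $C_A$-dominate $\bar\theta$ (in the sense of Definition~\ref{def:generalized_dominance}), contradicting weak $C_A$-optimality (Definition~\ref{def:Pareto_O}).

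For the algebraic stage I would apply Motzkin's transposition theorem (equivalently a generalized Gordan alternative) to the stacked system above. Grouping the strict-inequality rows into $P=[A\nabla F(\bar\theta)^\top;\,\nabla G_I(\bar\theta)^\top]$ and the equality rows into $Q=\nabla H(\bar\theta)^\top$, the infeasibility of $\{Pd<0,\ Qd=0\}$ yields the existence of $u\gneq 0$ and $w$ with $P^\top u+Q^\top w=0$. Splitting $u=[\lambda_f;\lambda_{g,I}]$ with $\lambda_f\in\R_+^M$, $\lambda_{g,I}\in\R_+^{|I|}$ and setting $\lambda_h=w$ gives
\[
\nabla F(\bar\theta)A^\top\lambda_f+\nabla G_I(\bar\theta)\lambda_{g,I}+\nabla H(\bar\theta)\lambda_h=0,\quad [\lambda_f;\lambda_{g,I}]\neq 0.
\]
I would then zero-pad $\lambda_{g,I}$ to $\lambda_g\in\R_+^{M_g}$ (setting $\lambda_{g,i}=0$ for $i\notin I$), which turns $\nabla G_I(\bar\theta)\lambda_{g,I}$ into $\nabla G(\bar\theta)\lambda_g$ and makes $[\lambda_f;\lambda_g]\neq0$. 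Complementary slackness $\lambda_g^\top[-G(\bar\theta)]_+=0$ then holds automatically, since $\lambda_{g,i}=0$ on inactive indices while $[-G_i(\bar\theta)]_+=0$ on active ones.

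The step I expect to be the main obstacle is making the geometric stage fully rigorous in the presence of the equality constraint $H$. The naive linearization only guarantees $H(\theta(t))=o(t)$, not exact feasibility, so the dominating point $\theta(t)$ need not lie in the feasible set and a direct contradiction is unavailable without a constraint qualification. Because the Proposition is a Fritz John statement (it permits $\lambda_f=0$, requiring only $[\lambda_f;\lambda_g]\neq0$), I would avoid any CQ and instead establish the emptiness of the direction system through a penalty/perturbation argument in the spirit of John: minimize a penalized surrogate of $\mathbf 1^\top AF$ augmented by quadratic penalties on $[G]_+$ and $H$ over a small closed ball around $\bar\theta$, extract the first-order conditions of these auxiliary minimizers, and pass to the limit as the penalty parameter grows to recover \eqref{eq:FJ} with normalized, not-all-zero multipliers. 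This limiting route handles the equality constraints cleanly and is where the bulk of the technical care is needed.
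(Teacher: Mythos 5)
Your algebraic stage is exactly the paper's proof. The paper takes as its starting point that first-order necessity means $D_{C_A}\cap D_g\cap D_H=\emptyset$, writes this as the inconsistency of the linear system in \eqref{eq:d_linear}, applies Motzkin's transposition theorem to obtain $p=[\lambda_f;\lambda_{g,I}]\gneq 0$ and $\lambda_h$ satisfying \eqref{eq:p_linear}, and zero-pads the inactive components of $\lambda_g$ so that complementary slackness holds automatically. Up to notation, your second half is a faithful reproduction of this.

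Where you diverge is the geometric stage, and here you should be careful. The paper does not attempt to prove that local optimality implies $D_{C_A}\cap D_g\cap D_H=\emptyset$; it asserts this as the necessary optimality condition and defers to prior work (see Remark~\ref{rmk:FJ_type}). You correctly observe that the naive linearization argument fails on the equality constraint, but your proposed repair---a Fritz John penalty/limiting argument---would not prove the proposition as stated. The classical FJ limiting construction only guarantees that the \emph{full} multiplier vector $[\lambda_f;\lambda_g;\lambda_h]$ is nonzero; when $\nabla H(\bar\theta)$ is degenerate the limit can concentrate all weight on $\lambda_h$, leaving $[\lambda_f;\lambda_g]=0$. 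The proposition demands $[\lambda_f;\lambda_g]\gneq 0$, and by the very Motzkin equivalence you invoke, that stronger conclusion is \emph{equivalent} to the emptiness of the direction system---so it cannot be recovered from an argument that delivers something strictly weaker. Either you accept the geometric emptiness as the operative definition of first-order necessity (which is what the paper does, after which your Motzkin step finishes the proof), or you must supply a separate argument ruling out the degenerate case $\lambda_f=0$, $\lambda_g=0$, $\lambda_h\neq 0$; the penalty route alone does not close the gap you identified.
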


\begin{proof}[Proof of Proposition~\ref{prop:C_FJ_equivalent}]
The geometric description $D_{C_A}\cap D_g \cap D_h = \emptyset$ is equivalent to that the linear system below w.r.t. $d$ is inconsistent
\begin{align}\label{eq:d_linear}
  \begin{bmatrix}
    A \nabla F(\bar{\theta})^\top\\
    \nabla G_I(\bar{\theta})^\top 
  \end{bmatrix}
  d < 0
  ~~\text{and}~~\nabla H(\bar{\theta})^\top d = 0 .
\end{align}
By the Motzkin's transposition theorem, system~\eqref{eq:d_linear} being inconsistent is equivalent to that the following linear system w.r.t. $p,\lambda_h$ has a solution with $p\gneq 0$
\begin{align}\label{eq:p_linear}
  \begin{bmatrix}
    \nabla F(\bar{\theta}) A^\top &\nabla G_I(\bar{\theta}) 
  \end{bmatrix}
  p + 
  \nabla H(\bar{\theta}) \lambda_h = 0.
\end{align}
Letting $p = [\lambda_f; \lambda_{g,I}]$, where $\lambda_{g,I} = [\cdots; \lambda_{g,i};\cdots], i\in I$,
and $\lambda_{g,i'} = 0$, for all  $i'\notin I$
 completes the proof.
\end{proof}

\begin{remark}
\label{rmk:FJ_type}
Notice that, Proposition~\ref{prop:C_FJ_equivalent} provides a Fritz John (FJ)-type first-order necessary optimality condition, which has been discussed in prior works such as~\citep[Theorem~1.2]{ye2003multiobjective} with additional variational inequality constraints, and~\citep[Section~3, (2)-(5)]{grana_drummond_first_2003} with inequality constraints only. 
We provide the derivation for our problem here for completeness.
In the FJ-type necessary optimality condition,  the multiplier $\lambda_f$ associated with the objective $F(\theta)$ can be zero if $|I| \geq 1$, which is undesirable.
We need additional constraint qualifications to ensure the condition in \eqref{eq:FJ} with $\lambda_f \neq 0$, i.e., the KKT condition, is also a necessary optimality condition. This is equivalent to $\mu_0=1$, and without considering the variational inequality constraints in~\citep[Theorem~1.2]{ye2003multiobjective}. 
The constraint qualification is discussed in detail in Appendix~\ref{ssub_app:proof_calm_moo}.
\end{remark}

\subsection{Properties of PMOL}
\label{sub_app:properties_pmol}

In this section, we discuss the properties of PMOL and their proofs. These include the properties of the subprogram in Lemma~\ref{lemma:property_subp}, and the calmness CQ of PMOL in Lemma~\ref{lemma:calmness_satisfy_hoffman}.

\subsubsection{Proof of {Lemma~\ref{lemma:property_subp}}: properties of the subprogram}

\begin{lemma}[Additional properties of the subprogram]
\label{lemma:add_property_subp}
For the subprogram~\eqref{eq:subp_lam}, the following properties hold:\\
1. The solution $d^*(\theta)$ is unique. \\
2. If $\theta$ is a local weak optimal solution with $A F(\theta) > 0$, then $d^*(\theta) = 0$, $\psi (\theta) = 0$. Otherwise, if $\theta$ is not a local weak optimal solution, then $d^*(\theta) \neq 0$, $\psi (\theta) < 0$, and when $\theta$ is feasible,
\begin{align}
 2 \psi(\theta) \leq  
- \|d^*(\theta)\|^2 < 0. 
\end{align}
3. (Ability to escape weak optimal solutions). Let $\theta$ be a weak optimal  solution, with $(A F(\theta))_m = 0$ for some $m\in [M]$. If there exists feasible and non-strictly improving directions at $\theta$ with $A\nabla F(\theta)^\top d \lneq 0$, then $d^*(\theta)\neq 0$, $\psi (\theta) < 0$. Otherwise, if there exists no feasible and non-strictly improving directions at $\theta$ with $A\nabla F(\theta)^\top d \lneq 0$, then $d^*(\theta)= 0$, $\psi (\theta) = 0$.  \\
4. (Scale invariance) Suppose there are only equality constraints, i.e., $M_g = 0$, and $M_h = M-1$, $B_h $ is full row rank and is selected such that $B_h (F(\theta_1) - F(\theta_2)) = 0$ with $F(\theta_1), F(\theta_2)$ being two different reference points in the objective space. For all $\theta\in \R^q$ that are feasible, i.e., $ H(\theta)=0$, when $A=I$,  the normalized solution $d^*(\theta)/ \|d^*(\theta)\|$ does not change when the objective $F(\theta)$ is scaled by an arbitrary positive diagonal matrix.
\end{lemma}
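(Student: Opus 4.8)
The plan is to handle the four items separately, working throughout from the primal form \eqref{eq:subp_original} of the subprogram and its dual \eqref{eq:subp_lam}, together with the strong duality already established in the reformulation proof (Slater holds). For item~1, I would argue directly on \eqref{eq:subp_original}: the feasible set in $(d,c)$ is an intersection of half-spaces and hyperplanes, hence convex, and the objective $c+\frac12\|d\|^2$ is strictly convex in $d$. If $(d_1,c_1)$ and $(d_2,c_2)$ were both optimal with $d_1\neq d_2$, their midpoint would be feasible and, by strict convexity of $\|\cdot\|^2$, attain a value strictly below $\psi(\theta)$, a contradiction. Thus $d^*(\theta)$ is unique even though the dual minimizer $\lambda^*$ of \eqref{eq:subp_lam} need not be, since $d^*(\theta)=-\nabla F(\theta)A_{ag}^\top\lambda^*$ is pinned down by the primal.

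For item~2 the key is an exact formula for $\psi(\theta)$. Substituting the stationarity conditions \eqref{eq:lagrangian_d_condition}--\eqref{eq:lagrangian_c_condition} into the Lagrangian, the $c^*$-terms cancel because $\lambda_f^{*\top}AF(\theta)=\mathbf 1^\top AF(\theta)$, and $(A_{ag}^\top\lambda^*)^\top\nabla F(\theta)^\top d^*=-\|d^*\|^2$ by the $d$-stationarity, yielding
\[
\psi(\theta)=-\tfrac12\|d^*(\theta)\|^2+c_g\lambda_g^{*\top}G(\theta)+c_h\lambda_h^{*\top}H(\theta).
\]
When $\theta$ is feasible, $H(\theta)=0$ and $c_g\lambda_g^{*\top}G(\theta)\le 0$ (as $\lambda_g^*\ge 0$, $G(\theta)\le 0$), so $2\psi(\theta)\le-\|d^*(\theta)\|^2$. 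The dichotomy then mirrors Lemma~\ref{lemma:property_subp}: feasibility makes $(0,0)$ admissible, so $\psi(\theta)\le 0$; if $\psi(\theta)<0$ then $c^*<0$, and the first constraint together with $AF(\theta)>0$ forces $A\nabla F(\theta)^\top d^*<0$, a feasible strictly improving direction contradicting weak $C_A$-optimality; conversely, any feasible strictly improving direction lets me choose $c<0$ to certify $\psi(\theta)<0$, hence $d^*\neq 0$.

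Item~3 is where I expect the real work. The mechanism is the structure of the first constraint at indices $m$ with $(AF(\theta))_m=0$: there the right-hand side $\frac{c}{\mathbf 1^\top AF(\theta)}(AF(\theta))_m$ vanishes identically, so that row reads $(A\nabla F(\theta)^\top d)_m\le 0$ independently of $c$, decoupling the stuck objectives from $c$. Writing $S_0=\{m:(AF(\theta))_m=0\}$ and $S_+=\{m:(AF(\theta))_m>0\}$ (note $S_+\neq\emptyset$ by Assumption~\ref{asmp:general}), I would show that $\psi(\theta)<0$ is equivalent to the existence of a feasible direction that is strictly improving on $S_+$ and non-increasing on $S_0$; given such a direction one rescales $d=t\bar d$ and sets $c$ to the negative, $O(t)$ value dictated by the $S_+$ rows, making the objective $O(t)<0$ for small $t$. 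The subtle point, which I expect to be the main obstacle, is the translation between the stated hypothesis of a feasible non-strictly improving direction with $A\nabla F(\theta)^\top d\lneq 0$ and this strict-on-$S_+$ requirement; this needs a careful case analysis on the partition $S_0\cup S_+$, using that at a weak optimum no direction can be strictly improving on all of $[M]$. The converse branch ($d^*=0$, $\psi(\theta)=0$ when no such direction exists) again follows from the optimality of $(0,0)$ as in item~2.

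For item~4 I would track how \eqref{eq:subp_original} transforms under $F\mapsto DF$ with $D=\mathrm{diag}(s_1,\dots,s_M)\succ 0$, taking $A=I$ and the geometry-preserving replacement $B_h\mapsto B_hD^{-1}$, so that the preference line is mapped to the scaled line and the feasible set is preserved. Then $\nabla\tilde F=\nabla F D$, the equality block is unchanged ($\nabla\tilde H=\nabla H$ and $\tilde H=H$, so at feasible $\theta$ it reads $B_h\nabla F(\theta)^\top d=0$), and each row of the first constraint loses its $s_m$ factor, leaving only a global rescaling of the $c$-coefficient by $\kappa=\mathbf 1^\top DF(\theta)/\mathbf 1^\top F(\theta)$. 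After reparametrizing $c=\kappa c'$, the feasible set in $(d,c')$ is identical to the original and, at a feasible $\theta$, is a convex cone (both constraints are positively homogeneous in $(d,c')$), while the objective becomes $\kappa c'+\frac12\|d\|^2$. Optimizing along the rays of this cone, the minimizer over the radial variable scales the length of $d^*$ by a factor proportional to $\kappa$ but leaves the optimal ray — hence $d^*(\theta)/\|d^*(\theta)\|$ — unchanged, which is exactly scale invariance. The thing to verify with care here is the homogeneity/cone claim and that $\kappa$ enters only as a radial rescaling.
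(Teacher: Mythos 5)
Your items 1 and 2 follow the paper's proof essentially verbatim: uniqueness from strict convexity of $c+\frac{1}{2}\|d\|^2$ in $d$, and the bound $2\psi(\theta)\le-\|d^*(\theta)\|^2$ from complementary slackness plus feasibility ($H(\theta)=0$, $\lambda_g^{*\top}G(\theta)\le 0$); your closed form $\psi(\theta)=-\frac{1}{2}\|d^*\|^2+c_g\lambda_g^{*\top}G+c_h\lambda_h^{*\top}H$ is just the paper's chain of inequalities packaged as an identity. Item 4 is where you genuinely diverge. The paper argues in the dual: writing $\tilde\lambda=\lambda_f+B_h^\top\lambda_h$, it shows that under $F\mapsto\Lambda F$ the dual feasible set $\Omega_{\lambda_f}(\theta)+B_h^\top(\R^{M_h})$ is dilated by $c_s=\mathbf 1^\top\Lambda F/\mathbf 1^\top F$ (the affine piece scales by $c_s$, the subspace piece is invariant by a kernel argument), so $\tilde\lambda^*$ and hence $d^*$ pick up only the scalar $c_s$. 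Your primal route --- reparametrize $c=\kappa c'$, observe that at a feasible point the constraint set in $(d,c')$ is an unchanged convex cone, and note that minimizing $\kappa c'+\frac{1}{2}\|d\|^2$ along each ray rescales the optimal radius by $\kappa$ without moving the optimal ray --- is correct (the degenerate ray $\bar d=0$, $\bar c<0$ is excluded by Assumption~\ref{asmp:general}-1) and arguably more transparent; fixing the representative $B_h'=B_hD^{-1}$ is without loss of generality since only the row space of $B_h'$ enters the equality constraint.

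Item 3 is the one place where your proposal is not yet a proof, and the subtle point you flag is real. Your characterization --- $\psi(\theta)<0$ iff there is a feasible direction strictly improving on $S_+=\{m:(AF(\theta))_m>0\}$ and non-increasing on $S_0$ --- is the right one: since $\psi(\theta)<0$ forces $c^*<0$, and any feasible $(d,c)$ with $c<0$ satisfies $(A\nabla F(\theta)^\top d)_m\le \frac{c}{\mathbf 1^\top AF(\theta)}(AF(\theta))_m<0$ for $m\in S_+$, strictness on $S_+$ is necessary as well as sufficient; a direction with $A\nabla F(\theta)^\top d\lneq 0$ that is strict only on $S_0$ does not by itself certify $\psi(\theta)<0$. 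The paper does not perform the case analysis you defer either: in the analogue of \eqref{eq:property3_psi_negative} it sets $\sigma=-\max_{m\in[M]\setminus I}(A\nabla F(\theta)^\top d)_m(\mathbf 1^\top AF(\theta))/((AF(\theta))_m\|d\|^2)$ and concludes $\psi(\theta)\le-\frac{1}{2}\sigma^2\|d\|^2<0$, which silently requires that max to be strictly negative, i.e., strictness on $S_+$. So to close item 3 you should read the hypothesis as supplying a direction that is strict on the non-degenerate coordinates, after which your $O(t)$ scaling argument --- which is exactly the paper's --- goes through; no further idea is needed.
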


\begin{proof}[Proof of {Lemma~\ref{lemma:add_property_subp}}]

For \textbf{Property-1}, the uniqueness of $d^*(\theta)$ follows from the strict convexity of the objective function w.r.t. the direction $d$.

For \textbf{Property-2}, in the first case
if $\theta $ is a local optimal solution,
by definition, there exists no feasible and improving directions $d$ such that
$A\nabla F(\theta)^\top d < 0$. Let $\Omega_d(\theta)$ be the set of $d\in \Rq$ that satisfy the constraints in~\eqref{eq:subp_original}, i.e.,
\begin{align}
  \Omega_d(\theta) \coloneqq
  \{ d\in \Rq \mid  B_g \nabla F(\theta)^\top d + c_g G(\theta) \leq 0 ,B_h \nabla F(\theta)^\top d + c_h H(\theta) = 0\} .
\end{align} 

Then, since $AF(\theta) > 0$, for all $d\in \Omega_d(\theta)$,
\begin{align}
 &\max_{m\in [M]} (A\nabla F(\theta)^\top d)_m \geq 0 \\
 \text{and}~~
 &\max_{m\in [M]} (A\nabla F(\theta)^\top d)_m / (A F(\theta))_m \geq 0 .
\end{align}
And since $AF(\theta) > 0$, it holds that
\begin{align*}
  \psi(\theta) \coloneqq &
\min_{(d,c) \in \Omega_d(\theta)\times \R} c + \frac{1}{2}\|d\|^2 \\
= & \min_{d \in \Omega_d(\theta)} 
\max_{m\in [M]} (A\nabla F(\theta)^\top d)_m (\mathbf{1}^\top A F(\theta)) / (A F(\theta))_m + \frac{1}{2}\|d\|^2
\geq 0 
\numberthis
\end{align*}
with $\psi(\theta) = 0$ attainable by taking $d = 0\in \Omega_d(\theta)$.
The first case of Property-2 is proved.

In the second case,
if $\theta $ is not a local weak optimal solution, then there exists $d \in \Omega_d(\theta)$ such that $A\nabla F(\theta)^\top d < 0$.
Taking $\sigma = - \max_{m\in [M]} (A\nabla F(\theta)^\top d)_m (\mathbf{1}^\top A F(\theta)) / \big((A F(\theta))_m \|d\|^2 \big)$, and $d_\sigma = \sigma d$, then 
\begin{align*}
  \psi(\theta) \coloneqq &
\min_{(d,c) \in \Omega_d(\theta)\times \R} c + \frac{1}{2}\|d\|^2 \\
=& \min_{d \in \Omega_d(\theta)} 
\max_{m\in [M]} (A\nabla F(\theta)^\top d)_m
(\mathbf{1}^\top A F(\theta)) / (A F(\theta))_m + \frac{1}{2}\|d\|^2 \\
=& \max_{m\in [M]} (A\nabla F(\theta)^\top d^*(\theta))_m
(\mathbf{1}^\top A F(\theta)) / (A F(\theta))_m
 + \frac{1}{2}\|d^*(\theta)\|^2 \\
< &\max_{m\in [M]} (A\nabla F(\theta)^\top d_\sigma)_m
(\mathbf{1}^\top A F(\theta)) / (A F(\theta))_m
 + \frac{1}{2}\|d_\sigma\|^2 \\
= &\sigma \max_{m\in [M]} (A\nabla F(\theta)^\top d)_m 
(\mathbf{1}^\top A F(\theta)) / (A F(\theta))_m
+ \frac{1}{2}\sigma^2 \|d\|^2
= - \frac{1}{2} \sigma^2 \|d\|^2 < 0 .
\numberthis \label{eq:property3_psi_negative}
\end{align*}

Thus $d^*(\theta)\neq 0$.
Recall that
\begin{align}\label{eq:d_star_lam}
d^*(\theta) 
= - \nabla F(\theta) \Big(A^\top \lambda_f^*
+ B_g^\top \lambda_g^*
+ B_h^\top \lambda_h^* \Big)
\end{align}
where by the feasibility and optimality conditions,
\begin{subequations}
\begin{align}
{\lambda_h^*}^\top \big(B_h \nabla F(\theta)^\top d^*(\theta) 
+ c_h H(\theta)\big) =& 0 , \\
{\lambda_g^*}^\top \big(B_g \nabla F(\theta)^\top d^*(\theta)
+ c_g G(\theta) \big) = & 0 , \\
{\lambda_f^*}^\top \big( A \nabla F(\theta)^\top d^*(\theta) 
- c^* (\mathbf{1}_M^\top A F(\theta))^{-1} A F(\theta) \big) =& 0.
\end{align}
\end{subequations}
Combining the above with~\eqref{eq:d_star_lam}, we have
\begin{align*}
\|d^*(\theta) \|^2
= & - {d^*(\theta)}^\top \nabla F(\theta) \Big(A^\top \lambda_f^*
+ B_g^\top \lambda_g^*
+ B_h^\top \lambda_h^* \Big) \nonumber \\
= &  - {d^*(\theta)}^\top \nabla F(\theta) A^\top \lambda_f^* 
+ c_h {\lambda_h^*}^\top H(\theta)
+ c_g {\lambda_g^*}^\top G(\theta) \\
\leq & -c^*(\theta) (\mathbf{1}^\top A F(\theta))^{-1} {\lambda_f^*}^\top A F(\theta) 
= - c^*(\theta)
\numberthis
\end{align*}
where the last inequality uses the fact that $\theta$ is feasible, and $G(\theta)\leq 0$, $H(\theta) = 0$. 

Then it holds that
\begin{align}
  2 \psi(\theta) = 2c^*(\theta) + \|d^*(\theta)\|^2 
  \leq - \|d^*(\theta) \|^2 < 0.
\end{align}

Therefore, Property-2 holds.

For \textbf{Property-3}, let $I\subseteq [M]$ be the set such that $(AF(\theta))_m =0$ for all $m \in I$, then \eqref{eq:subp_original} is equivalent to 
\begin{align*}
\psi(\theta) = &\min_{(d,c) \in \R^{q}\times \R}   c 
  + \frac{1}{2}\|d\|^2 
  \tag*{SP1w}\\
\mathrm{s.t.}~~
& (A \nabla F(\theta)^\top d )_m 
- c (\mathbf{1}^\top A F(\theta))^{-1} (A F(\theta))_m 
\leq 0, ~~\text{for all } m \in [M] \setminus I \\
& (A \nabla F(\theta)^\top d )_m 
\leq 0, ~~\text{for all } m \in I \\
& B_g \nabla F(\theta)^\top d 
+ c_g G(\theta) 
\leq 0\\
& B_h \nabla F(\theta)^\top d + c_h H(\theta) = 0
\end{align*} 

In the first case, if there exists feasible and non-strictly improving directions at $\theta$ with $A\nabla F(\theta)^\top d \lneq 0$, then
such $d\neq 0$, $d\in \Omega_d$. Following similar arguments as~\eqref{eq:property3_psi_negative} by taking $\sigma = - \max_{m\in [M]\setminus I} (A\nabla F(\theta)^\top d)_m (\mathbf{1}^\top A F(\theta)) / \big((A F(\theta))_m \|d\|^2 \big)$, and $d_\sigma = \sigma d$, then 
\begin{align*}
  &\psi(\theta) \coloneqq 
\min_{(d,c) \in \Omega_d(\theta)\times \R} c + \frac{1}{2}\|d\|^2 \\
=& \min_{d \in \Omega_d(\theta)} 
\max_{m\in [M]\setminus I} (A\nabla F(\theta)^\top d)_m
(\mathbf{1}^\top A F(\theta)) / (A F(\theta))_m + \frac{1}{2}\|d\|^2 \\
< &\max_{m\in [M]\setminus I} (A\nabla F(\theta)^\top d_\sigma)_m
(\mathbf{1}^\top A F(\theta)) / (A F(\theta))_m
 + \frac{1}{2}\|d_\sigma\|^2 \\
= &\sigma \max_{m\in [M]\setminus I} (A\nabla F(\theta)^\top d)_m 
(\mathbf{1}^\top A F(\theta)) / (A F(\theta))_m
+ \frac{1}{2}\sigma^2 \|d\|^2
= - \frac{1}{2} \sigma^2 \|d\|^2 < 0 .
\numberthis
\end{align*}
And the corresponding $d^*(\theta) \neq 0$.

In the second case, if there exists no feasible and non-strictly improving directions at $\theta$, then for all $d\in \Omega_d(\theta)$,
\begin{align}
 &\max_{m\in [M]\setminus I} (A\nabla F(\theta)^\top d)_m \geq 0 \\
 \text{and}~~
 &\max_{m\in [M]\setminus I} (A\nabla F(\theta)^\top d)_m / (A F(\theta))_m \geq 0 .
\end{align}
And since $(AF(\theta))_m > 0$ for all $m\in [M] \setminus I$, it holds that
\begin{align*}
  \psi(\theta) \coloneqq &
\min_{(d,c) \in \Omega_d(\theta)\times \R} c + \frac{1}{2}\|d\|^2 \\
= & \min_{d \in \Omega_d(\theta)} 
\max_{m\in [M]\setminus I} (A\nabla F(\theta)^\top d)_m (\mathbf{1}^\top A F(\theta)) / (A F(\theta))_m + \frac{1}{2}\|d\|^2
\geq 0 
\numberthis
\end{align*}
with $\psi(\theta) = 0$ if and only if $d = 0\in \Omega_d(\theta)$.

Combining the above arguments, Property-3 is proved.

For \textbf{Property-4}, let $d^*(\theta)$ be the solution to the original problem~\eqref{eq:subp_original} without inequality constraints. 
Using the fact that 
$H(\theta) = 0 $, and letting ${\lambda} = A^\top \lambda_f
+ B_h^\top \lambda_h = \lambda_f
+ B_h^\top \lambda_h$, then the original dual problem can be written as
\begin{align*}
  &d^*(\theta)  = - \nabla F(\theta)  {\lambda}^* \\
&\mathrm{s.t.}~{\lambda}^*
\in \mathop{\arg\min}_{{\lambda}
\in \Omega_{\tilde\lambda}(\theta)}
\varphi(\lambda; \theta) \coloneqq
\frac{1}{2}\|\nabla F(\theta)  {\lambda} \|^2  
\numberthis
\end{align*}
where $\Omega_{\tilde\lambda}(\theta) = \big(\Omega_{\lambda_f}(\theta)\big) + B_h^\top\big(\R^{M_h}\big)$, and $\Omega_{\lambda_f}(\theta) = \{\lambda_f \in \R_+^M\mid {\lambda_f}^\top F(\theta)  = \mathbf{1}^\top  F(\theta) \}$.

Suppose the objective is scaled by a positive diagonal matrix $\Lambda \in \R^{M\times M}$, then the scaled subprogram has a dual given by
\begin{align*}
  &d^*(\theta)  = - \nabla F(\theta) \Lambda {\lambda}^* \\
&\mathrm{s.t.}~{\lambda}^*
\in \mathop{\arg\min}_{{\lambda}
\in \Omega_{\tilde \lambda}(\theta;\Lambda)}
\varphi(\lambda; \theta) \coloneqq
\frac{1}{2}\|\nabla F(\theta) \Lambda {\lambda} \|^2  
\numberthis
\end{align*}
where $\Omega_{\tilde \lambda}(\theta;\Lambda) = \big(\Omega_{\lambda_f}(\theta;\Lambda)\big) + {B_h'}^\top\big(\R^{M_h}\big)$, and $\Omega_{\lambda_f}(\theta;\Lambda) = \{  \lambda_f \in \R^M_+ \mid {\lambda_f}^\top \Lambda F(\theta)  = \mathbf{1}^\top \Lambda F(\theta) \}$.
Letting ${\lambda}' = \Lambda {\lambda}$, then
\begin{align*}
  &d^*(\theta)  = - \nabla F(\theta) {\lambda}'^* \\
&\mathrm{s.t.}~{\lambda}'^*
\in \mathop{\arg\min}_{{\lambda}'
\in \Omega_{{\tilde \lambda}'}(\theta;\Lambda)}
\varphi(\lambda; \theta) \coloneqq
\frac{1}{2}\|\nabla F(\theta) {\lambda}' \|^2  
\numberthis
\end{align*}
where $\Omega_{{\tilde \lambda}'}(\theta;\Lambda) = \Lambda \big(\Omega_{\lambda_f}(\theta;\Lambda)\big) + \Lambda B_h'^\top\big(\R^{M_h}\big)$.
The set $\Lambda \big(\Omega_{\lambda_f}(\theta;\Lambda)\big)$ 
can be written as
\begin{align}
  \Lambda  \big(\Omega_{\lambda_f}(\theta;\Lambda)\big) 
  =& \{\Lambda  \lambda_f \mid \lambda_f\in \R_+^M, {\lambda_f}^\top \Lambda F(\theta)  = \mathbf{1}^\top \Lambda F(\theta) \} \nonumber \\
  =& \{{\lambda}_f' \in \R_+^M \mid F(\theta)^\top {{{\lambda}}_f}^{\prime}  = \mathbf{1}^\top \Lambda F(\theta) \}. 
\end{align}
Notice that, 
\begin{align}
  F(\theta)^\top {{{\lambda}}_f}^{\prime} 
  = \mathbf{1}^\top \Lambda F(\theta)
  = \mathbf{1}^\top F(\theta) c_s
\end{align}
where $c_s = \mathbf{1}^\top \Lambda F(\theta) / (\mathbf{1}^\top F(\theta))$.
Therefore, 
$\Lambda  \big(\Omega_{\lambda_f}(\theta;\Lambda)\big) = c_s \big(\Omega_{\lambda_f}(\theta)\big)$.

Also note that, $B_h\in \R^{(M-1)\times M}$ is full row rank, and is selected based on $F(\theta)$, which satisfies
\begin{align}
  B_h (F(\theta_1) - F(\theta_2)) = 0
\end{align}
where $F(\theta_1), F(\theta_2)$ are two reference points which fully defines the kernel of $B_h$.
Similarly, when $F(\theta)$ is scaled by $\Lambda$, the corresponding $B_h'$ satisfies
\begin{align}
  B_h'\Lambda (F(\theta_1) - F(\theta_2)) = 0.
\end{align}

This further implies
\begin{align}
  \Lambda {B_h'}^\top (\R^{M_h}) = \rng( \Lambda {B_h'}^\top)
= \mathrm{ker}( B_h'\Lambda)^{\perp} = \mathrm{ker}( B_h )^{\perp}
= B_h (\R^{M_h}) = c_s B_h (\R^{M_h}).
\end{align}

Combining with $\Lambda  \big(\Omega_{\lambda_f}(\theta;\Lambda)\big) = c_s \big(\Omega_{\lambda_f}(\theta)\big)$, it holds that
\begin{align}
  \Omega_{{\tilde \lambda}'}(\theta;\Lambda) = 
  c_s \Omega_{{\tilde \lambda}}(\theta) .
\end{align} 
Therefore, the solution of $\tilde \lambda$ and ${\lambda}'$ is only subject to a scaling factor, which does not change the direction of $d^*(\theta)$.
This proves Property-4, the scale invariance.
\end{proof}

\begin{remark}
Note that, Property~3, the ability to escape weak optimal solutions, and Property~4, the scale invariance, come from the subprogram design that is adaptive to the objectives. For the simplified subprogram that is not adaptive to the objectives, these two properties no longer hold, but Properties~1 and 2 still hold.   
\end{remark}

\subsubsection{Proof of Lemma~\ref{lemma:calmness_satisfy_hoffman}: calmness of PMOL}
\label{ssub_app:proof_calm_moo}

\begin{example}\label{exmp:no_LICQ}
Let $F:\R^q\to \R^2$.
Consider the problem below as a special case of \ref{eq:pmoo-general-GH}, given by
\begin{align}
  {\min}_{\R^2_+} ~F(\theta) ~~\mathrm{s.t.}~~f_2(\theta) = \min f_2(\theta).
\end{align}
For $\bar{\theta} = \arg\min_{\theta\in \R^q} f_2(\theta)$, we have $\nabla f_2(\bar{\theta})=0$, and $\bar{\theta}$ satisfies \eqref{eq:FJ} with $\lambda = [0,1]^\top \neq 0$ and $\lambda_h = 1$. However, $\nabla H(\bar{\theta}) = \nabla f_2(\bar{\theta}) = 0$ violates the LICQ, the Slater's CQ, and the MFCQ.
\end{example}

Below we restate the definition of the Calmness condition for PMOL~{\citep{ye2003multiobjective}}, which generalizes the calmness condition in single-objective optimization.

\begin{definition}[Calmness condition for PMOL~{\citep[Restatement of Definition~4.5]{ye2003multiobjective}}]
  \label{def:MOO_calm}
Let $\bar{\theta}$ be a local solution to~\ref{eq:pmoo-general-GH}.
We say the PMOL problem satisfies the calmness condition at $\bar{\theta}$ provided that there exists $\epsilon > 0$ and a Lipschitz function $\phi: \R^{M_g + M_h} \to \R^M$ satisfying $\phi(0,0) = 0$ such that there exists no 
$(\theta,p,q) \in [(\bar{\theta},0,0)+\epsilon {\cal B}]/ \{(\bar{\theta},0,0)\}$ satisfying
\begin{subequations}
\label{eq:calmness_CA}
\begin{align}
  &G(\theta)+p\leq 0, \label{eq:calmness_CA_p}\\
  &H(\theta)+q = 0,\label{eq:calmness_CA_q} \\
  &F(\theta) - F(\bar{\theta}) + \phi(p,q) \in -\intr(C_A).
  \label{eq:calmness_CA_phi_pq}
\end{align}    
\end{subequations}
\end{definition}

Our proof relies on the following general version of Hoffman error bound, which bounds the distance of a point to a nonempty solution set defined by constraints by a measure of the constraint violation of the point.
\begin{lemma}[Relative form of Hoffman error bound~{\citep[Proposition~5]{pena_new_2021}}]
\label{lemma:hoffman_EB}
Given $B_h \in \R^{k_H\times M},  b_h\in \R^{k_H}$, 
$B_g \in \R^{k_G\times M}, b_g\in \R^{k_G}$, 
define $\Sigma(p, q) \coloneqq \{y\in \R^M\mid B_g y + b_g \leq p, B_h y + b_h = q\} $, and $\mathrm{dom}~\Sigma \coloneqq \{(p,q)\mid \Sigma(p, q) \neq \emptyset\}$.
Let $\Omega_R \subseteq \R^M$ be a reference polyhedron (e.g., one defined by the intersection of half-spaces).
Then for all $u\in \Omega_R$, and $(p,q) \in \mathrm{dom}~\Sigma$, there exists a relative Hoffman constant $c_{\rm hof}$ depending only on $B_g, B_h, \Omega_R$ such that
\begin{align}
\mathrm{dist}(u, \Sigma(p,q)\cap \Omega_R) \leq 
c_{\rm hof} (B_g, B_h \mid \Omega_R)
\left \|\begin{bmatrix}
  (B_g u + b_g - p)_+  \\ B_h u + b_h - q    
  \end{bmatrix} \right\|
\end{align}
where $(B_g u + b_g - p)_+ \coloneqq \max\{0, B_g u + b_g - p\}$ which replaces each negative component of $B_g u + b_g - p$ by zero, and $\mathrm{dist}(u, \Omega)\coloneqq \inf_{u'\in \Omega}\|u - u'\|$.
\end{lemma}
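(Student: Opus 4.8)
The plan is to prove this by the classical projection-plus-active-set argument for Hoffman-type bounds, adapted to the mixed equality/inequality system and to the reference polyhedron $\Omega_R$. Write $\Omega_R = \{y \in \R^M \mid Cy \leq d\}$. Fix $u \in \Omega_R$ and $(p,q)$ with $\Sigma(p,q) \cap \Omega_R \neq \emptyset$, and let $z$ be the Euclidean projection of $u$ onto the closed convex set $\Sigma(p,q) \cap \Omega_R$, so that $\mathrm{dist}(u, \Sigma(p,q)\cap\Omega_R) = \|z - u\|$. First I would write the KKT/normal-cone conditions for this projection: there exist multipliers $\lambda_g \geq 0$ for the active rows $J$ of $B_g y \leq p - b_g$, a free $\lambda_h$ for $B_h y = q - b_h$, and $\mu \geq 0$ for the active rows $K$ of $Cy \leq d$, such that $z - u = -(B_{g,J}^\top \lambda_g + B_h^\top \lambda_h + C_K^\top \mu)$.

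Second, I would expand $\|z-u\|^2 = (z-u)^\top(z-u)$ by substituting this expression and using complementary slackness row by row. Since $B_{g,J} z = (p-b_g)_J$ and $B_h z = q - b_h$ at the projection, the $B_g$ and $B_h$ terms reproduce exactly the components of the residual $r \coloneqq \big\| [(B_g u + b_g - p)_+;\, B_h u + b_h - q] \big\|$, yielding $-\lambda_g^\top B_{g,J}(z-u) \leq \|\lambda_g\|\,\|(B_g u + b_g - p)_+\|$ and $-\lambda_h^\top B_h(z-u) \leq \|\lambda_h\|\,\|B_h u + b_h - q\|$. The crucial \emph{relative} point is that the $\Omega_R$ term is harmless: because $u \in \Omega_R$ we have $C_K u \leq d_K = C_K z$, so $-\mu^\top C_K(z-u) = -\mu^\top(d_K - C_K u) \leq 0$ and may be dropped. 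A Cauchy--Schwarz step then gives $\|z-u\|^2 \leq \|(\lambda_g,\lambda_h)\| \cdot r$; note that the constraints defining $\Omega_R$ never enter $r$.

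Third, I would close the loop by bounding the multipliers. By a Carathéodory argument on the finitely generated normal cone at $z$, the representation of $z-u$ can be chosen so that the active rows actually used, collected into a matrix $N$ (a submatrix of $[B_g; B_h; C]$), have full row rank; then $z - u = -N^\top \nu$ forces $\|\nu\| \leq \sigma_{\min}(N)^{-1}\|z-u\|$, and in particular $\|(\lambda_g,\lambda_h)\| \leq \sigma_{\min}(N)^{-1}\|z-u\|$. Combining with the previous bound yields $\|z-u\| \leq \sigma_{\min}(N)^{-1} r$. Taking $c_{\rm hof}(B_g, B_h \mid \Omega_R) \coloneqq \max_N \sigma_{\min}(N)^{-1}$, the maximum ranging over the finitely many full-row-rank submatrices $N$ of $[B_g; B_h; C]$ that are admissible active sets, gives the claim with a constant depending only on $B_g, B_h$ and $\Omega_R$ --- and not on $b_g, b_h, p, q, d$, or $u$.

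The step I expect to be the main obstacle is precisely making the constant uniform, i.e.\ independent of the right-hand sides and of $u$: this is where the finiteness/combinatorial structure of polyhedra is essential, and it requires care to ensure the minimal (linearly independent) active-set representation always exists with the correct sign pattern, and that the worst-case submatrix is nonsingular on the relevant subspace so that $\sigma_{\min}(N)^{-1}$ is finite. A secondary subtlety is the standing nonemptiness of $\Sigma(p,q)\cap\Omega_R$, which guarantees the projection $z$ exists; in the intended application $\Sigma(p,q)$ is a line meeting $\Omega_R$, so this holds. Alternatively, one can bypass the explicit singular-value bookkeeping by invoking the surjectivity characterization of relative Hoffman constants in~\citep{pena_new_2021}, from which the displayed inequality follows directly; the projection argument above is essentially an unpacking of that characterization specialized to our $(B_g, B_h, \Omega_R)$.
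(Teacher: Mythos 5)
The paper does not actually prove this lemma: it is imported verbatim as \citep[Proposition~5]{pena_new_2021} and used as a black box inside the proof of Lemma~\ref{lemma:calmness_satisfy_hoffman}, so there is no internal proof to compare against. Your blind proposal therefore does something genuinely different --- it supplies a self-contained derivation via the classical projection-plus-active-set argument, and as far as I can check it is sound: the normal cone of $\Sigma(p,q)\cap\Omega_R$ at the projection $z$ is (polyhedrality, no constraint qualification needed) the conic hull of the active constraint normals; complementary slackness turns the $B_g$ and $B_h$ terms into the stated residual while the $\Omega_R$ term has the right sign and drops out because $u\in\Omega_R$; and the conic Carath\'eodory reduction (splitting each equality row $B_{h,i}$ into the two signed generators $\pm B_{h,i}$, which leaves $\sigma_{\min}$ of the selected submatrix unchanged) yields a full-row-rank $N$ with $\|\nu\|\leq \sigma_{\min}(N)^{-1}\|z-u\|$, so that $c_{\rm hof} = \max_N \sigma_{\min}(N)^{-1}$ over the finitely many such submatrices of $[B_g;B_h;C]$ is finite and depends only on $(B_g,B_h,\Omega_R)$, uniformly in $b_g,b_h,p,q,u$. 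One small point of care you should make explicit when re-running the Cauchy--Schwarz step: the bound must be redone with the \emph{reduced} multipliers $\nu$, not the original KKT multipliers, which works because the reduced generators are still active rows at $z$ and retain their sign constraints. Your flagged subtlety about nonemptiness is a genuine (minor) correction to the statement as transcribed in the paper: with only $(p,q)\in\mathrm{dom}\,\Sigma$ the left-hand side can be $+\infty$ when $\Sigma(p,q)\cap\Omega_R=\emptyset$, so the correct hypothesis --- the one you actually use, and the one in force in the paper's application, where $F(\tilde{\theta})\in\Sigma(0,0)\cap\Omega_R$ by construction --- is $\Sigma(p,q)\cap\Omega_R\neq\emptyset$. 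What each route buys: the paper's citation keeps the argument short and inherits Pe\~na et al.'s sharp characterization of the relative constant, while your projection argument is elementary and self-contained at the cost of a possibly looser (but sufficient) constant; your closing remark that one could instead invoke \citep{pena_new_2021} directly is exactly the paper's choice.
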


\begin{proof}[Proof of Lemma~\ref{lemma:calmness_satisfy_hoffman}]
We first construct $\phi(p,q) = \overline{c_{\rm hof}}
\|[p^\top, q^\top ]^\top \| A^{-1}\mathbf{1}_M$,
where $\overline{c_{\rm hof}}$ is the Hoffman constant upper bound in Lemma~\ref{lemma:hoffman_EB}.
Then $\phi(0,0) = 0$, and $\phi(p,q)$ is Lipschitz because 
\begin{align*}
\|\phi(p,q) - \phi(p',q') \|
\leq &
\overline{c_{\rm hof}} M \|A^{-1}\| \left| \left \|\begin{bmatrix}
p \\ q    
\end{bmatrix} \right\|
- \left \|\begin{bmatrix}
  p'  \\ q'    
  \end{bmatrix} \right\|
\right| \\
\leq &\overline{c_{\rm hof}} M \|A^{-1}\| \left \|\begin{bmatrix}
p - p' \\ q - q'  
\end{bmatrix} \right\| .
\numberthis
\end{align*}

Next we prove the PMOL calmness condition holds by contradiction.
Suppose for every $\epsilon > 0$, there exists $(\hat{\theta},p,q) \in [(\bar{\theta},0,0)+\epsilon {\cal B}]/ \{(\bar{\theta},0,0)\}$ satisfying~\eqref{eq:calmness_CA}.

Define
$\Omega_{F_1}\coloneqq \{F(\theta)\in \Sigma(0,0)\mid \theta \in \R^q \} \neq \emptyset $, there exists $\tilde{\theta}\in \Rq$ such that $F(\tilde{\theta}) \in \Omega_{F_1} $ and $\|F(\tilde{\theta})\| < \infty$.
We then consider the following two cases:\\
\emph{Case 1: $F(\hat{\theta}) \in \Sigma(0,0)$.} In this case, $(\hat{\theta}, p, q) = (\hat{\theta}, 0, 0) \neq (\bar{\theta}, 0, 0)$, thus $\hat{\theta}\neq \bar{\theta}$.
Take $\tilde{\theta} = \hat{\theta} \neq \bar{\theta}$.
\emph{Case 2: $F(\hat{\theta}) \notin \Sigma(0,0)$.}
Take $\tilde{\theta}$ such that $F(\tilde{\theta}) \in \Omega_{F_1} $, then $F(\tilde{\theta}) \neq F(\hat{\theta})$.

In both cases,
let $\Omega_R $ be the convex hull of $\{F(\hat{\theta}), F(\tilde{\theta}) \}$, i.e., $\Omega_R = \mathrm{conv}(\{F(\tilde{\theta}), F(\hat{\theta})\})$. Then $\Omega_R$ is a line segment (or reduces to a point in \emph{case 1}), thus a polyhedron.
Since $\Sigma(0,0)$ is a line, $F(\tilde{\theta}) \in \Sigma(0,0)\cap \Omega_R$, 
thus $\Sigma(0,0)\cap \Omega_R = \Omega_R = \{ F(\tilde{\theta})\} $ in \emph{case 1}, and $\Sigma(0,0)\cap \Omega_R = \{ F(\tilde{\theta}) \} $ in \emph{case 2}. 
Therefore, in both cases,
\begin{align}
  \| F(\tilde{\theta}) - F(\hat{\theta}) \|
  = \mathrm{dist}(F(\hat{\theta}), \Sigma(0,0)\cap\Omega_R)  
\end{align}
where $\mathrm{dist}(F, \Omega)\coloneqq \inf_{F'\in \Omega}\|F - F'\|$.

We also have
\begin{align*}
\mathrm{dist}(F(\hat{\theta}), \Sigma(0,0)\cap \Omega_R ) 
\stackrel{(a)}{\leq} &c_{\rm hof}(\Omega_R) \left \|\begin{bmatrix}
(B_g F(\hat{\theta}) + b_g )_+  \\ B_h F(\hat{\theta}) + b_h  
\end{bmatrix} \right\| \\
\stackrel{(b)}{\leq} 
& \overline{c_{\rm hof}} \left \|\begin{bmatrix}
  (-p)_+  \\ -q \end{bmatrix} \right\| 
\leq \overline{c_{\rm hof}} \left \|\begin{bmatrix}
  p \\ q \end{bmatrix} \right\|  
\numberthis
\end{align*}
where $(a)$ follows from Lemma~\ref{lemma:hoffman_EB};
$(b)$ follows from~\eqref{eq:calmness_CA} that $0\leq (B_g F(\hat{\theta}) + b_g )_+\leq (-p)_+$, $B_h F(\hat{\theta}) + b_h = -q$,
and that $c_{\rm hof}(\Omega_R) \leq \overline{c_{\rm hof}}$
for different bounded $\Omega_R$.
Multiplying $\|A\| \mathbf{1}_M$ on both sides of the above inequality yields
\begin{align}
  \|A\|\mathrm{dist}(F(\hat{\theta}), \Sigma(0,0)\cap\Omega_R) \mathbf{1}_M
  \leq A \phi(p,q) .
\end{align}
It can then be derived that
\begin{align*}
  A F(\tilde{\theta}) - A F(\hat{\theta})
  \leq & \|A F(\tilde{\theta}) - A F(\hat{\theta})\| \mathbf{1}_M 
  \leq  \|A\| \| F(\tilde{\theta}) - F(\hat{\theta})\| \mathbf{1}_M \\
  \leq & \|A\| \mathrm{dist}(F(\hat{\theta}), \Sigma(0,0)\cap\Omega_R )  \mathbf{1}_M 
  \leq A \phi(p,q).
  \numberthis 
\end{align*}
By rearranging the above inequality and applying \eqref{eq:calmness_CA_phi_pq}, we have that  
\begin{align}
  A F(\tilde{\theta}) \leq A F(\hat{\theta}) + A \phi(p,q)
  < A F(\bar{\theta}) 
\end{align}
which contradicts to that $\bar{\theta}$ is a global solution to~\ref{eq:pmoo-general-GH}.

Therefore, the PMOL calmness condition in Definition~\ref{def:MOO_calm} is satisfied.
\end{proof}

\section{Proof of Theorem~\ref{thm:finite_time_convergence}: 
convergence of Algorithm~\ref{alg:generic_C_descent}}

Recall that, we let $\lambda = [\lambda_f; \lambda_g; \lambda_h ] \in \R^{M+M_g+M_h}$, 
$A_{ag} = [A; B_g; B_h] \in \R^{(M+M_g+M_h)\times M}$,
and use the following concise notation
\begin{align*}
  &d^*(\theta)  = - \nabla F(\theta) A_{ag}^\top \lambda^*(\theta) \\
&\mathrm{s.t.}~{\lambda}^*(\theta)
\in \mathop{\arg\min}_{{\lambda}
\in \Omega_{\lambda}(\theta)}
\varphi(\lambda; \theta) \coloneqq
\frac{1}{2}\|\nabla F(\theta) A_{ag}^\top \lambda \|^2 
- c_g \lambda_g^\top G(\theta)
- c_h \lambda_h^\top H(\theta)
\numberthis\label{eq:subp_lam_simple}
\end{align*}
where $\Omega_{\lambda}(\theta) = \Omega_{\lambda_f}(\theta) \times \R^{M_g}_+ \times \R^{M_h} $, and $\Omega_{\lambda_f}(\theta) = \{\lambda_f \in \R_+^M\mid {\lambda_f}^\top A F(\theta)  = \mathbf{1}^\top A F(\theta) \}$.

In the following discussion in this section,
we first present the supporting lemmas and their proofs, then provide the proof of Theorem~\ref{thm:finite_time_convergence}.

\subsection{Auxiliary lemmas}

Lemma~\ref{lemma:smooth_property_A} is a result from the smoothness of $F(\theta)$, and thus the smoothness of $G(\theta)$ and $H(\theta)$, whose smoothness constants depend on $B_g$ and $B_h$, respectively.
\begin{lemma}
\label{lemma:smooth_property_A}
Suppose Assumptions~\ref{asmp:general},~\ref{assmp:smooth_obj} hold.
Then for all $\theta, \theta'\in \Rq$, and all $\lambda_f\in \R^M$, we have
\begin{align}
\lambda_f^\top A F(\theta_{t+1} ) - \lambda_f^\top A F (\theta_t)
\leq &
\alpha_t \lambda_f^\top A \nabla F(\theta_t)^\top d_t + \frac{\ell_{f,1} \|A^\top\lambda_f\|_{1}}{2} \alpha_t^2  \|d_t\|^{2} \\
G(\theta_{t+1}) - G(\theta_t)
\leq & \alpha_t \nabla G(\theta_t)^\top d_t +\frac{\ell_{f,1}}{2} \alpha_t^2 \|B_g^\top\|_{\infty,1} \|d_t\|^2 \mathbf{1} 
\label{eq:G_smooth}\\
H(\theta_{t+1}) - H(\theta_t)
\leq & \alpha_t \nabla H(\theta_t)^\top d_t +\frac{\ell_{f,1}}{2} \alpha_t^2 \|B_h^\top\|_{\infty,1} \|d_t\|^2 \mathbf{1} .
\label{eq:H_smooth}
\end{align}
\end{lemma}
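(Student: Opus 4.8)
The plan is to reduce all three displays to a single per-objective second-order Taylor residual and then propagate it through the three linear maps $\lambda_f^\top A$, $B_g$, and $B_h$. The only genuine subtlety is a sign issue in the first inequality, which dictates using a two-sided (absolute-value) estimate rather than the one-sided descent inequality.

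First I would establish the scalar building block. Since each $\nabla f_m$ is $\ell_{f,1}$-Lipschitz by Assumption~\ref{assmp:smooth_obj} and $\theta_{t+1} = \theta_t + \alpha_t d_t$, the standard descent lemma applied to $f_m$ yields both an upper and a lower quadratic estimate, which combine into
\begin{equation*}
\bigl| f_m(\theta_{t+1}) - f_m(\theta_t) - \alpha_t \nabla f_m(\theta_t)^\top d_t \bigr| \leq \tfrac{\ell_{f,1}}{2}\alpha_t^2 \|d_t\|^2, \qquad \forall\, m\in[M].
\end{equation*}
Stacking over $m$, the residual vector $\delta \coloneqq F(\theta_{t+1}) - F(\theta_t) - \alpha_t \nabla F(\theta_t)^\top d_t$ satisfies $|\delta_m| \leq \tfrac{\ell_{f,1}}{2}\alpha_t^2\|d_t\|^2$ for every $m$. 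This one estimate feeds all three claims.

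For the first inequality I would write $\lambda_f^\top A F = (A^\top\lambda_f)^\top F$ and expand the left side minus the first-order term as $(A^\top\lambda_f)^\top \delta = \sum_m (A^\top\lambda_f)_m\,\delta_m$. The triangle inequality then gives $\sum_m |(A^\top\lambda_f)_m|\,|\delta_m| \leq \|A^\top\lambda_f\|_1 \cdot \tfrac{\ell_{f,1}}{2}\alpha_t^2\|d_t\|^2$, which is exactly the claimed bound. The main obstacle — indeed the only real point of care in the whole lemma — is that for arbitrary $\lambda_f\in\R^M$ the vector $A^\top\lambda_f$ may have negative entries, so one cannot simply left-multiply the componentwise descent inequality; the two-sided estimate from the first step is what is needed, and it is precisely what produces the $\ell_1$-norm weight $\|A^\top\lambda_f\|_1$.

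For the $G$ and $H$ inequalities I would invoke the affine structure~\eqref{eq:GH_linear}, so that $G(\theta_{t+1}) - G(\theta_t) = B_g\bigl(F(\theta_{t+1})-F(\theta_t)\bigr)$ and $\nabla G(\theta_t)^\top d_t = B_g\nabla F(\theta_t)^\top d_t$, with the analogous identities for $H$ via $B_h$. Hence the residual for $G$ is exactly $B_g\delta$, whose $i$-th entry obeys $(B_g\delta)_i = \sum_m (B_g)_{im}\delta_m \leq \bigl(\sum_m |(B_g)_{im}|\bigr)\tfrac{\ell_{f,1}}{2}\alpha_t^2\|d_t\|^2$. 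Taking a uniform bound over rows identifies the maximal row-$\ell_1$-norm of $B_g$ with the constant written $\|B_g^\top\|_{\infty,1}$ in the statement, which yields the displayed vector inequality with the all-ones vector $\mathbf{1}$; the argument for $H$ is word-for-word identical with $B_h$ in place of $B_g$. No ingredient beyond Assumption~\ref{assmp:smooth_obj} and the affine dependence of $G,H$ on $F$ is required, so all three displays follow from the single Taylor residual bound.
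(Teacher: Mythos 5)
Your proof is correct and follows essentially the same route as the paper's: both reduce the three displays to the quadratic Taylor bound implied by Assumption~\ref{assmp:smooth_obj}, with the row-wise $\ell_1$-norms emerging as the effective smoothness constants. The only cosmetic difference is that the paper applies the one-sided descent lemma directly to each composite scalar function (e.g., $\lambda_f^\top A F$, which is $\ell_{f,1}\|A^\top\lambda_f\|_1$-smooth regardless of the signs of the coefficients), whereas you propagate a two-sided per-component residual bound through the linear maps; the two bookkeeping choices handle the sign issue equivalently and yield identical constants.
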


\begin{proof}
\label{proof:smooth_property_A}
By Assumption~\ref{assmp:smooth_obj},
it holds that
$\lambda_f^\top A F(\theta)$ is $\|A^\top \lambda_f\|_1 \ell_{f,1}$-smooth.
By the definition of smoothness, we have 
\begin{align}
\lambda_f^\top A F(\theta_{t+1} )  \leq 
\lambda_f^\top A F (\theta_t) + \alpha_t
\lambda_f^\top A \nabla F(\theta_t)^\top d_t + \frac{\ell_{f,1} \|A^\top\lambda_f\|_{1}}{2} \alpha_t^2  \|d_t\|^{2} .
\end{align}

Let $B_{g,m}$ and $B_{h,m}$ be the $m$-th row of  $B_g$ and $B_h$, respectively, then 
by the $\ell_{f,1}$-smoothness of $F(\theta)$, 
$B_{g,m} F(\theta)$ is $\ell_{f,1}\|B_{g,m}\|_1$-smooth for all $m \in [M_g]$.
Also because $\|B_{g,m}\|_1 \leq \|B_g^\top\|_{\infty,1}$ where $\|B_g^\top\|_{\infty,1} = \max_{m\in M_g}\|\|B_{g,m}\|_1\|$, $g_m(\theta)$ is $\ell_{f,1}\|B_g^\top\|_{\infty,1}$-smooth for all $m \in [M_g]$.
By the definition of smoothness, it holds that
\begin{align}
G(\theta_{t+1}) - G(\theta_t)
\leq & \alpha_t \nabla G(\theta_t)^\top d_t +\frac{\ell_{f,1}}{2} \alpha_t^2 \|B_g^\top\|_{\infty,1} \|d_t\|^2 \mathbf{1} .
\end{align}

Following similar arguments as the above for $G(\theta)$, \eqref{eq:H_smooth} can be proved.
\end{proof}

\begin{lemma}\label{lemma:convex_subp_lam}
For the subprogram~\eqref{eq:subp_lam} or equivalently~\eqref{eq:subp_lam_simple}, it holds that
for any $\lambda \in \Omega_{\lambda}(\theta)$, 
\begin{align}
  \langle \nabla F(\theta)A_{ag}^\top \lambda, 
  \nabla F(\theta)A_{ag}^\top \lambda^*(\theta) \rangle
  - [0^\top , c_g G(\theta)^\top, c_h H(\theta)^\top ] (\lambda - \lambda^*(\theta))
  \geq \|\nabla F(\theta)A_{ag}^\top \lambda^*(\theta)\|^2 .
\end{align}
\end{lemma}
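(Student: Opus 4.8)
The plan is to recognize the asserted inequality as precisely the first-order variational optimality condition for the convex program~\eqref{eq:subp_lam_simple}, expanded and rearranged. First I would record two structural facts. The feasible set $\Omega_{\lambda}(\theta) = \Omega_{\lambda_f}(\theta)\times \R_+^{M_g}\times \R^{M_h}$ is convex, since $\Omega_{\lambda_f}(\theta)$ is the intersection of the nonnegative orthant $\R_+^M$ with the hyperplane $\{\lambda_f \mid \lambda_f^\top A F(\theta) = \mathbf{1}^\top A F(\theta)\}$. Moreover, $\varphi(\cdot;\theta)$ is convex in $\lambda$: the quadratic term $\tfrac{1}{2}\|\nabla F(\theta)A_{ag}^\top\lambda\|^2$ has positive semidefinite Hessian $A_{ag}\nabla F(\theta)^\top\nabla F(\theta)A_{ag}^\top$, and the remaining terms $-c_g\lambda_g^\top G(\theta) - c_h\lambda_h^\top H(\theta)$ are linear in $\lambda$.

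Next I would compute the gradient. Writing $\lambda = [\lambda_f;\lambda_g;\lambda_h]$, differentiation gives
\[
\nabla_\lambda \varphi(\lambda;\theta) = A_{ag}\nabla F(\theta)^\top\nabla F(\theta)A_{ag}^\top\lambda - \big[0^\top,\, c_g G(\theta)^\top,\, c_h H(\theta)^\top\big]^\top,
\]
where the second term collects the partials of the linear part (zero in the $\lambda_f$-block, $-c_g G(\theta)$ in the $\lambda_g$-block, and $-c_h H(\theta)$ in the $\lambda_h$-block). Since $\lambda^*(\theta)$ minimizes the convex function $\varphi(\cdot;\theta)$ over the convex set $\Omega_{\lambda}(\theta)$, the standard first-order optimality condition reads $\langle \nabla_\lambda\varphi(\lambda^*(\theta);\theta),\, \lambda - \lambda^*(\theta)\rangle \geq 0$ for every $\lambda\in\Omega_{\lambda}(\theta)$.

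Finally I would substitute the gradient and simplify. Writing $W = \nabla F(\theta)A_{ag}^\top$, the quadratic part of the inner product satisfies the identity $\langle W^\top W\lambda^*(\theta),\, \lambda - \lambda^*(\theta)\rangle = \langle W\lambda,\, W\lambda^*(\theta)\rangle - \|W\lambda^*(\theta)\|^2$, while the linear part contributes $-[0^\top, c_g G(\theta)^\top, c_h H(\theta)^\top](\lambda - \lambda^*(\theta))$. Plugging these into the variational inequality and moving $\|W\lambda^*(\theta)\|^2$ to the right-hand side yields exactly the claimed bound. I do not expect a real obstacle here; the only points needing care are verifying that $\Omega_{\lambda_f}(\theta)$ is convex and that the linear-term gradient has the correct block structure, after which the statement follows as a one-line rearrangement of the optimality condition.
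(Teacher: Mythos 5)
Your proposal is correct and matches the paper's own proof: both invoke convexity of $\varphi(\cdot;\theta)$ and the first-order variational optimality condition $\langle \nabla_\lambda\varphi(\lambda^*(\theta);\theta),\lambda-\lambda^*(\theta)\rangle\geq 0$ over the convex set $\Omega_\lambda(\theta)$, then expand the gradient and rearrange. Your version simply spells out the convexity of the feasible set and the block structure of the gradient, which the paper leaves implicit.
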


\begin{proof}[Proof of Lemma~\ref{lemma:convex_subp_lam}]
Since $\varphi(\lambda; \theta)$ is a convex function w.r.t. $\lambda$, by the first order optimality condition, it holds that
for all $\lambda\in \Omega_\lambda(\theta)$
\begin{align}
\langle \nabla_\lambda \varphi(\lambda^*(\theta); \theta), 
\lambda - \lambda^*(\theta) \rangle \geq 0 
\end{align}
which can be further written as
\begin{align}
  \lambda^\top A_{ag} \nabla F(\theta)^\top \nabla F(\theta)A_{ag}^\top \lambda^*(\theta) 
  -[0^\top , c_g G(\theta)^\top, c_h H(\theta)^\top ] (\lambda - \lambda^*(\theta))
  \geq \|\nabla F(\theta)A_{ag}^\top \lambda^*(\theta)\|^2 .
\end{align}
This completes the proof.
\end{proof}

We next prove Lemma~\ref{lemma:merit_descent_GH}, which can be viewed as a descent lemma for $[G(\theta)]_+$ and $|H(\theta)|_{\rm ab}$ based on the smoothness of $G(\theta)$ and $H(\theta)$, as well as proper hyperparameter choices.
This is crucial for proving the convergence result in Theorem~\ref{thm:finite_time_convergence}.
One key technical challenge in proving the lemma is that even though $G(\theta)$ and $H(\theta)$ are smooth, $[G(\theta)]_+$ and $|H(\theta)|_{\rm ab}$ are not.
We address this challenge by exploiting the fact that 
$\nabla G(\theta_t)^\top d^*(\theta_t) \leq - c_g G(\theta_t)$ and $\nabla H(\theta_t)^\top d^*(\theta_t) = - c_g H(\theta_t)$, as well as  choosing $\alpha_t$ properly depending on $c_g$ and $c_h$.

\begin{lemma}\label{lemma:merit_descent_GH}
Let $\epsilon \geq 0$ be a constant.
Define $[y]_+ \coloneqq \max\{y,0\}$ which replaces each negative component of $y$ by zero, and $|y|_{\rm ab}$ replaces each component of $y$ by its  absolute value. 
Let $\{\theta_t\}$ be the sequence produced by Algorithm~\ref{alg:generic_C_descent} with the update $\theta_{t+1} = \theta_t + \alpha_t d_t$, where $d_t$ satisfies the  constraints of the  subprogram~\eqref{eq:subp_original} up to an error of $\epsilon$, i.e., 
\begin{align}
& [\nabla G(\theta_t)^\top d_t + c_g G(\theta_t)]_+ \leq \epsilon \mathbf{1} ,\\
& |\nabla H(\theta_t)^\top d_t + c_h H(\theta_t)|_{\rm ab} \leq \epsilon \mathbf{1}.
\end{align}
If $\alpha_t \leq \min \{c_g^{-1}, c_h^{-1}\}$, then it holds that
\begin{align}
[G(\theta_{t+1})]_+ - [G(\theta_t)]_+
\leq & -\alpha_t c_g [G(\theta_t)]_+ +\frac{\ell_{f,1}}{2} \alpha_t^2 \|B_g^\top\|_{\infty,1} \|d_t\|^2 \mathbf{1} + \epsilon \mathbf{1}
\label{eq:Grelu_lya_ineq}\\
|H(\theta_{t+1})|_{\rm ab} - |H(\theta_t)|_{\rm ab}
\leq & -\alpha_t c_h |H(\theta_t)|_{\rm ab} +\frac{\ell_{f,1}}{2} \alpha_t^2 \|B_h^\top\|_{\infty,1} \|d_t\|^2 \mathbf{1} + \epsilon \mathbf{1}.
\label{eq:Habs_lya_eq}
\end{align}
\end{lemma}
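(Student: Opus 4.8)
The plan is to argue entrywise and combine the one-step smoothness descent of Lemma~\ref{lemma:smooth_property_A} with the approximate-feasibility bounds on $d_t$, then transfer the resulting affine-in-$G_m(\theta_t)$ (resp. $H_m(\theta_t)$) estimate through the nonsmooth maps $[\cdot]_+$ and $|\cdot|_{\rm ab}$ using only their monotonicity, subadditivity, and positive homogeneity. The sign condition $1-\alpha_t c_g\ge 0$ (resp. $1-\alpha_t c_h\ge 0$), which is exactly what $\alpha_t\le\min\{c_g^{-1},c_h^{-1}\}$ provides, is what makes these transfers legitimate. Throughout I write $L_g\coloneqq \frac{\ell_{f,1}}{2}\alpha_t^2\|B_g^\top\|_{\infty,1}\|d_t\|^2\ge 0$ and $L_h\coloneqq \frac{\ell_{f,1}}{2}\alpha_t^2\|B_h^\top\|_{\infty,1}\|d_t\|^2\ge 0$ for the quadratic remainders.

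For the inequality-constraint bound \eqref{eq:Grelu_lya_ineq} I would fix a coordinate $m$. The approximate feasibility $[\nabla G(\theta_t)^\top d_t+c_g G(\theta_t)]_+\le \epsilon\mathbf{1}$ yields $(\nabla G(\theta_t)^\top d_t)_m\le \epsilon-c_g (G(\theta_t))_m$. Plugging this into the componentwise smoothness estimate \eqref{eq:G_smooth} gives $(G(\theta_{t+1}))_m\le (1-\alpha_t c_g)(G(\theta_t))_m+\alpha_t\epsilon+L_g$. Now applying $[\cdot]_+$: since ReLU is nondecreasing and subadditive, and since the additive term $\alpha_t\epsilon+L_g$ is nonnegative, $[(G(\theta_{t+1}))_m]_+\le [(1-\alpha_t c_g)(G(\theta_t))_m]_+ +\alpha_t\epsilon+L_g$; and because $1-\alpha_t c_g\ge 0$, positive homogeneity gives $[(1-\alpha_t c_g)(G(\theta_t))_m]_+=(1-\alpha_t c_g)[(G(\theta_t))_m]_+$. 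Collecting over all $m$ (and using $\alpha_t\le 1$ so that $\alpha_t\epsilon\le\epsilon$) is precisely \eqref{eq:Grelu_lya_ineq}.

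For the equality-constraint bound \eqref{eq:Habs_lya_eq} I first need both directions of the descent inequality; the full smoothness of $H$ (not merely the one-sided form recorded in \eqref{eq:H_smooth}) gives $\big|(H(\theta_{t+1}))_m-(H(\theta_t))_m-\alpha_t(\nabla H(\theta_t)^\top d_t)_m\big|\le L_h$. The approximate equality feasibility $|\nabla H(\theta_t)^\top d_t+c_h H(\theta_t)|_{\rm ab}\le\epsilon\mathbf{1}$ confines $(\nabla H(\theta_t)^\top d_t)_m$ to within $\epsilon$ of $-c_h(H(\theta_t))_m$ on both sides. Combining the two sandwiches yields $\big|(H(\theta_{t+1}))_m-(1-\alpha_t c_h)(H(\theta_t))_m\big|\le \alpha_t\epsilon+L_h$, and then the triangle inequality together with $|1-\alpha_t c_h|=1-\alpha_t c_h$ (again from $\alpha_t\le c_h^{-1}$) gives $|(H(\theta_{t+1}))_m|_{\rm ab}\le (1-\alpha_t c_h)|(H(\theta_t))_m|_{\rm ab}+\alpha_t\epsilon+L_h$, which is \eqref{eq:Habs_lya_eq}.

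The main obstacle will be exactly the one flagged before the statement: $[\cdot]_+$ and $|\cdot|_{\rm ab}$ are not differentiable, so one cannot apply a descent lemma to them directly. The resolution is to never differentiate these maps, but instead to derive an affine recursion for the \emph{smooth} constraints $G,H$ first and only then to propagate it through $[\cdot]_+$ (via monotonicity, subadditivity, and homogeneity) and through $|\cdot|_{\rm ab}$ (via the triangle inequality), where the nonnegativity of the contraction factors $1-\alpha_t c_g$ and $1-\alpha_t c_h$ supplied by the step-size rule is the crucial enabling fact. The only bookkeeping subtlety is absorbing $\alpha_t\epsilon$ into $\epsilon$, which holds once $\alpha_t\le 1$ (subsumed in the step-size conditions) and is in any case harmless since the downstream Theorem~\ref{thm:finite_time_convergence} only tracks the $\mathcal{O}(\epsilon)$ order.
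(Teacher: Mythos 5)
Your proof is correct and follows essentially the same route as the paper's: first derive the affine one-step recursion for the smooth $G$ and $H$ from Lemma~\ref{lemma:smooth_property_A} together with the $\epsilon$-approximate feasibility of $d_t$, and only then transfer it through $[\cdot]_+$ and $|\cdot|_{\rm ab}$ using the nonnegativity of $1-\alpha_t c_g$ and $1-\alpha_t c_h$ supplied by the step-size rule. The only cosmetic difference is that for \eqref{eq:Grelu_lya_ineq} the paper runs an explicit sign case analysis on $g_m(\theta_t)$ and $g_m(\theta_{t+1})$ where you invoke monotonicity, subadditivity, and positive homogeneity of the ReLU (slightly cleaner), and your bookkeeping of $\alpha_t\epsilon$ versus $\epsilon$ is no looser than the paper's own treatment.
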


\begin{proof}
By the smoothness of $G(\theta)$ in Lemma~\ref{lemma:smooth_property_A} and $\nabla G(\theta)^\top d + c_g G(\theta)\leq [\nabla G(\theta)^\top d + c_g G(\theta)]_+ \leq \epsilon \mathbf{1}$, it holds that
\begin{align}
G(\theta_{t+1}) - G(\theta_t)  \leq 
& \alpha_t \nabla G(\theta_t)^\top d_t + \frac{\ell_{f,1}}{2} \alpha_t^2 \|B_g^\top\|_{\infty,1} \|d_t\|^2 \mathbf{1} + \epsilon \mathbf{1} \nonumber \\
\leq & - \alpha_t c_g G(\theta_t) + \frac{\ell_{f,1}}{2} \alpha_t^2 \|B_g^\top\|_{\infty,1} \|d_t\|^2 \mathbf{1} + \epsilon \mathbf{1} .
\end{align}
For all $m\in [M_g]$, since $G(\theta_t) \leq [G(\theta_t)]_+$, it holds that
\begin{align}
  g_m(\theta_{t+1}) - [g_m(\theta_t)]_+
  \leq & g_m(\theta_t) - [g_m(\theta_t)]_+ - \alpha_t c_g g_m(\theta_t) + \frac{\ell_{f,1}}{2} \alpha_t^2 \|B_g^\top\|_{\infty,1} \|d_t\|^2 +\epsilon \\
  \leq & - [-g_m(\theta_t)]_+ - \alpha_t c_g g_m(\theta_t) + \frac{\ell_{f,1}}{2} \alpha_t^2 \|B_g^\top\|_{\infty,1} \|d_t\|^2 +\epsilon .
  \label{eq:G_Grelu}
\end{align}
It can be further derived that
\begin{align}
  - [-g_m(\theta_t)]_+ - \alpha_t c_g g_m(\theta_t)
  = & \begin{cases}
    - \alpha_t c_g g_m(\theta_t) , g_m(\theta_t) \geq 0 \\
    (1- \alpha_t c_g ) g_m(\theta_t) , g_m(\theta_t) < 0
  \end{cases} \nonumber \\
  \leq &
    - \alpha_t c_g [g_m(\theta_t)]_+
\end{align}
where the last inequality holds since $1- \alpha_t c_g \geq 0$.
Plugging this inequality back into~\eqref{eq:G_Grelu}, yields that when $g_m(\theta_{t+1}) \geq 0$,
\begin{align}
  [g_m(\theta_{t+1})]_+ - [g_m(\theta_t)]_+ \leq 
  - \alpha_t c_g [g_m(\theta_t)]_+ + \frac{\ell_{f,1}}{2} \alpha_t^2 \|B_g^\top\|_{\infty,1} \|d_t\|^2 +\epsilon.
  \label{eq:gmrelu_positive}
\end{align}
When $g_m(\theta_{t+1}) < 0$, we have
\begin{align}
  [g_m(\theta_{t+1})]_+ - [g_m(\theta_t)]_+ \leq
  &- [g_m(\theta_t)]_+ \leq 
  - \alpha_t c_g [g_m(\theta_t)]_+ \nonumber \\
  \leq & 
  - \alpha_t c_g [g_m(\theta_t)]_+ + \frac{\ell_{f,1}}{2} \alpha_t^2 \|B_g^\top\|_{\infty,1} \|d_t\|^2 +\epsilon.
  \label{eq:gmrelu_negative}
\end{align}
Combining~\eqref{eq:gmrelu_positive} and~\eqref{eq:gmrelu_negative} proves~\eqref{eq:Grelu_lya_ineq}.

By the smoothness of $H(\theta)$ and $|\nabla H(\theta)^\top d + c_h H(\theta)|_{\rm ab} \leq \epsilon \mathbf{1}$, we have
\begin{align*}
|H(\theta_{t+1}) |_{\rm ab} \leq &
| H(\theta_t) - \alpha_t c_h H(\theta_t)  |_{\rm ab}
+\frac{\ell_{f,1}}{2} \alpha_t^2 \|B_h^\top\|_{\infty,1} \|d_t\|^2 \mathbf{1} +\epsilon \mathbf{1} \\
=& (1 - \alpha_t c_h ) | H(\theta_t) |_{\rm ab}
+\frac{\ell_{f,1}}{2} \alpha_t^2 \|B_h^\top\|_{\infty,1} \|d_t\|^2 \mathbf{1} +\epsilon \mathbf{1} 
\numberthis\label{eq:H_absolute_deriv}
\end{align*}
where the last equality holds because $1- \alpha_t c_h \geq 0$, which proves~\eqref{eq:Habs_lya_eq}.
\end{proof}

\subsection{Proof of Theorem~\ref{thm:finite_time_convergence}}

In this section, we prove Theorem~\ref{thm:finite_time_convergence}.
Similar to the proof techniques used in~\citep{chen2023three}, we use $\lambda_f^\top A F(\theta_t)$ with a fixed $\lambda_f \in \Omega_{\lambda_f}(\theta)$ as a part of the Lyapunov function, instead of using the dynamically changing $\lambda_{f,t}$. This eliminates the need to assume the objective values are bounded above in our theorem. 

\begin{proof}[Proof of Theorem~\ref{thm:finite_time_convergence}]
\label{proof:finite_time_convergence}

To consider both objective function minimization and constraint satisfaction, we define a Lyapunov function below with a constant vector $\lambda = (\lambda_f,\lambda_g,\lambda_h) \in \Omega_\lambda(\theta) $, where $\lambda_f = \mathbf{1}$, $\lambda_g \in \R_{+}^{M_g}$, $\lambda_h \in \R^{M_h}$, and $\lambda_g > \lambda^*_g(\theta_t)$, $\lambda_h > \lambda^*_h(\theta_t)$ for all $t \in [T]$.
\begin{align}
  \mathbb{V}_t \coloneqq \underbrace{\lambda_f^\top A F(\theta_t)}_{\mathbb{V}_{f,t}}
  +  \underbrace{\lambda_g^\top [G(\theta_t)]_+}_{\mathbb{V}_{g,t}} 
  +  \underbrace{\lambda_h^\top |H(\theta_t)|_{\rm ab}}_{\mathbb{V}_{h,t}}.
\end{align}
Note that $\mathbb{V}_t\geq 0$ for all $t$ since $AF(\theta)\geq 0, \lambda_f\geq 0$.

For notation simplicity, we let $d_t^* = d^*(\theta_t)$.
From Assumption~\ref{assmp:smooth_obj}, the smoothness of the objectives,
and Lemma~\ref{lemma:smooth_property_A}, based on the update $\theta_{t+1} = \theta_t + \alpha_t d_t$, it holds that 
\begin{align*}
& \mathbb{V}_{f,t+1} - \mathbb{V}_{f,t} 
\stackrel{(a)}{\leq} 
\alpha_t \lambda_f^\top A \nabla F(\theta_t)^\top d_t 
+ \frac{\ell_{f,1}}{2} \alpha_t^2 \|A^\top\|_{\infty,1} \|d_t\|^{2} \lambda_f^\top \mathbf{1} \\
\stackrel{(b)}{\leq} & 
\alpha_t \lambda_f^\top A \nabla F(\theta_t)^\top d_t^* 
+ \frac{\ell_{f,1}}{2} \alpha_t^2 \|A^\top\|_{\infty,1} \|d_t^*\|^{2} \lambda_f^\top \mathbf{1} + \epsilon \mathbf{1} \\
\stackrel{(c)}{\leq} &  
- \alpha_t \|d_t^*\|^2 
+ \alpha_t \big(c_g \lambda_g^*(\theta_t)^\top G(\theta_t) 
+ c_h \lambda_h^*(\theta_t)^\top H(\theta_t) \big)
+ \frac{\ell_{f,1}}{2} \alpha_t^2 \|A^\top \mathbf{1}\|_{1} \|d_t^*\|^{2} + \epsilon \mathbf{1}
\numberthis
\end{align*}
where $(a)$ follows Lemma~\ref{lemma:smooth_property_A};
$(b)$ follows from that $d_t$ is an $\epsilon$-optimal solution to the subprogram;
$(c)$ follows from Lemma~\ref{lemma:convex_subp_lam} with $\lambda = [\lambda_f; 0; 0] \in \Omega_\lambda(\theta)$ therein.

From Lemma~\ref{lemma:merit_descent_GH}, for $\alpha_t \leq \min\{c_g^{-1}, c_h^{-1}\}$, it holds that
\begin{align}
  \mathbb{V}_{g,t+1} - \mathbb{V}_{g,t}
  \leq & -\alpha_t c_g \lambda_g^\top [G(\theta_t)]_+ +\frac{\ell_{f,1}}{2} \alpha_t^2 \|B_g^\top\|_{\infty,1} \|d_t\|^2 \lambda_g^\top \mathbf{1} + \epsilon \lambda_g^\top \mathbf{1} \\
  \mathbb{V}_{h,t+1} - \mathbb{V}_{h,t}
  \leq & -\alpha_t c_h \lambda_h^\top |H(\theta_t)|_{\rm ab} +\frac{\ell_{f,1}}{2} \alpha_t^2 \|B_h^\top\|_{\infty,1} \|d_t\|^2 \lambda_h^\top \mathbf{1} + \epsilon \lambda_h^\top \mathbf{1} .
\end{align}

Combining the above inequalities for $\mathbb{V}_{f,t},\mathbb{V}_{g,t},\mathbb{V}_{h,t}$, we have
\begin{align*}
  \mathbb{V}_{t+1} - \mathbb{V}_{t} \leq &
- \alpha_t \|d_t^*\|^2 
+ \alpha_t \big(c_g \lambda_g^*(\theta_t)^\top G(\theta_t) 
+ c_h \lambda_h^*(\theta_t)^\top H(\theta_t) \big) \\
&+ \frac{\ell_{f,1}}{2} \alpha_t^2 \|A_{ag}^\top \lambda\|_{1} \|d_t^*\|^{2}
-\alpha_t c_g \lambda_g^\top [G(\theta_t)]_+
-\alpha_t c_h \lambda_h^\top |H(\theta_t)|_{\rm ab} 
+ \epsilon \lambda^\top \mathbf{1} \\
\leq & - \alpha_t \|d_t^*\|^2
- \alpha_t c_g  (\lambda_g - \lambda_g^*(\theta_t) )^\top [G(\theta_t)]_+
- \alpha_t c_g {\lambda_g^*(\theta_t)}^\top [- G(\theta_t)]_+ \\
&- \alpha_t c_h  (\lambda_h - \lambda_h^*(\theta_t) )^\top |H(\theta_t)|_{\rm ab} 
+ \frac{\ell_{f,1}}{2} \alpha_t^2 \|A_{ag}^\top \lambda\|_{1} \|d_t^*\|^{2}
+ \epsilon \lambda^\top \mathbf{1}
\numberthis
\end{align*}
where the last inequality holds because $\lambda^*_g(\theta_t)^\top G(\theta_t)
= \lambda^*_g(\theta_t)^\top [G(\theta_t)]_+ - \lambda^*_g(\theta_t)^\top [- G(\theta_t)]_+$.

Taking telescoping sum of the above inequality from $t = 0, \ldots, T-1$ and rearranging, we have
\begin{align*}
& \sum_{t=0}^{T-1}  \alpha_t\Big( 1 - \frac{1}{2} \|A_{ag}^\top \lambda \|_{1} \ell_{f,1} \alpha_t \Big) \|d_t^*\|^2 
+ \alpha_t c_g  (\lambda_g - \lambda_g^*(\theta_t) )^\top [G(\theta_t)]_+ 
+ \alpha_t c_g {\lambda_g^*(\theta_t)}^\top [- G(\theta_t)]_+ \\
&+ \alpha_t c_h  (\lambda_h - \lambda_h^*(\theta_t) )^\top |H(\theta_t)|_{\rm ab} 
\leq  \mathbb{V}_0
- \mathbb{V}_T  + T\epsilon \lambda^\top \mathbf{1}
\leq \mathbb{V}_0 + T\epsilon \|\lambda\|_1. \numberthis
\end{align*}

Recall that $\alpha_t \leq 1/ ( \ell_{f,1} \|A_{ag}^\top \lambda \|_{1}) $. Plugging this into the above inequality yields
\begin{align}
&\sum_{t=0}^{T-1}  \frac{1}{2}\alpha_t \|d_t^*\|^2
+ \alpha_t c_g  (\lambda_g - \lambda_g^*(\theta_t) )^\top [G(\theta_t)]_+
+ \alpha_t c_g {\lambda_g^*(\theta_t)}^\top [- G(\theta_t)]_+ \nonumber \\
&\qquad + \alpha_t c_h  (\lambda_h - \lambda_h^*(\theta_t) )^\top |H(\theta_t)|_{\rm ab} 
\leq  \mathbb{V}_0 + T \epsilon \|\lambda\|_1 .
\end{align}
Taking $\alpha_t = \Theta(1)$, then 
\begin{align}
&\frac{1}{T}\sum_{t=0}^{T-1} \frac{1}{2}\|d^*(\theta_t)\|^2 
+ c_g  (\lambda_g - \lambda_g^*(\theta_t) )^\top [G(\theta_t)]_+
+ c_g {\lambda_g^*(\theta_t)}^\top [- G(\theta_t)]_+ \nonumber \\
&\qquad + c_h  (\lambda_h - \lambda_h^*(\theta_t) )^\top |H(\theta_t)|_{\rm ab} 
= \mathcal{O} \Big(\frac{1}{T} + \epsilon \Big) .
\end{align}
The proof is complete.
\end{proof}

Next we show that the subprogram converges with a projected gradient descent (PGD) algorithm on $\lambda$ with $K$ iterations.

\begin{lemma}[Convergence of the subprogram with projected gradient descent]
\label{lemma:rate_subp_K}
At the $t$-th iteration, given $\theta_t$,
let $\{ \lambda_{t,k} \}_k$ be the sequence generated by the projected gradient descent algorithm to solve the subprogram  $\min_{\lambda\in \Omega_\lambda(\theta_t)} \varphi(\lambda; \theta_t)$, then 
\begin{align}
  \varphi(\lambda_{t,K}; \theta_t) - \min_{\lambda\in \Omega_\lambda(\theta_t)} \varphi(\lambda; \theta_t)  \leq \frac{\|\lambda_{t,0} - \lambda^*(\theta_t)\|^2}{2 \gamma K } .
\end{align}
\end{lemma}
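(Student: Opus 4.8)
The plan is to recognize this as the classical $\mathcal{O}(1/K)$ convergence rate for projected gradient descent applied to a convex, smooth objective, and to carry out the standard three-point (progress-bound) argument. I would fix the outer iterate $\theta_t$ and abbreviate $\varphi(\cdot) = \varphi(\cdot;\theta_t)$, $\Omega = \Omega_\lambda(\theta_t)$, $\lambda^* = \lambda^*(\theta_t)$, and $\lambda_k = \lambda_{t,k}$. First I would record the two structural properties of $\varphi$ on which the argument rests. Convexity: from the reformulation~\eqref{eq:subp_lam_simple}, $\varphi$ is the sum of the convex quadratic $\tfrac{1}{2}\|\nabla F(\theta_t) A_{ag}^\top\lambda\|^2$ and the linear terms $-c_g\lambda_g^\top G(\theta_t) - c_h\lambda_h^\top H(\theta_t)$, hence convex in $\lambda$. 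Smoothness: the Hessian of $\varphi$ equals the constant matrix $A_{ag}\nabla F(\theta_t)^\top\nabla F(\theta_t)A_{ag}^\top$, so $\varphi$ is $L$-smooth with $L = \|A_{ag}\nabla F(\theta_t)^\top\nabla F(\theta_t)A_{ag}^\top\|$, and the step size $\gamma$ is taken to satisfy $\gamma \leq 1/L$ (the implicit requirement on $\gamma$).

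The core step is the per-iteration progress bound. Using $L$-smoothness together with $\gamma\leq 1/L$ gives the descent inequality $\varphi(\lambda_{k+1}) \leq \varphi(\lambda_k) + \langle\nabla\varphi(\lambda_k),\lambda_{k+1}-\lambda_k\rangle + \tfrac{1}{2\gamma}\|\lambda_{k+1}-\lambda_k\|^2$. The defining variational inequality of the Euclidean projection $\lambda_{k+1} = \Pi_{\Omega}(\lambda_k - \gamma\nabla\varphi(\lambda_k))$ yields, for every $\lambda\in\Omega$, the bound $\langle\nabla\varphi(\lambda_k),\lambda_{k+1}-\lambda\rangle \leq \tfrac{1}{\gamma}\langle\lambda_k-\lambda_{k+1},\lambda_{k+1}-\lambda\rangle$. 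Combining these two with convexity $\varphi(\lambda_k) \leq \varphi(\lambda) + \langle\nabla\varphi(\lambda_k),\lambda_k-\lambda\rangle$ and simplifying the inner products via the elementary identity $2\langle\lambda_k-\lambda_{k+1},\lambda_{k+1}-\lambda\rangle = \|\lambda_k-\lambda\|^2 - \|\lambda_{k+1}-\lambda\|^2 - \|\lambda_k-\lambda_{k+1}\|^2$, I obtain the key inequality, valid for all $\lambda\in\Omega$:
\[
\varphi(\lambda_{k+1}) - \varphi(\lambda) \leq \frac{1}{2\gamma}\big(\|\lambda_k-\lambda\|^2 - \|\lambda_{k+1}-\lambda\|^2\big).
\]

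From here the conclusion follows by two specializations. Taking $\lambda = \lambda_k$ shows $\varphi(\lambda_{k+1}) \leq \varphi(\lambda_k)$, i.e. the objective values are nonincreasing along the trajectory. Taking $\lambda = \lambda^*$ and telescoping over $k = 0,\dots,K-1$ gives $\sum_{k=0}^{K-1}\big(\varphi(\lambda_{k+1}) - \varphi(\lambda^*)\big) \leq \tfrac{1}{2\gamma}\|\lambda_0 - \lambda^*\|^2$, since the right-hand side telescopes and $\|\lambda_K - \lambda^*\|^2 \geq 0$. Finally, monotonicity implies $\varphi(\lambda_K) \leq \varphi(\lambda_{k+1})$ for every $k+1\leq K$, so $K\big(\varphi(\lambda_K) - \varphi(\lambda^*)\big)$ is at most the telescoped sum; dividing by $K$ and restoring the subscript $t$ yields exactly the claimed bound in terms of $\lambda_{t,0} - \lambda^*(\theta_t)$.

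There is no serious obstacle here, as the result is a textbook fact, but the one point worth care is the smoothness constant: the lemma states no explicit hypothesis on $\gamma$, so I would make the requirement $\gamma\leq 1/L$ explicit (or note that it is subsumed by the global step-size choices used in the algorithm) and verify that the constant Hessian bounds the smoothness uniformly, so that the same $\gamma$ is admissible at every inner iteration. All remaining steps are routine manipulations of the projection inequality.
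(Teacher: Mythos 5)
Your proposal is correct and is essentially the paper's proof written out in full: the paper simply cites the standard $\mathcal{O}(1/K)$ convergence result for projected gradient descent on convex smooth objectives (noting only that $\Omega_\lambda(\theta_t)$ is fixed for fixed $\theta_t$), and your three-point/telescoping argument is exactly that standard derivation. Your observation that the step-size condition $\gamma \leq 1/L$ is left implicit in the lemma statement is accurate and worth making explicit.
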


\begin{proof}
The result follows from the convergence result of projected gradient descent for convex objective functions.
Note that at each iteration $t$, given $\theta_t$, $\Omega_\lambda(\theta_t)$ is fixed.
\end{proof}

\begin{lemma}
\label{lemma:func_bound_grad_smooth_varphi}
Suppose Assumption~\ref{assmp:lip_obj} holds.
Due to the $\ell_{\varphi_\lambda, 1}$-smoothness and the convexity of the subprogram, it holds for all $\lambda\in \Omega_\lambda(\theta)$ that
\begin{align}
  \|\nabla_\lambda \varphi(\lambda;\theta) - \nabla_\lambda \varphi(\lambda^*(\theta);\theta)\|^2 \leq 2 \ell_{\varphi_\lambda, 1} \big(\varphi(\lambda;\theta) - \varphi(\lambda^*(\theta);\theta) \big) .
\end{align}
\end{lemma}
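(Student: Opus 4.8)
The plan is to derive the stated bound from two ingredients: the convexity and global smoothness of $\varphi(\cdot;\theta)$, and the variational (first-order) optimality characterization of the constrained minimizer $\lambda^*(\theta)$ that is already exploited in Lemma~\ref{lemma:convex_subp_lam}. The target inequality is exactly the standard gradient-to-function-gap estimate for smooth convex functions, specialized to a constrained minimizer; the only subtlety is that $\nabla_\lambda\varphi(\lambda^*(\theta);\theta)$ need not vanish on the boundary of $\Omega_\lambda(\theta)$, so the textbook form that presumes $\nabla f(x^*)=0$ must be adapted via the optimality inequality.

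First I would record that $\varphi(\lambda;\theta)=\tfrac12\|\nabla F(\theta)A_{ag}^\top\lambda\|^2-c_g\lambda_g^\top G(\theta)-c_h\lambda_h^\top H(\theta)$ is a convex quadratic in $\lambda$: its Hessian $A_{ag}\nabla F(\theta)^\top\nabla F(\theta)A_{ag}^\top$ is positive semidefinite, which yields convexity, and its largest eigenvalue equals $\|\nabla F(\theta)A_{ag}^\top\|^2$. Under Assumption~\ref{assmp:lip_obj} each $\nabla f_m$ is bounded by $\ell_f$, so $\nabla F(\theta)$ has bounded operator norm and this largest eigenvalue is bounded uniformly in $\theta$; denoting this bound by $\ell_{\varphi_\lambda,1}$ shows that $\varphi(\cdot;\theta)$ is $\ell_{\varphi_\lambda,1}$-smooth.

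Next I would apply the standard inequality for an $L$-smooth convex function $f$, with $L=\ell_{\varphi_\lambda,1}$, namely
$$ f(x)-f(y)-\langle\nabla f(y),\,x-y\rangle \ \ge\ \frac{1}{2L}\,\|\nabla f(x)-\nabla f(y)\|^2, $$
taking $x=\lambda$ and $y=\lambda^*(\theta)$. The final step is to eliminate the cross term $\langle\nabla_\lambda\varphi(\lambda^*(\theta);\theta),\,\lambda-\lambda^*(\theta)\rangle$: since $\Omega_\lambda(\theta)$ is convex and $\lambda^*(\theta)$ minimizes $\varphi(\cdot;\theta)$ over it, the first-order optimality condition gives $\langle\nabla_\lambda\varphi(\lambda^*(\theta);\theta),\,\lambda-\lambda^*(\theta)\rangle\ge 0$ for every feasible $\lambda$, which is precisely the variational inequality used in the proof of Lemma~\ref{lemma:convex_subp_lam}. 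Adding this nonnegative quantity back gives $\varphi(\lambda;\theta)-\varphi(\lambda^*(\theta);\theta)\ge\frac{1}{2\ell_{\varphi_\lambda,1}}\|\nabla_\lambda\varphi(\lambda;\theta)-\nabla_\lambda\varphi(\lambda^*(\theta);\theta)\|^2$, which rearranges to the claim.

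The main obstacle is conceptual rather than computational: one must not conflate the constrained minimizer with an interior critical point. Because $\lambda^*(\theta)$ generally lies on the boundary of $\Omega_{\lambda_f}(\theta)\times\R_+^{M_g}\times\R^{M_h}$, its gradient need not vanish, so the step dropping the inner-product term relies essentially on optimality (the variational inequality) rather than on stationarity. Everything else is routine, the only care being the verification that $\ell_{\varphi_\lambda,1}$ is finite, which follows from the boundedness of the objective gradients guaranteed by Assumption~\ref{assmp:lip_obj}.
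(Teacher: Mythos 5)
Your proposal is correct and follows essentially the same route as the paper: the standard smooth-convex inequality $\varphi(\lambda;\theta)-\varphi(\lambda^*;\theta)\geq\langle\nabla_\lambda\varphi(\lambda^*;\theta),\lambda-\lambda^*\rangle+\frac{1}{2\ell_{\varphi_\lambda,1}}\|\nabla_\lambda\varphi(\lambda;\theta)-\nabla_\lambda\varphi(\lambda^*;\theta)\|^2$ (which the paper cites as Proposition 1(b) of a reference), followed by dropping the cross term via the first-order variational inequality at the constrained minimizer. Your explicit remark that the cross term is nonnegative because of constrained optimality rather than stationarity is the right reading of the step the paper attributes, slightly loosely, to ``convexity.''
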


\begin{proof}
Since the objectives $f_m(\theta)$ are Lipschitz continuous for all $m\in [M]$, the subprogram objective $\varphi(\lambda;\theta)$ is $\ell_{\varphi_\lambda,1}$-smooth w.r.t. $\lambda$. 
By Proposition 1 (b) in~\citep{ying2017}, it holds that
\begin{align}
  \frac{1}{2 \ell_{\varphi_\lambda, 1}} \|\nabla_\lambda \varphi(\lambda;\theta) - \nabla_\lambda \varphi(\lambda^*(\theta);\theta)\|^2
  + \langle \nabla_\lambda \varphi(\lambda^*(\theta);\theta), \lambda - \lambda^*(\theta) \rangle 
  \leq  \varphi(\lambda;\theta) - \varphi(\lambda^*(\theta);\theta) .
\end{align}
By the convexity of $\varphi(\lambda;\theta)$ w.r.t. $\lambda$, for all $\lambda\in \Omega_\lambda(\theta)$,
\begin{align}
  \langle \nabla_\lambda \varphi(\lambda^*(\theta);\theta), \lambda - \lambda^*(\theta) \rangle \geq 0.
\end{align}

Combining the above two inequalities proves the result.
\end{proof}

\begin{corollary}[Convergence of Algorithm~\ref{alg:generic_C_descent} with $K$-iteration PGD for the subprogram]
\label{crlr:converge_alg1_K_iter_subp}
Suppose Assumptions~\ref{asmp:general},~\ref{assmp:smooth_obj} hold.
Let $\{\theta_t\}$ be the sequence produced by Algorithm~\ref{alg:generic_C_descent} with the update $\theta_{t+1} = \theta_t + \alpha_t d_t$, where $d_t$ is the $\epsilon$-optimal solution to the subprogram~\eqref{eq:subp_original} obtained by $K$-iteration PGD for the subprogram on $\lambda$.
Define $\lambda \coloneqq (\lambda_f,\lambda_g,\lambda_h)\in \Omega_\lambda(\theta)$ with $\lambda_g\geq \lambda_g^*(\theta) + \mathbf{1}$, $\lambda_h\geq \lambda_h^*(\theta) + \mathbf{1}$ for all $\theta\in \Rq$.
If the step size $\alpha_t \leq 1/ ( \ell_{f,1} \|A_{ag}^\top \lambda \|_{1}) $ and $\alpha_t = \Theta(1)$, then 
\begin{align}
&\sum_{t=0}^{T-1}  \frac{1}{2} \|d_t^*\|^2
+ c_g  (\lambda_g - \lambda_g^*(\theta_t) )^\top [G(\theta_t)]_+
+ c_g {\lambda_g^*(\theta_t)}^\top [- G(\theta_t)]_+ \nonumber\\
&\qquad + c_h  (\lambda_h - \lambda_h^*(\theta_t) )^\top |H(\theta_t)|_{\rm ab} 
= \mathcal{O}\big( 1 \big).
\end{align}
\end{corollary}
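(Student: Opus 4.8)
The plan is to reuse almost the entire Lyapunov computation from the proof of Theorem~\ref{thm:finite_time_convergence} and only supply the missing ingredient: a bound on the per-iteration inexactness $\epsilon$ in terms of the number of inner PGD steps $K$. Concretely, the proof of Theorem~\ref{thm:finite_time_convergence} establishes, \emph{before} dividing by $T$, the telescoped inequality
\begin{equation*}
\sum_{t=0}^{T-1}\Big(\tfrac12\|d_t^*\|^2 + c_g(\lambda_g-\lambda_g^*(\theta_t))^\top[G(\theta_t)]_+ + c_g\lambda_g^*(\theta_t)^\top[-G(\theta_t)]_+ + c_h(\lambda_h-\lambda_h^*(\theta_t))^\top|H(\theta_t)|_{\rm ab}\Big)\le C_1\mathbb{V}_0 + C_2\,T\epsilon
\end{equation*}
for constants $C_1,C_2$ depending only on the problem data, provided $\alpha_t=\Theta(1)$ and $\alpha_t\le (\ell_{f,1}\|A_{ag}^\top\lambda\|_1)^{-1}$. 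Since the right-hand side is the only place where $T$ and $\epsilon$ enter, it suffices to run the inner solver accurately enough that $T\epsilon=\mathcal{O}(1)$; the rest of the argument (the smoothness bounds of Lemma~\ref{lemma:smooth_property_A}, the convexity bound of Lemma~\ref{lemma:convex_subp_lam}, and the constraint-descent estimates of Lemma~\ref{lemma:merit_descent_GH}) is carried over verbatim.

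The main step is therefore to translate the subprogram convergence rate into a bound on $\epsilon$. First I would identify that the constraint residuals entering Lemma~\ref{lemma:merit_descent_GH} are exactly components of $\nabla_\lambda\varphi$ evaluated at the inner iterate: writing $d_t=-\nabla F(\theta_t)A_{ag}^\top\lambda_{t,K}$ and using $\nabla G(\theta_t)=\nabla F(\theta_t)B_g^\top$, $\nabla H(\theta_t)=\nabla F(\theta_t)B_h^\top$, one gets $\nabla G(\theta_t)^\top d_t + c_g G(\theta_t) = -\nabla_{\lambda_g}\varphi(\lambda_{t,K};\theta_t)$ and $\nabla H(\theta_t)^\top d_t + c_h H(\theta_t)=-\nabla_{\lambda_h}\varphi(\lambda_{t,K};\theta_t)$. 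At the constrained minimizer $\lambda^*(\theta_t)$ the first-order optimality conditions on $\Omega_\lambda(\theta_t)$ give $[-\nabla_{\lambda_g}\varphi(\lambda^*(\theta_t);\theta_t)]_+=0$ and $\nabla_{\lambda_h}\varphi(\lambda^*(\theta_t);\theta_t)=0$, so each residual is dominated by $\|\nabla_\lambda\varphi(\lambda_{t,K};\theta_t)-\nabla_\lambda\varphi(\lambda^*(\theta_t);\theta_t)\|$. Lemma~\ref{lemma:func_bound_grad_smooth_varphi} bounds this gradient gap by $\sqrt{2\ell_{\varphi_\lambda,1}\,(\varphi(\lambda_{t,K};\theta_t)-\varphi(\lambda^*(\theta_t);\theta_t))}$, and Lemma~\ref{lemma:rate_subp_K} bounds the function gap by $\|\lambda_{t,0}-\lambda^*(\theta_t)\|^2/(2\gamma K)$. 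Using the assumed boundedness of $\{\lambda_t\}$ and $\{\lambda^*(\theta_t)\}$ to fix a uniform radius $R\ge\sup_t\|\lambda_{t,0}-\lambda^*(\theta_t)\|$, I obtain $\epsilon=\mathcal{O}(R/\sqrt{\gamma K})$ uniformly in $t$.

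It then remains to choose $K$. Taking $K=\Theta(T^2)$ --- equivalently, running the inner PGD until $\epsilon\le 1/T$ --- forces $T\epsilon=\mathcal{O}(1)$, and hence the telescoped sum above is bounded by $C_1\mathbb{V}_0+\mathcal{O}(1)=\mathcal{O}(1)$, which is the claim. The main obstacle is the middle step: one must verify carefully that every residual controlling the Lyapunov decrease is a component of $\nabla_\lambda\varphi$ whose value at $\lambda^*(\theta_t)$ either vanishes or is annihilated by the $[\,\cdot\,]_+$ operation, so that the smooth-convex gradient-gap inequality of Lemma~\ref{lemma:func_bound_grad_smooth_varphi} genuinely applies to the error that the Lyapunov analysis needs; the $\lambda_f$-block, which lives on the affine set $\Omega_{\lambda_f}(\theta_t)$ rather than being free, requires treating its optimality as a variational inequality and relying on the projection in the PGD step to keep $\lambda_{t,K}\in\Omega_\lambda(\theta_t)$ feasible throughout.
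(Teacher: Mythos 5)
Your proposal is correct and follows essentially the same route as the paper's own proof: the paper likewise reduces the corollary to the telescoped Lyapunov inequality of Theorem~\ref{thm:finite_time_convergence}, controls the per-iteration inexactness by combining the PGD rate of Lemma~\ref{lemma:rate_subp_K} with the smooth-convex gradient-gap bound of Lemma~\ref{lemma:func_bound_grad_smooth_varphi} (using the assumed boundedness of $\lambda_{t,0}$ and $\lambda^*(\theta_t)$), and sets $K=\Theta(T^2)$ so that the accumulated error over $T$ outer iterations is $\mathcal{O}(1)$. Your added care about which residuals are components of $\nabla_\lambda\varphi$ that vanish (or are annihilated by $[\cdot]_+$) at $\lambda^*(\theta_t)$, and about the $\lambda_f$-block living on $\Omega_{\lambda_f}(\theta_t)$, is if anything more explicit than the paper's treatment.
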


\begin{proof}
\label{proof:K_inner_converge}
  For $t = 0, \ldots, T-1$, we take $K = T^2$, applying Lemma~\ref{lemma:rate_subp_K}, we have
\begin{align}
  \varphi(\lambda_{t,K}; \theta_t) - \min_{\lambda\in \Omega_\lambda(\theta_t)} \varphi(\lambda; \theta_t)  \leq \frac{\|\lambda_{t,0} - \lambda^*(\theta_t)\|^2}{2 \gamma T^2 } .
\end{align}
From Lemma~\ref{lemma:func_bound_grad_smooth_varphi}, the above inequality implies
\begin{align}
  \|\nabla \varphi(\lambda_{t};\theta_t) - \nabla \varphi(\lambda^*(\theta_t);\theta_t)\|^2 \leq 2 \ell_{\varphi_\lambda, 1} \big(\varphi(\lambda_{t};\theta_t) - \varphi(\lambda^*(\theta_t);\theta_t) \big)  
  \leq \frac{\ell_{\varphi_\lambda, 1} \|\lambda_{t-1} - \lambda^*(\theta_t)\|^2}{ \gamma T^2 } .
\end{align}
Plugging in the gradient $\nabla \varphi(\lambda_{t};\theta_t)$, we have
\begin{align*}
  & \|A \nabla F (\theta_{t})^\top (d_t - d_t^*) \|^2
  + \|\nabla G (\theta_{t})^\top (d_t - d_t^*) \|^2
  + \|\nabla G (\theta_{t})^\top (d_t - d_t^*) \|^2 \\
  \leq & \frac{\ell_{\varphi_\lambda, 1} \|\lambda_{t-1} - \lambda^*(\theta_t)\|^2}{ \gamma T^2 } 
  \leq \frac{4 \ell_{\varphi_\lambda, 1} c_\lambda^2}{ \gamma T^2 } .
  \numberthis
\end{align*}

Let $\epsilon = \frac{4 \ell_{\varphi_\lambda, 1} c_\lambda^2}{ \gamma T^2 }$, from Theorem~\ref{thm:finite_time_convergence}, it holds that
\begin{align*}
  & \mathbb{V}_{t+1} - \mathbb{V}_{t}
\leq  - \alpha_t \|d_t^*\|^2 + \alpha_t c_g \big(\lambda_{g}^*(\theta_t)  - \lambda_g\big)^\top [G(\theta_t)]_+ 
+ \alpha_t c_g {\lambda_g^*(\theta_t)}^\top [- G(\theta_t)]_+ \\
&\qquad + \alpha_t c_h \big(\lambda_{h}^*(\theta_t) - \lambda_h \big)^\top |H(\theta_t)|_{\rm ab} 
+ \epsilon^{\frac{1}{2}}
+\frac{1}{2} \gamma\alpha_t \|\nabla_\lambda \varphi(\lambda_t; \theta_t)\|^2
+ \frac{\ell_{f,1}}{2} \alpha_t^2  \|A_{ag}^\top \lambda\|_{1} \|d_t^* \|^2 .
\numberthis
\end{align*}
Taking telescoping sum of the above inequality from $t = 0, \ldots, T-1$, rearranging, and letting $\alpha_t \leq 1/ ( \|\lambda\|_{1}\ell_{f,1} \|A_{ag}^\top \|_{\infty,1}) $, we have
\begin{align}
&\sum_{t=T}^{T-1}  \frac{1}{2}\alpha_t \|d_t^*\|^2
+ \alpha_t c_g  (\lambda_g - \lambda_g^*(\theta_t) )^\top [G(\theta_t)]_+
+ \alpha_t c_g {\lambda_g^*(\theta_t)}^\top [- G(\theta_t)]_+ \nonumber \\
& \qquad + \alpha_t c_h  (\lambda_h - \lambda_h^*(\theta_t) )^\top |H(\theta_t)|_{\rm ab} 
\leq  \mathbb{V}_{T} + T\epsilon^{\frac{1}{2}}.
\end{align}
Letting $\alpha_t = \Theta(1), \gamma = \Theta(1)$ yields
\begin{align}
&\sum_{t=T}^{T-1}  \frac{1}{2} \|d_t^*\|^2
+ c_g  (\lambda_g - \lambda_g^*(\theta_t) )^\top [G(\theta_t)]_+
+ c_g {\lambda_g^*(\theta_t)}^\top [- G(\theta_t)]_+ \nonumber \\
&\qquad + c_h  (\lambda_h - \lambda_h^*(\theta_t) )^\top |H(\theta_t)|_{\rm ab} 
= \mathcal{O}\big( 1 \big).
\end{align}
The proof is complete.
\end{proof}

\section{Proof of Theorems~\ref{thm:two_time_convergence_approx} and \ref{thm:finite_time_convergence_approx}: convergence of Algorithm~\ref{alg:approximate_alg}}
\label{sub_app:proof_converge_approx}

In this section, we prove the convergence of Algorithm~\ref{alg:approximate_alg} with single-loop updates.
We focus on the problem with equality constraints only, i.e., $M_g = 0$.
Furthermore, we consider the simplified subprogram without adaptivity to the objectives, thus $\Omega_{\lambda_f}(\theta) = \Delta^M$.

We provide two theoretical results in Theorems~\ref{thm:two_time_convergence_approx} and \ref{thm:finite_time_convergence_approx}, respectively.
Specifically, Theorem~\ref{thm:two_time_convergence_approx} uses the same merit function as Theorem~\ref{thm:finite_time_convergence}, but provides a slower convergence rate.
Theorem~\ref{thm:finite_time_convergence_approx} uses a different merit function, and provides a faster convergence rate than Theorem~\ref{thm:finite_time_convergence} under additional assumptions.

\subsection{Auxiliary lemmas}

\begin{lemma}[Smoothness of $\varphi$ w.r.t. $\lambda$]
\label{lemma:smooth_varphi}
Suppose Assumptions~\ref{asmp:general} and~\ref{assmp:lip_obj} hold.
$\varphi(\lambda;\theta)$ is $\ell_{\varphi_\lambda, 1}$-smooth w.r.t. $\lambda$, with $\ell_{\varphi_\lambda, 1} = M\|A_{ag}\|^2 \ell_f^2$ .
\end{lemma}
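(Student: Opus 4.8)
The plan is to exploit the fact that, for a fixed $\theta$, the map $\lambda \mapsto \varphi(\lambda;\theta)$ is a convex quadratic function. Indeed, recalling
\begin{equation}
\varphi(\lambda;\theta) = \tfrac{1}{2}\|\nabla F(\theta) A_{ag}^\top \lambda\|^2 - c_g \lambda_g^\top G(\theta) - c_h \lambda_h^\top H(\theta),
\end{equation}
the second and third terms are linear in $\lambda$, so the Hessian with respect to $\lambda$ is the constant (independent of $\lambda$) matrix
\begin{equation}
\nabla^2_\lambda \varphi(\lambda;\theta) = A_{ag}\nabla F(\theta)^\top \nabla F(\theta) A_{ag}^\top.
\end{equation}
Since this Hessian does not depend on $\lambda$, the smoothness constant of $\varphi(\cdot;\theta)$ is exactly the spectral norm of this matrix, so it suffices to bound $\|A_{ag}\nabla F(\theta)^\top \nabla F(\theta) A_{ag}^\top\|$.

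First I would apply submultiplicativity of the operator norm together with $\|A_{ag}^\top\| = \|A_{ag}\|$ to obtain
\begin{equation}
\|A_{ag}\nabla F(\theta)^\top \nabla F(\theta) A_{ag}^\top\| \leq \|A_{ag}\|^2\,\|\nabla F(\theta)^\top \nabla F(\theta)\| = \|A_{ag}\|^2\,\|\nabla F(\theta)\|^2.
\end{equation}
Next I would bound $\|\nabla F(\theta)\|$ using Assumption~\ref{assmp:lip_obj}: since each $f_m$ is $\ell_f$-Lipschitz, we have $\|\nabla f_m(\theta)\| \leq \ell_f$ for every $m \in [M]$. Because the columns of $\nabla F(\theta) \in \R^{q\times M}$ are exactly the gradients $\nabla f_m(\theta)$, the Frobenius norm satisfies $\|\nabla F(\theta)\|_F^2 = \sum_{m=1}^M \|\nabla f_m(\theta)\|^2 \leq M\ell_f^2$, and since the operator norm is dominated by the Frobenius norm, $\|\nabla F(\theta)\|^2 \leq \|\nabla F(\theta)\|_F^2 \leq M\ell_f^2$.

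Combining these two bounds yields $\|\nabla^2_\lambda \varphi(\lambda;\theta)\| \leq M\|A_{ag}\|^2 \ell_f^2$, which, since the gradient Lipschitz constant of a quadratic equals the spectral norm of its (constant) Hessian, identifies $\ell_{\varphi_\lambda,1} = M\|A_{ag}\|^2\ell_f^2$ as claimed. There is no real obstacle here: the argument is entirely routine once one observes that $\varphi$ is quadratic in $\lambda$ with a constant Hessian. The only points requiring mild care are the passage from the operator norm to the Frobenius norm (to convert the per-objective gradient bounds $\|\nabla f_m\|\leq \ell_f$ into a bound on $\|\nabla F(\theta)\|$) and tracking that the Lipschitz constant supplied by Assumption~\ref{assmp:lip_obj} controls $\|\nabla f_m(\theta)\|$ uniformly in $\theta$, which is what makes $\ell_{\varphi_\lambda,1}$ a genuine global constant.
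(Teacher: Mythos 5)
Your proof is correct and follows essentially the same route as the paper's: compute the constant Hessian $A_{ag}\nabla F(\theta)^\top \nabla F(\theta) A_{ag}^\top$, then bound its spectral norm via submultiplicativity and the bound $\|\nabla F(\theta)\|^2 \le \|\nabla F(\theta)\|_F^2 \le M\ell_f^2$ from the Lipschitz assumption. You merely spell out the Frobenius-norm step that the paper leaves implicit.
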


\begin{proof}
\label{proof:smooth_varphi}
The Hessian of $\varphi(\lambda;\theta)$ w.r.t. $\lambda$ can be computed by
\begin{align*}
  \nabla^2_\lambda \varphi(\lambda;\theta)
  = A_{ag} \nabla F(\theta)^\top \nabla F(\theta) A_{ag}^\top .
\end{align*}
By Assumption~\ref{assmp:lip_obj}, the Lipschitz continuity of $F$, it holds that
\begin{align*}
  \|\nabla^2_\lambda \varphi(\lambda;\theta)\|
  \leq \|A_{ag} \nabla F(\theta)^\top \nabla F(\theta) A_{ag}^\top \|
  \leq \|\nabla F(\theta) A_{ag}^\top \|^2
  \leq M\|A_{ag}\|^2 \ell_f^2 .
\end{align*}
The result is proved.
\end{proof}

\begin{lemma}[$\|\nabla_{\lambda_f} \varphi(\lambda_t;\theta_t)\|$ is bounded by $\|d_t\|$]
\label{lemma:nabla_lamf_varphi_bounded_dt}
Suppose Assumptions~\ref{asmp:general} and \ref{assmp:lip_obj} hold. 
For $\{\theta_t\}$ produced by Algorithm~\ref{alg:approximate_alg}, we have
\begin{align}
  \|\nabla_{\lambda_f} \varphi(\lambda_t;\theta_t)\|
  \leq \|A^\top\|_{\infty,1}\ell_f \|d_t\|.
\end{align}
\end{lemma}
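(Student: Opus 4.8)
The plan is to write $\nabla_{\lambda_f}\varphi$ in closed form, substitute the definition of the update direction, and then pass to a componentwise estimate. Recall from \eqref{eq:subp_lam_simple} that
\[
\varphi(\lambda;\theta) = \tfrac12\|\nabla F(\theta)A_{ag}^\top\lambda\|^2 - c_g\lambda_g^\top G(\theta) - c_h\lambda_h^\top H(\theta).
\]
Since we are in the regime $M_g=0$, the only term coupling $\lambda_f$ is the quadratic one; the remaining term $-c_h\lambda_h^\top H(\theta)$ is independent of $\lambda_f$ and hence drops out upon differentiating in $\lambda_f$. Differentiating the quadratic term gives $\nabla_\lambda\bigl[\tfrac12\|\nabla F(\theta)A_{ag}^\top\lambda\|^2\bigr] = A_{ag}\nabla F(\theta)^\top\nabla F(\theta)A_{ag}^\top\lambda$, and extracting the rows of $A_{ag}=[A;B_h]$ associated with the $A$-block yields $\nabla_{\lambda_f}\varphi(\lambda;\theta) = A\,\nabla F(\theta)^\top\nabla F(\theta)A_{ag}^\top\lambda$.

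The key simplification is to recognize that, by the definition of the direction in Algorithm~\ref{alg:approximate_alg}, $\nabla F(\theta_t)A_{ag}^\top\lambda_t = -d_t$. Substituting this identity collapses the inner factor and produces the clean expression $\nabla_{\lambda_f}\varphi(\lambda_t;\theta_t) = -A\,\nabla F(\theta_t)^\top d_t$, so it suffices to bound $\|A\,\nabla F(\theta_t)^\top d_t\|$. I would then argue componentwise: writing $a_m^\top$ for the $m$-th row of $A$, the $m$-th entry of $A\,\nabla F(\theta_t)^\top d_t$ equals $\sum_j A_{mj}\,\nabla f_j(\theta_t)^\top d_t$. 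By Cauchy--Schwarz together with $\|\nabla f_j(\theta_t)\|\le \ell_f$ (Assumption~\ref{assmp:lip_obj}), each term satisfies $|\nabla f_j(\theta_t)^\top d_t|\le \ell_f\|d_t\|$; summing over $j$ bounds the $m$-th entry by $\|a_m\|_1\,\ell_f\|d_t\|$. Taking the maximum over $m$ produces the mixed matrix norm $\|A^\top\|_{\infty,1}=\max_m\|a_m\|_1$, which delivers the stated bound.

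The computation is largely routine; the only genuine algebraic step is the substitution $\nabla F(\theta_t)A_{ag}^\top\lambda_t=-d_t$, which lets the entire expression reduce to $-A\,\nabla F(\theta_t)^\top d_t$ and is what makes the bound depend on $\|d_t\|$ at all. The main bookkeeping obstacle I expect is twofold: correctly isolating the $\lambda_f$-block of the full gradient $A_{ag}\nabla F^\top\nabla F A_{ag}^\top\lambda$ (rather than, say, the $\lambda_h$-block governed by $B_h$), and matching the norm carefully so that the componentwise Lipschitz estimates aggregate into precisely $\|A^\top\|_{\infty,1}\,\ell_f$ rather than a looser spectral-norm bound. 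Keeping the estimate at the level of rows of $A$ weighted by the per-objective gradient bounds is the cleanest route to the exact constant claimed.
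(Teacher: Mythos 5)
Your proof follows essentially the same route as the paper's: compute $\nabla_{\lambda_f}\varphi(\lambda_t;\theta_t) = A\nabla F(\theta_t)^\top\nabla F(\theta_t)A_{ag}^\top\lambda_t = -A\nabla F(\theta_t)^\top d_t$ using $d_t = -\nabla F(\theta_t)A_{ag}^\top\lambda_t$, then bound the result via the $\ell_f$-Lipschitz continuity of each $f_j$ and the row-wise $\ell_1$-norms of $A$. The only caveat is that your componentwise aggregation (taking the maximum over $m$) strictly yields an $\ell_\infty$-norm bound, so a factor of $\sqrt{M}$ would be needed to pass to the Euclidean norm $\|\cdot\|$ claimed in the lemma; this imprecision, however, is shared by the paper's own one-line derivation, which asserts the identical constant.
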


\begin{proof}
\label{proof:nabla_lamf_varphi_bounded_dt}
The gradient of $\varphi(\lambda_t;\theta_t)$ w.r.t. $\lambda_f$ can be computed by
\begin{align}
  \nabla_{\lambda_f} \varphi(\lambda_t;\theta_t)
  = A \nabla F(\theta_t)^\top \nabla F(\theta_t) A_{ag}^\top \lambda_t
  = -A \nabla F(\theta_t)^\top d_t.
\end{align}
By Assumption~\ref{assmp:lip_obj}, it holds that
\begin{align}
  \|\nabla_{\lambda_f} \varphi(\lambda_t;\theta_t)\|
  \leq \|A^\top\|_{\infty,1}\ell_f \|d_t\|.
\end{align}
The proof is complete.
\end{proof}

\begin{lemma}
\label{lemma:update_lam_property}
Let ${\lambda}_{t} = [\lambda_{f,t}; \lambda_{h,t}]$.
Consider the sequence $\{\lambda_{t}\}_{t=1}^T$ generated by the update~\eqref{eq:update_lam_tk_FGH}. Then for all  $\lambda \in\Omega_{\lambda}(\theta_t) $ with $\lambda = (\lambda_{f}, \lambda_{h})$, it holds that
\begin{align*}
& 2\gamma_{t} \langle \lambda_{f,t} - \lambda_f, 
\nabla_{\lambda_f} \varphi(\lambda_{t};\theta_t) \rangle \leq  
\|\lambda_{f,t} - \lambda_f \|^2
- \|\lambda_{f,t+1} - \lambda_f \|^2
+ \gamma_{t}^2  \| \nabla_{\lambda_f} 
\varphi(\lambda_{t};\theta_t)\|^2  ; \\
& 2\gamma_{t} \langle \lambda_{h,t} - \lambda_h , 
\nabla_{\lambda_h} \varphi(\lambda_{t};\theta_t) \rangle
= \|\lambda_{h,t} - \lambda_h  \|^2
- \|\lambda_{h,t+1} - \lambda_h  \|^2
+ \gamma_{t}^2  \| \nabla_{\lambda_h} 
\varphi(\lambda_{t};\theta_t)\|^2  .
\numberthis
\end{align*}
\end{lemma}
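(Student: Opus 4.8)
The plan is to exploit the product structure of the feasible set $\Omega_\lambda(\theta_t) = \Omega_{\lambda_f}(\theta_t)\times\R^{M_h}$ (recall that $M_g=0$ here), which makes the projection $\Pi_{\Omega_\lambda}$ in the update~\eqref{eq:update_lam_tk_FGH} decouple into a projection onto $\Omega_{\lambda_f}(\theta_t)$ on the $\lambda_f$-block and the identity map on the $\lambda_h$-block. Consequently the two blocks update independently, namely $\lambda_{f,t+1}=\Pi_{\Omega_{\lambda_f}(\theta_t)}\big(\lambda_{f,t}-\gamma_t\nabla_{\lambda_f}\varphi(\lambda_t;\theta_t)\big)$ and $\lambda_{h,t+1}=\lambda_{h,t}-\gamma_t\nabla_{\lambda_h}\varphi(\lambda_t;\theta_t)$. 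The two claimed relations --- an inequality for the projected block and an exact equality for the unconstrained block --- then follow from the standard one-step analysis of (projected) gradient descent.

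For the $\lambda_h$-block I would simply substitute the (unconstrained) update into $\|\lambda_{h,t+1}-\lambda_h\|^2$ and expand the square:
\begin{align*}
\|\lambda_{h,t+1}-\lambda_h\|^2 = \|\lambda_{h,t}-\lambda_h\|^2 - 2\gamma_t\langle \lambda_{h,t}-\lambda_h,\, \nabla_{\lambda_h}\varphi(\lambda_t;\theta_t)\rangle + \gamma_t^2\|\nabla_{\lambda_h}\varphi(\lambda_t;\theta_t)\|^2 .
\end{align*}
Because no projection intervenes, this is an exact identity, and rearranging it yields the stated equality verbatim.

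For the $\lambda_f$-block the argument is the same except that the projection is bounded rather than eliminated. First I note that $\Omega_{\lambda_f}(\theta_t)$ is convex (it is the intersection of the nonnegative orthant with an affine hyperplane) and that any feasible $\lambda_f\in\Omega_{\lambda_f}(\theta_t)$ is a fixed point of its own projection, i.e.\ $\Pi_{\Omega_{\lambda_f}(\theta_t)}(\lambda_f)=\lambda_f$. Using nonexpansiveness of the Euclidean projection onto a convex set,
\begin{align*}
\|\lambda_{f,t+1}-\lambda_f\| = \big\|\Pi_{\Omega_{\lambda_f}(\theta_t)}\big(\lambda_{f,t}-\gamma_t\nabla_{\lambda_f}\varphi(\lambda_t;\theta_t)\big) - \Pi_{\Omega_{\lambda_f}(\theta_t)}(\lambda_f)\big\| \le \|\lambda_{f,t}-\gamma_t\nabla_{\lambda_f}\varphi(\lambda_t;\theta_t) - \lambda_f\| ,
\end{align*}
and squaring and expanding the right-hand side gives the claimed inequality after rearrangement. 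Equivalently, one may invoke the variational (obtuse-angle) characterization $\langle \lambda_{f,t}-\gamma_t\nabla_{\lambda_f}\varphi(\lambda_t;\theta_t) - \lambda_{f,t+1},\, \lambda_f-\lambda_{f,t+1}\rangle\le 0$, but the nonexpansiveness route is the shortest.

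The only point requiring care --- and the nearest thing to an obstacle --- is justifying that $\Pi_{\Omega_\lambda}$ factorizes across the two blocks and that $\lambda_f$ lies in the very set onto which its block is projected; both facts are immediate from the product form of $\Omega_\lambda(\theta_t)$ with $M_g=0$ and from the hypothesis $\lambda\in\Omega_\lambda(\theta_t)$. Everything else is the routine expansion of squared norms, so no new estimates or assumptions beyond the convexity of $\Omega_{\lambda_f}(\theta_t)$ are needed.
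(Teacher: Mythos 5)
Your proposal is correct and follows essentially the same route as the paper's proof: the $\lambda_h$-block is handled by exact expansion of the squared norm (no projection), and the $\lambda_f$-block by nonexpansiveness of the projection with $\lambda_f$ as its own fixed point, followed by expanding the square and rearranging. Your explicit remark that $\Pi_{\Omega_\lambda}$ factorizes across the product set is a point the paper leaves implicit, but it does not alter the argument.
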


\begin{proof}
\label{proof:update_lam_property}
By the update of $\lambda_{f,t}$, and the non-expansiveness of projection, for all $\lambda_f \in \Delta^M $, we have
\begin{align*}
& \|\lambda_{f,t+1} - \lambda_f \|^2 
\leq 
\|\lambda_{f,t} - \gamma_{t} 
\nabla_{\lambda_f} \varphi(\lambda_{t};\theta_t) - \lambda_f \|^2 \\
= &
\|\lambda_{f,t} - \lambda_f  \|^2
-2\gamma_{t} \langle \lambda_{f,t} - \lambda_f, 
 \nabla_{\lambda_f} \varphi(\lambda_{t};\theta_t) \rangle 
+ \gamma_{t}^2 \|
 \nabla_{\lambda_f} \varphi(\lambda_{t};\theta_t) \|^2 .
\numberthis
\end{align*}    
Rearranging the above inequality proves the first inequality.

By the update of $\lambda_{h,t}$, for all constant $\lambda_h \in \R^{M_h}$, we have
\begin{align*}
& \|\lambda_{h,t+1} - \lambda_h \|^2 
= \|(\lambda_{h,t} - \gamma_{t} 
\nabla_{\lambda_h} \varphi(\lambda_{t};\theta_t))
 - \lambda_h \|^2 \\
= &
\|\lambda_{h,t} - \lambda_h  \|^2
+ \gamma_{t}^2 \|
\nabla_{\lambda_h} \varphi(\lambda_{t};\theta_t) \|^2 
 -2\gamma_{t} \langle \lambda_{h,t} - \lambda_h , \nabla_{\lambda_h} \varphi(\lambda_{t};\theta_t) \rangle  .
\numberthis
\end{align*} 
Rearranging the above inequality proves the second inequality.
\end{proof}

\begin{corollary}\label{crlr:lam_update_convex_diff}
Let ${\lambda}_{t} = [\lambda_{f,t}; \lambda_{h,t}]$.
Consider the sequence $\{\lambda_{t}\}_{t=1}^T$ generated by the update~\eqref{eq:update_lam_tk_FGH}. Then for all  $\lambda \in\Omega_{\lambda} $ with $\lambda = (\lambda_{f}, \lambda_{h})$, it holds that  
\begin{align}
& 2\gamma_{t} \big( \varphi(\lambda_t;\theta_t) 
- \varphi(\lambda;\theta_t)  \big) 
\leq  
\|\lambda_{t} - \lambda \|^2
- \|\lambda_{t+1} - \lambda \|^2
+ \gamma_{t}^2  \| \nabla_{\lambda} 
\varphi(\lambda_{t};\theta_t)\|^2 .
\end{align}
\end{corollary}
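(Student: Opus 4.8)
The plan is to obtain this bound directly from Lemma~\ref{lemma:update_lam_property} by summing its two component-wise estimates and then invoking the convexity of $\varphi(\cdot;\theta_t)$. The key structural fact I would exploit is that, since $M_g=0$, the multiplier decomposes as $\lambda_t=[\lambda_{f,t};\lambda_{h,t}]$ and the domain $\Omega_\lambda=\Omega_{\lambda_f}\times \R^{M_h}$ is a product set, so both the squared distance $\|\lambda_t-\lambda\|^2=\|\lambda_{f,t}-\lambda_f\|^2+\|\lambda_{h,t}-\lambda_h\|^2$ and the squared gradient norm $\|\nabla_\lambda\varphi(\lambda_t;\theta_t)\|^2=\|\nabla_{\lambda_f}\varphi(\lambda_t;\theta_t)\|^2+\|\nabla_{\lambda_h}\varphi(\lambda_t;\theta_t)\|^2$ split blockwise.

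First I would add the two relations of Lemma~\ref{lemma:update_lam_property} (the $\lambda_f$-relation being an inequality from the non-expansiveness of the projection onto $\Omega_{\lambda_f}$, the $\lambda_h$-relation an equality since the $\lambda_h$-update is unconstrained). Using the block concatenation, summing the left-hand sides gives
\begin{equation}
\langle \lambda_{f,t}-\lambda_f,\nabla_{\lambda_f}\varphi(\lambda_t;\theta_t)\rangle+\langle \lambda_{h,t}-\lambda_h,\nabla_{\lambda_h}\varphi(\lambda_t;\theta_t)\rangle=\langle \lambda_t-\lambda,\nabla_\lambda\varphi(\lambda_t;\theta_t)\rangle,
\end{equation}
while summing the right-hand sides collapses, via the blockwise splitting above, to $\|\lambda_t-\lambda\|^2-\|\lambda_{t+1}-\lambda\|^2+\gamma_t^2\|\nabla_\lambda\varphi(\lambda_t;\theta_t)\|^2$. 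This produces the one-step estimate
\begin{equation}
2\gamma_t\,\langle \lambda_t-\lambda,\nabla_\lambda\varphi(\lambda_t;\theta_t)\rangle\leq \|\lambda_t-\lambda\|^2-\|\lambda_{t+1}-\lambda\|^2+\gamma_t^2\|\nabla_\lambda\varphi(\lambda_t;\theta_t)\|^2 .
\end{equation}

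Next I would lower-bound the left-hand inner product using convexity. Since $\varphi(\cdot;\theta_t)$ is convex in $\lambda$ (a convex quadratic plus a linear term, as already observed in the proof of Lemma~\ref{lemma:convex_subp_lam}), its first-order characterization gives $\varphi(\lambda;\theta_t)\geq \varphi(\lambda_t;\theta_t)+\langle\nabla_\lambda\varphi(\lambda_t;\theta_t),\lambda-\lambda_t\rangle$, equivalently $\varphi(\lambda_t;\theta_t)-\varphi(\lambda;\theta_t)\leq \langle\nabla_\lambda\varphi(\lambda_t;\theta_t),\lambda_t-\lambda\rangle$. Multiplying by $2\gamma_t>0$ and chaining with the previous display completes the argument.

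There is no genuine obstacle here; the statement is a routine repackaging of the projected-gradient one-step inequality for convex objectives. The only points requiring care are bookkeeping: confirming that every comparison point $\lambda=(\lambda_f,\lambda_h)\in\Omega_\lambda$ is admissible in both relations of Lemma~\ref{lemma:update_lam_property}, and checking that the block-concatenation identities for the distances and the gradient hold, so that the two component estimates fuse into the single stated bound.
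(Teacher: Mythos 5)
Your argument is correct and matches the paper's own proof exactly: the paper also combines the two block-wise relations of Lemma~\ref{lemma:update_lam_property} and then applies the first-order convexity inequality for $\varphi(\cdot;\theta_t)$ to replace the inner product by the function-value gap. The block-concatenation bookkeeping you spell out is precisely the implicit content of the paper's one-line proof, so there is nothing to add.
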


\begin{proof}[Proof of Corollary~\ref{crlr:lam_update_convex_diff}]
The result follows from combining the two inequalities in Lemma~\ref{lemma:update_lam_property}, and applying the convexity property of $\varphi$ w.r.t. $\lambda$.
\end{proof}

\subsection{Analysis with the same merit function: proof of Theorem~\ref{thm:two_time_convergence_approx}}
\label{sub_app:convergence_same_merit}

In this section, we provide analysis with the same merit function as Theorem~\ref{thm:finite_time_convergence}.
The proof follows similar ideas of the proofs of Theorem~3 (for convergence of the subprogram with the approximate single-loop update) and Theorem~5 (for convergence of the main program) in \citep{chen2023three}. 
We follow the proofs in~\citep{chen2023three}, as they provide, to the best of our knowledge, the fastest convergence rate guarantees for single-loop MOO algorithms under minimal assumptions.

Similar to \citep{chen2023three}, we first define the following auxiliary functions to assist our analysis.
Note that the functions are only used for analysis but not for the algorithm update.
\begin{align}\label{eq:def_varphi_rho_lam_rho}
\varphi_\rho(\lambda;\theta) 
\coloneqq \varphi(\lambda;\theta) + \frac{\rho}{2}\|\lambda \|^2,
~~\lambda_\rho^*(\theta) 
\coloneqq \mathop{\arg\min}_{\lambda\in \Omega_\lambda} \varphi_\rho(\lambda;\theta).
\end{align}

We then present the following Lemmas that are useful for the proof of convergence of Algorithm~\ref{alg:approximate_alg}.
\begin{lemma}\label{lemma:varphi_value_rho_bound}
Suppose Assumption~\ref{assmp:lip_obj} holds, and $\lambda^*(\theta)$ and $\lambda_\rho^*(\theta)$ are bounded for $\theta \in \{\theta_t\}_{t=0}^{T-1}$ produced by Algorithm~\ref{alg:approximate_alg}, i.e., $\|\lambda^*(\theta)\| \leq c_{\overline{\lambda}}$, $\|\lambda_\rho^*(\theta)\| \leq c_{\overline{\lambda}}$.
Then on the trajectory of Algorithm~\ref{alg:approximate_alg}, with $\theta \in \{\theta_t\}_{t=0}^{T-1}$, we have
\begin{align}
\varphi(\lambda_\rho^*(\theta);\theta) - \varphi(\lambda^*(\theta);\theta) 
\leq \frac{\rho}{2} c_{\overline{\lambda}}  .  
\end{align}   
\end{lemma}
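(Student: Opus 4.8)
The plan is to exploit the defining optimality of $\lambda_\rho^*(\theta)$ for the regularized subprogram $\varphi_\rho(\cdot;\theta)$ and compare it against $\lambda^*(\theta)$, which is feasible for the same constrained minimization since both minimizations are over $\Omega_\lambda$. The argument is a direct consequence of the fact that $\lambda_\rho^*(\theta)$ achieves the minimum of $\varphi_\rho(\cdot;\theta)$ over $\Omega_\lambda$, so no smoothness, convexity, or PL structure beyond feasibility is needed — only the boundedness hypotheses supplied in the statement.

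First I would invoke the definition $\lambda_\rho^*(\theta) \in \arg\min_{\lambda\in\Omega_\lambda}\varphi_\rho(\lambda;\theta)$ from~\eqref{eq:def_varphi_rho_lam_rho}, which gives, for the feasible comparison point $\lambda^*(\theta)\in\Omega_\lambda$,
\begin{equation*}
\varphi(\lambda_\rho^*(\theta);\theta) + \frac{\rho}{2}\|\lambda_\rho^*(\theta)\|^2
\leq \varphi(\lambda^*(\theta);\theta) + \frac{\rho}{2}\|\lambda^*(\theta)\|^2 .
\end{equation*}
Rearranging this isolates the quantity of interest:
\begin{equation*}
\varphi(\lambda_\rho^*(\theta);\theta) - \varphi(\lambda^*(\theta);\theta)
\leq \frac{\rho}{2}\|\lambda^*(\theta)\|^2 - \frac{\rho}{2}\|\lambda_\rho^*(\theta)\|^2 .
\end{equation*}

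Next I would drop the nonpositive term $-\tfrac{\rho}{2}\|\lambda_\rho^*(\theta)\|^2$ and apply the boundedness assumption $\|\lambda^*(\theta)\|\leq c_{\overline{\lambda}}$ (valid for $\theta\in\{\theta_t\}_{t=0}^{T-1}$ on the algorithm trajectory), yielding
\begin{equation*}
\varphi(\lambda_\rho^*(\theta);\theta) - \varphi(\lambda^*(\theta);\theta)
\leq \frac{\rho}{2}\|\lambda^*(\theta)\|^2 \leq \frac{\rho}{2}\, c_{\overline{\lambda}}^2 ,
\end{equation*}
which is the claimed estimate. I do not foresee a genuine obstacle here, since the entire argument rests on the elementary comparison inequality above; the only point that warrants care is matching the constant on the right-hand side, where the natural bound produced by the boundedness of $\|\lambda^*(\theta)\|$ is $\tfrac{\rho}{2}c_{\overline{\lambda}}^2$ (so the $c_{\overline{\lambda}}$ appearing in the statement should be read as this squared-norm bound, or equivalently $c_{\overline{\lambda}}$ is taken to bound $\|\lambda^*(\theta)\|^2$). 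I would also remark that the bound is uniform over the trajectory precisely because the boundedness hypotheses are assumed for all $\theta\in\{\theta_t\}_{t=0}^{T-1}$, so the same constant serves every iterate, which is what the downstream convergence analysis requires.
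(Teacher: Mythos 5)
Your proof is correct and is exactly the argument the paper intends (the paper simply defers to the analogous lemma in \citep{chen2023three}, whose proof is this same comparison of $\varphi_\rho$ at $\lambda_\rho^*(\theta)$ against the feasible point $\lambda^*(\theta)$, followed by dropping the nonpositive term). Your observation about the constant is also right: the argument naturally yields $\tfrac{\rho}{2}c_{\overline{\lambda}}^2$, so the $c_{\overline{\lambda}}$ in the stated bound should be read as bounding $\|\lambda^*(\theta)\|^2$ (a harmless constant mismatch that does not affect the downstream rate).
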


\begin{proof}[Proof of Lemma~\ref{lemma:varphi_value_rho_bound}]
The proof follows the proof of~\citep[Lemma~13]{chen2023three}.  
\end{proof}

\begin{corollary}\label{crlr:varphi_lamh_grad_bound_by_value_gap}
Suppose Assumption~\ref{assmp:lip_obj} holds, and $\lambda^*(\theta)$ and $\lambda_\rho^*(\theta)$ are bounded for $\theta \in \{\theta_t\}_{t=0}^{T-1}$ produced by Algorithm~\ref{alg:approximate_alg}, i.e., $\|\lambda^*(\theta)\| \leq c_{\overline{\lambda}}$, $\|\lambda_\rho^*(\theta)\| \leq c_{\overline{\lambda}}$.  
Then on the trajectory of Algorithm~\ref{alg:approximate_alg}, with $\theta \in \{\theta_t\}_{t=0}^{T-1}$, we have
\begin{align}
\|\nabla_{\lambda_h}\varphi(\lambda;\theta) \|^2  
\leq  2 \ell_{\varphi_\lambda, 1} \big(\varphi(\lambda;\theta) - \varphi(\lambda_\rho^*(\theta);\theta)\big) + \ell_{\varphi_\lambda, 1}{\rho} c_{\overline{\lambda}} .
\end{align}
\end{corollary}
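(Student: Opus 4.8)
The plan is to prove Corollary~\ref{crlr:varphi_lamh_grad_bound_by_value_gap} by combining the smoothness-based gradient bound from Lemma~\ref{lemma:func_bound_grad_smooth_varphi} with the value-gap estimate from Lemma~\ref{lemma:varphi_value_rho_bound}. First I would invoke Lemma~\ref{lemma:func_bound_grad_smooth_varphi}, which gives
\begin{equation}
\|\nabla_\lambda \varphi(\lambda;\theta) - \nabla_\lambda \varphi(\lambda^*(\theta);\theta)\|^2 \leq 2 \ell_{\varphi_\lambda, 1} \big(\varphi(\lambda;\theta) - \varphi(\lambda^*(\theta);\theta)\big) .
\end{equation}
The next observation is that the $\lambda_h$-block of $\nabla_\lambda \varphi(\lambda^*(\theta);\theta)$ vanishes: since $\lambda^*(\theta)$ minimizes $\varphi(\cdot;\theta)$ over $\Omega_\lambda = \Delta^M \times \R^{M_h}$ and the $\lambda_h$-component is unconstrained (lies in $\R^{M_h}$), the first-order optimality condition forces $\nabla_{\lambda_h}\varphi(\lambda^*(\theta);\theta) = 0$. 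Hence the left-hand side dominates its $\lambda_h$-block, giving $\|\nabla_{\lambda_h}\varphi(\lambda;\theta)\|^2 \leq 2 \ell_{\varphi_\lambda, 1} \big(\varphi(\lambda;\theta) - \varphi(\lambda^*(\theta);\theta)\big)$.

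The remaining step is to replace the reference value $\varphi(\lambda^*(\theta);\theta)$ with $\varphi(\lambda_\rho^*(\theta);\theta)$, which is what appears in the corollary's statement. Since $\lambda^*(\theta)$ is the exact minimizer, we have $\varphi(\lambda^*(\theta);\theta) \leq \varphi(\lambda_\rho^*(\theta);\theta)$, so the naive direction of the inequality is unfavorable; I would instead write $\varphi(\lambda;\theta) - \varphi(\lambda^*(\theta);\theta) = \big(\varphi(\lambda;\theta) - \varphi(\lambda_\rho^*(\theta);\theta)\big) + \big(\varphi(\lambda_\rho^*(\theta);\theta) - \varphi(\lambda^*(\theta);\theta)\big)$ and control the second bracket using Lemma~\ref{lemma:varphi_value_rho_bound}, which bounds it by $\tfrac{\rho}{2} c_{\overline{\lambda}}$. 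Substituting yields
\begin{equation}
\|\nabla_{\lambda_h}\varphi(\lambda;\theta)\|^2 \leq 2 \ell_{\varphi_\lambda, 1}\big(\varphi(\lambda;\theta) - \varphi(\lambda_\rho^*(\theta);\theta)\big) + \ell_{\varphi_\lambda, 1}\rho\, c_{\overline{\lambda}},
\end{equation}
which is exactly the claimed bound.

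The only genuinely delicate point is the vanishing of the $\lambda_h$-gradient at the optimum: one must make sure the $\lambda_h$-coordinates are truly unconstrained in $\Omega_\lambda$ (which holds here since the factor is $\R^{M_h}$, using $M_g=0$ and $\Omega_{\lambda_f}=\Delta^M$), so that projecting onto the $\lambda_h$-block in the optimality condition yields an equality rather than a one-sided variational inequality. Everything else is a direct chaining of the two cited lemmas, so I expect no serious obstacle beyond being careful about which reference point ($\lambda^*$ versus $\lambda_\rho^*$) is used and the direction of the intermediate inequality.
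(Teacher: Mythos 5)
Your proposal is correct and follows essentially the same route as the paper: apply Lemma~\ref{lemma:func_bound_grad_smooth_varphi} together with $\nabla_{\lambda_h}\varphi(\lambda^*(\theta);\theta)=0$, then shift the reference value from $\varphi(\lambda^*(\theta);\theta)$ to $\varphi(\lambda_\rho^*(\theta);\theta)$ at the cost of the $\tfrac{\rho}{2}c_{\overline{\lambda}}$ term from Lemma~\ref{lemma:varphi_value_rho_bound}. Your explicit justification that the $\lambda_h$-block is unconstrained (so the optimality condition is an equality, not a one-sided variational inequality) is a point the paper uses without comment, and you get it right.
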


\begin{proof}[Proof of Corollary~\ref{crlr:varphi_lamh_grad_bound_by_value_gap}]
By applying Lemma~\ref{lemma:func_bound_grad_smooth_varphi}, 
and that $\nabla_{\lambda_h}\varphi(\lambda^*(\theta);\theta) = 0$, we have
\begin{align*}
\|\nabla_{\lambda_h}\varphi(\lambda;\theta) \|^2
=& \|\nabla_{\lambda_h}\varphi(\lambda;\theta) 
 - \nabla_{\lambda_h}\varphi(\lambda^*(\theta);\theta) \|^2
\leq \|\nabla_{\lambda}\varphi(\lambda;\theta) 
- \nabla_{\lambda}\varphi(\lambda^*(\theta);\theta) \|^2 \\
\stackrel{\text{Lemma~\ref{lemma:func_bound_grad_smooth_varphi}}}{\leq} 
& 2 \ell_{\varphi_\lambda, 1}  \big(\varphi(\lambda;\theta) - \min_{\lambda\in \Omega_\lambda(\theta)} \varphi(\lambda;\theta) \big).
\numberthis\label{eq:varphi_lamh_grad_bound_value_gap}
\end{align*}
Applying Lemma~\ref{lemma:varphi_value_rho_bound}, we can further derive
\begin{align*}
\varphi(\lambda;\theta) - \min_{\lambda\in \Omega_\lambda(\theta)} \varphi(\lambda;\theta) 
=& \varphi(\lambda;\theta) - \varphi(\lambda^*(\theta);\theta) 
+ \varphi(\lambda_\rho^*(\theta);\theta) - \varphi(\lambda_\rho^*(\theta);\theta) \\
\stackrel{\text{Lemma~\ref{lemma:varphi_value_rho_bound}}}{\leq} 
& \varphi(\lambda;\theta) - \varphi(\lambda_\rho^*(\theta);\theta) + \frac{\rho}{2} c_{\overline{\lambda}} .
\numberthis\label{eq:varphi_value_gap_bound_rho}
\end{align*}
Combining \eqref{eq:varphi_lamh_grad_bound_value_gap} and \eqref{eq:varphi_value_gap_bound_rho} yields the result.
\end{proof}

\begin{lemma}[Continuity of $\lambda_\rho^*(\theta)$]
\label{lemma:continuity_lam_rho}
For $\lambda_\rho^*(\theta)$ defined in \eqref{eq:def_varphi_rho_lam_rho}, and $\Omega_\lambda(\theta) = \Omega_\lambda$, the following holds
\begin{align*}
\| \lambda_\rho^*(\theta) - \lambda_\rho^*(\theta') \|
\leq & \rho^{-1} \| \nabla_\lambda^2 \varphi(\lambda_\rho^*(\theta);\theta) 
- \nabla_\lambda^2 \varphi(\lambda_\rho^*(\theta');\theta') \| \\
\leq & 2 \rho^{-1} \ell_{f,1}\ell_f \|A_{ag}^\top\|_{\infty,1}^2 \|\theta - \theta'\|.
\numberthis
\end{align*}
\end{lemma}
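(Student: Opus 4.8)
The plan is to view $\lambda_\rho^*(\cdot)$ as the solution map of a $\rho$-strongly convex constrained program and to bound its variation by the standard variational-inequality (VI) argument, crucially exploiting that $\varphi(\,\cdot\,;\theta)$ is a convex \emph{quadratic} in $\lambda$ whose Hessian $\nabla_\lambda^2\varphi(\,\cdot\,;\theta)=A_{ag}\nabla F(\theta)^\top\nabla F(\theta)A_{ag}^\top =: Q(\theta)$ does not depend on $\lambda$ and is positive semidefinite. Writing $N(\theta):=\nabla F(\theta)A_{ag}^\top$, we have $Q(\theta)=N(\theta)^\top N(\theta)$, which will drive both inequalities.

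\textbf{Setup and optimality conditions.} Since $Q(\theta)$ is positive semidefinite, $\varphi(\,\cdot\,;\theta)$ is convex; adding $\tfrac{\rho}{2}\|\lambda\|^2$ makes $\varphi_\rho(\,\cdot\,;\theta)$ $\rho$-strongly convex, so over the fixed convex set $\Omega_\lambda$ the minimizer $\lambda_\rho^*(\theta)$ is unique and satisfies the first-order VI $\langle\nabla_\lambda\varphi_\rho(\lambda_\rho^*(\theta);\theta),\,\mu-\lambda_\rho^*(\theta)\rangle\ge 0$ for every $\mu\in\Omega_\lambda$. Abbreviate $\lambda:=\lambda_\rho^*(\theta)$ and $\lambda':=\lambda_\rho^*(\theta')$.

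\textbf{First inequality.} I would test the VI at $\theta$ with $\mu=\lambda'$ and the VI at $\theta'$ with $\mu=\lambda$, add the two, and substitute $\nabla_\lambda\varphi_\rho=\nabla_\lambda\varphi+\rho\lambda$. This produces $\rho\|\lambda-\lambda'\|^2\le\langle\nabla_\lambda\varphi(\lambda;\theta)-\nabla_\lambda\varphi(\lambda';\theta'),\,\lambda'-\lambda\rangle$. The key reduction inserts the intermediate gradient $\nabla_\lambda\varphi(\lambda';\theta)$: the same-$\theta$ piece equals $\langle Q(\theta)(\lambda-\lambda'),\,\lambda'-\lambda\rangle=-(\lambda-\lambda')^\top Q(\theta)(\lambda-\lambda')\le 0$ by positive semidefiniteness of the ($\lambda$-independent) Hessian $Q(\theta)$, and is therefore discarded; the residual same-$\lambda'$ piece equals $\langle(Q(\theta)-Q(\theta'))\lambda',\,\lambda'-\lambda\rangle$. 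Using $\|\lambda'\|\le 1$ on the probability-simplex component of $\Omega_\lambda$ together with Cauchy--Schwarz gives $\rho\|\lambda-\lambda'\|^2\le\|Q(\theta)-Q(\theta')\|\,\|\lambda-\lambda'\|$, and dividing by $\|\lambda-\lambda'\|$ yields $\|\lambda-\lambda'\|\le\rho^{-1}\|Q(\theta)-Q(\theta')\|$.

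\textbf{Second inequality and main obstacle.} For the Hessian difference I would use the factorization $Q(\theta)-Q(\theta')=N(\theta)^\top(N(\theta)-N(\theta'))+(N(\theta)-N(\theta'))^\top N(\theta')$, so that $\|Q(\theta)-Q(\theta')\|\le(\|N(\theta)\|+\|N(\theta')\|)\,\|N(\theta)-N(\theta')\|$. Assumption~\ref{assmp:lip_obj} bounds $\|N(\cdot)\|=\|\nabla F(\cdot)A_{ag}^\top\|\le\ell_f\|A_{ag}^\top\|_{\infty,1}$, while Assumption~\ref{assmp:smooth_obj} gives $\|N(\theta)-N(\theta')\|=\|(\nabla F(\theta)-\nabla F(\theta'))A_{ag}^\top\|\le\ell_{f,1}\|A_{ag}^\top\|_{\infty,1}\|\theta-\theta'\|$; multiplying delivers the constant $2\ell_{f,1}\ell_f\|A_{ag}^\top\|_{\infty,1}^2$. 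I expect the delicate point to be the first inequality rather than this routine smoothness estimate: namely, cleanly discarding the same-$\theta$ term via the positive semidefiniteness of $Q(\theta)$ and controlling the residual cross term so that the coefficient lands exactly at $\rho^{-1}$, which hinges on the $\lambda$-independence of the Hessian and the boundedness of the simplex component of $\lambda_\rho^*$; any $\theta$-affine constraint term appearing in $\nabla_\lambda\varphi$ is Lipschitz in $\theta$ and can be folded into the same $\mathcal{O}(\|\theta-\theta'\|)$ estimate.
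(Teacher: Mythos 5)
Your strategy---treating $\lambda_\rho^*(\cdot)$ as the solution map of a $\rho$-strongly convex program over the fixed set $\Omega_\lambda$ and running the two-sided variational-inequality argument---is exactly the standard route, and it is the same one the paper takes: the paper's "proof" is a one-line deferral to Lemma~12 of \citep{chen2023three}, whose argument is precisely this strong-monotonicity computation. Your first inequality chain ($\rho\|\lambda-\lambda'\|^2\le\langle\nabla_\lambda\varphi(\lambda;\theta)-\nabla_\lambda\varphi(\lambda';\theta'),\lambda'-\lambda\rangle$, discard the same-$\theta$ piece by positive semidefiniteness of $Q(\theta)$) and your factorization of $Q(\theta)-Q(\theta')$ for the smoothness estimate are both correct.

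There are, however, two places where your argument does not close in \emph{this} paper's setting, and both are worth naming because they are inherited from the simplex-only setting of the cited source. First, you invoke $\|\lambda'\|\le 1$ "on the probability-simplex component of $\Omega_\lambda$," but here $\Omega_\lambda=\Delta^M\times\R^{M_h}$: only the $\lambda_f$-block lives in the simplex, while the $\lambda_h$-block is unconstrained, so $\|\lambda_\rho^*(\theta')\|$ is not a priori bounded by $1$. You must instead import the boundedness hypothesis $\|\lambda_\rho^*(\theta_t)\|\le c_{\overline{\lambda}}$ that Theorem~\ref{thm:two_time_convergence_approx} and Lemma~\ref{lemma:varphi_value_rho_bound} assume, and the resulting constant carries a factor $c_{\overline{\lambda}}$ that is absent from the lemma's displayed bound. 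Second, the gradient $\nabla_\lambda\varphi(\lambda;\theta)=Q(\theta)\lambda-[0^\top,\,c_hH(\theta)^\top]^\top$ contains a $\theta$-dependent, $\lambda$-independent affine term that is invisible to the Hessian; consequently the cross term in your VI bound is $(Q(\theta)-Q(\theta'))\lambda'-[0^\top,\,c_h(H(\theta)-H(\theta'))^\top]^\top$, and the \emph{first} displayed inequality of the lemma---a bound purely by $\|\nabla_\lambda^2\varphi(\cdot;\theta)-\nabla_\lambda^2\varphi(\cdot;\theta')\|$---cannot hold as stated. Your closing remark that this term "can be folded into the same $\mathcal{O}(\|\theta-\theta'\|)$ estimate" is true for the order of the final bound, but it concedes that the intermediate inequality fails and that the final constant is not the claimed $2\rho^{-1}\ell_{f,1}\ell_f\|A_{ag}^\top\|_{\infty,1}^2$ but rather that plus an additive $\rho^{-1}c_h\ell_H$-type contribution. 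In short: the method is right and matches the paper's, but to make the lemma literally true in this setting you must either restrict to the chen2023three situation (simplex domain, no linear term) or restate the bound with the extra factor $c_{\overline{\lambda}}$ and the extra $H$-Lipschitz term.
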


\begin{proof}[Proof of Lemma~\ref{lemma:continuity_lam_rho}]
The proof follows the proof of~\citep[Lemma~12]{chen2023three}.  
\end{proof}

\begin{lemma}\label{lemma:sum_varphi_value_gap_bound}
Suppose Assumptions~\ref{asmp:general},~\ref{assmp:smooth_obj},~\ref{assmp:lip_obj} hold. Let $\{\theta_t\}, \{\lambda_t\}$ be the sequences produced by Algorithm~\ref{alg:approximate_alg} with step sizes $\alpha_t = \alpha > 0$, $\gamma_t = \gamma > 0$. Assume $\|\lambda^*(\theta_t)\|, \|\lambda_\rho^*(\theta_t)\|, \|\lambda_t\| \leq c_{\overline{\lambda}}$. Then for any $\rho > 0$, it holds that
\begin{align}
\frac{1}{T}\sum_{t=0}^{T-1}  
\varphi(\lambda_t;\theta_t) - \varphi(\lambda_\rho^*(\theta_t);\theta_t)
\leq & \frac{2c_{\overline{\lambda}}^2}{\gamma T} (1 
+ 2 \rho^{-1}\alpha T \ell_{f,1}\ell_f^2 \|A_{ag}^\top\|_{\infty,1}^3 )
+ \frac{\gamma }{2T}\sum_{t=0}^{T-1} \| \nabla_{\lambda} 
\varphi(\lambda_{t};\theta_t)\|^2 .
\end{align}  
\end{lemma}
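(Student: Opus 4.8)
The plan is to treat this as a projected-gradient-descent regret bound against a \emph{time-varying} comparator $\lambda_\rho^*(\theta_t)$, and to control the movement of that comparator using the Lipschitz continuity established in Lemma~\ref{lemma:continuity_lam_rho}. First I would invoke Corollary~\ref{crlr:lam_update_convex_diff} with the per-step comparator $\lambda=\lambda_\rho^*(\theta_t)$, which is legitimate since $\lambda_\rho^*(\theta_t)\in\Omega_\lambda$; writing $D_t:=\|\lambda_t-\lambda_\rho^*(\theta_t)\|^2$, this yields
\begin{equation}
2\gamma\big(\varphi(\lambda_t;\theta_t)-\varphi(\lambda_\rho^*(\theta_t);\theta_t)\big)\le \|\lambda_t-\lambda_\rho^*(\theta_t)\|^2-\|\lambda_{t+1}-\lambda_\rho^*(\theta_t)\|^2+\gamma^2\|\nabla_\lambda\varphi(\lambda_t;\theta_t)\|^2 .
\end{equation}

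The obstacle is that the negative term references $\lambda_\rho^*(\theta_t)$ rather than $\lambda_\rho^*(\theta_{t+1})$, so the right-hand side does not telescope directly. I would fix this by adding and subtracting, using the identity $\|x-a\|^2-\|x-b\|^2=(a-b)^\top(a+b-2x)$ with $x=\lambda_{t+1}$, $a=\lambda_\rho^*(\theta_{t+1})$, $b=\lambda_\rho^*(\theta_t)$, to write
\begin{equation}
-\|\lambda_{t+1}-\lambda_\rho^*(\theta_t)\|^2=-D_{t+1}+\big(\lambda_\rho^*(\theta_{t+1})-\lambda_\rho^*(\theta_t)\big)^\top\big(\lambda_\rho^*(\theta_{t+1})+\lambda_\rho^*(\theta_t)-2\lambda_{t+1}\big).
\end{equation}
Cauchy--Schwarz together with the boundedness assumption $\|\lambda_t\|,\|\lambda_\rho^*(\theta_t)\|\le c_{\overline{\lambda}}$ bounds the cross term by $4c_{\overline{\lambda}}\,\|\lambda_\rho^*(\theta_{t+1})-\lambda_\rho^*(\theta_t)\|$. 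Applying Lemma~\ref{lemma:continuity_lam_rho} to the drift of the comparator and then the update rule $\|\theta_{t+1}-\theta_t\|=\alpha\|d_t\|=\alpha\|\nabla F(\theta_t)A_{ag}^\top\lambda_t\|\le \alpha\,\ell_f\|A_{ag}^\top\|_{\infty,1}c_{\overline{\lambda}}$ (from Assumption~\ref{assmp:lip_obj} and $\|\lambda_t\|\le c_{\overline{\lambda}}$) gives the per-step drift bound $4c_{\overline{\lambda}}\cdot 2\rho^{-1}\ell_{f,1}\ell_f\|A_{ag}^\top\|_{\infty,1}^2\cdot \alpha\ell_f\|A_{ag}^\top\|_{\infty,1}c_{\overline{\lambda}}=8c_{\overline{\lambda}}^2\rho^{-1}\alpha\ell_{f,1}\ell_f^2\|A_{ag}^\top\|_{\infty,1}^3$. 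The factor $\|A_{ag}^\top\|_{\infty,1}^2$ from the continuity lemma and the extra $\|A_{ag}^\top\|_{\infty,1}$ from the bound on $\|d_t\|$ combine precisely to the cubic power appearing in the target.

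With the telescoping term $-D_{t+1}$ now in hand, I would sum over $t=0,\dots,T-1$: the distance terms telescope to $D_0-D_T\le D_0\le 4c_{\overline{\lambda}}^2$ (using $-D_T\le 0$ and $D_0\le(\|\lambda_0\|+\|\lambda_\rho^*(\theta_0)\|)^2$), the drift contributions sum to at most $8Tc_{\overline{\lambda}}^2\rho^{-1}\alpha\ell_{f,1}\ell_f^2\|A_{ag}^\top\|_{\infty,1}^3$, and the gradient-norm terms are retained. Dividing by $2\gamma T$ and factoring out $\tfrac{2c_{\overline{\lambda}}^2}{\gamma T}$ yields exactly
\begin{equation}
\frac{1}{T}\sum_{t=0}^{T-1}\big(\varphi(\lambda_t;\theta_t)-\varphi(\lambda_\rho^*(\theta_t);\theta_t)\big)\le \frac{2c_{\overline{\lambda}}^2}{\gamma T}\big(1+2\rho^{-1}\alpha T\ell_{f,1}\ell_f^2\|A_{ag}^\top\|_{\infty,1}^3\big)+\frac{\gamma}{2T}\sum_{t=0}^{T-1}\|\nabla_\lambda\varphi(\lambda_t;\theta_t)\|^2 .
\end{equation}
The main difficulty throughout is the moving comparator: unlike a standard fixed-point PGD regret bound, each step compares against a different $\lambda_\rho^*(\theta_t)$, so the entire argument hinges on quantifying how fast this regularized optimizer drifts with $\theta_t$ --- which is exactly what Lemma~\ref{lemma:continuity_lam_rho} supplies, provided the iterates move slowly, itself a consequence of the boundedness of $\lambda_t$ and the step size $\alpha$.
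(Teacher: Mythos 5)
Your proposal is correct and follows essentially the same route as the paper: invoke Corollary~\ref{crlr:lam_update_convex_diff} with the time-varying comparator $\lambda_\rho^*(\theta_t)$, isolate the non-telescoping residual via the difference-of-squares identity, and control the comparator drift with Lemma~\ref{lemma:continuity_lam_rho} together with the bound $\|d_t\|\le \ell_f\|A_{ag}^\top\|_{\infty,1}c_{\overline{\lambda}}$, yielding the identical constants. The only difference is cosmetic --- you rearrange each step before summing while the paper telescopes first and then bounds the residual sum $J_1$.
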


\begin{proof}[Proof of Lemma~\ref{lemma:sum_varphi_value_gap_bound}]
The proof follows the proof techniques of~\citep[Lemma~15]{chen2023three}. 

First, applying Corollary~\ref{crlr:lam_update_convex_diff} and $\gamma_t = \gamma$ yields
\begin{align}
& 2\gamma \big( \varphi(\lambda_t;\theta_t) 
- \varphi(\lambda_\rho^*(\theta_t);\theta_t)  \big) 
\leq \|\lambda_{t} - \lambda_\rho^*(\theta_t) \|^2
- \|\lambda_{t+1} - \lambda_\rho^*(\theta_t) \|^2
+ \gamma^2  \| \nabla_{\lambda} 
\varphi(\lambda_{t};\theta_t)\|^2 .
\end{align}
Taking telescoping sum of the above inequality and rearranging, we have
\begin{align}
\frac{1}{T}\sum_{t=0}^{T-1}  
\varphi(\lambda_t;\theta_t) - \varphi(\lambda_\rho^*(\theta_t);\theta_t)
\leq & \frac{1}{ 2\gamma T} \Big( \underbrace{ \sum_{t=0}^{T-1}  
\|\lambda_{t} - \lambda_\rho^*(\theta_t) \|^2
- \|\lambda_{t+1} - \lambda_\rho^*(\theta_t) \|^2 }_{J_1} \Big) \nonumber \\
&+ \frac{\gamma }{2T}\sum_{t=0}^{T-1} \| \nabla_{\lambda} 
\varphi(\lambda_{t};\theta_t)\|^2
\label{eq:varphi_rho_sum_bound}
\end{align}
where $J_1$ can be further bounded by 
{\small\begin{align*}
J_1 \leq & \|\lambda_0 - \lambda_\rho^*(\theta_0)\|^2
- \|\lambda_T - \lambda_\rho^*(\theta_{T-1})\|^2
+ \sum_{t=0}^{T-2} \|2\lambda_{t+1} - \lambda_\rho^*(\theta_{t+1}) - \lambda_\rho^*(\theta_{t})\| \| \lambda_\rho^*(\theta_{t+1}) - \lambda_\rho^*(\theta_{t})\|   \\
\leq & 4 c_{\overline{\lambda}}^2 + 4 c_{\overline{\lambda}}
\sum_{t=0}^{T-2} \| \lambda_\rho^*(\theta_{t+1}) - \lambda_\rho^*(\theta_{t})\| 
\leq 4 c_{\overline{\lambda}}^2 + 8 c_{\overline{\lambda}}
\sum_{t=0}^{T-2} \rho^{-1} \alpha_t \ell_{f,1}\ell_f \|A_{ag}^\top\|_{\infty,1}^2 \|d_t\|
\end{align*}}
where the last inequality follows from Lemma~\ref{lemma:continuity_lam_rho} and the update of $\theta_t$.

Finally, taking $\alpha_t = \alpha$, plugging the above bound for $J_1$ back into~\eqref{eq:varphi_rho_sum_bound}, and
bounding $\|d_t\|$ by Assumption~\ref{assmp:lip_obj} and that $\|\lambda_t\| \leq c_{\overline{\lambda}}$ prove the result.
\end{proof}

\begin{proof}[Proof of Theorem~\ref{thm:two_time_convergence_approx}]
We consider the following Lyapunov function with a constant vector $\lambda = [\lambda_f; \lambda_h] \in \Omega_\lambda $, where $\lambda_f \in \Delta^M$,  $\lambda_h \in \R^{M_h}$.
\begin{align}
\mathbb{V}_t \coloneqq 
\underbrace{\lambda_f^\top A F(\theta_t)}_{\mathbb{V}_{f,t}}
+ \underbrace{\underbrace{\frac{\alpha_0}{2\gamma_0} \|\lambda_{f,t} - \lambda_f\|^2}_{\mathbb{V}_{\lambda_f,t}}
+ \underbrace{\frac{\alpha_0}{2\gamma_0} \|\lambda_{h,t} - \lambda_h\|^2}_{\mathbb{V}_{\lambda_h,t}}}_{\mathbb{V}_{\lambda,t}}
+ \underbrace{\underbrace{\lambda_h^\top H(\theta_t)}_{\mathbb{V}_{h,1,t}} + \underbrace{c_{V_h} \|H(\theta_t)\|_1}_{\mathbb{V}_{h,3,t}}}_{\mathbb{V}_{h,t}}.
\end{align}
Recall that $\lambda_{t} = [\lambda_{f,t}; \lambda_{h,t}]$, and the algorithm takes the update 
$\theta_{t+1} = \theta_t + \alpha_t d_t$ with
$d_t = \nabla F(\theta_t) A_{ag}^\top \lambda_{t}$.
From Assumption~\ref{assmp:smooth_obj}, the smoothness of the objectives,
and Lemma~\ref{lemma:smooth_property_A},  the function $\lambda_f^\top A F (\theta)$ is smooth, thus
\begin{align*}
\mathbb{V}_{f,t+1} - \mathbb{V}_{f,t} 
\leq & 
\langle \nabla F (\theta_{t})A^\top \lambda_f, \theta_{t+1} - \theta_t \rangle
+ \frac{\ell_{f,1}}{2} \|A^\top \lambda_f\|_{1} \|\theta_{t+1} - \theta_t\|^2 \\
= &
\alpha_t \langle \nabla F (\theta_{t})A^\top \lambda_f, d_{t} \rangle
+ \frac{\ell_{f,1}}{2} \alpha_t^2 \|A^\top \lambda_f\|_{1}  \| d_t \|^2.
\numberthis \label{eq:Vf_diff_t}
\end{align*}

By Lemma~\ref{lemma:update_lam_property}, taking $\gamma_t > 0$ and rearranging, we have
\begin{align*}
\langle \nabla F (\theta_{t})A^\top \lambda_f, d_t \rangle \leq & 
\frac{1}{2 \gamma_t } \big(\|\lambda_{f,t} - \lambda_f \|^2
- \|\lambda_{f,t+1} - \lambda_f \|^2 \big) \\
&+ \frac{1}{2}\gamma_t  \|\nabla_{\lambda_f} \varphi(\lambda_{t};\theta_t)\|^2
- \langle \lambda_{f,t}, \nabla_{\lambda_f} \varphi(\lambda_{t};\theta_t)\rangle  .
\numberthis \label{eq:lamf_diff_t}
\end{align*}

Combining \eqref{eq:Vf_diff_t} and \eqref{eq:lamf_diff_t},
and choosing $\frac{\alpha_t}{\gamma_t} = \frac{\alpha_0}{\gamma_0}$ for all $t \in [T]$, we have
\begin{align*}
& \mathbb{V}_{f,t+1} - \mathbb{V}_{f,t} 
+ \mathbb{V}_{\lambda_f,t+1} - \mathbb{V}_{\lambda_f,t} \\
\leq & 
\frac{\ell_{f,1}}{2} \alpha_t^2 \|A^\top \lambda_f\|_{1}  \| d_t \|^2 
+ \frac{1}{2} \alpha_t\gamma_t  \|\nabla_{\lambda_f} \varphi(\lambda_{t};\theta_t)\|^2
-\alpha_t \langle \lambda_{f,t}, \nabla_{\lambda_f} \varphi(\lambda_{t};\theta_t)\rangle.
\numberthis \label{eq:Vf_lamf_diff_t}
\end{align*}

By the smoothness of $\lambda_h^\top H(\theta)$, 
and $\nabla_{\lambda_h} \varphi(\lambda_t;\theta_t)
= - \nabla H(\theta_t)^\top d_t - c_h H(\theta_t)$,
 it holds that
\begin{align*}
\mathbb{V}_{h,1,t+1} - \mathbb{V}_{h,1,t} 
\leq & \alpha_t  \lambda_h^\top \nabla H(\theta_t)^\top d_t +\frac{\ell_{f,1}}{2} \alpha_t^2 
\|B_h^\top \lambda_h\|_{1} \|d_t\|^2 \\
= & -\alpha_t c_h \lambda_h^\top H(\theta_t) +\frac{\ell_{f,1}}{2} \alpha_t^2 
\|B_h^\top \lambda_h\|_{1} \|d_t\|^2 \\
& - \alpha_t \langle \lambda_h, \nabla_{\lambda_h}\varphi(\lambda_t;\theta_t) \rangle .
\numberthis
\end{align*}
Bounding the last term in the above inequality by Lemma~\ref{lemma:update_lam_property}, and taking $\gamma_t > 0$, we have
\begin{align*}
\mathbb{V}_{h,1,t+1} - \mathbb{V}_{h,1,t} 
\leq  &  
-\alpha_t c_h \lambda_h^\top H(\theta_t) +\frac{\ell_{f,1}}{2} \alpha_t^2 
\|B_h^\top \lambda_h\|_{1} \|d_t\|^2 
+ \frac{1}{2}\alpha_t \gamma_t \| \nabla_{\lambda_h} \varphi(\lambda_{t};\theta_t) \|^2 \\
& - \alpha_t\langle \lambda_{h,t}, \nabla_{\lambda_h}\varphi(\lambda_t;\theta_t) \rangle
+ \frac{\alpha_t}{2\gamma_t} \big(\|\lambda_{h,t} - \lambda_h \|^2 - \|\lambda_{h,t+1} - \lambda_h \|^2 \big) . \numberthis \label{eq:H_lamh_diff_t}
\end{align*}

Adding up \eqref{eq:Vf_lamf_diff_t} and \eqref{eq:H_lamh_diff_t} yields
\begin{align*}
& \mathbb{V}_{f,t+1} - \mathbb{V}_{f,t}
+ \mathbb{V}_{\lambda,t+1} - \mathbb{V}_{\lambda,t}
+ \mathbb{V}_{h,1,t+1} - \mathbb{V}_{h,1,t}  \\
\leq & - \alpha_t \langle \lambda_{t}, \nabla_{\lambda} \varphi(\lambda_{t};\theta_t)\rangle 
+ \frac{1}{2} \gamma_t \alpha_t \|\nabla_\lambda \varphi(\lambda_t; \theta_t)\|^2  - \alpha_t c_h \lambda_{h}^\top H(\theta_t)  
+ \frac{\ell_{f,1}}{2} \alpha_t^2  \|A_{ag}^\top \lambda\|_{1} \|d_t\|^2 \\
\leq & - \alpha_t \|d_t\|^2 
+ \alpha_t c_h (\lambda_{h,t} - \lambda_{h})^\top H(\theta_t) 
+ \frac{1}{2} \gamma_t \alpha_t \|\nabla_{\lambda} \varphi(\lambda_t; \theta_t)\|^2
+ \frac{\ell_{f,1}}{2} \alpha_t^2  \|A_{ag}^\top \lambda\|_{1} \|d_t\|^2 
\numberthis \label{eq:F_H_lam_diff_t}
\end{align*}
where the last inequality uses the fact that
$\langle \lambda_{t}, \nabla_{\lambda} \varphi(\lambda_{t};\theta_t)\rangle 
= \|d_t\|^2 - c_h \lambda_{h,t}^\top H(\theta_t)$.

Using the fact that $\nabla_{\lambda_h} \varphi(\lambda_t;\theta_t)
= - \nabla H(\theta_t)^\top d_t - c_h H(\theta_t)$, and with similar arguments as~\eqref{eq:H_absolute_deriv} in Lemma~\ref{lemma:merit_descent_GH}, we can further derive that
\begin{align*}
|H(\theta_{t+1}) |_{\rm ab} \leq &
| H(\theta_t) - \alpha_t c_h H(\theta_t) - \alpha_t\nabla_{\lambda_h} \varphi(\lambda_t;\theta_t)  |_{\rm ab}
+\frac{\ell_{f,1}}{2} \alpha_t^2 \|B_h^\top\|_{\infty,1} \|d_t\|^2 \mathbf{1}  \\
\leq & (1 - \alpha_t c_h ) | H(\theta_t) |_{\rm ab}
+\frac{\ell_{f,1}}{2} \alpha_t^2 \|B_h^\top\|_{\infty,1} \|d_t\|^2 \mathbf{1}
+ \alpha_t |\nabla_{\lambda_h} \varphi(\lambda_t;\theta_t)  |_{\rm ab} .
\numberthis\label{eq:H_ab_two_time}
\end{align*}

Therefore,
\begin{align}\label{eq:bound_V_h2_l1_two_time}
\mathbb{V}_{h,2,t+1} - \mathbb{V}_{h,3,t} 
\leq -\alpha_t c_h c_{V_h} \|H(\theta_t)\|_1
+\frac{\ell_{f,1}}{2} c_{V_h} M_h \alpha_t^2 \|B_h^\top\|_{\infty,1} \|d_t\|^2 
+ \alpha_t c_{V_h} \|\nabla_{\lambda_h} \varphi(\lambda_t;\theta_t)  \|_1 .
\end{align}

Combining \eqref{eq:F_H_lam_diff_t} and \eqref{eq:bound_V_h2_l1_two_time}, 
and by choosing step sizes $\alpha_t $, $\gamma_t $, parameter $c_{V_h} $ such that 
\begin{align}
\frac{\ell_{f,1}}{2} c_{V_h} M_h \alpha_t^2 \|B_h^\top\|_{\infty,1}
+ \frac{\ell_{f,1}}{2} \alpha_t^2  \|A_{ag}^\top \lambda\|_{1} \leq \frac{1}{2},
\end{align}
we have
{\small\begin{align*}
\mathbb{V}_{t+1} - \mathbb{V}_{t} \leq &
 - \frac{1}{2}\alpha_t \|d_t\|^2 
- \alpha_t c_h  (c_{V_h} -\|\lambda_h - \lambda_{h,t} \|_1) \|H(\theta_t)\|_1 
+ \frac{1}{2} \gamma_t \alpha_t \ell_{\varphi}^2 + \alpha_t c_{V_h} \|\nabla_{\lambda_h} \varphi(\lambda_t;\theta_t)  \|_1 .
\numberthis
\end{align*}}
Taking telescoping sum of the above inequality over $t = 0, \ldots, T-1$, and applying that $\|\nabla_{\lambda_h} \varphi(\lambda_t;\theta_t)  \|_1 \leq \sqrt{M_h} \|\nabla_{\lambda_h} \varphi(\lambda_t;\theta_t)  \|$, we have
\begin{align*}
\sum_{t=0}^{T-1}  \mathbb{V}_{t+1} - \mathbb{V}_{t} \leq &
\sum_{t=0}^{T-1} - \frac{1}{2}\alpha_t \|d_t\|^2 
- \alpha_t c_h  (c_{V_h} -\|\lambda_h - \lambda_{h,t} \|_1) \|H(\theta_t)\|_1
+\frac{1}{2} \gamma_t \alpha_t \ell_{\varphi}^2 \\
& \qquad + \alpha_t c_{V_h}\sqrt{M_h} \|\nabla_{\lambda_h} \varphi(\lambda_t;\theta_t)  \|
\numberthis\label{eq:Vt_sum_bound_two_time}
\end{align*}
where $\sum_{t=0}^{T-1} \|\nabla_{\lambda_h} \varphi(\lambda_t;\theta_t) \|$ can be further bounded by applying Lemma~\ref{lemma:sum_varphi_value_gap_bound} and Corollary~\ref{crlr:varphi_lamh_grad_bound_by_value_gap} along with Jensen's inequality as follows 
\begin{align*}
& \Big(\frac{1}{T}\sum_{t=0}^{T-1} \|\nabla_{\lambda_h} \varphi(\lambda_t;\theta_t) \| \Big)^2
\leq \frac{1}{T}\sum_{t=0}^{T-1} \|\nabla_{\lambda_h} \varphi(\lambda_t;\theta_t) \|^2 \\
\stackrel{\text{Corollary~\ref{crlr:varphi_lamh_grad_bound_by_value_gap}}}{\leq} & \frac{1}{T}\sum_{t=0}^{T-1} 2 \ell_{\varphi_\lambda, 1} \big(\varphi(\lambda_t;\theta_t) - \varphi(\lambda_\rho^*(\theta_t);\theta_t)\big) + \ell_{\varphi_\lambda, 1}{\rho} c_{\overline{\lambda}} \\
\stackrel{\text{Lemma~\ref{lemma:sum_varphi_value_gap_bound}}}{\leq} & 4 \ell_{\varphi_\lambda, 1} c_{\overline{\lambda}}^2 \frac{1}{\gamma T} (1 + 2 \rho^{-1}\alpha  T \ell_{f,1}\ell_f^2 \|A_{ag}^\top\|_{\infty,1}^3 )
+ \frac{\gamma }{2T}\sum_{t=0}^{T-1} \| \nabla_{\lambda} 
\varphi(\lambda_{t};\theta_t)\|^2 + {\rho} \ell_{\varphi_\lambda, 1} c_{\overline{\lambda}} 
\numberthis
\end{align*}
where $\| \nabla_{\lambda} 
\varphi(\lambda_{t};\theta_t)\|^2 = \| \nabla_{\lambda_h} 
\varphi(\lambda_{t};\theta_t)\|^2 + \| \nabla_{\lambda_f} 
\varphi(\lambda_{t};\theta_t)\|^2 \lesssim \| \nabla_{\lambda_h} 
\varphi(\lambda_{t};\theta_t)\|^2 + \|d_t\|^2$.
Plugging the above inequality back into~\eqref{eq:Vt_sum_bound_two_time}, choosing $\rho = \Theta\Big((\frac{\alpha}{\gamma})^{\frac{1}{2}}\Big)$, and rearranging yield
\begin{align}
\frac{1}{T}\sum_{t=0}^{T-1}  \|d_t\|^2 
+  \|H(\theta_t)\|_1
= \mathcal{O}\Big(\frac{1}{\alpha T} + \frac{1}{(\gamma T)^{\frac{1}{2}}}  + (\frac{\alpha}{\gamma})^{\frac{1}{4}} + \gamma \Big) .
\end{align}
Choosing $\alpha = \Theta(T^{-\frac{5}{6}})$, $\gamma = \Theta(T^{-\frac{1}{6}})$ proves the result. 
\end{proof}

\subsection{Sharper analysis with a different merit function: proof of Theorem~\ref{thm:finite_time_convergence_approx}}
\label{sub_app:convergence_different_merit}

In this section, we provide an analysis of convergence of Algorithm~\ref{alg:approximate_alg} with a different merit function and faster convergence rate.
We first present the auxiliary lemmas and then prove Theorem~\ref{thm:finite_time_convergence_approx}.

\begin{lemma}
\label{lemma:lam_d_bounded}
Suppose Assumptions~\ref{asmp:general} and~\ref{assmp:lip_obj} hold.
For $\{\theta_t\}, \{\lambda_t\}$ produced by Algorithm~\ref{alg:approximate_alg} with $M_g=0$ and $\Omega_{\lambda_f}(\theta_t) = \Delta^M$,
and for all $t = 0,\ldots, T$, $\|d_t\|$ can be bounded by
\begin{align}\label{eq:dt_bounded_by_lamht}
\|d_t\|
\leq \ell_{f,1} \|A_{ag}^\top\|_1 ( 1 + \|\lambda_{h,t}\|_1),
\end{align}
and $\|\nabla_{\lambda} \varphi(\lambda_t;\theta_t)\|$ can be bounded by
\begin{align}\label{eq:nabla_varphi_bounded_by_dt_ht}
& \|\nabla_{\lambda} \varphi(\lambda_t; \theta_t)\|^2
\leq 2 \|A_{ag}\|^2 M \ell_f^2 \|d_t\|^2 + 2 c_h^2 \| H(\theta_t)\|^2 .
\numberthis  
\end{align}
\end{lemma}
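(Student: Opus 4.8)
The plan is to prove both inequalities by direct computation, leaning on the explicit form of $d_t$ from Algorithm~\ref{alg:approximate_alg} and on the fact that the projection in~\eqref{eq:update_lam_tk_FGH} keeps $\lambda_{f,t}$ on the simplex. First I would establish~\eqref{eq:dt_bounded_by_lamht}. Since $M_g=0$ we have $A_{ag}=[A;B_h]$ and $\lambda_t=[\lambda_{f,t};\lambda_{h,t}]$, so $d_t=-\nabla F(\theta_t)A_{ag}^\top\lambda_t$ with $A_{ag}^\top\lambda_t=A^\top\lambda_{f,t}+B_h^\top\lambda_{h,t}$. Writing $\nabla F(\theta_t)=[\nabla f_1(\theta_t),\dots,\nabla f_M(\theta_t)]$ and using that Assumption~\ref{assmp:lip_obj} bounds each gradient by the Lipschitz constant, $\|\nabla f_m(\theta_t)\|\le \ell_f$, one gets the column-wise estimate $\|\nabla F(\theta_t)v\|\le\sum_m|v_m|\|\nabla f_m(\theta_t)\|\le \ell_f\|v\|_1$ for every $v\in\R^M$. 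Applying this with $v=A_{ag}^\top\lambda_t$ and the submultiplicative bound $\|A_{ag}^\top\lambda_t\|_1\le\|A_{ag}^\top\|_1\|\lambda_t\|_1$ reduces everything to bounding $\|\lambda_t\|_1$. The key structural point is that $\Omega_{\lambda_f}(\theta_t)=\Delta^M$, so $\|\lambda_{f,t}\|_1=1$, whence $\|\lambda_t\|_1=1+\|\lambda_{h,t}\|_1$, and combining the pieces yields~\eqref{eq:dt_bounded_by_lamht}.

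Next I would prove~\eqref{eq:nabla_varphi_bounded_by_dt_ht}. Differentiating $\varphi(\lambda;\theta)=\tfrac12\|\nabla F(\theta)A_{ag}^\top\lambda\|^2-c_h\lambda_h^\top H(\theta)$ (the $G$-term is absent since $M_g=0$) gives
\begin{align*}
\nabla_\lambda \varphi(\lambda_t;\theta_t)=A_{ag}\nabla F(\theta_t)^\top \nabla F(\theta_t)A_{ag}^\top\lambda_t-[0^\top,\,c_h H(\theta_t)^\top]^\top .
\end{align*}
The crucial substitution is $\nabla F(\theta_t)A_{ag}^\top\lambda_t=-d_t$, which turns the first summand into $-A_{ag}\nabla F(\theta_t)^\top d_t$ and so ties this bound directly to $d_t$. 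Applying $\|a+b\|^2\le 2\|a\|^2+2\|b\|^2$ splits the squared norm into a piece controlled by $\|A_{ag}\|^2\|\nabla F(\theta_t)^\top d_t\|^2$ and a piece equal to $c_h^2\|H(\theta_t)\|^2$. Using the coordinate bound $|\nabla f_m(\theta_t)^\top d_t|\le \ell_f\|d_t\|$ then gives $\|\nabla F(\theta_t)^\top d_t\|^2\le M\ell_f^2\|d_t\|^2$, which delivers the stated inequality.

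Neither bound hides a genuine obstacle; both are essentially matrix-norm bookkeeping. The two points demanding care are: (i) keeping the norm conventions consistent — the induced $\ell_1$ matrix norm $\|A_{ag}^\top\|_1$ paired with the $\ell_2$ vector norm, and the passage from $\ell_f$-Lipschitzness of each $f_m$ to the gradient bound $\|\nabla f_m\|\le\ell_f$ used both column-wise (for~\eqref{eq:dt_bounded_by_lamht}) and row-wise (for~\eqref{eq:nabla_varphi_bounded_by_dt_ht}); and (ii) recognizing that the simplex constraint $\lambda_{f,t}\in\Delta^M$ is exactly what produces the additive constant $1$, so that the growth of $\|d_t\|$ is governed only by $\|\lambda_{h,t}\|_1$. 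The one nonroutine observation is the identity $\nabla F(\theta_t)A_{ag}^\top\lambda_t=-d_t$, which is what couples the gradient-norm bound to the direction bound and lets the second estimate be stated in terms of $\|d_t\|$ rather than $\|\lambda_t\|$.
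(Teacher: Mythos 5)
Your proposal matches the paper's own proof essentially step for step: the first bound comes from $\|\nabla F(\theta_t)v\|\le \ell_f\|v\|_1$ together with $\|\lambda_{f,t}\|_1=1$ on the simplex, and the second from substituting $d_t$ into $\nabla_\lambda\varphi$ and applying $\|a+b\|^2\le 2\|a\|^2+2\|b\|^2$. The only discrepancy is cosmetic: your derivation naturally yields the Lipschitz constant $\ell_f$ in \eqref{eq:dt_bounded_by_lamht} whereas the statement writes $\ell_{f,1}$, which appears to be a typo in the paper since only Assumptions~\ref{asmp:general} and~\ref{assmp:lip_obj} are invoked.
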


\begin{proof}
\label{proof:lamh_bounded}
Since $d_t = \nabla F(\theta_t) A_{ag}^\top \lambda_t$, we have
\begin{align}
\|d_t\| = \|\nabla F(\theta_t) A_{ag}^\top \lambda_t\| 
\leq \ell_{f,1}\|A_{ag}^\top\|_1 \|\lambda_t\|_1
\leq \ell_{f,1}\|A_{ag}^\top\|_1 ( 1 + \|\lambda_{h,t}\|_1)
\end{align}
which proves~\eqref{eq:dt_bounded_by_lamht}.

Furthermore, 
invoking that $\nabla_{\lambda} \varphi(\lambda_t;\theta_t)
= A_{ag} \nabla F(\theta_t)^\top d_t - c_h H(\theta_t) $, we have
\begin{align*}
  & \|\nabla_{\lambda} \varphi(\lambda_t; \theta_t)\|^2
= \|A_{ag} \nabla F(\theta_t)^\top d_t - c_h H(\theta_t)\|^2 \\
\leq & 2\|A_{ag} \nabla F(\theta_t)^\top d_t\|^2 + 2 \|c_h H(\theta_t)\|^2 
\leq 2 \|A_{ag}\|^2 M \ell_f^2 \|d_t\|^2 + 2 c_h^2 \| H(\theta_t)\|^2 
\numberthis  
\end{align*}
which proves \eqref{eq:nabla_varphi_bounded_by_dt_ht}.
\end{proof}

\begin{lemma}
\label{lemma:ht_square_diff}
Suppose Assumptions~\ref{asmp:general},~\ref{assmp:smooth_obj}, and~\ref{assmp:lip_obj} hold, and $M_g=0$.
For $\{\theta_t\}, \{\lambda_t\}$ produced by Algorithm~\ref{alg:approximate_alg},
further assume $\{\lambda_{h,t} \}$ are bounded on the trajectory,
i.e., $\|\lambda_{h,t}\|_1 \leq c_{\lambda_h}$.
For all $t = 0,\ldots, T$,
choose $\alpha_t$ such that $\alpha_t\leq \frac{c_{\alpha,h}}{\ell_{f,1} \|A_{ag}^\top\|_1 ( 1 + c_{\lambda_h})}$,
and $\alpha_t \|H(\theta_t)\| \leq c_{\alpha, h}$ for any $0 < c_{\alpha, h}< \infty$, then it holds that
\begin{align*}
& \|H(\theta_{t+1})\|^2 - \|H(\theta_{t})\|^2
\leq \alpha_t 2H(\theta_t)^\top \nabla H(\theta_t)^\top d_t + \frac{1}{2}\alpha_t^2 \ell_{H^2,1,t} \|d_t\|^2 \\
&\text{with}~~
\ell_{H^2,1,t} = 2M \|B_h\|^2 \ell_f^2 
+ 2(\alpha_t^{-1} + \ell_H ) c_{\alpha,h} \sqrt{M}\ell_{f,1}\|B_h\|,
~~\text{and}~~\ell_H = \|B_h\| \sqrt{M}\ell_f  .
\numberthis
\end{align*}
\end{lemma}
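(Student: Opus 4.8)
The plan is to establish a smoothness-type (descent-like) inequality for the squared constraint norm $\|H(\theta)\|^2$ by treating it as a composition of the smooth map $H(\theta) = B_h F(\theta) + b_h$ with the squaring operation, then carefully controlling the second-order remainder term using the boundedness of $\|\lambda_{h,t}\|_1$ (hence of $\|d_t\|$ via Lemma~\ref{lemma:lam_d_bounded}) and the step-size conditions. The key observation is that although $F$ is smooth (Assumption~\ref{assmp:smooth_obj}) and $H$ inherits smoothness with constant depending on $\|B_h\|$, the \emph{squared} function $\|H\|^2$ is not globally smooth; its effective smoothness constant along the trajectory depends on the local magnitude $\|H(\theta_t)\|$, which is exactly why the hypothesis $\alpha_t\|H(\theta_t)\| \le c_{\alpha,h}$ appears.

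\textbf{First I would} write the exact second-order expansion. For each component $h_m(\theta) = B_{h,m} F(\theta) + b_{h,m}$, I would use $h_m(\theta_{t+1}) = h_m(\theta_t) + \alpha_t \nabla h_m(\theta_t)^\top d_t + \frac{1}{2}\alpha_t^2 d_t^\top \nabla^2 h_m(\tilde\theta) d_t$ for some point $\tilde\theta$ on the segment, where $\nabla^2 h_m$ has operator norm bounded by $\ell_{f,1}\|B_{h,m}\|_1 \le \ell_{f,1}\|B_h^\top\|_{\infty,1}$ by the argument in Lemma~\ref{lemma:smooth_property_A}. Squaring and summing over $m$, the difference $\|H(\theta_{t+1})\|^2 - \|H(\theta_t)\|^2$ expands into a leading linear term $2\alpha_t H(\theta_t)^\top \nabla H(\theta_t)^\top d_t$ plus cross terms. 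The cross terms split into: (i) a purely gradient-quadratic piece $\alpha_t^2 \|\nabla H(\theta_t)^\top d_t\|^2$, bounded by $\alpha_t^2 M\|B_h\|^2\ell_f^2\|d_t\|^2$ using the Lipschitz bound $\|\nabla F(\theta_t)\|\le \sqrt{M}\ell_f$; and (ii) the terms involving the Hessian remainder, which come multiplied by either $\alpha_t^2\|H(\theta_t)\|\,\|d_t\|\cdot(\text{Hessian bound})$ or higher powers of $\alpha_t$.

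\textbf{The main obstacle} is bookkeeping the remainder terms so that they collapse into the stated form of $\ell_{H^2,1,t}$, specifically the term $2(\alpha_t^{-1}+\ell_H)c_{\alpha,h}\sqrt{M}\ell_{f,1}\|B_h\|$. The factor $\alpha_t^{-1}$ is the telltale sign that one of the remainder contributions carries only a single power of $\alpha_t$ after using $\alpha_t\|H(\theta_t)\| \le c_{\alpha,h}$ to absorb one factor of $\|H(\theta_t)\|$ into the constant $c_{\alpha,h}$; the $\ell_H$ piece comes from bounding $\|H(\theta_{t+1})\| \le \|H(\theta_t)\| + \alpha_t\ell_H\|d_t\|$ (Lipschitzness of $H$ with $\ell_H = \|B_h\|\sqrt{M}\ell_f$) when controlling the endpoint value that multiplies the Hessian remainder. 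I would handle this by isolating each Hessian-remainder term, applying Cauchy--Schwarz to separate $\|H\|$-factors from $\|d_t\|$-factors, and then invoking $\alpha_t\|H(\theta_t)\|\le c_{\alpha,h}$ (and the analogous bound at $\theta_{t+1}$, obtained via the Lipschitz step estimate) to convert the excess $\|H\|$-factor into $c_{\alpha,h}$ while retaining exactly one power of $\alpha_t$ less than expected, producing the $\alpha_t^{-1}$-type coefficient.

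\textbf{Finally I would} collect all pieces multiplying $\alpha_t^2\|d_t\|^2$ to read off $\ell_{H^2,1,t} = 2M\|B_h\|^2\ell_f^2 + 2(\alpha_t^{-1}+\ell_H)c_{\alpha,h}\sqrt{M}\ell_{f,1}\|B_h\|$, confirm that the boundedness $\|\lambda_{h,t}\|_1 \le c_{\lambda_h}$ together with the step-size choice $\alpha_t \le c_{\alpha,h}/(\ell_{f,1}\|A_{ag}^\top\|_1(1+c_{\lambda_h}))$ guarantees the auxiliary bound $\alpha_t\|d_t\| \le c_{\alpha,h}$ needed in the remainder estimates (via $\|d_t\| \le \ell_{f,1}\|A_{ag}^\top\|_1(1+c_{\lambda_h})$ from Lemma~\ref{lemma:lam_d_bounded}), and thereby arrive at the claimed inequality. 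The proof is essentially a careful second-order Taylor argument; the conceptual content lies entirely in recognizing that the non-smoothness of $\|H\|^2$ is tamed by the smallness condition on $\alpha_t\|H(\theta_t)\|$.
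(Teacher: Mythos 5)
Your proposal is correct and follows essentially the same route as the paper's proof: a second-order Taylor/mean-value expansion of $\|H(\theta)\|^2$ along the update, with the Hessian of the squared norm split into the gradient-Gram part (yielding $2M\|B_h\|^2\ell_f^2$) and a part proportional to $\|H\|$ times $\nabla^2 H$, where $\alpha_t\|H(\theta_t)\|\le c_{\alpha,h}$, $\alpha_t\|d_t\|\le c_{\alpha,h}$, and the Lipschitz bound $\|H(\tilde\theta)\|\le\|H(\theta_t)\|+\ell_H c_{\alpha,h}$ produce exactly the $(\alpha_t^{-1}+\ell_H)c_{\alpha,h}\sqrt{M}\ell_{f,1}\|B_h\|$ coefficient. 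The only cosmetic difference is that the paper applies the mean-value theorem directly to the composite scalar $H^\top H$ at a single intermediate point, which avoids the extra $\alpha_t^3$ and $\alpha_t^4$ cross terms your componentwise-then-square bookkeeping has to absorb.
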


\begin{proof}
\label{proof:ht_square_diff}
By choosing $\alpha_t\leq \frac{c_{\alpha,h}}{\ell_{f,1} \|A_{ag}^\top\|_1 ( 1 + \|\lambda_{h,t}\|_1)}$, and invoking \eqref{eq:dt_bounded_by_lamht} in 
Lemma~\ref{lemma:lam_d_bounded}, we have
\begin{align}
\alpha_t \|d_t\| \leq  c_{\alpha,h} .
\end{align}

By the mean-value theorem, for all $t = 0,\ldots, T$, there exists $\tilde{\theta}_t$  such that
\begin{align}
\|H(\theta_{t+1})\|^2 - \|H(\theta_{t})\|^2
\leq \alpha_t 2H(\theta_t)^\top \nabla H(\theta_t)^\top d_t + \frac{1}{2}\alpha_t^2 \|\nabla^2 (H(\tilde{\theta}_t)^\top H(\tilde{\theta}_t)) \| \|d_t\|^2 .
\label{eq:smooth_H_square_raw}
\end{align}

The term $\|\nabla^2 (H(\tilde{\theta}_t)^\top H(\tilde{\theta}_t)) \|$ can be upper bounded by
\begin{align}
\|\nabla^2 (H(\tilde{\theta}_t)^\top H(\tilde{\theta}_t)) \|
\leq & 2\|\nabla H(\tilde{\theta}_t) \nabla H(\tilde{\theta}_t)^\top\| 
+ 2\|\nabla^2 H(\tilde{\theta}_t)\| \|H(\tilde{\theta}_t)\| 
\nonumber \\
\leq & 2 M \|B_h\|^2 \ell_f^2
+ 2 \|H(\tilde{\theta}_t)\|\sqrt{M}\ell_{f,1} \|B_h\|.
\end{align}

Since $H(\tilde{\theta}_t)$ is $\ell_H$-Lipschitz continuous with $\ell_H = \|B_h\| \sqrt{M}\ell_f$, and $\tilde{\theta}_t$ lies on the line segment of $\theta_t$ and $\theta_{t+1}$ with $\|\theta_{t+1} - \theta_t\| = \alpha_t\|d_t\|$, therefore,
\begin{align}
\|H(\tilde{\theta}_t)\| \leq \|H(\theta_t)\| 
+ \alpha_t \ell_H \|d_t\| 
\leq \|H(\theta_t)\| 
+ \ell_H c_{\alpha,h} .
\end{align}
Plugging the above inequality into~\eqref{eq:smooth_H_square_raw} yields
\begin{align*}
& \|H(\theta_{t+1})\|^2 - \|H(\theta_{t})\|^2
\leq \alpha_t 2H(\theta_t)^\top \nabla H(\theta_t)^\top d_t + \frac{1}{2}\alpha_t^2 \|\nabla^2 H(\tilde{\theta}_t)^\top H(\tilde{\theta}_t) \| \|d_t\|^2 \\
\leq & \alpha_t 2H(\theta_t)^\top \nabla H(\theta_t)^\top d_t + \frac{1}{2}\alpha_t^2 \Big(2M \|B_h\|^2 \ell_f^2 
+ 2(\alpha_t^{-1} + \ell_H ) c_{\alpha,h} \sqrt{M}\ell_{f,1}\|B_h\| \Big) \|d_t\|^2 .
\numberthis
\end{align*}
The proof is complete.
\end{proof}

\begin{lemma}[Smoothness of $\varphi$]
\label{lemma:smooth_varphi_theta}
Under Assumptions~\ref{assmp:smooth_obj},~\ref{assmp:lip_obj},~\ref{assmp:sharper}-2, 
$\varphi(\lambda;\theta)$ is $\ell_{\varphi_\theta,1}$-smooth w.r.t. $\theta$ for all $\lambda,\theta$ on the trajectory of Algorithm~\ref{alg:approximate_alg}.
with $\ell_{\varphi_\theta,1}= M(\ell_{f,1}^2 + \ell_{f,2}\ell_f)
(\|A\| + \|B_h\| c_{\lambda_h})^2 + c_h c_{\lambda_h} \|B_h\|\ell_{f,1}$.
\end{lemma}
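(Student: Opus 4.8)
The plan is to prove Lemma~\ref{lemma:smooth_varphi_theta} by computing $\nabla_\theta \varphi(\lambda;\theta)$ explicitly and then bounding the operator norm of $\nabla^2_\theta \varphi(\lambda;\theta)$ (or equivalently the Lipschitz constant of the gradient) uniformly over the trajectory. Recall that, with $M_g = 0$,
\begin{equation}
\varphi(\lambda;\theta) = \tfrac{1}{2}\|\nabla F(\theta) A_{ag}^\top \lambda\|^2 - c_h \lambda_h^\top H(\theta),
\end{equation}
and since $H(\theta) = B_h F(\theta) + b_h$ is a linear function of $F$, the second term is smooth with a constant controlled by $c_h$, $\|B_h\|$, and the smoothness $\ell_{f,1}$ of $F$. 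The first term is the more delicate quadratic-in-gradients piece. First I would write $\nabla_\theta \varphi(\lambda;\theta)$ using the chain rule: it contains a term linear in the Hessians $\nabla^2 f_m$ contracted against $A_{ag}^\top \lambda$ and $\nabla F(\theta) A_{ag}^\top \lambda = -d$, plus the gradient of the linear term $-c_h B_h^\top$-type contribution.

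The key steps, in order, are as follows. First, bound $\|A_{ag}^\top \lambda\|$ on the trajectory: writing $\lambda = [\lambda_f;\lambda_h]$ with $\lambda_f \in \Delta^M$ (so $\|\lambda_f\|_1 = 1$) and using the trajectory bound $\|\lambda_{h,t}\|_1 \leq c_{\lambda_h}$ from the hypothesis of Assumption~\ref{assmp:sharper}/Lemma~\ref{lemma:ht_square_diff}, I get $\|A_{ag}^\top \lambda\| \leq \|A\| + \|B_h\| c_{\lambda_h}$, which explains the factor $(\|A\| + \|B_h\| c_{\lambda_h})^2$ appearing in the stated constant. Second, differentiate the quadratic term and estimate the Lipschitz constant of $\theta \mapsto \nabla F(\theta) A_{ag}^\top \lambda$ and of $\theta\mapsto \nabla^2 F(\theta)[\cdot]$ contracted suitably; here the product rule produces two contributions, one scaling like $\ell_{f,1}^2$ (from the Jacobian-Jacobian product, using $\nabla f_m$ being $\ell_{f,1}$-Lipschitz) and one scaling like $\ell_{f,2}\ell_f$ (from the Hessian term, using Assumption~\ref{assmp:sharper}-2 that $\nabla^2 f_m$ is $\ell_{f,2}$-Lipschitz together with the Lipschitz bound $\ell_f$ on $f_m$). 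This yields the $M(\ell_{f,1}^2 + \ell_{f,2}\ell_f)$ prefactor. Third, handle the linear term $-c_h\lambda_h^\top H(\theta)$: its gradient is $-c_h \nabla F(\theta) B_h^\top \lambda_h$, whose Lipschitz constant in $\theta$ is $c_h \|B_h\| \|\lambda_h\|_1 \ell_{f,1} \leq c_h c_{\lambda_h}\|B_h\|\ell_{f,1}$, matching the last summand. Finally, combine the three bounds by the triangle inequality to obtain the claimed $\ell_{\varphi_\theta,1}$.

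The main obstacle I anticipate is the careful bookkeeping of the second-derivative (Hessian) terms arising from differentiating $\|\nabla F(\theta) A_{ag}^\top \lambda\|^2$ twice: the cross terms mix first and second derivatives of $F$, and one must correctly attribute the $\ell_{f,1}^2$ versus $\ell_{f,2}\ell_f$ scalings and track the contraction against $A_{ag}^\top\lambda$ so that the factor $(\|A\|+\|B_h\|c_{\lambda_h})^2$ comes out cleanly. A convenient way to organize this is to write $\varphi(\lambda;\theta) = \tfrac{1}{2}\sum_m \sum_{m'} (A_{ag}^\top\lambda)_m (A_{ag}^\top\lambda)_{m'} \langle \nabla f_m(\theta), \nabla f_{m'}(\theta)\rangle - c_h\lambda_h^\top H(\theta)$ and bound each Hessian block $\nabla^2_\theta[\langle \nabla f_m, \nabla f_{m'}\rangle]$ via $\|\nabla^2 f_m \nabla^2 f_{m'}\| + \|\nabla^3 f_m\|\|\nabla f_{m'}\|$-type estimates, controlling the third-derivative surrogate through the Lipschitz continuity of $\nabla^2 f_m$. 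Since Lemma~\ref{lemma:smooth_varphi} already established $\ell_{\varphi_\lambda,1}$-smoothness in $\lambda$ by an analogous but simpler Hessian-norm computation, I would mirror that argument, and the only genuinely new ingredient here is invoking Assumption~\ref{assmp:sharper}-2 to control the Hessian-Lipschitz contribution in $\theta$.
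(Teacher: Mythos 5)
Your proposal is correct and follows essentially the same route as the paper: compute $\nabla_\theta \varphi(\lambda;\theta) = (\nabla F(\theta) A_{ag}^\top\lambda)^\top \nabla^2 F(\theta)(A_{ag}^\top\lambda) - c_h\lambda_h^\top\nabla H(\theta)$, bound the Lipschitz constant of each piece via the product rule (yielding the $M\ell_{f,1}^2$ and $M\ell_f\ell_{f,2}$ contributions from the quadratic term and $c_h c_{\lambda_h}\|B_h\|\ell_{f,1}$ from the linear term), and control $\|A_{ag}^\top\lambda\|\leq \|A\|+\|B_h\|c_{\lambda_h}$ using $\lambda_f\in\Delta^M$ and the trajectory bound on $\lambda_{h,t}$. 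The paper bounds the gradient difference directly rather than forming the full Hessian of $\varphi$ in $\theta$, but this is only a presentational difference and the constants match.
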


\begin{proof}[Proof of Lemma~\ref{lemma:smooth_varphi_theta}]
By the definition of $\varphi$, its gradient w.r.t $\theta$ can be computed by
\begin{align}
 \nabla_\theta \varphi(\lambda;\theta) =
\big(\nabla F(\theta) A_{a g}^{\top} \lambda\big)^{\top} \nabla^2 F(\theta) 
\big(A_{a g}^{\top} \lambda\big) - c_h \lambda_h^{\top} \nabla H(\theta) .
\end{align}
For brevity, let $v = A_{a g}^{\top} \lambda$. Then for any $\theta, \theta'\in \{\theta_t\}$ on the trajectory of Algorithm~\ref{alg:approximate_alg}, $\|\nabla_\theta \varphi(\lambda;\theta) - \nabla_\theta \varphi(\lambda;\theta')\|$ can be further bounded by
\begin{align*}
&\|\nabla_\theta \varphi(\lambda;\theta) - \nabla_\theta \varphi(\lambda;\theta')\| \leq 
\|\nabla F(\theta) - \nabla F(\theta')\| \|v\| \|\nabla^2 F(\theta) v\| \\
&\qquad + \|\nabla F(\theta') v\| \|\nabla^2 F(\theta) - \nabla^2 F(\theta')\| \|v\|
+ c_h \|\lambda_h\| \| \nabla H(\theta) - \nabla H(\theta') \| \\
\leq & \big( M\ell_{f,1}^2  \|v\|^2 
+ M \ell_f \ell_{f,2} \|v\|^2 
+ c_h \|\lambda_h\| \|B_h\| \ell_{f,1} \big) \|\theta - \theta'\|
\numberthis\label{eq:bound_nabla_theta_varphi_diff}
\end{align*}
where the last inequality follows from Assumptions~\ref{assmp:smooth_obj} and~\ref{assmp:lip_obj}. Using the fact that $\lambda_f\in \Delta^M$, and $\|\lambda_{h,t}\|\leq c_{\lambda_h}$ on the trajectory of Algorithm~\ref{alg:approximate_alg}, for all $\lambda\in \{\lambda_t\}$ on the trajectory of Algorithm~\ref{alg:approximate_alg}, $\|v\| = \|A_{a g}^{\top} \lambda\|$ can be further bounded by
\begin{align}
\|A_{a g}^{\top} \lambda\| \leq 
\|A^\top \lambda_f\| + \|B_h^\top \lambda_h\|
\leq \|A\| + \|B_h\| c_{\lambda_h}.
\end{align}
Plugging the above inequality back into~\eqref{eq:bound_nabla_theta_varphi_diff} completes the proof.
\end{proof}

\begin{lemma}[{\citep[Lemma~4]{shen2023penalty}}]
\label{lemma:proj_argmin}
Let $\Omega \subseteq \R^M$ be a closed convex set, and let $\Pi_{\Omega}$ denote Euclidean projection to $\Omega$. Given any $\lambda \in \Omega, d \in \R^M$ and $\gamma>0$, it holds that  
\begin{align}
\Pi_{\Omega}(\lambda-\gamma d) = \mathop{\arg \min} _{\lambda'\in \Omega}\left\langle d, \lambda^{\prime}\right\rangle+\frac{1}{2 \gamma} \|\lambda-\lambda^{\prime} \|^2 .  
\end{align}
\end{lemma}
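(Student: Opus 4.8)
The plan is to unfold the definition of the Euclidean projection and to recognize that the resulting minimization problem shares its (unique) minimizer with the stated proximal-type objective, the two objectives differing only by terms that are independent of the optimization variable $\lambda'$ and by an overall positive multiplicative constant. Since such modifications never change the location of a minimizer, the two $\arg\min$ expressions must coincide.

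First I would recall that, by definition, $\Pi_{\Omega}(z) = \mathop{\arg\min}_{\lambda' \in \Omega} \tfrac{1}{2}\|z - \lambda'\|^2$, so that $\Pi_{\Omega}(\lambda - \gamma d)$ is the unique minimizer over $\lambda' \in \Omega$ of $\tfrac{1}{2}\|\lambda - \gamma d - \lambda'\|^2$; uniqueness holds because $\Omega$ is closed and convex and the objective is strongly convex in $\lambda'$. The next step is to complete the square by writing $\lambda - \gamma d - \lambda' = (\lambda - \lambda') - \gamma d$, which gives
\[
\tfrac{1}{2}\|\lambda - \gamma d - \lambda'\|^2
= \tfrac{1}{2}\|\lambda - \lambda'\|^2 + \gamma \langle d, \lambda' \rangle - \gamma \langle d, \lambda \rangle + \tfrac{\gamma^2}{2}\|d\|^2 .
\]

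The key observation is that the last two terms, $-\gamma\langle d,\lambda\rangle$ and $\tfrac{\gamma^2}{2}\|d\|^2$, are constants that do not involve $\lambda'$ and therefore do not affect the minimizer over $\Omega$. Discarding them and then dividing the remaining objective by the positive constant $\gamma$ — an operation that again leaves the $\arg\min$ unchanged — produces exactly $\langle d, \lambda'\rangle + \tfrac{1}{2\gamma}\|\lambda - \lambda'\|^2$, so the two minimizers coincide and the identity follows.

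I anticipate no genuine obstacle here: the statement reduces to a routine completion-of-the-square identity combined with the invariance of $\arg\min$ under addition of $\lambda'$-independent constants and multiplication by a positive scalar. The only point warranting brief care is the well-definedness of both sides as single points rather than sets, which follows from the strong convexity in $\lambda'$ of both objectives together with the closedness and convexity of $\Omega$; this guarantees a unique minimizer on each side, so that the asserted equality is genuinely an equality between points.
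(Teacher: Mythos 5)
Your proof is correct: the completion-of-the-square identity is right, and the invariance of the $\arg\min$ under adding $\lambda'$-independent constants and scaling by $\gamma>0$, together with uniqueness from strong convexity on the closed convex set $\Omega$, gives exactly the claimed identity. The paper itself supplies no proof for this lemma — it is imported by citation from \citep[Lemma~4]{shen2023penalty} — and your argument is the standard one that establishes it.
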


\begin{lemma}[Proximal PL inequality implies proximal error bound and quadratic growth]
\label{lemma:prox_pl_impl_eb_qg}
Suppose Assumptions~\ref{asmp:general},~\ref{assmp:lip_obj}, and~\ref{assmp:sharper}-1 hold.
Then for $\lambda,\theta$ on the trajectory of Algorithm~\ref{alg:approximate_alg}, $\varphi(\lambda; \theta) + g(\lambda)$ with $g(\lambda)$ being an indicator function defined on the set $\Omega_\lambda$ satisfies the $\frac{1}{\bar{\mu}_\varphi} $-proximal error bound (EB)  and the $\frac{1}{\mu_\varphi'}$-quadratic growth (QG) w.r.t. $\lambda$ for some $\bar{\mu}_\varphi, \mu_\varphi' > 0$ depending on $\mu_\varphi$, as defined below
\begin{align}
 &\frac{1}{\bar{\mu}_\varphi} \mathrm{dist}(\lambda, S_{\varphi}(\theta)) \leq  
 \frac{1}{\gamma}\|\lambda - \Pi_{\Omega_\lambda}(\lambda -\gamma \nabla_\lambda \varphi(\lambda;\theta))\| 
 \quad \text{(proximal EB)} \\ 
 &\frac{1}{\mu_\varphi'} \mathrm{dist}^2 (\lambda, S_{\varphi}(\theta))\leq
 \varphi(\lambda;\theta) - \varphi(\lambda^*(\theta);\theta)
 \quad \text{(QG)}
\end{align}
where $S_{\varphi}(\theta) \coloneqq \{\lambda\in \Omega_\lambda\mid \varphi(\lambda;\theta) = \varphi(\lambda^*(\theta);\theta)\}$.
\end{lemma}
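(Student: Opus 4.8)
The plan is to recognize Lemma~\ref{lemma:prox_pl_impl_eb_qg} as an instance of the standard equivalence, for convex composite minimization, among the proximal-PL inequality, quadratic growth, and the proximal error bound \citep{karimi2016_linear_converge_pl}, and to reproduce only the two implications needed here. Writing $F(\cdot) = \varphi(\cdot;\theta) + g(\cdot)$ with $g$ the indicator of $\Omega_\lambda$, I would first invoke Lemma~\ref{lemma:proj_argmin} to identify the unique minimizer in Definition~\ref{def:prox_pl_ineq} as the proximal-gradient step $\lambda^+ \coloneqq \Pi_{\Omega_\lambda}(\lambda - \gamma\nabla_\lambda\varphi(\lambda;\theta))$, so that $D_{\varphi,\gamma}(\lambda;\theta)$ is expressible through the residual $r(\lambda)\coloneqq \tfrac{1}{\gamma}\|\lambda-\lambda^+\|$ (the quantity appearing on the right-hand side of the proximal EB). From the projection/optimality inequality I would record $\langle \nabla_\lambda\varphi(\lambda;\theta),\,\lambda-\lambda^+\rangle \ge \gamma\,r(\lambda)^2$, hence $D_{\varphi,\gamma}(\lambda;\theta)\ge r(\lambda)^2\ge 0$, and, using the $\ell_{\varphi_\lambda,1}$-smoothness of $\varphi$ (Lemma~\ref{lemma:smooth_varphi}) together with a step size $\gamma\le 1/(2\ell_{\varphi_\lambda,1})$, the sufficient-decrease estimate $\varphi(\lambda^+;\theta)\le \varphi(\lambda;\theta)-c_0\gamma\,r(\lambda)^2$ for some $c_0>0$. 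Throughout I would use that $\varphi(\cdot;\theta)$ is convex, so $F$ is convex with solution set $S_\varphi(\theta)$.

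For the quadratic growth bound I would run the proximal-gradient recursion $\lambda_0=\lambda$, $\lambda_{k+1}=\Pi_{\Omega_\lambda}(\lambda_k-\gamma\nabla_\lambda\varphi(\lambda_k;\theta))$, a discrete analogue of gradient flow. Combining sufficient decrease with the proximal-PL inequality yields geometric decay of the value gap, $\varphi(\lambda_k;\theta)-\varphi(\lambda^*(\theta);\theta)\le (1-c_1\gamma\mu_\varphi)^k\,(\varphi(\lambda;\theta)-\varphi(\lambda^*(\theta);\theta))$, while sufficient decrease bounds each displacement by the gap, $\|\lambda_{k+1}-\lambda_k\|\le \sqrt{(c_0\gamma)^{-1}(\varphi(\lambda_k;\theta)-\varphi(\lambda_{k+1};\theta))}$. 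Summing the resulting geometric series shows $\{\lambda_k\}$ is Cauchy, converges to some $\lambda_\infty\in\Omega_\lambda$ with $\varphi(\lambda_\infty;\theta)=\varphi(\lambda^*(\theta);\theta)$, i.e.\ $\lambda_\infty\in S_\varphi(\theta)$, and $\mathrm{dist}(\lambda,S_\varphi(\theta))\le \|\lambda-\lambda_\infty\|\le C\sqrt{\varphi(\lambda;\theta)-\varphi(\lambda^*(\theta);\theta)}$. Squaring gives the QG inequality with $\mu_\varphi'=C^2$.

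For the proximal error bound I would combine the prox three-point inequality with the QG property just established. With $\bar\lambda=\Pi_{S_\varphi(\theta)}(\lambda)$ and $d=\mathrm{dist}(\lambda,S_\varphi(\theta))$, the three-point inequality at $u=\bar\lambda$ gives $\varphi(\lambda^+;\theta)-\varphi(\lambda^*(\theta);\theta)\le \tfrac{1}{2\gamma}d^2$; applying QG at $\lambda^+$ then yields $\mathrm{dist}(\lambda^+,S_\varphi(\theta))\le \sqrt{\mu_\varphi'/(2\gamma)}\,d$, and the triangle inequality $d\le \gamma\,r(\lambda)+\mathrm{dist}(\lambda^+,S_\varphi(\theta))$ can be rearranged into $d\le \bar\mu_\varphi\,\gamma\,r(\lambda)$, i.e.\ $\tfrac{1}{\bar\mu_\varphi}\mathrm{dist}(\lambda,S_\varphi(\theta))\le r(\lambda)$, which is the claimed EB. When the contraction factor $\sqrt{\mu_\varphi'/(2\gamma)}$ is not below one for the admissible $\gamma$, I would instead telescope this estimate over a fixed number of prox steps (whose total displacement is controlled by the geometrically decaying residuals), absorbing the constant into $\bar\mu_\varphi$.

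The main obstacle is the composite/constrained structure induced by $g$ being the indicator of $\Omega_\lambda$ (the simplex-and-linear constraints). In the unconstrained smooth case the proximal-PL quantity reduces to $\|\nabla_\lambda\varphi\|^2$ and coincides, up to constants, with the squared residual $r^2$, so PL, EB, and QG follow by elementary algebra. Here $D_{\varphi,\gamma}$ and $r$ are linked only by the one-sided inequality $D_{\varphi,\gamma}\ge r^2$, because normal-cone contributions at active constraints can make $D_{\varphi,\gamma}$ arbitrarily larger than $r^2$; consequently the EB cannot be read off from PL by algebra alone and must be routed through the three-point inequality and QG, with careful tracking of the interplay among $\gamma$, the smoothness constant $\ell_{\varphi_\lambda,1}$, and the QG constant $\mu_\varphi'$. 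Managing this constant reconciliation (and justifying the restriction to the Algorithm~\ref{alg:approximate_alg} trajectory, where Assumption~\ref{assmp:sharper}-1 supplies $\mu_\varphi$) is the crux of the argument.
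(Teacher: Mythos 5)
Your route is genuinely different from the paper's. The paper's proof of Lemma~\ref{lemma:prox_pl_impl_eb_qg} is a two-step citation: smoothness of $\varphi(\cdot;\theta)$ (Lemma~\ref{lemma:smooth_varphi}) plus Assumption~\ref{assmp:sharper}-1 gives the proximal error bound by \citep[Appendix~G]{karimi2016_linear_converge_pl}, and the proximal EB then gives quadratic growth by \citep[Corollary 3.6]{drusvyatskiy2018_eb_qg}; i.e., the paper goes PL $\Rightarrow$ EB $\Rightarrow$ QG by invoking known equivalences. You instead reconstruct the implications from scratch and in the opposite order, PL $\Rightarrow$ QG $\Rightarrow$ EB. Your QG step is sound: the prox-gradient recursion with sufficient decrease and the proximal-PL inequality yields geometric decay of the value gap, the per-step displacements are summable, and the resulting path-length bound gives $\mathrm{dist}(\lambda,S_\varphi(\theta))\le C\sqrt{\varphi(\lambda;\theta)-\varphi(\lambda^*(\theta);\theta)}$. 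This is a legitimate, more self-contained alternative to the paper's citation, at the cost of tracking constants explicitly.

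The weak point is your QG $\Rightarrow$ EB step. As you note, the three-point-inequality argument only closes when $\sqrt{\mu_\varphi'/(2\gamma)}<1$, which conflicts with the smoothness requirement $\gamma\le 1/\ell_{\varphi_\lambda,1}$ unless the condition number cooperates; and the telescoped fallback you sketch does not obviously close either, because every bound on the value gap at $\lambda$ in terms of the residual $r(\lambda)$ runs back through $D_{\varphi,\gamma}$, which (as you yourself observe) can exceed $r(\lambda)^2$ by an uncontrolled amount in the constrained setting. The missing ingredient is the convexity of $\varphi(\cdot;\theta)$ in $\lambda$ (it is a convex quadratic plus a linear term over the convex set $\Omega_\lambda$). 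For convex composite functions, QG and the subdifferential error bound are equivalent (this is exactly the content of \citep[Corollary 3.6]{drusvyatskiy2018_eb_qg} that the paper cites in the other direction): with $\bar\lambda=\Pi_{S_\varphi(\theta)}(\lambda)$ and any $v\in\partial(\varphi(\cdot;\theta)+g)(\lambda)$, convexity gives $\varphi(\lambda;\theta)-\varphi(\lambda^*(\theta);\theta)\le\|v\|\,\mathrm{dist}(\lambda,S_\varphi(\theta))$, which combined with QG yields $\mathrm{dist}(\lambda,S_\varphi(\theta))\le\mu_\varphi'\|v\|$. Applying this at $\lambda^+$, where the prox optimality condition supplies a subgradient of norm at most $(1+\gamma\ell_{\varphi_\lambda,1})r(\lambda)$, and then the triangle inequality $\mathrm{dist}(\lambda,S_\varphi(\theta))\le\gamma r(\lambda)+\mathrm{dist}(\lambda^+,S_\varphi(\theta))$, gives the proximal EB with no restriction beyond $\gamma\le 1/\ell_{\varphi_\lambda,1}$. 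With that substitution your argument is complete; as written, the EB half has a gap.
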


\begin{proof}[Proof of Lemma~\ref{lemma:prox_pl_impl_eb_qg}]
By Lemma~\ref{lemma:smooth_varphi}, $\varphi(\lambda;\theta)$ is smooth w.r.t. $\lambda$.
Furthermore, by~\citep[Appendix~G]{karimi2016_linear_converge_pl}, and combined with Assumption~\ref{assmp:sharper}-1, the proximal PL inequality, it implies that $\varphi(\lambda; \theta) + g(\lambda)$ satisfies the proximal error bound.
From~\citep[Corollary 3.6]{drusvyatskiy2018_eb_qg},  the proximal error bound
further implies the quadratic growth, which proves the result.
\end{proof}

\begin{lemma}[Lipschitz continuity of $\lambda^*(\theta)$, {\citep[Lemma~5]{shen2023penalty}}]
Suppose Assumption~\ref{assmp:lip_obj} holds.
If given $\lambda\in \Omega_\lambda$, and $\theta' \in \R^q$, $\varphi(\lambda;\theta')$  satisfies the $\frac{1}{\bar{\mu}_\varphi}$-proximal error bound w.r.t. $\lambda$. 
Then given $\theta \in \R^q$, 
for any $\lambda^*(\theta)\in \arg\min_{\lambda\in \Omega_\lambda}\varphi(\lambda;\theta)$, there exists $\lambda^*(\theta')\in \arg\min_{\lambda\in \Omega_\lambda}\varphi(\lambda;\theta')$ such that
\begin{align*}
\|\lambda^*(\theta) - \lambda^*(\theta')\|
\leq \ell_{\lambda^*} \|\theta - \theta'\|  
\end{align*}
with $\ell_{\lambda^*} = \ell_{\varphi_\lambda,1} \bar{\mu}_\varphi $, and $\ell_{\varphi_\lambda,1}$ defined in Lemma~\ref{lemma:smooth_varphi}.
\end{lemma}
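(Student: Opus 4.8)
The plan is to follow the standard argument that a proximal error bound upgrades to Lipschitz stability of the solution map, mirroring \citep[Lemma~5]{shen2023penalty}. Fix $\theta,\theta'\in\R^q$ and a minimizer $\lambda^*(\theta)\in S_\varphi(\theta)$. Since $\varphi(\cdot;\theta')$ is convex, its solution set $S_\varphi(\theta')$ is closed and convex, so I would \emph{define} $\lambda^*(\theta')$ to be the Euclidean projection of $\lambda^*(\theta)$ onto $S_\varphi(\theta')$; this is a valid minimizer at $\theta'$ and satisfies $\|\lambda^*(\theta)-\lambda^*(\theta')\| = \mathrm{dist}(\lambda^*(\theta),S_\varphi(\theta'))$. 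The target bound then reduces to controlling this distance, which is exactly what the assumed $\tfrac{1}{\bar{\mu}_\varphi}$-proximal error bound at $\theta'$ provides, evaluated at the point $\lambda=\lambda^*(\theta)$:
\begin{equation}
\frac{1}{\bar{\mu}_\varphi}\,\mathrm{dist}(\lambda^*(\theta),S_\varphi(\theta'))
\leq \frac{1}{\gamma}\big\|\lambda^*(\theta) - \Pi_{\Omega_\lambda}\big(\lambda^*(\theta) - \gamma\nabla_\lambda\varphi(\lambda^*(\theta);\theta')\big)\big\|.
\end{equation}

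Next I would rewrite the right-hand side so that the $\theta$-dependence enters only through the gradient. Because $\lambda^*(\theta)$ minimizes the convex $\varphi(\cdot;\theta)$ over the convex set $\Omega_\lambda$, the first-order optimality condition gives the fixed-point identity $\lambda^*(\theta) = \Pi_{\Omega_\lambda}\big(\lambda^*(\theta)-\gamma\nabla_\lambda\varphi(\lambda^*(\theta);\theta)\big)$ for any $\gamma>0$, which is precisely Lemma~\ref{lemma:proj_argmin} applied at the minimizer. Substituting this expression for the leading $\lambda^*(\theta)$ and invoking the $1$-Lipschitz (non-expansive) property of $\Pi_{\Omega_\lambda}$ yields
\begin{equation}
\big\|\lambda^*(\theta) - \Pi_{\Omega_\lambda}\big(\lambda^*(\theta) - \gamma\nabla_\lambda\varphi(\lambda^*(\theta);\theta')\big)\big\|
\leq \gamma\,\big\|\nabla_\lambda\varphi(\lambda^*(\theta);\theta) - \nabla_\lambda\varphi(\lambda^*(\theta);\theta')\big\|,
\end{equation}
since the two projection arguments differ only in the parameter. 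The factor $\gamma$ cancels, leaving $\|\lambda^*(\theta)-\lambda^*(\theta')\| \leq \bar{\mu}_\varphi\,\|\nabla_\lambda\varphi(\lambda^*(\theta);\theta) - \nabla_\lambda\varphi(\lambda^*(\theta);\theta')\|$.

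The final step is a cross-Lipschitz estimate: for a fixed $\lambda$ on the trajectory, $\theta\mapsto\nabla_\lambda\varphi(\lambda;\theta)$ is Lipschitz with constant $\ell_{\varphi_\lambda,1}$. Using $\nabla_\lambda\varphi(\lambda;\theta) = A_{ag}\nabla F(\theta)^\top\nabla F(\theta)A_{ag}^\top\lambda - [0;c_hH(\theta)]$, I would expand the difference over $\theta$ and $\theta'$, add and subtract $A_{ag}\nabla F(\theta)^\top\nabla F(\theta')A_{ag}^\top\lambda$, and bound each piece via Assumption~\ref{assmp:lip_obj} ($\ell_f$-Lipschitz $F$, hence bounded $\nabla F$), Assumption~\ref{assmp:smooth_obj} ($\ell_{f,1}$-Lipschitz $\nabla F$), the linearity of $H$ in $F$, and the trajectory boundedness of $\lambda$ (the $\|\lambda_{h,t}\|\le c_{\lambda_h}$ assumption). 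Combining with the previous inequality gives $\|\lambda^*(\theta)-\lambda^*(\theta')\|\le \ell_{\varphi_\lambda,1}\bar{\mu}_\varphi\,\|\theta-\theta'\|$, as claimed.

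I expect the main obstacle to be this cross-Lipschitz bound and, specifically, reconciling its constant with the stated $\ell_{\varphi_\lambda,1}$: that quantity is introduced in Lemma~\ref{lemma:smooth_varphi} as the Lipschitz constant of $\nabla_\lambda\varphi$ in $\lambda$, whereas here it is being used to control variation in $\theta$. The honest route is to observe that the mixed Hessian $\nabla^2_{\lambda\theta}\varphi$ is bounded on the trajectory by the same products of $\|A_{ag}\|$, $\ell_f$ and $\ell_{f,1}$ that bound $\nabla^2_{\lambda\lambda}\varphi$, so the two constants coincide (or one takes the larger). Making this identification precise, rather than the manipulations with projections, is the delicate accounting in the argument.
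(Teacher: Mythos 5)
The paper itself gives no proof of this lemma: it is restated verbatim as a citation of \citep[Lemma~5]{shen2023penalty}, so there is no in-paper argument to compare yours against. That said, your argument is the standard (and almost certainly the intended) one: project $\lambda^*(\theta)$ onto the closed convex solution set $S_\varphi(\theta')$ so that the target norm equals $\mathrm{dist}(\lambda^*(\theta),S_\varphi(\theta'))$, invoke the proximal error bound at $(\lambda^*(\theta),\theta')$, use the fixed-point identity $\lambda^*(\theta)=\Pi_{\Omega_\lambda}\big(\lambda^*(\theta)-\gamma\nabla_\lambda\varphi(\lambda^*(\theta);\theta)\big)$ from Lemma~\ref{lemma:proj_argmin}, and cancel $\gamma$ via nonexpansiveness of the projection. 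All of those steps are sound.

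The one place your proposal does not close is exactly the point you flag, and your proposed resolution is not correct as stated. The quantity you need is the Lipschitz constant of $\theta\mapsto\nabla_\lambda\varphi(\lambda;\theta)$ at fixed $\lambda$, and from $\nabla_\lambda\varphi(\lambda;\theta)=A_{ag}\nabla F(\theta)^\top\nabla F(\theta)A_{ag}^\top\lambda-[0^\top,\,c_hH(\theta)^\top]^\top$ this constant is of order $M\|A_{ag}\|^2\ell_f\ell_{f,1}\|\lambda\|+c_h\|B_h\|\sqrt{M}\ell_f$: it involves the smoothness constant $\ell_{f,1}$, the penalty parameter $c_h$, and a bound on $\|\lambda^*(\theta)\|$, none of which appear in $\ell_{\varphi_\lambda,1}=M\|A_{ag}\|^2\ell_f^2$ from Lemma~\ref{lemma:smooth_varphi}. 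So the two constants do not ``coincide,'' and taking the larger would change the constant $\ell_{\lambda^*}$ announced in the lemma (e.g.\ it can be made arbitrarily large by increasing $c_h$). The honest conclusion is that your argument proves the lemma with $\ell_{\lambda^*}=\bar{\mu}_\varphi\,\ell_{\varphi_{\lambda\theta}}$ where $\ell_{\varphi_{\lambda\theta}}$ is this cross-Lipschitz constant; the appearance of $\ell_{\varphi_\lambda,1}$ in the statement is an imprecision carried over from the citation (where the analogous symbol denotes the mixed constant), not something your projection manipulations can repair. You should state and bound the cross constant explicitly rather than identifying it with the $\lambda$-smoothness constant.
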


\begin{lemma}[Danskin-type Lemma for proximal PL functions~{\citep[Proposition~6]{shen2023penalty}}]
\label{lemma:danskin_prox_pl}
Suppose Assumptions~\ref{asmp:general},~\ref{assmp:smooth_obj},~\ref{assmp:lip_obj},~\ref{assmp:sharper} hold, then $\varphi(\lambda^*(\theta);\theta)$ is differentiable with the gradient computed by
\begin{align}
\nabla \varphi(\lambda^*(\theta);\theta)
=\nabla_\theta \varphi(\lambda;\theta), \quad \forall \lambda \in \mathop{\arg\min}_{\lambda\in \Omega_\lambda} \varphi(\lambda;\theta).  
\end{align}
Moreover, $\varphi(\lambda^*(\theta);\theta)$ is $\ell_{\varphi^*,1}$-smooth with $\ell_{\varphi^*,1} \coloneqq \ell_{\varphi, 1} (1 + \ell_{\lambda^*})$.  
\end{lemma}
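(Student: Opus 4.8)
The plan is to treat $\varphi^*(\theta) := \varphi(\lambda^*(\theta);\theta) = \min_{\lambda\in\Omega_\lambda}\varphi(\lambda;\theta)$ as a value function and run a Danskin-type argument, being careful that $\arg\min_{\lambda\in\Omega_\lambda}\varphi(\lambda;\theta)$ may be set-valued. The two facts I will lean on are already available: the joint smoothness of $\varphi$ (Lemma~\ref{lemma:smooth_varphi} in $\lambda$ and Lemma~\ref{lemma:smooth_varphi_theta} in $\theta$), and the Lipschitz continuity of the solution map $\lambda^*(\cdot)$ with constant $\ell_{\lambda^*}$ established just above, which itself is where Assumption~\ref{assmp:sharper}-1 (proximal PL, hence proximal EB/QG via Lemma~\ref{lemma:prox_pl_impl_eb_qg}) enters.

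First I would fix $\theta$, a direction $v$, and an \emph{arbitrary} minimizer $\bar\lambda \in \arg\min_{\lambda}\varphi(\lambda;\theta)$, and sandwich the increment $\varphi^*(\theta+sv)-\varphi^*(\theta)$ for small $s>0$. For the upper bound, minimality at $\theta+sv$ gives $\varphi^*(\theta+sv)\le \varphi(\bar\lambda;\theta+sv)$, and a second-order Taylor expansion in $\theta$ (using Lemma~\ref{lemma:smooth_varphi_theta}) yields $\varphi^*(\theta+sv)-\varphi^*(\theta)\le s\langle\nabla_\theta\varphi(\bar\lambda;\theta),v\rangle+O(s^2)$. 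For the lower bound I invoke the Lipschitz continuity lemma to pick a minimizer $\hat\lambda\in\arg\min_\lambda\varphi(\lambda;\theta+sv)$ with $\|\hat\lambda-\bar\lambda\|\le \ell_{\lambda^*}s\|v\|$; then, using $\varphi(\bar\lambda;\theta)\le\varphi(\hat\lambda;\theta)$, I get $\varphi^*(\theta+sv)-\varphi^*(\theta)=\varphi(\hat\lambda;\theta+sv)-\varphi(\bar\lambda;\theta)\ge \varphi(\hat\lambda;\theta+sv)-\varphi(\hat\lambda;\theta)=s\langle\nabla_\theta\varphi(\hat\lambda;\theta),v\rangle+O(s^2)$. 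Finally, replacing $\nabla_\theta\varphi(\hat\lambda;\theta)$ by $\nabla_\theta\varphi(\bar\lambda;\theta)$ costs only $O(\|\hat\lambda-\bar\lambda\|)=O(s)$ because $\nabla_\theta\varphi(\cdot;\theta)$ is Lipschitz in $\lambda$, so this term is also $s\langle\nabla_\theta\varphi(\bar\lambda;\theta),v\rangle+O(s^2)$.

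Combining the two bounds gives $\varphi^*(\theta+sv)-\varphi^*(\theta)=s\langle\nabla_\theta\varphi(\bar\lambda;\theta),v\rangle+O(s^2)$, so the directional derivative exists, is linear in $v$, and equals $\langle\nabla_\theta\varphi(\bar\lambda;\theta),v\rangle$; hence $\varphi^*$ is differentiable with $\nabla\varphi^*(\theta)=\nabla_\theta\varphi(\bar\lambda;\theta)$. Since $\bar\lambda$ was an arbitrary minimizer while the left-hand side does not depend on it, this simultaneously proves that $\nabla\varphi(\lambda^*(\theta);\theta)=\nabla_\theta\varphi(\lambda;\theta)$ holds for \emph{every} $\lambda\in\arg\min$, i.e.\ selection-independence is a byproduct rather than an input. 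For the smoothness claim I would write $\nabla\varphi^*(\theta)-\nabla\varphi^*(\theta')=\nabla_\theta\varphi(\lambda^*(\theta);\theta)-\nabla_\theta\varphi(\lambda^*(\theta');\theta')$, insert the intermediate term $\nabla_\theta\varphi(\lambda^*(\theta);\theta')$, and bound the two resulting differences by the $\theta$-smoothness of $\varphi$ and by the $\lambda$-Lipschitzness of $\nabla_\theta\varphi$ composed with $\|\lambda^*(\theta)-\lambda^*(\theta')\|\le\ell_{\lambda^*}\|\theta-\theta'\|$, giving the constant $\ell_{\varphi,1}(1+\ell_{\lambda^*})$.

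The main obstacle is the set-valuedness of the minimizer map: the lower bound must not presume a single-valued, differentiable selection $\lambda^*(\theta)$, so it has to be routed through the existence-of-a-nearby-minimizer form of the Lipschitz lemma together with continuity of $\nabla_\theta\varphi$ in $\lambda$, after which one argues the limit is independent of the chosen minimizer. A secondary care point is ensuring the $O(s^2)$ Taylor remainders are uniform; this is guaranteed because all minimizers involved remain in a bounded set (the assumed boundedness of $\{\lambda_t\}$ and $\lambda^*(\theta_t)$ along the trajectory), so the smoothness constants from Lemmas~\ref{lemma:smooth_varphi} and~\ref{lemma:smooth_varphi_theta} apply uniformly.
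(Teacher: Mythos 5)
Your proof is correct. Note that the paper itself does not prove this lemma at all --- it imports the statement verbatim from the cited reference (Proposition~6 of \citep{shen2023penalty}) --- so there is no in-paper argument to compare against; what you have written is a valid self-contained derivation of the standard Danskin-type result. Your sandwich argument is sound: the upper bound follows from minimality at $\theta+sv$ plus $\theta$-smoothness, and the lower bound correctly routes through the existence form of the Lipschitz-solution-map lemma (picking a minimizer $\hat\lambda$ at $\theta+sv$ within $\ell_{\lambda^*}s\|v\|$ of $\bar\lambda$) together with the Lipschitz continuity of $\nabla_\theta\varphi(\cdot;\theta)$ in $\lambda$, which holds on the bounded set containing the iterates and minimizers since $\nabla_\theta\varphi$ is polynomial in $\lambda$. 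You are also right that selection-independence of the gradient falls out as a byproduct because the value function on the left-hand side does not depend on the chosen minimizer, and the three-term decomposition for the smoothness constant $\ell_{\varphi,1}(1+\ell_{\lambda^*})$ matches the stated bound (again using the first part to pick the nearby minimizer at $\theta'$). The only cosmetic caveat is that the paper is somewhat loose about whether $\ell_{\varphi,1}$ denotes the $\lambda$-smoothness constant, the $\theta$-smoothness constant, or a joint one; your argument implicitly takes it as a common upper bound for both, which is consistent with how the paper uses it elsewhere (e.g., in the proof of Lemma~\ref{lemma:error_subp}).
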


Below, Lemma~\ref{lemma:error_subp} establishes the approximate descent or contraction of the subprogram after taking one-step update on  $\lambda_t$. This is crucial for a sharper analysis of convergence of Algorithm~\ref{alg:approximate_alg}.
\begin{lemma}[Error of subprogram]
\label{lemma:error_subp}
Suppose Assumptions~\ref{asmp:general},~\ref{assmp:smooth_obj},~\ref{assmp:lip_obj},~\ref{assmp:sharper} hold,  $M_g=0$, and $\Omega_{\lambda_f}(\theta) = \Delta^M$.
Let $\{\theta_t\}, \{\lambda_t\}$ be the sequences produced by Algorithm~\ref{alg:approximate_alg} with step size $\gamma_t \leq \ell_{\varphi,1}^{-1}$.  Then for any $c_{\varphi,d} > 0$, the following hold
\begin{subequations}
\begin{align}
  & \varphi(\lambda_{t+1}; \theta_t) - \varphi(\lambda^*(\theta_t); \theta_t)
\leq (1 - \gamma_t \mu_\varphi) \big( \varphi(\lambda_t; \theta_t) - \varphi(\lambda^*(\theta_t); \theta_t) \big) \label{eq:error_subp_contraction}\\
& \varphi(\lambda_{t+1}; \theta_{t+1}) - \varphi(\lambda^*(\theta_{t+1}); \theta_{t+1})
\leq \Big(1 + {\alpha_t c_{\varphi,d} \ell_{\varphi,1}^2\mu'_\varphi} \Big) \Big( \varphi(\lambda_{t+1}; \theta_t) - \varphi(\lambda^*(\theta_t); \theta_t) \Big)
\nonumber \\
&\qquad\qquad\qquad +  \Big( \frac{\alpha_t }{2c_{\varphi,d}} + \frac{\ell_{\varphi,1} + \ell_{\varphi^*,1}}{2} \alpha_t^2 \Big) \|d_t\|^2 .
\label{eq:error_subp_additional}
\end{align}  
\end{subequations}
\end{lemma}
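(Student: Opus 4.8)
The plan is to prove the two inequalities separately, treating \eqref{eq:error_subp_contraction} as a one-step proximal-PL descent on $\lambda$ at fixed $\theta_t$, and \eqref{eq:error_subp_additional} as a perturbation bound that transports the value gap from $\theta_t$ to $\theta_{t+1}$ along the model update.

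For \eqref{eq:error_subp_contraction}, I would first use Lemma~\ref{lemma:proj_argmin} to rewrite the projected-gradient step as $\lambda_{t+1} = \arg\min_{\lambda'\in\Omega_\lambda}\{\langle\nabla_\lambda\varphi(\lambda_t;\theta_t),\lambda'-\lambda_t\rangle + \frac{1}{2\gamma_t}\|\lambda'-\lambda_t\|^2\}$, which is exactly the minimizer appearing in the definition of $D_{\varphi,\gamma_t}(\lambda_t;\theta_t)$ (Definition~\ref{def:prox_pl_ineq}); hence the attained minimum equals $-\frac{\gamma_t}{2}D_{\varphi,\gamma_t}(\lambda_t;\theta_t)$. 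Next, by the $\ell_{\varphi,1}$-smoothness of $\varphi(\cdot;\theta_t)$ (Lemma~\ref{lemma:smooth_varphi}) and the step-size choice $\gamma_t\le\ell_{\varphi,1}^{-1}$, the descent lemma gives $\varphi(\lambda_{t+1};\theta_t)-\varphi(\lambda_t;\theta_t)\le\langle\nabla_\lambda\varphi(\lambda_t;\theta_t),\lambda_{t+1}-\lambda_t\rangle+\frac{1}{2\gamma_t}\|\lambda_{t+1}-\lambda_t\|^2 = -\frac{\gamma_t}{2}D_{\varphi,\gamma_t}(\lambda_t;\theta_t)$. Subtracting $\varphi(\lambda^*(\theta_t);\theta_t)$ from both sides and invoking the $\mu_\varphi$-proximal PL inequality from Assumption~\ref{assmp:sharper}-1 to lower-bound $D_{\varphi,\gamma_t}(\lambda_t;\theta_t)$ by $\mu_\varphi\big(\varphi(\lambda_t;\theta_t)-\varphi(\lambda^*(\theta_t);\theta_t)\big)$ then yields the claimed contraction factor.

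For \eqref{eq:error_subp_additional}, I would write the target gap as $\varphi(\lambda_{t+1};\theta_{t+1})-\varphi^*(\theta_{t+1})$ with $\varphi^*(\theta)\coloneqq\varphi(\lambda^*(\theta);\theta)$, and telescope through $\theta_t$:
\[
\varphi(\lambda_{t+1};\theta_{t+1})-\varphi^*(\theta_{t+1}) = \big[\varphi(\lambda_{t+1};\theta_t)-\varphi^*(\theta_t)\big] + \big[\varphi(\lambda_{t+1};\theta_{t+1})-\varphi(\lambda_{t+1};\theta_t)\big] - \big[\varphi^*(\theta_{t+1})-\varphi^*(\theta_t)\big].
\]
I would upper-bound the second bracket using the $\theta$-smoothness of $\varphi(\lambda_{t+1};\cdot)$ (Lemma~\ref{lemma:smooth_varphi_theta}) and lower-bound $\varphi^*(\theta_{t+1})-\varphi^*(\theta_t)$ using the smoothness of $\varphi^*$ together with the Danskin identity $\nabla\varphi^*(\theta_t)=\nabla_\theta\varphi(\lambda^*(\theta_t);\theta_t)$ (Lemma~\ref{lemma:danskin_prox_pl}). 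Substituting $\theta_{t+1}-\theta_t=\alpha_t d_t$, the two second-order terms combine into $\frac{\ell_{\varphi,1}+\ell_{\varphi^*,1}}{2}\alpha_t^2\|d_t\|^2$, while the first-order terms collapse into $\alpha_t\langle\nabla_\theta\varphi(\lambda_{t+1};\theta_t)-\nabla_\theta\varphi(\lambda^*(\theta_t);\theta_t),\,d_t\rangle$.

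The crux — and the step I expect to be the main obstacle — is converting this residual gradient difference into a multiple of the value gap at $\theta_t$. The plan is to choose $\lambda^*(\theta_t)$ as the projection of $\lambda_{t+1}$ onto the solution set $S_\varphi(\theta_t)$, apply Young's inequality with parameter $c_{\varphi,d}$ to split the inner product into a $\frac{\alpha_t}{2c_{\varphi,d}}\|d_t\|^2$ term and a term proportional to $\|\nabla_\theta\varphi(\lambda_{t+1};\theta_t)-\nabla_\theta\varphi(\lambda^*(\theta_t);\theta_t)\|^2$, bound the latter by $\ell_{\varphi,1}^2\,\mathrm{dist}^2(\lambda_{t+1},S_\varphi(\theta_t))$ via Lipschitzness of the $\theta$-gradient in $\lambda$, and finally replace $\mathrm{dist}^2(\lambda_{t+1},S_\varphi(\theta_t))$ by $\mu'_\varphi\big(\varphi(\lambda_{t+1};\theta_t)-\varphi(\lambda^*(\theta_t);\theta_t)\big)$ using the quadratic-growth consequence of proximal PL (Lemma~\ref{lemma:prox_pl_impl_eb_qg}). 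This chain is delicate because the quadratic-growth bound, the Lipschitz-in-$\lambda$ gradient estimate, and the Danskin identity must all be evaluated consistently at the same minimizer $\lambda^*(\theta_t)$; reconciling these and tracking the precise constants so as to land on the factor $1+\alpha_t c_{\varphi,d}\ell_{\varphi,1}^2\mu'_\varphi$ is where the argument requires the most care.
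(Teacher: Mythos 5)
Your proposal is correct and follows essentially the same route as the paper's proof: the first inequality via Lemma~\ref{lemma:proj_argmin}, the descent lemma, and the proximal PL inequality; the second via the same telescoping decomposition through $\theta_t$, smoothness of $\varphi(\lambda;\cdot)$ and of $\varphi(\lambda^*(\cdot);\cdot)$ (Danskin), Young's inequality with parameter $c_{\varphi,d}$, the $\ell_{\varphi,1}$-Lipschitzness of $\nabla_\theta\varphi(\cdot;\theta)$ in $\lambda$, and the quadratic-growth consequence of proximal PL. The "delicate" chain you flag at the end is exactly the sequence of steps $(a)$--$(b)$ in the paper's argument, evaluated at the nearest minimizer in $S_\varphi(\theta_t)$ as you anticipate.
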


\begin{proof}[Proof of Lemma~\ref{lemma:error_subp}]
We first prove \eqref{eq:error_subp_contraction}. 
Recall the definition of $D_{\varphi,\gamma} (\lambda ; \theta)$ in Definition~\ref{def:prox_pl_ineq}.
By the $\ell_{\varphi_\lambda, 1}$-smoothness of $\varphi$ w.r.t. $\lambda$ and the update on $\lambda_t$, and that $\gamma_t \leq \ell_{\varphi_\lambda,1}^{-1}$, we have
\begin{align*}
  & \varphi(\lambda_{t+1}; \theta_t) 
\leq \varphi( \lambda_t; \theta_t) + \langle\nabla_\lambda \varphi( \lambda_t; \theta_t), \lambda_{t+1}-\lambda_t\rangle+\frac{1}{2 \gamma_t }\|\lambda_{t+1}-\lambda_t\|^2 \\
{\leq} &  \varphi( \lambda_t; \theta_t) - \frac{\gamma_t}{2} D_{\varphi,\gamma_t}(\lambda_t; \theta_t) 
\quad \text{by Lemma~\ref{lemma:proj_argmin} and Definition~\ref{def:prox_pl_ineq}}\\
{\leq} &  \varphi( \lambda_t; \theta_t) - \gamma_t \mu_\varphi \Big( \varphi( \lambda_t; \theta_t) - \varphi(\lambda^*(\theta_t); \theta_t) \Big)
\quad \text{by Assumption~\ref{assmp:sharper}-1 and Definition~\ref{def:prox_pl_ineq}}
\numberthis \label{eq:deterministic_varphi_prox_pl_update}
\end{align*}
where the last inequality follows from the proximal PL inequality.
Subtracting both sides of the above inequality by $\varphi(\lambda^*(\theta_t); \theta_t)$ proves~\eqref{eq:error_subp_contraction}. 

Next we prove \eqref{eq:error_subp_additional}. We decompose the error on the left hand side of~\eqref{eq:error_subp_additional} by
\begin{align*}
& \varphi(\lambda_{t+1}; \theta_{t+1}) - \varphi(\lambda^*(\theta_{t+1}); \theta_{t+1}) \\
= & \underbrace{ \varphi(\lambda_{t+1}; \theta_{t+1}) - \varphi(\lambda^*(\theta_{t+1}); \theta_{t+1})
- (\varphi(\lambda_{t+1}; \theta_{t}) - \varphi(\lambda^*(\theta_{t}); \theta_{t})) }_{J_1} + \varphi(\lambda_{t+1}; \theta_{t}) - \varphi(\lambda^*(\theta_{t}); \theta_{t}) 
\numberthis \label{eq:error_subp_2_decompose_J1}
\end{align*}
where we use the $(\ell_{\varphi, 1} + \ell_{\varphi^*,1})$-smoothness of $\varphi(\lambda; \theta) - \varphi(\lambda^*(\theta); \theta)$ w.r.t. $\theta$ to further bound $J_1$ by
\begin{align*}
J_1 & \leq \langle\nabla_\theta \varphi(\lambda_{t+1}; \theta_{t})-\nabla \varphi(\lambda^*(\theta_{t}); \theta_{t}), \theta_{t+1} -\theta_t\rangle+\frac{\ell_{\varphi, 1} + \ell_{\varphi^*,1}}{2}\|\theta_{t+1} -\theta_t\|^2 \\
& \leq-\alpha_t \langle\nabla_\theta \varphi(\lambda_{t+1}; \theta_{t})-\nabla \varphi(\lambda^*(\theta_{t}); \theta_{t}), d_t\rangle+\frac{\ell_{\varphi, 1} + \ell_{\varphi^*,1}}{2} \alpha_t^2 \|d_t\|^2 \\
& \leq \alpha_t \|\nabla_\theta \varphi(\lambda_{t+1}; \theta_{t})-\nabla \varphi(\lambda^*(\theta_{t}); \theta_{t})\|\|d_t\|+\frac{\ell_{\varphi, 1} + \ell_{\varphi^*,1}}{2} \alpha_t^2\|d_t\|^2 .
\numberthis\label{eq:error_subp_2_J1_bound}
\end{align*}
Then we can further derive that 
\begin{align*}
J_1 
& \stackrel{(a)}{\leq} \frac{\alpha_t c_{\varphi,d} }{2} \|\nabla_\theta \varphi(\lambda_{t+1}; \theta_{t})-\nabla \varphi(\lambda^*(\theta_{t}); \theta_{t})\|^2 + \frac{\alpha_t }{2 c_{\varphi,d}}\|d_t\|^2+\frac{\ell_{\varphi, 1} + \ell_{\varphi^*,1}}{2} \alpha_t^2\|d_t\|^2 \\
& \stackrel{(b)}{\leq}  {\alpha_t c_{\varphi,d} \ell_{\varphi, 1}^2\mu_\varphi'} \big(\varphi(\lambda_{t+1}; \theta_{t})-\varphi(\lambda^*(\theta_{t}); \theta_{t}) \big) + \frac{\alpha_t }{2 c_{\varphi,d}}\|d_t\|^2+\frac{\ell_{\varphi, 1} + \ell_{\varphi^*,1}}{2} \alpha_t^2\|d_t\|^2   
\numberthis\label{eq:error_subp_2_J1_final}
\end{align*}
where $(a)$ is from Cauchy-Swartz inequality, 
and $(b)$ holds because 
\begin{align}
  & \|\nabla_\theta \varphi(\lambda_{t+1}; \theta_{t})-\nabla \varphi(\lambda^*(\theta_{t}); \theta_{t})\|^2 \nonumber\\
{\leq} &
\ell_{\varphi,1}^2 \big(\mathrm{dist}(\lambda_{t+1}, \lambda^*(\theta_t))\big)^2 \qquad\text{by Lemma~\ref{lemma:danskin_prox_pl} and $\ell_{\varphi,1}$-Lipschitz continuity of $\nabla_\theta \varphi(\lambda;\theta)$}\nonumber\\
{\leq} & 
\ell_{\varphi,1}^2 \mu_\varphi' \big(\varphi(\lambda_{t+1}; \theta_{t})-\varphi(\lambda^*(\theta_{t}); \theta_{t}) \big) 
\label{eq:error_subp_2_J1_b}
\end{align}
where the last inequality follows from Lemma~\ref{lemma:prox_pl_impl_eb_qg}, the $\frac{1}{\mu_\varphi'}$-quadratic growth of $\varphi(\cdot;\theta)$.

Finally, plugging \eqref{eq:error_subp_2_J1_final}  back into \eqref{eq:error_subp_2_decompose_J1} completes the proof of~\eqref{eq:error_subp_additional}.
\end{proof}

\begin{corollary}
\label{crlr:descent_subp}
Suppose Assumptions~\ref{asmp:general},~\ref{assmp:smooth_obj},~\ref{assmp:lip_obj},~\ref{assmp:sharper} hold, and $M_g=0$.
Let $\{\theta_t\}, \{\lambda_t\}$ be the sequences produced by Algorithm~\ref{alg:approximate_alg} with step size $\gamma_t \leq \ell_{\varphi,1}^{-1}$.  Then for any $c_{\varphi,d} > 0$, it holds that
{\small \begin{align*}
& \big( \varphi(\lambda_{t+1}; \theta_{t+1}) - \varphi(\lambda^*(\theta_{t+1}); \theta_{t+1})  \big)
- \big( \varphi(\lambda_{t}; \theta_{t}) - \varphi(\lambda^*(\theta_{t}); \theta_{t})   \big) \\
\leq & \Bigg(\Big(1 + {\alpha_t c_{\varphi,d} \ell_{\varphi,1}^2 }{\mu'_\varphi} \Big) (1 - \gamma_t \mu_\varphi) -1 \Bigg) \big( \varphi(\lambda_{t}; \theta_{t}) - \varphi(\lambda^*(\theta_{t}); \theta_{t})  \big)
+ \Big( \frac{\alpha_t }{2c_{\varphi,d}} + \frac{\ell_{\varphi,1} + \ell_{\varphi^*,1}}{2} \alpha_t^2 \Big) \|d_t\|^2 .
\label{eq:descent_subp}
\numberthis
\end{align*}} 
\end{corollary}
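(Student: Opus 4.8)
The plan is to obtain this corollary as a direct composition of the two inequalities already established in Lemma~\ref{lemma:error_subp}, namely the one-step contraction~\eqref{eq:error_subp_contraction} (which quantifies how much the $\lambda$-update decreases the suboptimality gap while $\theta$ is held fixed) and the additional-error bound~\eqref{eq:error_subp_additional} (which controls the drift caused by the subsequent $\theta$-update). Since both are proved under exactly the assumptions stated here ($M_g = 0$, $\Omega_{\lambda_f}(\theta)=\Delta^M$, $\gamma_t \le \ell_{\varphi,1}^{-1}$, and Assumptions~\ref{asmp:general},~\ref{assmp:smooth_obj},~\ref{assmp:lip_obj},~\ref{assmp:sharper}), no new machinery is needed; the task is purely to chain them in the right order and collect terms.

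First I would start from the left-hand quantity $\varphi(\lambda_{t+1};\theta_{t+1})-\varphi(\lambda^*(\theta_{t+1});\theta_{t+1})$ and apply~\eqref{eq:error_subp_additional} to pass from the gap at $\theta_{t+1}$ back to the gap at $\theta_t$, i.e.\ bound it by $\big(1+\alpha_t c_{\varphi,d}\ell_{\varphi,1}^2\mu'_\varphi\big)\big(\varphi(\lambda_{t+1};\theta_t)-\varphi(\lambda^*(\theta_t);\theta_t)\big)$ plus the $\|d_t\|^2$ remainder. Next I would substitute the contraction bound~\eqref{eq:error_subp_contraction} for the inner gap $\varphi(\lambda_{t+1};\theta_t)-\varphi(\lambda^*(\theta_t);\theta_t)\le(1-\gamma_t\mu_\varphi)\big(\varphi(\lambda_t;\theta_t)-\varphi(\lambda^*(\theta_t);\theta_t)\big)$. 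The one point that must be checked before this substitution is that the multiplicative factor $\big(1+\alpha_t c_{\varphi,d}\ell_{\varphi,1}^2\mu'_\varphi\big)$ is nonnegative, so that multiplying inequality~\eqref{eq:error_subp_contraction} by it preserves the inequality direction; this is immediate since $\alpha_t, c_{\varphi,d}, \ell_{\varphi,1}^2, \mu'_\varphi$ are all positive.

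Finally, I would subtract the common term $\varphi(\lambda_t;\theta_t)-\varphi(\lambda^*(\theta_t);\theta_t)$ from both sides. The coefficient of the gap on the right then collapses to exactly $\big(1+\alpha_t c_{\varphi,d}\ell_{\varphi,1}^2\mu'_\varphi\big)(1-\gamma_t\mu_\varphi)-1$, while the $\|d_t\|^2$ remainder carries through unchanged with coefficient $\tfrac{\alpha_t}{2c_{\varphi,d}}+\tfrac{\ell_{\varphi,1}+\ell_{\varphi^*,1}}{2}\alpha_t^2$, yielding the claimed inequality~\eqref{eq:descent_subp}.

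There is essentially no hard part here: the corollary is a bookkeeping consequence of the preceding lemma, and the only genuine verification is the sign of the amplification factor that licenses the substitution of the contraction estimate into the drift estimate. In particular, I would emphasize that the two regimes are correctly coupled, i.e.\ the contraction acts on the gap \emph{after} the $\lambda$-update and \emph{before} the $\theta$-update, which matches the order of the single-loop iteration $\theta_t \to \theta_{t+1}$ with $\lambda_t \to \lambda_{t+1}$ in Algorithm~\ref{alg:approximate_alg}; getting this ordering right is what makes the telescoping argument in the subsequent proof of Theorem~\ref{thm:finite_time_convergence_approx} work.
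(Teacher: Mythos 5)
Your proposal is correct and matches the paper's argument exactly: the paper's proof is the one-line observation that the corollary follows by plugging \eqref{eq:error_subp_contraction} into \eqref{eq:error_subp_additional}, which is precisely the chaining you describe. Your additional remarks on the sign of the amplification factor and on the ordering of the $\lambda$- and $\theta$-updates are sound but not needed beyond what the paper states.
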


\begin{proof}[Proof of Corollary~\ref{crlr:descent_subp}]
The proof directly follows by plugging \eqref{eq:error_subp_contraction} into \eqref{eq:error_subp_additional}.  
\end{proof}

\begin{lemma}
\label{lemma:ht_bounded_by_contradiction}
If by choosing $\alpha_t = \min \big\{\frac{c_{\alpha,h}}{\|H(\theta_t)\|}, c_{\alpha} \big\}$ with $0< c_{\alpha,h}, c_{\alpha} < \infty$, and from the algorithm update and properties we can derive $\alpha_t \|H(\theta_t)\|^2 = \mathcal{O}(1)$ is bounded,  
then $\|H(\theta_t)\|$ is bounded.
\end{lemma}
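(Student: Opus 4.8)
The plan is to argue by contradiction, directly exploiting the explicit form of the adaptive step size $\alpha_t = \min\{c_{\alpha,h}/\|H(\theta_t)\|,\, c_\alpha\}$. Suppose the conclusion fails, so that $\{\|H(\theta_t)\|\}$ is unbounded; then there is a subsequence $\{t_k\}$ along which $\|H(\theta_{t_k})\| \to \infty$.

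First I would identify which branch of the minimum is active once $\|H(\theta_t)\|$ is large. Whenever $\|H(\theta_t)\| > c_{\alpha,h}/c_\alpha$ we have $c_{\alpha,h}/\|H(\theta_t)\| < c_\alpha$, so the first argument attains the minimum and $\alpha_t = c_{\alpha,h}/\|H(\theta_t)\|$. Hence for all sufficiently large $k$ along the divergent subsequence, $\alpha_{t_k} = c_{\alpha,h}/\|H(\theta_{t_k})\|$. Substituting this into the quantity assumed to be bounded gives $\alpha_{t_k}\|H(\theta_{t_k})\|^2 = c_{\alpha,h}\|H(\theta_{t_k})\| \to \infty$, which contradicts the hypothesis $\alpha_t\|H(\theta_t)\|^2 = \mathcal{O}(1)$. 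This contradiction forces $\{\|H(\theta_t)\|\}$ to be bounded.

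Equivalently, and perhaps more transparently, I would split into two cases at each $t$: if $\alpha_t = c_\alpha$ then by definition of the minimum $\|H(\theta_t)\| \le c_{\alpha,h}/c_\alpha$; if instead $\alpha_t = c_{\alpha,h}/\|H(\theta_t)\|$ then the uniform bound $\alpha_t\|H(\theta_t)\|^2 \le C$ reads $c_{\alpha,h}\|H(\theta_t)\| \le C$, giving $\|H(\theta_t)\| \le C/c_{\alpha,h}$. Combining the two cases yields the explicit uniform bound $\|H(\theta_t)\| \le \max\{c_{\alpha,h}/c_\alpha,\, C/c_{\alpha,h}\}$ for all $t$.

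The computation itself is short, so the main point requiring care is not algebraic but logical: I must ensure that the constant $C$ in $\alpha_t\|H(\theta_t)\|^2 = \mathcal{O}(1)$ is genuinely uniform in $t$ (independent of the horizon $T$), since it is precisely this uniformity, guaranteed by the $\mathcal{O}(1)$ hypothesis, that makes the resulting bound on $\|H(\theta_t)\|$ uniform rather than horizon-growing. The only real subtlety to keep in view is the interdependence—the $\mathcal{O}(1)$ bound is itself derived earlier using the same step-size rule—so I would verify that the boundedness hypothesis is established independently of the conclusion, ensuring the implication is not circular.
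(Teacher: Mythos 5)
Your proposal is correct and takes essentially the same route as the paper: a contradiction argument that identifies the active branch of the step-size minimum when $\|H(\theta_t)\|$ is large and observes that $\alpha_t\|H(\theta_t)\|^2 = c_{\alpha,h}\|H(\theta_t)\|$ would then be unbounded. Your direct two-case version with the explicit bound $\|H(\theta_t)\| \le \max\{c_{\alpha,h}/c_\alpha,\, C/c_{\alpha,h}\}$ is a slightly cleaner packaging of the same idea, and your caution about non-circularity of the $\mathcal{O}(1)$ hypothesis is exactly the point the paper handles by establishing that bound from the telescoped Lyapunov inequality before invoking this lemma.
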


\begin{proof}[Proof of Lemma~\ref{lemma:ht_bounded_by_contradiction}]
We prove by contradiction.
Suppose $\|H(\theta_t)\| = \omega(1)$ is not bounded, then 
\begin{align}
\alpha_t = \min \Big\{\frac{c_{\alpha,h}}{\|H(\theta_t)\|}, c_{\alpha} \Big\}
= \frac{c_{\alpha,h}}{\|H(\theta_t)\|} .
\end{align}  
Furthermore, 
\begin{align}
\alpha_t \|H(\theta_t)\|^2 = c_{\alpha,h} \|H(\theta_t)\|  = \mathcal{O}(1)
\end{align}
which implies $\|H(\theta_t)\|  = \mathcal{O}(1)$ and contradicts with
$\|H(\theta_t)\| = \omega(1)$.
Therefore, we have proved $\|H(\theta_t)\|$ is bounded.
\end{proof}

\begin{remark}
Lemma~\ref{lemma:ht_bounded_by_contradiction} uses the algorithm properties to prove that $\|H(\theta_t)\|$ is bounded, instead of directly assuming $\|H(\theta_t)\|$ is bounded.
This will be used in the proof of Theorem~\ref{thm:finite_time_convergence_approx} to show that $\|H(\theta_t)\|$ is bounded on the trajectory of Algorithm~\ref{alg:approximate_alg}.
\end{remark}

Next we proceed to prove Theorem~\ref{thm:finite_time_convergence_approx}, the sharper convergence of Algorithm~\ref{alg:approximate_alg}.

\begin{proof}[Proof of Theorem~\ref{thm:finite_time_convergence_approx}]
\label{proof:finite_time_convergence_approx}

We consider the following Lyapunov function with a constant vector $\lambda_f \in \Delta^M$.
\begin{align}\label{eq:Vt_sharper}
\mathbb{V}_t \coloneqq 
\underbrace{\lambda_f^\top A F(\theta_t)}_{\mathbb{V}_{f,t}}
+ \underbrace{\frac{\alpha_0}{2\gamma_0} \|\lambda_{f,t} - \lambda_f\|^2}_{\mathbb{V}_{\lambda_f,t}}
+  \underbrace{\underbrace{\lambda_{h,t}^\top H(\theta_t)}_{\mathbb{V}_{h,0,t}} + \underbrace{\frac{1}{2 } \|H(\theta_t)\|^2}_{\mathbb{V}_{h,3,t}}}_{\mathbb{V}_{h,t}}
+ \underbrace{\varphi(\lambda_t;\theta_t) - \varphi(\lambda^*(\theta_t);\theta_t)}_{\mathbb{V}_{\varphi,t}}.
\end{align}
Following the same arguments from \eqref{eq:Vf_diff_t}-\eqref{eq:Vf_lamf_diff_t}, and by choosing $\frac{\alpha_t}{\gamma_t} = \frac{\alpha_0}{\gamma_0} = \frac{1}{c_{\gamma,\alpha}}$ for all $t\in [T]$, we have
\begin{align*}
& \mathbb{V}_{f,t+1} - \mathbb{V}_{f,t} 
+ \mathbb{V}_{\lambda_f,t+1} - \mathbb{V}_{\lambda_f,t} \\
\leq & 
\frac{\ell_{f,1}}{2} \alpha_t^2 \|A^\top \lambda_f\|_{1}  \| d_t \|^2 
+ \frac{1}{2} \alpha_t\gamma_t  \|\nabla_{\lambda_f} \varphi(\lambda_{t};\theta_t)\|^2
-\alpha_t \langle \lambda_{f,t}, \nabla_{\lambda_f} \varphi(\lambda_{t};\theta_t)\rangle.
\numberthis \label{eq:Vf_lamf_diff_t_copy}
\end{align*}
Similarly, we can derive that
\begin{align*}
\mathbb{V}_{h,0,t+1} - \mathbb{V}_{h,0,t} 
\leq & 
-\alpha_t c_h \lambda_{h,t}^\top H(\theta_t) +\frac{\ell_{f,1}}{2} \alpha_t^2 
\|B_h^\top \lambda_{h,t}\|_{1} \|d_t\|^2 
- \alpha_t \langle \lambda_{h,t}, \nabla_{\lambda_h}\varphi(\lambda_t;\theta_t) \rangle \\
& + (\lambda_{h,t+1} - \lambda_{h,t})^\top H(\theta_t) .
\numberthis \label{eq:Vh0_lamht_diff_t}
\end{align*}
Combining \eqref{eq:Vf_lamf_diff_t_copy} and~\eqref{eq:Vh0_lamht_diff_t} yields
{\small\begin{align*}
  &\mathbb{V}_{f,t+1} - \mathbb{V}_{f,t}
+ \mathbb{V}_{\lambda_f,t+1} - \mathbb{V}_{\lambda_f,t} 
+ \mathbb{V}_{h,0,t+1} - \mathbb{V}_{h,0,t} 
\leq - \alpha_t \|d_t\|^2  \\
&\qquad +\frac{1}{2} \gamma_t \alpha_t \|\nabla_{\lambda_f} \varphi(\lambda_t; \theta_t)\|^2
+ \frac{\ell_{f,1}}{2} \alpha_t^2  (\|A^\top \lambda_f\|_{1} + \|B_h^\top \lambda_{h,t}\|_{1}) \|d_t\|^2 
- \gamma_t \nabla_{\lambda_h} \varphi(\lambda_t; \theta_t)^\top H(\theta_t) .
\numberthis \label{eq:F_H_lam_diff_t_sharp} 
\end{align*}}
Next we proceed to bound $\mathbb{V}_{h,3,t+1} - \mathbb{V}_{h,3,t}$.
By Lemma~\ref{lemma:ht_square_diff}, it holds that
\begin{align}
& \mathbb{V}_{h,3,t+1} - \mathbb{V}_{h,3,t}
\leq \alpha_t H(\theta_t)^\top \nabla H(\theta_t)^\top d_t + \frac{1}{4}\alpha_t^2 \ell_{H^2,1,t} \|d_t\|^2 
\label{eq:bounD_ht_square_diff}
\end{align}
where 
$\ell_{H^2,1,t}= 2M\ell_f^2 
+ 2(\alpha_t^{-1} c_{\alpha,h}+ \ell_H c_{d}) \sqrt{M}\ell_{f,1}$.
Because $\nabla_{\lambda_h} \varphi(\lambda_t;\theta_t)
= - \nabla H(\theta_t)^\top d_t - c_h H(\theta_t)$, 
the term $H(\theta_t)^\top \nabla H(\theta_t)^\top d_t$ can be further written as
\begin{align*}
H(\theta_t)^\top \nabla H(\theta_t)^\top d_t
= & - H(\theta_t)^\top \big( \nabla_{\lambda_h} \varphi(\lambda_t;\theta_t) + c_h H(\theta_t) \big) \\
= & -c_h \|H(\theta_t)\|^2
- H(\theta_t)^\top \nabla_{\lambda_h} \varphi(\lambda_t;\theta_t) .
\numberthis \label{eq:H_nablaH_d_eq}
\end{align*}
Plugging \eqref{eq:H_nablaH_d_eq} into~\eqref{eq:bounD_ht_square_diff} yields
\begin{align}
\frac{1}{2}\|H(\theta_{t+1})\|^2 - \frac{1}{2}\|H(\theta_{t})\|^2
\leq -\alpha_t c_h \|H(\theta_t)\|^2 
-\alpha_t H(\theta_t)^\top \nabla_{\lambda_h} \varphi(\lambda_t;\theta_t) + \frac{1}{4}\alpha_t^2 \ell_{H^2,1,t} \|d_t\|^2 .
\label{eq:bounD_ht_square_diff_new}
\end{align}
Letting $\ell_{FH,1} = \ell_{f,1} (\|A^\top \|_{1} + \|B_h^\top \|_{1} c_{\lambda_h})$,
and adding up \eqref{eq:F_H_lam_diff_t_sharp} and \eqref{eq:bounD_ht_square_diff_new},  we have
{\small\begin{align*}
& \mathbb{V}_{f,t+1} - \mathbb{V}_{f,t} 
+ \mathbb{V}_{\lambda_f,t+1} - \mathbb{V}_{\lambda_f,t}
+ \mathbb{V}_{h,t+1} - \mathbb{V}_{h,t}
\leq - \alpha_t \|d_t\|^2 - \alpha_t c_h  \|H(\theta_t)\|^2  \\
&\qquad - (\alpha_t + \gamma_t) \nabla_{\lambda_h} \varphi(\lambda_t;\theta_t) ^\top H(\theta_t)
+\frac{1}{2} \gamma_t \alpha_t \|\nabla_{\lambda_f} \varphi(\lambda_t; \theta_t)\|^2
+ \frac{1}{4} \alpha_t^2 (2\ell_{FH,1} + \ell_{H^2,1,t}) \|d_t\|^2 
\numberthis\label{eq:Vt_diff_t_bound_raw}
\end{align*}}
where $\|\nabla_{\lambda_f} \varphi(\lambda_t; \theta_t)\|^2 \leq \|\nabla_{\lambda} \varphi(\lambda_t; \theta_t)\|^2$ is further bounded by Lemma~\ref{lemma:lam_d_bounded}, \eqref{eq:nabla_varphi_bounded_by_dt_ht} as
\begin{align*}
  & \|\nabla_{\lambda} \varphi(\lambda_t; \theta_t)\|^2
\leq 2 \|A_{ag}\|^2 M \ell_f^2 \|d_t\|^2 + 2 c_h^2 \| H(\theta_t)\|^2 .
\numberthis \label{eq:nablavarphi_norm_bound}
\end{align*}
Plugging \eqref{eq:nablavarphi_norm_bound} back into~\eqref{eq:Vt_diff_t_bound_raw} yields
\begin{align*}
& \mathbb{V}_{f,t+1} - \mathbb{V}_{f,t} 
+ \mathbb{V}_{\lambda_f,t+1} - \mathbb{V}_{\lambda_f,t}
+ \mathbb{V}_{h,t+1} - \mathbb{V}_{h,t}
\leq - \alpha_t \|d_t\|^2 - \alpha_t c_h  \|H(\theta_t)\|^2 \\
&\qquad 
- (\alpha_t + \gamma_t) \nabla_{\lambda_h} \varphi(\lambda_t;\theta_t) ^\top H(\theta_t) 
+ \underbrace{\frac{1}{4} \alpha_t^2 (2\ell_{FH,1} + \ell_{H^2,1,t}) \|d_t\|^2}_{J_1} \\
&\qquad + \underbrace{\gamma_t \alpha_t \|A_{ag}\|^2 M \ell_f^2 \|d_t\|^2}_{J_2} + \underbrace{\gamma_t \alpha_t  c_h^2 \| H(\theta_t)\|^2}_{J_3}
\numberthis\label{eq:Vt_J_bound}
\end{align*}
where by
choosing the step sizes $\alpha_t \leq \frac{1}{2\ell_{FH,1} + \ell_{H^2,1,t}}$, $\gamma_t \leq 
\min\Big\{ \frac{1}{4 \|A_{ag}\|^2 M \ell_f^2},
\frac{1}{2 c_h} \Big\}$, 
it holds that
\begin{align}
J_1 \leq \frac{1}{4} \alpha_t \|d_t\|^2,
~~J_2 \leq \frac{1}{4} \alpha_t \|d_t\|^2,
~~J_3 \leq \frac{1}{2} \alpha_t c_h \|H(\theta_t)\|^2.
\label{eq:J1J2J3_bound}
\end{align}
Plugging~\eqref{eq:J1J2J3_bound} into \eqref{eq:Vt_J_bound},  and rearranging, we have
\begin{align*}
& \mathbb{V}_{f,t+1} - \mathbb{V}_{f,t} 
+ \mathbb{V}_{\lambda_f,t+1} - \mathbb{V}_{\lambda_f,t}
+ \mathbb{V}_{h,t+1} - \mathbb{V}_{h,t} \\
\leq &  - \frac{1}{2}\alpha_t \|d_t\|^2 
- \frac{1}{2}\alpha_t c_h \|H(\theta_t)\|^2
- (\alpha_t + \gamma_t) \nabla_{\lambda_h} \varphi(\lambda_t;\theta_t) ^\top H(\theta_t)  \\
\leq & - \frac{1}{2}\alpha_t \|d_t\|^2 
- \frac{1}{4}\alpha_t c_h \|H(\theta_t)\|^2
+ \frac{ (1 + c_{\gamma,\alpha})^2 }{c_h} \alpha_t \| \nabla_{\lambda_h} \varphi(\lambda_t;\theta_t) \|^2  
\numberthis\label{eq:Vt_diff_t_bound_new}
\end{align*}
where the last inequality follows from Cauchy-Schwarz inequality and that $\gamma_t = c_{\gamma,\alpha} \alpha_t$.

By applying Corollary~\ref{crlr:descent_subp} with $\gamma_t \leq \ell_{\varphi,1}^{-1}$, $\alpha_t \leq  \frac{1}{(\ell_{\varphi,1} + \ell_{\varphi^*,1}) c_{\varphi,d}}  $ and $c_{\gamma,\alpha} \geq \frac{2c_{\varphi,d} \ell_{\varphi,1}^2\mu'_\varphi}{\mu_{\varphi}}$, we further have that
\begin{align*}
\mathbb{V}_{\varphi, t+1} - \mathbb{V}_{\varphi, t}
\leq -  \frac{1}{2}{\mu_\varphi}  \gamma_t
\big( \varphi(\lambda_{t}; \theta_{t}) - \varphi(\lambda^*(\theta_{t}); \theta_{t})  \big)
+ \frac{\alpha_t }{ c_{\varphi,d}} \|d_t\|^2 .
\numberthis \label{eq:Vvarphi_diff}
\end{align*}

Then note that from \eqref{eq:varphi_lamh_grad_bound_value_gap} we have $\| \nabla_{\lambda_h} \varphi(\lambda_t;\theta_t) \|^2
\leq 2 \ell_{\varphi_\lambda, 1}  \big(\varphi(\lambda_t; \theta_t) - \varphi(\lambda^*(\theta_t); \theta_t) \big)$.
Adding up~\eqref{eq:Vvarphi_diff} and \eqref{eq:Vt_diff_t_bound_new} with properly chosen hyperparameters $c_h \geq (1 + c_{\gamma,\alpha})^2$, $c_{\gamma,\alpha} \geq \frac{4 \ell_{\varphi_\lambda,1} }{\mu_{\varphi}}$,
and $c_{\varphi,d} = 4$ yields
\begin{align*}
& \mathbb{V}_{t+1} - \mathbb{V}_{t}  
\leq 
- \frac{1}{4}\alpha_t \|d_t\|^2 
- \frac{1}{4}\alpha_t c_h \|H(\theta_t)\|^2 .
\end{align*}
Taking telescoping sum of the above inequality over $t = 0,\ldots, T-1$  yields
\begin{align*}
  & \sum_{t=0}^{T-1} 
\alpha_t \Big(\|d_t\|^2  + c_h \|H(\theta_t)\|^2 \Big) 
\leq 4(\mathbb{V}_{0}  - \mathbb{V}_{T} )  \\
\leq & 4 \mathbb{V}_{f,0} 
+ 4 \big(\lambda_{h,0}^\top H(\theta_0) - \lambda_{h,T}^\top H(\theta_T) \big) 
+ 2 \|H(\theta_0)\|^2 + 4 \mathbb{V}_{\varphi,0} 
\leq 2 c_0 + 8 c_{\lambda_h}c_H 
\numberthis
\end{align*}
where the second last inequality follows from $\mathbb{V}_{f,t} \geq 0$, choosing $\lambda_f = \lambda_{f,0} $, and $\mathbb{V}_{\varphi,t} \geq 0$,
the last inequality follows from choosing 
$\theta_0, \lambda_0$ such that 
$\lambda_f^\top AF(\theta_0), H(\theta_0), 
\varphi(\lambda_0;\theta_0) - \varphi(\lambda^*(\theta_0);\theta_0) $ are bounded, thus $2 \mathbb{V}_{f,0} 
+ \|H(\theta_0)\|^2 + 2 \mathbb{V}_{\varphi,0} \leq c_0 < \infty$, 
$\lambda_{h,t}$ are bounded on the trajectory, 
and $\|H(\theta_0)\|_1,\|H(\theta_T)\|_1\leq c_H$,
thus $4 \big(\lambda_{h,0}^\top H(\theta_0) - \lambda_{h,T}^\top H(\theta_T) \big) \leq 8 c_{\lambda_h}c_H$.

We then summarize the best possible choices for $\alpha_t,\gamma_t$.
Recall that we require $ \alpha_t \leq \frac{1}{2\ell_{FH,1} + \ell_{H^2,1,t}}$. Rearranging this inequality with $\ell_{H^2,1,t}= 2M \|B_h\|^2 \ell_f^2 
+ 2(\alpha_t^{-1} + \ell_H ) c_{\alpha,h} \sqrt{M}\ell_{f,1}\|B_h\| $, 
and choosing  $c_{\alpha,h} = \frac{1}{4\sqrt{M}\ell_{f,1}\|B_h\|} $
yield
{\begin{align*}
\alpha_t (2\ell_{FH,1} + \ell_{H^2,1,t})
=& 2\alpha_t \ell_{FH,1}
+ 2\alpha_t M \|B_h\|^2\ell_f^2 
+ \frac{1}{2}(1 + \alpha_t \ell_H ) \leq 1.
\numberthis
\end{align*}}
Then we can choose the following to ensure the above inequality holds
\begin{align}
\alpha_t \leq 
\frac{1}{4 \big( \ell_{FH,1} + M \|B_h\|^2 \ell_f^2 \big)+ \ell_H } .
\end{align}

To summarize, we can choose the following  hyperparameters and step sizes
{\small \begin{subequations}\label{eq:summary_hyperpara_stepsize_thm3}
  \begin{align}
c_{\gamma,\alpha} \geq 
\max\Big\{\frac{8 \ell_{\varphi,1}^2\mu'_{\varphi}}{\mu_{\varphi}},
\frac{4 \ell_{\varphi_\lambda,1} }{\mu_{\varphi}} \Big\},
&~~c_h = (1 + c_{\gamma,\alpha} )^2,
~~c_{\alpha,h} = \frac{1}{4\sqrt{M}\ell_{f,1}\|B_h\|} \\
\label{eq:step_alpha_SA_sto}
\gamma_t = c_{\gamma,\alpha} \alpha_t,
  ~~\text{and}~~ \alpha_t = 
\min\Big\{&  \frac{c_{\alpha,h}}{ \max\{ \|H(\theta_t)\|, \ell_{f,1} \|A_{ag}^\top\|_1 ( 1 + c_{\lambda_h}) \}}, 
\frac{1}{4(\ell_{\varphi,1} + \ell_{\varphi^*,1}) }, \nonumber\\
&\frac{1}{c_{\gamma,\alpha} \ell_{\varphi,1}}, 
\frac{1}{4 c_{\gamma,\alpha} \|A_{ag}\|^2 M \ell_f^2},
\frac{1}{2 c_{\gamma,\alpha} c_h},
\frac{1}{4 \big( \ell_{FH,1} + M \|B_h\|^2 \ell_f^2 \big)+ \ell_H }
 \Big\}, 
\end{align}
\end{subequations}}
where $\ell_{FH,1} = \ell_{f,1} (\|A^\top \|_{1} + \|B_h^\top \|_{1} c_{\lambda_h})$, and $\ell_H = \|B_h\| \sqrt{M}\ell_f$.
Then it holds that
\begin{align}
  & \sum_{t=0}^{T-1} 
\alpha_t \Big(\|d_t\|^2  + c_h \|H(\theta_t)\|^2 \Big) 
= \mathcal{O}(1) .
\end{align}
Therefore, $\alpha_t c_h \|H(\theta_t)\|^2$ are bounded 
for all $t = 0,\ldots, T$.
Combining with Lemma~\ref{lemma:ht_bounded_by_contradiction}, we have $\|H(\theta_t)\|$ are bounded for all $t = 0,\ldots, T$, thus we can choose $\alpha_t = \Omega(1)$, i.e., $\alpha_t$ is lower bounded by a constant.

Collecting the results above, we have proved that we can choose $\alpha_t = \Theta(1)$, $\gamma_t = \Theta(1)$ such that
\begin{align}
& \frac{1}{T}\sum_{t=0}^{T-1} 
\Big(\|d_t\|^2  +  \|H(\theta_t)\|^2 \Big)
= \mathcal{O}\Big( \frac{1}{T} \Big)  .
\label{eq:T_bound_converge_approx}
\end{align}
The proof is complete.
\end{proof}

\section{Stochastic Algorithms}
\label{sec_app:sto_algorithms}

In this section, we discuss the single-loop stochastic algorithm and its convergence guarantees.
Note that, the extension of the analysis of the double-loop algorithm, i.e., Algorithm~\ref{alg:generic_C_descent}  and the extension of the single-loop algorithm analysis in Theorem~\ref{thm:two_time_convergence_approx} to their stochastic variants with double sampling as used in~\citep{chen2023three}, are rather straightforward, thus we ommit the discussion in this paper, and only focus on the \emph{single-loop stochastic} algorithm with equality constraints only, i.e., $M_g = 0$, and with a sharper analysis as an extension of Theorem~\ref{thm:finite_time_convergence_approx}.

Let $\xi$ and $\xi'$ be random variables drawn from the same distribution.
The stochastic constrained vector optimization problem is defined as
\begin{equation}\label{eq:pmoo-general-H_stochastic}
\min_{\theta \in \R^q} F(\theta) \coloneqq \E[F_\xi(\theta)],
~~~\mathrm{s.t.}~~~ H(\theta) \coloneqq \E [H_{\xi'} (\theta)] = 0,
~~\text{with}~~H_{\xi'}(\theta) = B_h F_{\xi'}(\theta) + b_h .
\end{equation}

\subsection{Algorithm summary}

The stochastic algorithm is summarized in Algorithm~\ref{alg:stochastic_alg}.
Note that, instead of computing $ \nabla F_{\xi_{t,1}}(\theta_t), \nabla F_{\xi_{t,2}}(\theta_t)$,  which requires $2M$ gradient computation at each iteration, we compute $ \nabla F_{\xi_{t,1}}(\theta_t), \nabla F_{\xi_{t,2}}(\theta_t) A_{ag}^\top \lambda_{t}$, which requires $M+1$ gradient computation per iteration.
This saves nearly half of the per-iteration complexity compared to the most relevant existing stochastic algorithm for multi-objective optimization~\citep{chen2023three}.
Furthermore, with the gradient-based single-loop update for $\lambda_t$, the approximation approach proposed in~\citep[Section 3.2]{liu2024famo} can be further applied to largely reduce the per-iteration complexity, which we leave for future work. 

\begin{algorithm}[ht]
\caption{Stochastic FERERO-SA}
\label{alg:stochastic_alg} 
\begin{algorithmic}[1]
\State Initialize $t=0$, $\theta_0,\lambda_0$, step sizes $\alpha_t$, $\gamma_t$;
\For {$t = 0, \ldots, T-1$ }
\State Compute the stochastic gradients $ \nabla F_{\xi_{t,2}}(\theta_t), \nabla F_{\xi_{t,1}}(\theta_t) A_{ag}^\top \lambda_{t}$;
\State Compute the stochastic estimate of the constraint $H_{\xi_{t,1}}(\theta_t)$;
\State Compute an update direction $d_{t} = \nabla F_{\xi_{t,1}}(\theta_t) A_{ag}^\top \lambda_{t}$;
\State Choose the step size $\alpha_t $ by a predefined schedule;
\State Update 
  $\theta_t $ by $\theta_{t+1} = \theta_t +  {\alpha_t} d_{t}$;
\State Update $\lambda_{t}$ by \eqref{eq:stochastic_update_lam_whole};
\EndFor
\end{algorithmic} 
\end{algorithm}

\subsection{Proof of Theorem~\ref{thm:finite_time_convergence_approx_sto}:  convergence of Algorithm~\ref{alg:stochastic_alg}}

We first introduce the supporting lemmas, and then present the main proofs.
Denote $\mathcal{F}_t$ as the $\sigma$-algebra generated by $\nabla F_{\xi_0}(\theta_0), \nabla F_{\xi_1}(\theta_1), \ldots, \nabla F_{\xi_t}(\theta_t)$, where $\xi_t = \{\xi_{t,1}, \xi_{t,2}\}$. 
For brevity, we let $\E_t[\cdot ] \coloneqq \E [\cdot \mid \mathcal{F}_{t-1}]$.
Also recall that $\tilde{\nabla}$ is the unbiased stochastic estimate of the gradient.

We make the following additional assumptions for proof of convergence.
\begin{assumption}\label{assmp:bounded_var_sto}
1. The variance of  $\nabla F_{\xi_{t}}(\theta_t)$ is bounded by $\sigma^2$ for all $t=0, \ldots, T-1$. \\
2. The variance of $\tilde{\nabla}_\lambda \varphi(\lambda_{t}; \theta_t)$ is bounded by $\gamma_t \sigma^2$ for all $t=0, \ldots, T-1$. 
\end{assumption}
Note that the bounded variance assumption is common in optimization literature.
However, for sharp analysis here, we additionally require $\tilde{\nabla}_\lambda \varphi(\lambda_{t}; \theta_t)$ has reduced variance in the order of $\mathcal{O}(\gamma_t)$, which can be achieved using a large batch size.
Note that, even without assuming reduced variance, i.e., Assumption~\ref{assmp:bounded_var_sto}-2, the stochastic algorithm still converges, which can be proved by extending Theorem~\ref{thm:two_time_convergence_approx} to the stochastic case. However, the convergence rate will be slower. Here we use this additional assumption to achieve a faster convergence rate.

The following Lemma~\ref{lemma:lam_update_sto_subp} extends Lemma~\ref{lemma:update_lam_property} to the stochastic case.
\begin{lemma}
\label{lemma:lam_update_sto_subp}
Let ${\lambda}_{t} = [\lambda_{f,t}; \lambda_{h,t}]$.
Consider the stochastic sequence $\{\lambda_{t}\}_{t=0}^{T}$ produced by Algorithm~\ref{alg:stochastic_alg}. Then for all  $\lambda = [\lambda_f; \lambda_h] \in \Omega_{\lambda} $, it holds that
\begin{align*}
& 2\gamma_{t} \E_t [ \langle \lambda_{f,t} - \lambda_f, 
{\nabla}_{\lambda_f} \varphi(\lambda_{t};\theta_t) \rangle ] 
\leq  \E_t [ \|\lambda_{f,t} - \lambda_f \|^2
- \|\lambda_{f,t+1} - \lambda_f \|^2 
+ \gamma_{t}^2 
\| \tilde{\nabla}_{\lambda_f} \varphi(\lambda_{t};\theta_t) \|^2 ]; \\
& 2\gamma_{t} \E_t [\langle \lambda_{h,t} - \lambda_h , 
\nabla_{\lambda_h} \varphi(\lambda_{t};\theta_t) \rangle] 
\leq  \E_t [\|\lambda_{h,t} - \lambda_h  \|^2
- \|\lambda_{h,t+1} - \lambda_h  \|^2
+ \gamma_{t}^2  \| \tilde{\nabla}_{\lambda_h} 
\varphi(\lambda_{t};\theta_t)\|^2 ] . 
\numberthis
\end{align*}
\end{lemma}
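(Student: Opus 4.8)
The plan is to adapt the deterministic update-property result of Lemma~\ref{lemma:update_lam_property} to the stochastic setting by taking conditional expectations and exploiting the unbiasedness of the stochastic gradient estimates. The key structural fact is that the projection step in~\eqref{eq:stochastic_update_lam} is nonexpansive on the convex set $\Omega_\lambda$, exactly as in the deterministic case; the only new ingredient is that the update direction is now $\tilde{\nabla}_\lambda \varphi(\lambda_t;\theta_t)$ rather than the exact gradient.

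First I would handle the $\lambda_f$ block. Using the nonexpansiveness of the Euclidean projection $\Pi_{\Delta^M}$ onto the simplex, for any fixed $\lambda_f \in \Delta^M$ I would write
\begin{align*}
\|\lambda_{f,t+1} - \lambda_f\|^2
\leq \|\lambda_{f,t} - \gamma_t \tilde{\nabla}_{\lambda_f}\varphi(\lambda_t;\theta_t) - \lambda_f\|^2
= \|\lambda_{f,t} - \lambda_f\|^2
- 2\gamma_t \langle \lambda_{f,t} - \lambda_f, \tilde{\nabla}_{\lambda_f}\varphi(\lambda_t;\theta_t)\rangle
+ \gamma_t^2 \|\tilde{\nabla}_{\lambda_f}\varphi(\lambda_t;\theta_t)\|^2 .
\end{align*}
Then I would take the conditional expectation $\E_t[\cdot]$ of both sides. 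Since $\tilde{\nabla}_{\lambda_f}\varphi(\lambda_t;\theta_t)$ is an unbiased estimate of the true gradient ${\nabla}_{\lambda_f}\varphi(\lambda_t;\theta_t)$, and $\lambda_{f,t}, \lambda_f$ are $\mathcal{F}_{t-1}$-measurable, the cross term satisfies $\E_t[\langle \lambda_{f,t} - \lambda_f, \tilde{\nabla}_{\lambda_f}\varphi(\lambda_t;\theta_t)\rangle] = \langle \lambda_{f,t} - \lambda_f, {\nabla}_{\lambda_f}\varphi(\lambda_t;\theta_t)\rangle$. Rearranging yields the first inequality. The second inequality, for the unconstrained $\lambda_h$ block, follows identically, except the projection onto $\R^{M_h}$ is the identity map so the corresponding relation can be written with equality before projection and the nonexpansiveness bound applies trivially; the same unbiasedness argument converts the stochastic cross term into the exact gradient inner product.

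The main obstacle is essentially bookkeeping rather than any genuine difficulty: one must be careful that the inner products appearing on the left-hand sides involve the \emph{true} gradients $\nabla_{\lambda_f}\varphi$ and $\nabla_{\lambda_h}\varphi$, whereas the squared-norm terms on the right involve the \emph{stochastic} estimates $\tilde{\nabla}_{\lambda_f}\varphi$ and $\tilde{\nabla}_{\lambda_h}\varphi$. This asymmetry is exactly what the unbiasedness of $\tilde\nabla$ buys us, and it is what keeps the statement in a form parallel to the deterministic Lemma~\ref{lemma:update_lam_property}. The variance control of Assumption~\ref{assmp:bounded_var_sto}-2 is not needed for this lemma itself but will be invoked downstream when bounding the expected squared norms $\E_t\|\tilde{\nabla}_\lambda\varphi\|^2$ in terms of the deterministic quantities plus a variance term; here I only need the unbiasedness and the measurability of the iterates, so the proof stays short and structurally mirrors the deterministic case.
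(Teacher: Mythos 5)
Your proposal is correct and matches the paper's proof essentially verbatim: expand the squared distance after the (nonexpansive) projected stochastic gradient step, take the conditional expectation $\E_t$, use the unbiasedness of $\tilde{\nabla}_\lambda\varphi$ together with the $\mathcal{F}_{t-1}$-measurability of $\lambda_t,\theta_t$ to convert the cross term to the true gradient, and rearrange; the $\lambda_h$ block is handled the same way. No gaps.
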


\begin{proof}
\label{proof:update_property}
By the update of $\lambda$, it holds that
\begin{align*}
& \|\lambda_{f,t+1} - \lambda_f \|^2 
\leq  \|\lambda_{f,t} - \gamma_{t} 
\tilde{\nabla}_{\lambda_f} \varphi(\lambda_{t};\theta_t) - \lambda_f \|^2 \\
= & \|\lambda_{f,t} - \lambda_f  \|^2
- 2\gamma_{t} \langle \lambda_{f,t} - \lambda_f, 
 \tilde{\nabla}_{\lambda_f} \varphi(\lambda_{t};\theta_t) \rangle 
+ \gamma_{t}^2 \|
\tilde{\nabla}_{\lambda_f} \varphi(\lambda_{t};\theta_t) \|^2 .
\numberthis
\end{align*}    
Taking expectation over the stochastic samples  and rearranging the above inequality, we have
\begin{align*}
& 2\gamma_{t} \E_t [ \langle \lambda_{f,t} - \lambda_f, 
{\nabla}_{\lambda_f} \varphi(\lambda_{t};\theta_t) \rangle ]
= 2\gamma_{t} \E_t [ \langle \lambda_{f,t} - \lambda_f, 
{\nabla}_{\lambda_f} \varphi(\lambda_{t};\theta_t) \rangle ] \\
\leq & \E_t [ \|\lambda_{f,t} - \lambda_f \|^2
- \|\lambda_{f,t+1} - \lambda_f \|^2 
+ \gamma_{t}^2 
\| \tilde{\nabla}_{\lambda_f} \varphi(\lambda_{t};\theta_t) \|^2 ] .
\numberthis
\end{align*}

Following similar arguments, it holds that
\begin{align*}
& 2\gamma_{t} \E_t [\langle \lambda_{h,t} - \lambda_h , 
\nabla_{\lambda_h} \varphi(\lambda_{t};\theta_t) \rangle]  \\
\leq & \E_t [\|\lambda_{h,t} - \lambda_h  \|^2
- \|\lambda_{h,t+1} - \lambda_h  \|^2
+ \gamma_{t}^2  \| \tilde{\nabla}_{\lambda_h} 
\varphi(\lambda_{t};\theta_t)\|^2 ] .
\numberthis
\end{align*}
The proof is complete.
\end{proof}

\begin{lemma}[Restatement of~{\citep[Lemma 2]{j2016_proximal_pl_fast}}]
\label{lemma:prox_update_smooth_lemma}
Let $\bar{\varphi}(x) = \varphi(x) + h(x)$,
where $\varphi:\R^q\to \R$ is $L$-smooth, and $h:\R^q\to\R$
is nonsmooth but convex and relatively simple.
Define  
$y=\operatorname{prox}_{\gamma h}(x-\gamma d^{\prime})$
for some $d^{\prime} \in \R^q$. Then for $y$, the following inequality holds for all $z \in \R^q$:
\begin{align}
\bar{\varphi}(y) \leq \bar{\varphi}(z)+ & \langle y-z, \nabla \varphi(x)-d^{\prime}\rangle \nonumber\\
& +(\frac{L}{2}-\frac{1}{2 \gamma}) \|y-x\|^2
+(\frac{L}{2}+\frac{1}{2 \gamma}) \|z-x\|^2 - \frac{1}{2 \gamma}\|y-z\|^2 .
\end{align}
\end{lemma}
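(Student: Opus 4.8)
The plan is to combine three ingredients: the variational characterization of the proximal step applied to $h$, the descent inequality from $L$-smoothness of $\varphi$ on the pair $(x,y)$, and the \emph{reverse} smoothness inequality on the pair $(x,z)$ (which avoids any convexity assumption on $\varphi$).

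First I would unfold the definition of the proximal operator. Since
\[
y = \operatorname{prox}_{\gamma h}(x - \gamma d') = \mathop{\arg\min}_{u}\Big\{ h(u) + \tfrac{1}{2\gamma}\|u - x + \gamma d'\|^2 \Big\},
\]
the objective $g(u) := h(u) + \tfrac{1}{2\gamma}\|u - x + \gamma d'\|^2$ is $\tfrac{1}{\gamma}$-strongly convex, by convexity of $h$ together with strong convexity of the quadratic term. Because $y$ minimizes $g$, the three-point inequality for strongly convex functions gives $g(y) + \tfrac{1}{2\gamma}\|z-y\|^2 \le g(z)$ for every $z\in\R^q$. Expanding both quadratics and using the identity $\|y-x+\gamma d'\|^2 - \|z-x+\gamma d'\|^2 = \|y-x\|^2 - \|z-x\|^2 + 2\gamma\langle y-z, d'\rangle$, this rearranges into
\[
h(y) \le h(z) - \langle y-z, d'\rangle + \tfrac{1}{2\gamma}\|z-x\|^2 - \tfrac{1}{2\gamma}\|y-x\|^2 - \tfrac{1}{2\gamma}\|y-z\|^2.
\]

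Next I would invoke $L$-smoothness of $\varphi$ in two complementary forms. The standard descent inequality on $(x,y)$ yields $\varphi(y) \le \varphi(x) + \langle\nabla\varphi(x), y-x\rangle + \tfrac{L}{2}\|y-x\|^2$, while the reverse form on $(x,z)$ yields $\varphi(x) + \langle\nabla\varphi(x), z-x\rangle \le \varphi(z) + \tfrac{L}{2}\|z-x\|^2$. I would then split $\langle\nabla\varphi(x), y-x\rangle = \langle\nabla\varphi(x), y-z\rangle + \langle\nabla\varphi(x), z-x\rangle$ in the first bound and substitute the second, which converts the $\varphi(x)$ term into $\varphi(z)$ at the cost of the extra $\tfrac{L}{2}\|z-x\|^2$ and leaves a clean $\langle\nabla\varphi(x), y-z\rangle$ term.

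Finally I would add the $h$-inequality and the $\varphi$-inequality and collect terms on the right-hand side. The $\langle\nabla\varphi(x), y-z\rangle$ and $-\langle y-z, d'\rangle$ contributions merge into $\langle y-z, \nabla\varphi(x)-d'\rangle$; the two $\|y-x\|^2$ coefficients combine to $\tfrac{L}{2}-\tfrac{1}{2\gamma}$; the two $\|z-x\|^2$ coefficients combine to $\tfrac{L}{2}+\tfrac{1}{2\gamma}$; and the $-\tfrac{1}{2\gamma}\|y-z\|^2$ term carries through untouched, producing exactly the claimed inequality with $\bar\varphi = \varphi+h$ on both sides. The only step requiring care — and the main conceptual obstacle — is deploying smoothness in its reverse form on the pair $(x,z)$, i.e.\ as a \emph{lower} bound on $\varphi(x)+\langle\nabla\varphi(x),z-x\rangle$, since this is precisely what lets the argument pass from $\varphi(x)$ to $\varphi(z)$ without assuming $\varphi$ is convex; the remaining manipulations are routine algebra of squared norms.
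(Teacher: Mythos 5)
Your proof is correct. The paper itself gives no proof of this lemma---it is stated as a restatement of Lemma~2 in the cited reference and the proof is delegated to that citation---but your argument (three-point/strong-convexity inequality for the prox subproblem, the descent inequality on $(x,y)$, and the reverse smoothness bound on $(x,z)$ to trade $\varphi(x)$ for $\varphi(z)$ without convexity of $\varphi$) is exactly the standard derivation used there, and your bookkeeping of the coefficients $\tfrac{L}{2}\pm\tfrac{1}{2\gamma}$ and the $-\tfrac{1}{2\gamma}\|y-z\|^2$ term checks out.
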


The following Lemma~\ref{lemma:error_subp_sto} extends Lemma~\ref{lemma:error_subp} to the stochastic case.
\begin{lemma}[Error of subprogram in the stochastic setting]
\label{lemma:error_subp_sto}
Suppose Assumptions~\ref{asmp:general},~\ref{assmp:smooth_obj},~\ref{assmp:lip_obj},~\ref{assmp:sharper},~\ref{assmp:bounded_var_sto} hold, and $M_g=0$.
Let $\{\theta_t\}, \{\lambda_t\}$ be the sequences produced by Algorithm~\ref{alg:stochastic_alg} with step size $\gamma_t \leq \ell_{\varphi,1}^{-1}$.  Then for any $c_{\varphi,d} > 0$, the following hold
\begin{subequations}
\begin{align}
  & \E[\varphi(\lambda_{t+1}; \theta_t) - \varphi(\lambda^*(\theta_t); \theta_t)]
\leq (1 - \gamma_t \mu_\varphi) \E \big[ \varphi(\lambda_t; \theta_t) - \varphi(\lambda^*(\theta_t); \theta_t) \big] 
+ \gamma_t^2 \sigma^2
\label{eq:error_subp_contraction_sto}\\
& \E[ \varphi(\lambda_{t+1}; \theta_{t+1}) - \varphi(\lambda^*(\theta_{t+1}); \theta_{t+1}) ]
\leq \Big(1 + {\alpha_t c_{\varphi,d} \ell_{\varphi,1}^2}{\mu'_\varphi} \Big) \E \Big[ \varphi(\lambda_{t+1}; \theta_t) - \varphi(\lambda^*(\theta_t); \theta_t) \Big]
\nonumber \\
&\qquad\qquad\qquad +  \Big( \frac{\alpha_t }{2c_{\varphi,d}} + \frac{\ell_{\varphi,1} + \ell_{\varphi^*,1}}{2} \alpha_t^2 \Big) \E[\|\nabla F(\theta_t)A_{ag}^\top \lambda_t\|^2] 
+ \frac{\ell_{\varphi, 1} + \ell_{\varphi^*,1}}{2}\alpha_t^2 \sigma^2.
\label{eq:error_subp_additional_sto}
\end{align}  
\end{subequations}
\end{lemma}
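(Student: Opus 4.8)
The plan is to reproduce the two-part argument of the deterministic Lemma~\ref{lemma:error_subp}, inserting a conditional expectation $\E_t[\cdot]$ at every step where that proof invoked an exact gradient and then absorbing the residual noise into the $\sigma^2$ terms via Assumption~\ref{assmp:bounded_var_sto}. The two structural tools are unchanged: the proximal-PL inequality (Assumption~\ref{assmp:sharper}-1 and Definition~\ref{def:prox_pl_ineq}) drives the contraction in \eqref{eq:error_subp_contraction_sto}, while the Danskin-type smoothness of $\varphi(\lambda;\theta)-\varphi(\lambda^*(\theta);\theta)$ in $\theta$ (Lemma~\ref{lemma:danskin_prox_pl}) together with the quadratic-growth bound (Lemma~\ref{lemma:prox_pl_impl_eb_qg}) drives \eqref{eq:error_subp_additional_sto}. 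The only genuinely new ingredient is the handling of the stochastic proximal step, for which I would use Lemma~\ref{lemma:prox_update_smooth_lemma}.

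To prove \eqref{eq:error_subp_contraction_sto}, I would apply Lemma~\ref{lemma:prox_update_smooth_lemma} with $\varphi(\cdot;\theta_t)$ as the smooth part, the indicator of $\Omega_\lambda$ as the nonsmooth convex part, $x=\lambda_t$, $y=\lambda_{t+1}=\Pi_{\Omega_\lambda}(\lambda_t-\gamma_t\tilde{\nabla}_\lambda\varphi(\lambda_t;\theta_t))$, $d'=\tilde{\nabla}_\lambda\varphi(\lambda_t;\theta_t)$, and $z=\lambda_t$. Choosing $z=x$ cancels the $\|z-x\|$ contributions and leaves a descent bound whose only error is the noise inner product $\langle\lambda_{t+1}-\lambda_t,\ \nabla_\lambda\varphi(\lambda_t;\theta_t)-\tilde{\nabla}_\lambda\varphi(\lambda_t;\theta_t)\rangle$. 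Applying Young's inequality, the part proportional to $\|\lambda_{t+1}-\lambda_t\|^2$ is dominated by the negative quadratic coming from $\gamma_t\le\ell_{\varphi,1}^{-1}$, while the remaining noise is controlled in conditional expectation by $\gamma_t\,\E_t\|\tilde{\nabla}_\lambda\varphi-\nabla_\lambda\varphi\|^2\le\gamma_t^2\sigma^2$ using the reduced-variance bound of Assumption~\ref{assmp:bounded_var_sto}-2. Recognizing the resulting full-gradient descent as $-\tfrac{\gamma_t}{2}D_{\varphi,\gamma_t}(\lambda_t;\theta_t)$ and invoking the $\mu_\varphi$-proximal-PL inequality then yields the contraction factor $(1-\gamma_t\mu_\varphi)$ plus the advertised $\gamma_t^2\sigma^2$ term, exactly mirroring \eqref{eq:error_subp_contraction} with the extra residual.

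For \eqref{eq:error_subp_additional_sto}, I would keep the decomposition \eqref{eq:error_subp_2_decompose_J1} verbatim, bounding the $\theta$-movement term $J_1$ with the $(\ell_{\varphi,1}+\ell_{\varphi^*,1})$-smoothness of $\varphi(\lambda;\theta)-\varphi(\lambda^*(\theta);\theta)$ and $\theta_{t+1}-\theta_t=\alpha_t d_t$. The deterministic chain \eqref{eq:error_subp_2_J1_bound}--\eqref{eq:error_subp_2_J1_b} bounds $J_1$ by a product of norms via Cauchy--Schwarz and then converts the gradient-difference norm into a function-value gap through quadratic growth, so that no expectation of a correlated inner product is ever required. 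Taking $\E_t[\cdot]$ at the end, the surviving stochastic quantity is $\E_t\|d_t\|^2=\|\nabla F(\theta_t)A_{ag}^\top\lambda_t\|^2+\mathrm{Var}_t(d_t)$; bounding $\mathrm{Var}_t(d_t)\le\sigma^2$ by Assumption~\ref{assmp:bounded_var_sto}-1 replaces $\|d_t\|^2$ by $\E_t\|\nabla F(\theta_t)A_{ag}^\top\lambda_t\|^2$ and generates the additive $\tfrac{\ell_{\varphi,1}+\ell_{\varphi^*,1}}{2}\alpha_t^2\sigma^2$ term, with the lower-order variance contributions collected into the same residual.

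The main obstacle will be the statistical dependence between $\lambda_{t+1}$ and $d_t$: because the single-loop update computes both the $\lambda$-gradient estimate and the direction $d_t$ from the \emph{shared} sample $\xi_{t,1}$, one cannot factor the conditional expectation of the $J_1$ inner product $\langle\nabla_\theta\varphi(\lambda_{t+1};\theta_t)-\nabla\varphi(\lambda^*(\theta_t);\theta_t),\,d_t\rangle$ into a product of expectations, and one cannot treat $\lambda_{t+1}$ as independent of the noise when bounding the contraction. The remedy is exactly the design used here: every dependent inner product is first bounded by a product of norms so that only second moments, not cross-expectations, enter; and the double-sampling construction---$\xi_{t,1}\perp\xi_{t,2}$ in $\tilde{\nabla}_\lambda\varphi=A_{ag}\nabla F_{\xi_{t,2}}(\theta_t)^\top\nabla F_{\xi_{t,1}}(\theta_t)A_{ag}^\top\lambda_t-[0^\top,c_h H_{\xi_{t,1}}(\theta_t)^\top]^\top$---guarantees unbiasedness of $\tilde{\nabla}_\lambda\varphi$ for $\nabla_\lambda\varphi$ so that the bias cross-term vanishes. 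The secondary subtlety is bookkeeping the noise orders: the $\lambda$-gradient variance must enter at order $\gamma_t^2\sigma^2$ rather than $\gamma_t\sigma^2$, which is precisely why Assumption~\ref{assmp:bounded_var_sto}-2 posits a variance of order $\gamma_t$.
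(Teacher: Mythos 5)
Your overall architecture matches the paper's, and your treatment of \eqref{eq:error_subp_additional_sto} is essentially the paper's proof: keep the decomposition \eqref{eq:error_subp_2_decompose_J1}, bound $J_1$ by Cauchy--Schwarz and quadratic growth so that only second moments of $d_t$ appear, and use unbiasedness plus Assumption~\ref{assmp:bounded_var_sto}-1 to trade $\E_t\|d_t\|^2$ for $\|\nabla F(\theta_t)A_{ag}^\top\lambda_t\|^2+\sigma^2$. Your diagnosis of the dependence between $\lambda_{t+1}$ and $\xi_{t,1}$ and of why the variance of $\tilde{\nabla}_\lambda\varphi$ must be $\mathcal{O}(\gamma_t)$ is also on target.

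There is, however, a genuine gap in your argument for \eqref{eq:error_subp_contraction_sto}. After applying Lemma~\ref{lemma:prox_update_smooth_lemma} with $z=x=\lambda_t$ and Young's inequality, what survives is a negative multiple of $\|\lambda_{t+1}-\lambda_t\|^2$, where $\lambda_{t+1}=\Pi_{\Omega_\lambda}(\lambda_t-\gamma_t\tilde{\nabla}_\lambda\varphi(\lambda_t;\theta_t))$ is the \emph{stochastic} proximal step. You then ``recognize'' this descent as $-\tfrac{\gamma_t}{2}D_{\varphi,\gamma_t}(\lambda_t;\theta_t)$, but that identification is only valid for the \emph{deterministic} step: by Lemma~\ref{lemma:proj_argmin}, the point $\lambda'_{t+1}\coloneqq\Pi_{\Omega_\lambda}(\lambda_t-\gamma_t\nabla_\lambda\varphi(\lambda_t;\theta_t))$ is the minimizer of $\lambda'\mapsto\langle\nabla_\lambda\varphi(\lambda_t;\theta_t),\lambda'-\lambda_t\rangle+\tfrac{1}{2\gamma_t}\|\lambda'-\lambda_t\|^2$, whose optimal value equals $-\tfrac{\gamma_t}{2}D_{\varphi,\gamma_t}(\lambda_t;\theta_t)$ by Definition~\ref{def:prox_pl_ineq}. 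The stochastic iterate $\lambda_{t+1}$ minimizes the model built from $\tilde{\nabla}_\lambda\varphi$, so evaluating the true-gradient model at $\lambda_{t+1}$ gives a quantity that is \emph{at least} $-\tfrac{\gamma_t}{2}D_{\varphi,\gamma_t}(\lambda_t;\theta_t)$ --- the inequality points the wrong way, and no bound of the form $\|\lambda_{t+1}-\lambda_t\|^2\gtrsim\gamma_t^2 D_{\varphi,\gamma_t}(\lambda_t;\theta_t)$ is available to rescue the step. The paper closes this gap by introducing the auxiliary deterministic point $\lambda'_{t+1}$: it applies the proximal PL inequality to $\lambda'_{t+1}$ (where it legitimately yields $-\tfrac{\gamma_t}{2}D_{\varphi,\gamma_t}$), invokes Lemma~\ref{lemma:prox_update_smooth_lemma} with $y=\lambda_{t+1}$ and $z=\lambda'_{t+1}$ to compare the two iterates, and absorbs the resulting cross term $\langle\lambda_{t+1}-\lambda'_{t+1},\nabla_\lambda\varphi-\tilde{\nabla}_\lambda\varphi\rangle$ into the $-\tfrac{1}{2\gamma_t}\|\lambda_{t+1}-\lambda'_{t+1}\|^2$ term before taking expectations. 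You need this (or an equivalent perturbation argument relating the stochastic and deterministic prox-gradient mappings) to obtain the contraction factor; with $z=\lambda_t$ alone the proximal PL inequality never becomes applicable.
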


\begin{proof}[Proof of Lemma~\ref{lemma:error_subp_sto}]
The proof follows most of that of Lemma~\ref{lemma:error_subp}. We highlight the difference.  

First we define $\lambda'_{t+1} = \Pi_{\Omega_\lambda} (\lambda_t - \nabla_\lambda \varphi(\lambda_t; \theta_t))$ as an auxiliary variable.
By the $\ell_{\varphi,1}$-smoothness of $\varphi(\cdot; \theta)$, we have
\begin{align}\label{eq:varphi_prime_smooth}
\E [\varphi(\lambda'_{t+1};\theta_t)] \leq 
\E [\varphi (\lambda_{t};\theta_t)
+ (\frac{\ell_{\varphi,1}}{2}-\frac{1}{\gamma_t} ) \|\lambda'_{t+1} - \lambda_t\|^2] .
\end{align}
Applying Lemma~\ref{lemma:prox_update_smooth_lemma} with 
$y = \lambda_{t+1}, z = \lambda'_{t+1}, x = \lambda_t$, and that $\lambda_t,\lambda_{t+1}, \lambda_{t+1}' \in \Omega_\lambda$ yields
{\small\begin{align}
\E[ \varphi(\lambda_{t+1};\theta_t)] 
& \leq \E \Big[ \varphi(\lambda'_{t+1};\theta_t)+\langle \lambda_{t+1}-\lambda'_{t+1}, \nabla_\lambda \varphi(\lambda_t;\theta_t) - \tilde{\nabla}_\lambda \varphi(\lambda_t;\theta_t) \rangle \nonumber \\
&\!\!\!\! +(\frac{\ell_{\varphi,1}}{2}-\frac{1}{2 \gamma_t}) \|\lambda_{t+1}-\lambda_t\|^2+(\frac{\ell_{\varphi,1}}{2}+\frac{1}{2 \gamma_t}) \|\lambda'_{t+1}-\lambda_t\|^2-\frac{1}{2 \gamma_t}\|\lambda_{t+1}-\lambda'_{t+1}\|^2 \Big].
\label{eq:varphi_prime_diff}
\end{align}}
Furthermore, following similar arguments as \eqref{eq:deterministic_varphi_prox_pl_update}, by Assumption~\ref{assmp:sharper}-1, 
and taking total expectation, we have
\begin{align}\label{eq:varphi_prime_proxpl}
 \E[ \varphi(\lambda'_{t+1}; \theta_t)] 
{\leq} & \E\Big[ \varphi( \lambda_t; \theta_t) - \gamma_t \mu_\varphi \Big( \varphi( \lambda_t; \theta_t) - \varphi(\lambda^*(\theta_t); \theta_t) \Big)
\Big] .
\end{align}
Adding up $\frac{2}{3} \times$ \eqref{eq:varphi_prime_smooth}, $1 \times$ \eqref{eq:varphi_prime_diff}, and $\frac{1}{3} \times$ \eqref{eq:varphi_prime_proxpl}  yields
\begin{align*}
 \E [\varphi(\lambda_{t+1};\theta_t)] \leq  & 
\E \Big[\varphi(\lambda_t;\theta_t)+(\frac{5 \ell_{\varphi,1}}{6}-\frac{1}{6 \gamma_t})\|\lambda'_{t+1}-\lambda_t\|^2+(\frac{\ell_{\varphi,1}}{2}-\frac{1}{2 \gamma_t})\|\lambda_{t+1}-\lambda_t\|^2 \\
& -\frac{\mu_\varphi \gamma_t}{3} \big(\varphi(\lambda_t;\theta_t)-\varphi(\lambda^*(\theta_t);\theta_t)\big)
-\frac{1}{2 \gamma_t}\|\lambda_{t+1}-\lambda'_{t+1}\|^2 \\
& + \langle \lambda_{t+1}-\lambda'_{t+1}, \nabla_\lambda \varphi(\lambda_t;\theta_t)- \tilde{\nabla}_\lambda \varphi(\lambda_t;\theta_t)\rangle \Big] .
\numberthis
\end{align*}
Choosing $\gamma_t \geq \frac{1}{\ell_{\varphi,1}} \geq \frac{1}{5\ell_{\varphi,1}}$ and applying  Cauchy-Schwarz and Young's inequality, we have
\begin{align*}
\E [\varphi(\lambda_{t+1};\theta_t)] \leq  & 
\E \Big[\varphi(\lambda_t;\theta_t) 
-\frac{\mu_\varphi \gamma_t}{3} \big(\varphi(\lambda_t;\theta_t)-\varphi(\lambda^*(\theta_t);\theta_t)\big) \\
& + \frac{\gamma_t}{2}\| \nabla_\lambda \varphi(\lambda_t;\theta_t)- \tilde{\nabla}_\lambda \varphi(\lambda_t;\theta_t) \|^2 \Big] .
\numberthis
\end{align*}
The first inequality is proved.
We then prove the second inequality.
Note that~\eqref{eq:error_subp_2_decompose_J1} still holds here.
Following similar arguments in \eqref{eq:error_subp_2_J1_bound}, $\E[J_1]$ in~\eqref{eq:error_subp_2_decompose_J1} can be further bounded by
\begin{align*}
\E[J_1]
& \leq \E \Big[ -\alpha_t \langle\nabla_\theta \varphi(\lambda_{t+1}; \theta_{t})-\nabla \varphi(\lambda^*(\theta_{t}); \theta_{t}), d_t\rangle+\frac{\ell_{\varphi, 1} + \ell_{\varphi^*,1}}{2} \alpha_t^2 \|d_t\|^2 \Big] \\
& \leq \E \Big[ -\alpha_t \langle\nabla_\theta \varphi(\lambda_{t+1}; \theta_{t})-\nabla \varphi(\lambda^*(\theta_{t}); \theta_{t}), \nabla F(\theta_t)A_{ag}^\top \lambda_t \rangle
+\frac{\ell_{\varphi, 1} + \ell_{\varphi^*,1}}{2} \alpha_t^2 \|d_t\|^2 \Big] \\
& \leq \alpha_t \|\nabla_\theta \varphi(\lambda_{t+1}; \theta_{t})-\nabla \varphi(\lambda^*(\theta_{t}); \theta_{t})\|\|\nabla F(\theta_t)A_{ag}^\top \lambda_t\|+\frac{\ell_{\varphi, 1} + \ell_{\varphi^*,1}}{2} \alpha_t^2\|d_t\|^2 .
\numberthis\label{eq:error_subp_2_EJ1_bound_sto}
\end{align*}
Then following similar arguments in~\eqref{eq:error_subp_2_J1_final} and~\eqref{eq:error_subp_2_J1_b}, we have 
\begin{align*}
\E[J_1] {\leq} 
\E \Big[  &  {\alpha_t c_{\varphi,d} \ell_{\varphi, 1}^2\mu_\varphi'} \big(\varphi(\lambda_{t+1}; \theta_{t})-\varphi(\lambda^*(\theta_{t}); \theta_{t}) \big) \\ 
&+ \frac{\alpha_t }{2 c_{\varphi,d}}\|\nabla F(\theta_t)A_{ag}^\top \lambda_t\|^2+\frac{\ell_{\varphi, 1} + \ell_{\varphi^*,1}}{2} \alpha_t^2\|d_t\|^2  \Big].
\numberthis\label{eq:error_subp_2_EJ1_final_sto}  
\end{align*}
Plugging \eqref{eq:error_subp_2_EJ1_final_sto}  back into \eqref{eq:error_subp_2_decompose_J1} with total expectation completes the proof of the second inequality.
\end{proof}

Next we proceed to state and prove Theorem~\ref{thm:finite_time_convergence_approx_sto}, which generalizes Theorem~\ref{thm:finite_time_convergence_approx} to its stochastic variants, with a matching convergence rate to the unconstrained stochastic MOO algorithms and stochastic gradient descent.
This allows us to apply the algorithm to large-scale machine learning problems, which we detail in Section~\ref{sec:experiment}.
Its proof also extends that of Theorem~\ref{thm:finite_time_convergence_approx}. We ommit the similar derivations and only highlight the difference.

\begin{tcolorbox}[emphblock]
\begin{theorem}[Convergence of the single-loop stochastic FERERO algorithm]
\label{thm:finite_time_convergence_approx_sto}
Suppose Assumptions~\ref{asmp:general},~\ref{assmp:smooth_obj},~\ref{assmp:lip_obj},~\ref{assmp:sharper},~\ref{assmp:bounded_var_sto} hold, and $M_g=0$.
Let $\{\theta_t\}, \{\lambda_t\}$ be the sequences produced by Algorithm~\ref{alg:stochastic_alg} with $A=I$ and $\Omega_{\lambda_f}(\theta) = \Delta^M$ (c.f. Remark~\ref{remark:simple_subp}).
With properly chosen step sizes $\alpha_t = \alpha = \Theta(T^{-\frac{1}{2}})$, $\gamma_t = \gamma = \Theta(T^{-\frac{1}{2}})$, it holds that
{\small\begin{equation}
  \frac{1}{T}\sum_{t=0}^{T-1} \E \Big[\|\nabla F(\theta_t)A_{ag}^\top \lambda_t\|^2 
  + \|H(\theta_t)\|^2 \Big]
  = \mathcal{O} \Big( T^{-\frac{1}{2}}\Big) .
\end{equation}}
\end{theorem}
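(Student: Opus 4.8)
The plan is to mirror the Lyapunov argument of Theorem~\ref{thm:finite_time_convergence_approx}, replacing every deterministic descent inequality by its conditional-expectation counterpart and carefully tracking the variance that double sampling introduces. I would reuse the merit function $\mathbb{V}_t$ from~\eqref{eq:Vt_sharper}, with pieces $\mathbb{V}_{f,t}=\lambda_f^\top A F(\theta_t)$, the multiplier-tracking term $\mathbb{V}_{\lambda_f,t}=\frac{\alpha_0}{2\gamma_0}\|\lambda_{f,t}-\lambda_f\|^2$, the constraint terms $\mathbb{V}_{h,0,t}=\lambda_{h,t}^\top H(\theta_t)$ and $\mathbb{V}_{h,3,t}=\frac12\|H(\theta_t)\|^2$, and the subprogram suboptimality $\mathbb{V}_{\varphi,t}=\varphi(\lambda_t;\theta_t)-\varphi(\lambda^*(\theta_t);\theta_t)$. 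The structural fact that makes this work is that under double sampling $d_t=\nabla F_{\xi_{t,1}}(\theta_t)A_{ag}^\top\lambda_t$ satisfies $\E_t[d_t]=\nabla F(\theta_t)A_{ag}^\top\lambda_t$, and, because $\xi_{t,1}\perp\xi_{t,2}$, the estimator $\tilde\nabla_\lambda\varphi$ is unbiased for $\nabla_\lambda\varphi$ even though $\varphi$ is quadratic in $\nabla F$. Thus the expected drift of each Lyapunov piece reproduces the deterministic descent with the observed direction replaced by its mean, plus an added second-moment term controlled by Assumption~\ref{assmp:bounded_var_sto}.

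Concretely, I would first bound $\E_t[\mathbb{V}_{f,t+1}+\mathbb{V}_{\lambda_f,t+1}-\mathbb{V}_{f,t}-\mathbb{V}_{\lambda_f,t}]$ by combining smoothness of $\lambda_f^\top A F$ with the stochastic projection inequality of Lemma~\ref{lemma:lam_update_sto_subp}, using $\E_t[\|d_t\|^2]\le\|\nabla F(\theta_t)A_{ag}^\top\lambda_t\|^2+\|A_{ag}^\top\lambda_t\|^2\sigma^2$ from Assumption~\ref{assmp:bounded_var_sto}-1 to expose an $\mathcal{O}(\alpha_t^2\sigma^2)$ variance term. The constraint terms are handled as in the deterministic proof, via a stochastic analog of Lemma~\ref{lemma:ht_square_diff} and the identity $\nabla_{\lambda_h}\varphi=-\nabla H^\top d-c_hH$, again picking up $\mathcal{O}(\alpha_t^2\sigma^2)$ noise. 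For $\mathbb{V}_{\varphi,t}$ I would invoke Lemma~\ref{lemma:error_subp_sto}: chaining its contraction bound~\eqref{eq:error_subp_contraction_sto} and one-step-drift bound~\eqref{eq:error_subp_additional_sto} (the stochastic version of Corollary~\ref{crlr:descent_subp}) yields a geometric decrease of the suboptimality plus $\mathcal{O}(\gamma_t^2\sigma^2)$ and $\mathcal{O}(\alpha_t^2\sigma^2)$ error. Summing the four bounds and choosing $c_{\gamma,\alpha},c_h,c_{\varphi,d}$ and the coupling $\gamma_t=c_{\gamma,\alpha}\alpha_t$ exactly as in~\eqref{eq:summary_hyperpara_stepsize_thm3} cancels all cross terms, leaving
\[
\E[\mathbb{V}_{t+1}-\mathbb{V}_t]\le-\tfrac14\alpha_t\,\E\big[\|\nabla F(\theta_t)A_{ag}^\top\lambda_t\|^2\big]-\tfrac14\alpha_t c_h\,\E\big[\|H(\theta_t)\|^2\big]+C(\alpha_t^2+\gamma_t^2)\sigma^2
\]
for an absolute constant $C$.

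Telescoping over $t=0,\dots,T-1$ and using $\mathbb{V}_t\ge0$ gives $\sum_t\alpha_t\big(\E\|\nabla F(\theta_t) A_{ag}^\top\lambda_t\|^2+c_h\E\|H(\theta_t)\|^2\big)\le\mathbb{V}_0+C\sigma^2\sum_t(\alpha_t^2+\gamma_t^2)$. With the constant schedule $\alpha_t=\gamma_t=\Theta(T^{-1/2})$ the accumulated noise obeys $\sum_t(\alpha_t^2+\gamma_t^2)\sigma^2=\Theta(T\cdot T^{-1})=\Theta(1)$, so dividing through by $\alpha T=\Theta(T^{1/2})$ delivers the claimed $\mathcal{O}(T^{-1/2})$ bound. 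This exhibits precisely the trade-off in which the deterministic $\Theta(1)$ step size and $\mathcal{O}(T^{-1})$ rate degrade to $\Theta(T^{-1/2})$ because the step size must shrink to suppress variance.

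The hard part is twofold. First, the reduced-variance requirement of Assumption~\ref{assmp:bounded_var_sto}-2 is essential rather than cosmetic: it converts the $\varphi$-descent noise from $\gamma_t\sigma^2$ into $\gamma_t^2\sigma^2$, and without it the telescoped noise would be $\sum_t\gamma_t\sigma^2=\Theta(T^{1/2})$, leaving a non-vanishing $\Theta(1)$ residual after division by $\alpha T$. Verifying that the double-sampling estimator is genuinely unbiased for the quadratic $\tfrac12\|\nabla F A_{ag}^\top\lambda\|^2$, which rests on the independence of $\xi_{t,1}$ and $\xi_{t,2}$, is the delicate technical step inherited from the stochastic MGDA literature. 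Second, one must guarantee that $\|H(\theta_t)\|$ and $\|\lambda_{h,t}\|$ stay bounded along the stochastic trajectory; I would establish this in expectation from the monotone control of $\mathbb{V}_{h,3,t}=\tfrac12\|H(\theta_t)\|^2$ embedded in the Lyapunov function, supplemented by a contradiction argument analogous to Lemma~\ref{lemma:ht_bounded_by_contradiction}.
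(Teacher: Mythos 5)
Your proposal is correct and follows essentially the same route as the paper's proof: the same Lyapunov function from~\eqref{eq:Vt_sharper}, conditional-expectation versions of the same descent inequalities (via Lemma~\ref{lemma:lam_update_sto_subp}, a stochastic analog of Lemma~\ref{lemma:ht_square_diff}, and Lemma~\ref{lemma:error_subp_sto}), the same cancellation of cross terms under the hyperparameter choices of~\eqref{eq:summary_hyperpara_stepsize_thm3}, and the same telescoping with $\alpha=\gamma=\Theta(T^{-1/2})$. Your observations about the role of Assumption~\ref{assmp:bounded_var_sto}-2 and the boundedness of $\|H(\theta_t)\|$ along the trajectory also match the paper's treatment.
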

\end{tcolorbox}

\begin{proof}[Proof of Theorem~\ref{thm:finite_time_convergence_approx_sto}]
\label{proof:finite_time_convergence_approx_sto}
Reuse the Lyapunov functions defined in~\eqref{eq:Vt_sharper}.
Let $\lambda_{t} = [\lambda_{f,t}; \lambda_{h,t}]$.
The algorithm takes the update 
$\theta_{t+1} = \theta_t + \alpha_t d_t$ with
$d_t = \nabla  F_{\xi_{t,1}}(\theta_t) A_{ag}^\top \lambda_{t}$.
From Lemma~\ref{lemma:smooth_property_A},  the function $\lambda_f^\top A F (\theta)$ is $\ell_{f,1} \|A^\top \|_1$-smooth.
Then following similar arguments from \eqref{eq:Vf_diff_t}-\eqref{eq:Vf_lamf_diff_t}, choosing $\frac{\alpha_t}{\gamma_t} = \frac{\alpha_0}{\gamma_0} = \frac{1}{c_{\gamma,\alpha}}$ for all $t\in [T]$, 
and taking total expectation, we have
the stochastic version of \eqref{eq:Vf_lamf_diff_t_copy} below
\begin{align*}
&\E[\mathbb{V}_{f,t+1} - \mathbb{V}_{f,t}
+  \mathbb{V}_{\lambda_f,t+1} -  \mathbb{V}_{\lambda_f,t}] \\
\leq & 
\frac{\ell_{f,1} \|A^\top \|_1}{2}  \alpha_t^2 \E[\| d_t \|^2] 
+ \frac{1}{2} \alpha_t \gamma_t  \E[\|\tilde{\nabla}_{\lambda_f} \varphi(\lambda_{t};\theta_t)\|^2]
-\alpha_t \E[\langle \lambda_{f,t}, \nabla_{\lambda_f} \varphi(\lambda_{t};\theta_t)\rangle] .
\numberthis \label{eq:Vf_lamf_diff_t_sto}
\end{align*}
The stochastic version of \eqref{eq:Vh0_lamht_diff_t} is
\begin{align*}
\E[\mathbb{V}_{h,0,t+1} - \mathbb{V}_{h,0,t}] 
\leq & 
-\alpha_t c_h \E [\lambda_{h,t}^\top H(\theta_t)] 
+\frac{\ell_{f,1}}{2} \alpha_t^2 
\|B_h^\top\|_1 c_{\lambda_{h}} \E[\|d_t\|^2] \\
& - \alpha_t \E[ \langle \lambda_{h,t}, \nabla_{\lambda_h}\varphi(\lambda_t;\theta_t) \rangle ] 
- \gamma_t \E[ \tilde{\nabla}_{\lambda_h}\varphi(\lambda_t;\theta_t)^\top H(\theta_t) ] .
\numberthis \label{eq:Vh0_lamht_diff_t_sto}
\end{align*}
By Lemma~\ref{lemma:ht_square_diff},
and that $ \nabla_{\lambda_h} \varphi(\lambda_t;\theta_t) =  - \nabla H(\theta_t)^\top \nabla F (\theta_t) A_{ag}^\top \lambda_{t} - c_h H(\theta_t)$, the stochastic version of \eqref{eq:bounD_ht_square_diff_new} is
\begin{align*}
& \E[\mathbb{V}_{h,3,t+1} - \mathbb{V}_{h,3,t}] 
\leq \alpha_t \E[H(\theta_t)^\top \nabla H(\theta_t)^\top 
\nabla  F_{\xi_{t,1}}(\theta_t) A_{ag}^\top \lambda_{t}] + \frac{1}{4}\alpha_t^2 \E[ \ell_{H^2,1,t} \|d_t\|^2] \\
\leq & -\alpha_t c_h \E[\|H(\theta_t)\|^2 ]
-\alpha_t \E [H(\theta_t)^\top \nabla_{\lambda_h} \varphi(\lambda_t;\theta_t)] + \frac{1}{4}\alpha_t^2 \E[ \ell_{H^2,1,t} \|d_t\|^2] .
\numberthis\label{eq:bounD_ht_square_diff_sto}
\end{align*}
Let $\ell_{FH,1} = \ell_{f,1} (\|A^\top \|_{1} + \|B_h^\top \|_{1} c_{\lambda_h})$.
Adding up \eqref{eq:Vf_lamf_diff_t_sto}, \eqref{eq:Vh0_lamht_diff_t_sto}, and \eqref{eq:bounD_ht_square_diff_sto} yields
\begin{align*}
& \E[\mathbb{V}_{f,t+1} - \mathbb{V}_{f,t}
+ \mathbb{V}_{\lambda_f,t+1} -  \mathbb{V}_{\lambda_f,t}
+ \mathbb{V}_{h,t+1} -  \mathbb{V}_{h,t} ] \\
\leq & 
\frac{1}{4} \alpha_t^2 (2\ell_{FH,1} + \ell_{H^2,1}) \E[\| d_t \|^2] 
-\alpha_t \E[\langle \lambda_{t}, \nabla_{\lambda} \varphi(\lambda_{t};\theta_t)\rangle] -\alpha_t c_h \E [\lambda_{h,t}^\top H(\theta_t)] \\
&-\alpha_t c_h \E[\|H(\theta_t)\|^2 ]
- (\alpha_t + \gamma_t) \E [H(\theta_t)^\top \nabla_{\lambda_h} \varphi(\lambda_t;\theta_t)] 
+ \frac{1}{2} \alpha_t \gamma_t  \E[\|\tilde{\nabla}_{\lambda_f} \varphi(\lambda_{t};\theta_t)\|^2]   .
\numberthis
\end{align*}
Further rearranging the above inequality, 
applying \eqref{eq:nablavarphi_norm_bound},
invoking that $\gamma_t = c_{\gamma,\alpha} \alpha_t$,
and choosing $\alpha_t \leq 
\min\Big\{ \frac{1}{2\ell_{FH,1} + \ell_{H^2,1}},
\frac{1}{4 \|A_{ag}\|^2 M \ell_f^2 c_{\gamma,\alpha}},
\frac{1}{2 c_h c_{\gamma,\alpha}}
\Big\}$,  we have
\begin{align*}
& \E[\mathbb{V}_{f,t+1} - \mathbb{V}_{f,t}
+ \mathbb{V}_{\lambda_f,t+1} -  \mathbb{V}_{\lambda_f,t}
+ \mathbb{V}_{h,t+1} - \mathbb{V}_{h,t} ] \\
\leq & 
- \frac{1}{2} \alpha_t \E[\| \nabla F(\theta_t)A_{ag}^\top \lambda_t \|^2] 
- \frac{1}{4} \alpha_t c_h \E[\|H(\theta_t)\|^2 ] 
+ \frac{ (1 + c_{\gamma,\alpha})^2 }{c_h} \alpha_t 
\E [\| \nabla_{\lambda_h} \varphi(\lambda_t;\theta_t) \|^2] \\
&+ \alpha_t^3 c_{\gamma,\alpha}^2 \sigma^2 
+  \frac{1}{4}(2\ell_{FH,1} + \ell_{H^2,1}) \alpha_t^2\sigma^2.
\numberthis \label{eq:Vother_diff_sto}
\end{align*}
By applying Lemma~\ref{lemma:error_subp_sto}, and choosing $\alpha_t \leq  
\min \Big\{\frac{1}{\ell_{\varphi,1} c_{\gamma,\alpha}}, \frac{1}{(\ell_{\varphi,1} + \ell_{\varphi^*,1}) c_{\varphi,d}},
\frac{\mu_{\varphi}}{c_{\varphi,d}\ell_{\varphi,1}^2} \Big\} $ and $c_{\gamma,\alpha} \geq \frac{2c_{\varphi,d} \ell_{\varphi,1}^2\mu'_{\varphi}}{\mu_{\varphi}}$,
the stochastic version of \eqref{eq:Vvarphi_diff} is 
\begin{align*}
\E[ \mathbb{V}_{\varphi,t+1} - \mathbb{V}_{\varphi,t} ]
\leq & - \frac{1}{2} {\mu_\varphi}  \gamma_t
\E \big[ \varphi(\lambda_{t}; \theta_{t}) - \varphi(\lambda^*(\theta_{t}); \theta_{t})  \big] \\
&+ \frac{\alpha_t }{ c_{\varphi,d}} \E[ \|\nabla F(\theta_t)A_{ag}^\top \lambda_t\|^2] 
+ (1 + 2c_{\gamma,\alpha}^2)\alpha_t^2 \sigma^2 .
\numberthis \label{eq:Vvarphi_diff_sto}
\end{align*}
Adding up \eqref{eq:Vother_diff_sto} and \eqref{eq:Vvarphi_diff_sto} with properly chosen hyperparameters $c_h \geq (1 + c_{\gamma,\alpha})^2$, $c_{\gamma,\alpha} \geq \frac{4 \ell_{\varphi_\lambda,1} }{\mu_{\varphi}}$,
and $c_{\varphi,d} = 4$, we have
\begin{align*}
\E[ \mathbb{V}_{t+1} - \mathbb{V}_{t}] 
\leq &- \frac{1}{4} \alpha_t \E[\| \nabla F(\theta_t)A_{ag}^\top \lambda_t \|^2] 
- \frac{1}{4} \alpha_t c_h \E[\|H(\theta_t)\|^2 ] \\
&+ \Big(1 + 2c_{\gamma,\alpha}^2 + \alpha_t c_{\gamma,\alpha}^2 
+ \frac{1}{4}(2\ell_{FH,1} + \ell_{H^2,1}) \Big) \alpha_t^2 \sigma^2 .
\numberthis
\end{align*}
With the same hyperparameters and step sizes summarized in \eqref{eq:summary_hyperpara_stepsize_thm3},
one can choose $\alpha = \Theta(T^{-\frac{1}{2}})$, $\gamma = \Theta(T^{-\frac{1}{2}})$ to obtain
\begin{equation}
  \frac{1}{T}\sum_{t=0}^{T-1} \E \Big[\|\nabla F(\theta_t)A_{ag}^\top \lambda_t\|^2 
  + \|H(\theta_t)\|^2 \Big]
  = \mathcal{O} \Big( T^{-\frac{1}{2}}\Big) .
\end{equation}
The proof is complete.
\end{proof}

\section{Implementation Details and Additional Experiment Results} 
\label{sec_app:experiment}

In this section, we report the additional implementation details omitted from the main text in Appendix~\ref{sub_app:implement_details} and the additional experimental results in Appendix~\ref{sub_app:additional experiments}.

\subsection{Implementation details} 
\label{sub_app:implement_details}

\paragraph{Computation.}
All experiments were conducted on a server with an Intel i9-7920X CPU, two NVIDIA A5000 GPUs and two NVIDIA A4500 GPUs.

For all the experiments reported in the main text except for the multi-lingual speech recognition experiment, we exactly follow the settings from~\citep{mahapatra2020multi}.
The implementations of the baselines including LS, PMTL, and EPO  are from the official code of the EPO paper in \url{https://github.com/dbmptr/EPOSearch} with their default hyperparameters.
The results of XWC-MGDA are directly referenced from the paper due to lack of official implementation.

\paragraph{Synthetic data.}

For the results in both Figure~\ref{fig:synthetic_concave_exponential} and Figure~\ref{fig:syn_init_close},  the model parameter $\theta$ has dimension $q = 20$, the number of objectives is $M = 2$.
The angles between the preference vectors and the horizontal axis are generated  between $[\frac{1}{20}\pi, \frac{9}{20}\pi]$ with equal angular distance.
This experiment does not involve stochastic optimization.
For our method, we solve the subprogram using PGD with a step size $0.1$ up to an error of $10^{-5}$ or with a maximum of $250$ iterations.
In the experiments, we set the parameter $c_h = 1$ for the subprogram if not otherwise specified.

In Figure~\ref{fig:synthetic_concave_exponential},
for all preferences and all methods, the initial model parameter $\theta_0$ is randomly generated from a Gaussian distribution $\mathcal{N}(0,1)$ for each dimension.
In Table~\ref{tab:hp-synthetic}, we provide a summary of the hyperparameters for the baselines and our methods for the experiments in Figure~\ref{fig:synthetic_concave_exponential}.

\begin{table}[ht]
\caption{Summary of hyper-parameters for the synthetic data experiments in Figure~\ref{fig:synthetic_concave_exponential}.}
\label{tab:hp-synthetic}
\centering
\begin{tabular}{c| cccc ccc }
\toprule 
Hyperparameters &LS & MGDA & PMTL & EPO 
& Ours~Figure~\ref{sfig:PMOL_toy} & Ours~Figure~\ref{sfig:PMOL_toy_ref}  \\
\midrule
step size $\alpha_t$ 
& 0.1 & 0.2 & 0.2 &0.1 &0.05 &0.05 \\ 
max iterations &150 &150 &150 &100 &100 &100  \\
\bottomrule
\end{tabular}
\end{table}

In Figures~\ref{sfig:syn_init_close_pmtl2}-\ref{sfig:syn_init_close_ours2}, the initial model parameters are randomly generated from a uniform distribution between $[-0.3, 0.3]$ for each dimension.
In Figures~\ref{sfig:syn_init_close_pmtl}-\ref{sfig:syn_init_close_ours}, the initial model parameters are randomly generated from a uniform distribution between $[-0.5, -0.15]$ or $[0.15, 0.5]$ for each dimension. 
Table~\ref{tab:hp-synthetic-init-close} summarizes the hyperparameters for the experiments in Figure~\ref{fig:syn_init_close}.
\begin{table}[ht]
\caption{Summary of hyper-parameters for the synthetic data experiments in Figure~\ref{fig:syn_init_close}.}
\label{tab:hp-synthetic-init-close}
\centering
\begin{tabular}{c| ccc |ccc  }
\toprule 
Hyperparameters & \multicolumn{3}{c}{Figures~\ref{sfig:syn_init_close_pmtl2}-\ref{sfig:syn_init_close_ours2}}
& \multicolumn{3}{c}{Figures~\ref{sfig:syn_init_close_pmtl}-\ref{sfig:syn_init_close_ours}}\\
\cline{2-7}
 & PMTL & EPO & Ours 
 & PMTL & EPO & Ours  \\
\midrule
step size $\alpha_t$ & 0.25 & 0.10 & 0.60
& 0.50 & 0.20 & 0.60 \\ 
max iterations &100 & 60 & 10
&200 & 120 & 200 \\
$c_h$ &- &- & 1
&- &- & 0.01 \\
\bottomrule
\end{tabular}
\end{table}

\paragraph{Multi-patch image classification.}

For a fair comparison, we follow the same data splitting and processing procedures as~\citep{mahapatra2020multi} using their official code.
In each of the three datasets, there are 120k samples for training and 20k samples for testing. There are two tasks on each dataset: 1) classifying the top-left image, and 2) classifying the bottom-right image.

For all methods, we use the SGD optimizer with batch size  256. Note that, for our stochastic method, we use batch size 128 for each batch in the double sampling.
Thus the total number of samples taken at each iteration is also 256.
The hyperparameters are summarized 
in Table~\ref{tab:hp-multi_mnist}.
The results of XWC-MGDA are directly referenced from the paper.

\begin{table}[ht]
\vspace{-0.1cm}
\caption{Summary of hyper-parameter choices for multi-patch image classification experiments.}
\label{tab:hp-multi_mnist}
\fontsize{7}{8}\selectfont
\centering
\begin{tabular}{c| cccc |cccc |cccc }
\toprule 
Hyperparameters & \multicolumn{4}{c}{Multi-MNIST}
& \multicolumn{4}{c}{Multi-Fashion}
& \multicolumn{4}{c}{Multi-Fashion+MNIST}\\
\cline{2-13}
&LS & PMTL & EPO & Ours 
&LS & PMTL & EPO & Ours 
&LS & PMTL & EPO & Ours  \\
\midrule
step size $\alpha_t$ & 1E-3 & 1E-3 & 1E-3 &1E-3 
& 1E-3 & 1E-3 & 1E-3 &1E-3 
& 1E-3 & 1E-3 & 1E-3 &1E-3  \\ 
step size $\gamma_t$ & - & - & - &1E-4
& - & - & - &1E-4
& - & - & - &1E-4 \\
epochs &100 &100 &100 &100 
&100 &100 &100 &100 
&100 &100 &100 &100 \\
$c_h$ & - & - & - &0.5
& - & - & - &0.5
& - & - & - &0.5\\
\bottomrule
\end{tabular}
\end{table}

We use the Pymoo 0.6.1 library to compute the hypervolume.
The Nadir points, i.e., the worst performance on single task baselines, used for the hypervolume computation are given in Table~\ref{tab:nadir_points}.
For a fair comparison, the  Nadir points we use are the same with~\citep{momma2022multi} inferred from Figure~4 in the paper.

\begin{table}[ht]
\caption{Nadir points for the hypervolume computation }
\label{tab:nadir_points}
\small
\centering
\begin{tabular}{l| cccc   }
\toprule
Dataset and metrics & Nadir points, metrics on objective [$1,\ldots, M$] \\
\midrule
Multi-MNIST loss     & [0.500, 0.450] \\
Multi-Fashion loss   & [0.840, 0.800] \\
Multi-F+M loss       & [0.625, 0.575] \\
Multi-MNIST accuracy & [0.830, 0.848] \\
Multi-Fashion accuracy & [0.840, 0.800]\\
Multi-F+M accuracy    & [0.790, 0.785] \\
Emotion loss          &[0.551, 0.636, 0.690, 0.539, 0.603, 0.570]\\
\bottomrule 
\end{tabular}
\end{table}

\paragraph{Multi-lingual speech recognition.}

We use two datasets, Librispeech and AISHELL v1.
Librispeech is an English speech dataset that consists of 960 hours of labeled audio data. For our experiments, we use the "train-clean-100" subset of the Librispeech dataset for supervised training, which contains 100 hours of clean training data. Additionally, we use the full 960 hours of data for self-supervised training. AISHELL v1 is a 178-hour  Mandarin speech corpus designed for various speech and speaker processing tasks. We use the full AISHELL v1 dataset for both self-supervised and supervised training. We combine these two datasets for our multi-lingual speech recognition experiments.

We use the conformer~\citep{gulati2020conformer} model with 8 conformer blocks as the encoder. Each block contains 512 hidden units and 8 attention heads. Each attention head has dimension 64. The convolutional kernel size is 31. Two classification heads are used. They contain two linear layers, one with 1000 output size for English, and another with 5000 output size for Chinese.

\begin{wrapfigure}{R}{0.46\textwidth}
\vspace{-0.5cm}
\begin{minipage}{0.46\textwidth} 
\centering
\begin{subfigure}[b]{0.45\textwidth}
  \centering
  \includegraphics[width=.99\linewidth]{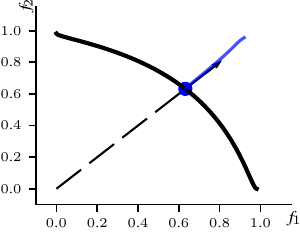}
  \caption{Without scale}
  \label{sfig:wo_scale}
\end{subfigure}
\begin{subfigure}[b]{0.45\textwidth}
  \centering
  \includegraphics[width=.99\linewidth]{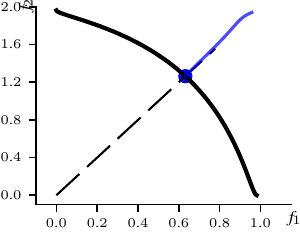}
  \caption{With scale}
  \label{sfig:with_scale}
\end{subfigure}
\caption{Scale invariance verification.}
\label{fig:syn_scale_ivr}  
\end{minipage}
\vspace{-0.6cm}
\end{wrapfigure}
The loss functions we use include the Contrastive Predictive Coding (CPC) loss, and the Connectionist Temporal Classification (CTC) loss. 
The \emph{CPC loss}~\citep{oord2018representation} is a self-supervised loss to learn robust representations from unlabeled speech data. 
The CPC loss is designed to maximize the probability of a future sample given a contextual representation generated from the current speech sequence. 
The \emph{CTC loss} is defined as the negative log-likelihood of the model parameter given the input sequence and the label sequence.

For all methods including the baselines, we use the step sizes $\alpha_{t,1}=5\times10^{-4}$ for training backbone conformer parameters and $\alpha_{t,2} =5\times10^{-5}$ for training classification head parameters. The step size $\gamma_t = 0.1$ and the parameter $c_h = 0.5$.

\subsection{Additional experiment results} 
\label{sub_app:additional experiments}

\paragraph{Synthetic data.}
We conduct several additional experiments on the  synthetic objectives to further verify our theory.
First, we conduct all the experiments on the synthetic objectives reported in the main text, using the single-loop approximate algorithm described in Algorithm~\ref{alg:approximate_alg}.
The results are plotted in Figure~\ref{fig:syn_our_approx}.
The hyperparameters are the same  unless otherwise specified.

From Figure~\ref{sfig:our_approx_1}, we can see that Algorithm~\ref{alg:approximate_alg} with a one-step approximate update of $\lambda_t$ also leads to convergence and preference alignment.
However, different from the results obtained by exactly solving for $\lambda^*(\theta_t)$ at each iteration, the models on the optimization trajectories do not align exactly with the preference.
Similar observations can be found in Figure~\ref{sfig:our_approx_easy}.
In Figure~\ref{sfig:our_approx_hard_I}, which is a difficult case due to the initialization, $A = I_M$ does not work since it does not incorporate more general relative preference to allow controlled ascent update.
This is addressed in Figure~\ref{sfig:our_approx_hard_A}, where a general $A$ (the same as in prior experiments) is used.
Compared with exactly solving for $\lambda^*(\theta_t)$ at each iteration, the approximate algorithm takes more iterations to converge, but has smaller per-iteration complexity, and smaller total time complexity.

\begin{figure}[t]
\centering
\begin{subfigure}[b]{0.22\textwidth}
  \centering
  \includegraphics[width=.99\linewidth]{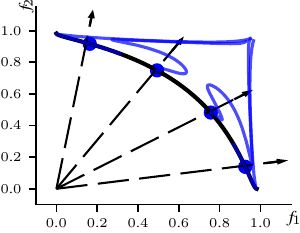}
  \caption{Standard}
  \label{sfig:our_approx_1}
\end{subfigure}
\begin{subfigure}[b]{0.22\textwidth}
  \centering
  \includegraphics[width=.99\linewidth]{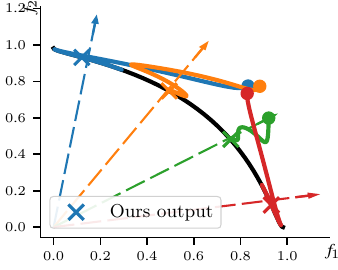}
  \caption{Easy init.}
  \label{sfig:our_approx_easy}
\end{subfigure}
\begin{subfigure}[b]{0.22\textwidth}
\centering
\includegraphics[width=.99\linewidth]{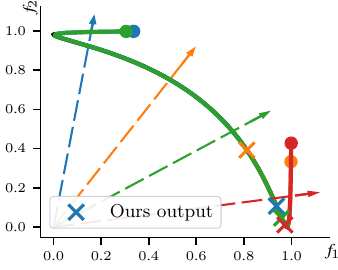}
\caption{Hard init. $A = I_M$}
\label{sfig:our_approx_hard_I}  
\end{subfigure}
\begin{subfigure}[b]{0.22\textwidth}
\centering
\includegraphics[width=.99\linewidth]{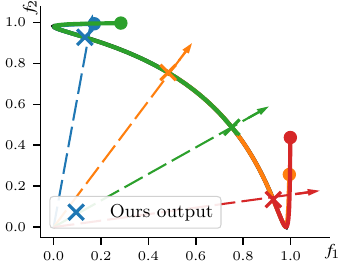}
\caption{Hard init. general $A$}
\label{sfig:our_approx_hard_A}  
\end{subfigure}
\caption{Synthetic experiment results with Algorithm~\ref{alg:approximate_alg}.}
\label{fig:syn_our_approx}  
\end{figure}

\begin{table}[ht]
\caption{Summary of hyper-parameters for the synthetic data experiments in Figure~\ref{fig:syn_our_approx}.}
\label{tab:hp-synthetic-init-close-approx}
\centering
\begin{tabular}{c| cccc }
\toprule 
Hyperparameters & Figure~\ref{sfig:our_approx_1}
& Figure~\ref{sfig:our_approx_easy}
& Figure~\ref{sfig:our_approx_hard_I}
& Figure~\ref{sfig:our_approx_hard_A} \\
\midrule
step size $\alpha_t$ & 0.10 & 0.06 & 0.15 &0.15\\ 
max iterations &100 & 100 & 250 &250\\
$c_h$ &6 &6 & 0.1 &0.1\\
\bottomrule
\end{tabular}
\end{table} 

We conduct another experiment to verify that the scale invariance can be preserved.
We use the same objective as above, but scale the second one by 2.
We use a fixed initialization $\theta_0 = 0.3 \cdot [\mathbf{1}_{q/2}; -\mathbf{1}_{q/2}]$ for this experiment.
The other hyperparameters are the same as the default.
We use both $F(\theta_0)$ and $F(0)$  as the reference points and choose $B_h$ such that $B_h(F(\theta_0) - F(0)) = 0$.
Results in Figure~\ref{fig:syn_scale_ivr} show that for different scales, the trajectory and the converging solution are the same.

\begin{figure}[ht]
\centering
\includegraphics[width=.5\linewidth]{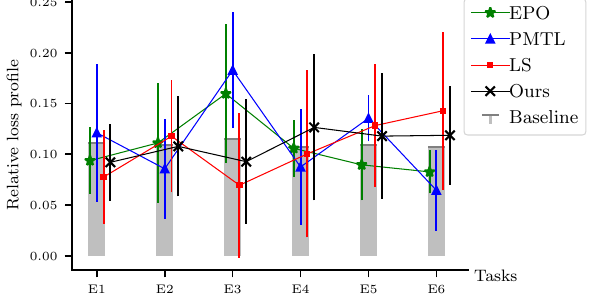}
\caption{Relative loss profile for all methods on Emotions and Music dataset.}
\label{fig:emotion_results}  
\end{figure}

\begin{table}[H]
\caption{Summary of hyper-parameter choices for emotion recognition experiments.}
\label{tab:hp-emotion}
\small
\centering 
\begin{tabular}{c| cccc }
\toprule 
Hyperparameters &LS & PMTL & EPO & Ours \\
\midrule
step size $\alpha_t$ & 1E-3 & 1E-3 & 1E-3 &1E-3  \\ 
step size $\gamma_t$ & - & - & - &1E-4  \\
batch size &50 &50 &50 &50 \\
epochs &200 &200 &200 &200 \\
\bottomrule
\end{tabular}
\end{table}

\begin{table*}[ht]
\caption{Summary of average run time in seconds (s) or minutes (m) and number of iterations or epochs of different methods on different datasets. We use Algorithm 1 for the synthetic experiments, and Algorithm 3 for the other two experiments.}
\label{tab:complexity}
\small
\centering 
\begin{tabular}{c |c| cccc }
\toprule 
 Datasets &Metrics &LS & PMTL & EPO & FERERO \\
\midrule
Synthetic, Figures 3(a-c)  &Iterations & 100 & 100 & 60 & 10\\ 
&Per-iteration run time
&  3.50E-4s & 7.67E-4s &  4.93E-3s &  7.50E-4s \\ 
&Total run time 
& 0.035s & 0.0767s & 0.296s & 0.0075s \\ 
\hline
Synthetic, Figures 3(d-f)  &Iterations &100 & 200 & 80 & 200 \\ 
&Per-iteration run time
& 3.10E-4s & 7.65E-4s &  4.93E-3s &  7.30E-4s \\ 
&Total run time 
& 0.031s & 0.153s & 0.394s & 0.146s \\ 
\hline
Multi-MNIST/Fashion/F+M  &Epochs &100  &100  &100  &100 \\ 
&Per-epoch run time 
& 3.54s & 11.88s & 9.66s & 7.02s \\  
&Total run time 
& 5.9m & 19.8m & 16.1m & 11.7m \\ 
\hline
Emotion  &Epochs &200  &200  &200  &200 \\ 
&Per-epoch run time 
& 9.5E-3s &  0.496s &  0.238s &  0.039s \\  
&Total run time 
& 1.9s &99.1s  &47.6s  &7.70s  \\ 
\bottomrule
\end{tabular}
\end{table*}

\paragraph{Emotion recognition.}

The task is to predict 6 types of emotions from 593 songs based on the Tellegen Watson-Clark model of affect.
The 6 emotions include: amazed-surprised (E1), happy-pleased (E2), relaxing-calm (E3), quiet-still (E4), sad-lonely (E5), and angry-fearful (E6).
Following~\citep{mahapatra2020multi}, we use the fully connected neural network with 4 layers as the model architecture.
The Sigmoid cross entropy loss is used as the objective for each task.
And 10 preference vectors are generated uniformly.
The hyperparameters used in this experiment are summarized in Table~\ref{tab:hp-emotion}.

The results on the relative loss profile (RLP) are reported in Figure~\ref{fig:emotion_results}.
Results show that all methods, including LS, work similarly well. EPO achieves the highest hypervolumes, and our proposed approach obtains the second-best hypervolumes.
One reason could be that the Pareto front in this problem is convex.

\newpage
\section*{NeurIPS Paper Checklist}


\begin{enumerate}

\item {\bf Claims}
    \item[] Question: Do the main claims made in the abstract and introduction accurately reflect the paper's contributions and scope?
    \item[] Answer: \answerYes{} 
    \item[] Justification: See Section~\ref{sec:intro}, introduction.

\item {\bf Limitations}
    \item[] Question: Does the paper discuss the limitations of the work performed by the authors?
    \item[] Answer: \answerYes{} 
    \item[] Justification: See the Broader impacts and limitations section.

\item {\bf Theory Assumptions and Proofs}
    \item[] Question: For each theoretical result, does the paper provide the full set of assumptions and a complete (and correct) proof?
    \item[] Answer: \answerYes{} 
    \item[] Justification: See Assumptions~\ref{asmp:general},~\ref{assmp:smooth_obj},~\ref{assmp:lip_obj},~\ref{assmp:sharper} for the assumptions, and the Appendix~\ref{sec_app:proof_main_theory}, and~\ref{sec_app:sto_algorithms} for the proof.

    \item {\bf Experimental Result Reproducibility}
    \item[] Question: Does the paper fully disclose all the information needed to reproduce the main experimental results of the paper to the extent that it affects the main claims and/or conclusions of the paper (regardless of whether the code and data are provided or not)?
    \item[] Answer: \answerYes{} 
    \item[] Justification: See Section~\ref{sec:experiment} and Appendix~\ref{sec_app:experiment}.

\item {\bf Open access to data and code}
    \item[] Question: Does the paper provide open access to the data and code, with sufficient instructions to faithfully reproduce the main experimental results, as described in supplemental material?
    \item[] Answer: \answerYes{} 
    \item[] Justification: Code is available at \url{https://github.com/lisha-chen/FERERO/}.

\item {\bf Experimental Setting/Details}
    \item[] Question: Does the paper specify all the training and test details (e.g., data splits, hyperparameters, how they were chosen, type of optimizer, etc.) necessary to understand the results?
    \item[] Answer: \answerYes{} 
    \item[] Justification: See Section~\ref{sec:experiment} and Appendix~\ref{sec_app:experiment}.

\item {\bf Experiment Statistical Significance}
    \item[] Question: Does the paper report error bars suitably and correctly defined or other appropriate information about the statistical significance of the experiments?
    \item[] Answer: \answerYes{} 
    \item[] Justification: See Section~\ref{sec:experiment}. We use the standard deviations as the error bars for all experiments except the speech recognition experiments since the speech recognition experiments take much longer time to run.

\item {\bf Experiments Compute Resources}
    \item[] Question: For each experiment, does the paper provide sufficient information on the computer resources (type of compute workers, memory, time of execution) needed to reproduce the experiments?
    \item[] Answer: \answerYes{} 
    \item[] Justification: See Section~\ref{sec:experiment} and Appendix~\ref{sec_app:experiment}.

\item {\bf Code Of Ethics}
    \item[] Question: Does the research conducted in the paper conform, in every respect, with the NeurIPS Code of Ethics \url{https://neurips.cc/public/EthicsGuidelines}?
    \item[] Answer: \answerYes{} 
    \item[] Justification: We preserve anonymity.

\item {\bf Broader Impacts}
    \item[] Question: Does the paper discuss both potential positive societal impacts and negative societal impacts of the work performed?
    \item[] Answer: \answerYes{} 
    \item[] Justification: See the end of the main paper in the Broader impacts and limitations section.
    
\item {\bf Safeguards}
    \item[] Question: Does the paper describe safeguards that have been put in place for responsible release of data or models that have a high risk for misuse (e.g., pretrained language models, image generators, or scraped datasets)?
    \item[] Answer: \answerNA{} 
    \item[] Justification: the paper poses no such risks.

\item {\bf Licenses for existing assets}
    \item[] Question: Are the creators or original owners of assets (e.g., code, data, models), used in the paper, properly credited and are the license and terms of use explicitly mentioned and properly respected?
    \item[] Answer: \answerYes{} 
    \item[] Justification: See Section~\ref{sec:experiment} and Appendix~\ref{sec_app:experiment}.

\item {\bf New Assets}
    \item[] Question: Are new assets introduced in the paper well documented and is the documentation provided alongside the assets?
    \item[] Answer: \answerNA{} 
    \item[] Justification: the paper does not release new assets.
 
 \item {\bf Crowdsourcing and Research with Human Subjects}
    \item[] Question: For crowdsourcing experiments and research with human subjects, does the paper include the full text of instructions given to participants and screenshots, if applicable, as well as details about compensation (if any)? 
    \item[] Answer: \answerNA{} 
    \item[] Justification: the paper does not involve crowdsourcing nor research with human subjects.

\item {\bf Institutional Review Board (IRB) Approvals or Equivalent for Research with Human Subjects}
    \item[] Question: Does the paper describe potential risks incurred by study participants, whether such risks were disclosed to the subjects, and whether Institutional Review Board (IRB) approvals (or an equivalent approval/review based on the requirements of your country or institution) were obtained?
    \item[] Answer: \answerNA{} 
    \item[] Justification: the paper does not involve crowdsourcing nor research with human subjects

\end{enumerate}

\end{document}